\begin{document}

\title{Topology-Preserving 3D Image Segmentation Based On Hyperelastic Regularization}

\titlerunning{Topology-Preserving Image Segmentation}        % if too long for running head

\author{Daoping Zhang \and Lok Ming Lui}

%First Author         \and
%        Second Author %etc.
%}

%\authorrunning{Short form of author list} % if too long for running head

\institute{Daoping Zhang \at
              Department of Mathematics, The Chinese University of Hong Kong, 228, Lady Shaw Building,        Shatin, Hong Kong \\
              \email{dpzhang@math.cuhk.edu.hk.}           %  \\
%             \emph{Present address:} of F. Author  %  if needed
           \and
           Lok Ming Lui \at
              Department of Mathematics, The Chinese University of Hong Kong, 207, Lady Shaw Building, Shatin, Hong Kong.\\
               \email{lmlui@math.cuhk.edu.hk.}
}

\date{Received: date / Accepted: date}
% The correct dates will be entered by the editor

\maketitle

\begin{abstract}
Image segmentation is to extract meaningful objects from a given image. For degraded images due to occlusions, obscurities or noises, the accuracy of the segmentation result can be severely affected. To alleviate this problem, prior information about the target object is usually introduced. In \cite{chan2018topology}, a topology-preserving registration-based segmentation model was proposed, which is restricted to segment 2D images only. In this paper, we propose a novel 3D topology-preserving registration-based segmentation model with the hyperelastic regularization, which can handle both 2D and 3D images. The existence of the solution of the proposed model is established. We also propose a converging iterative scheme to solve the proposed model. Numerical experiments have been carried out on the synthetic and real images, which demonstrate the effectiveness of our proposed model.
\end{abstract}

\keywords{Topology-Preserving \and Image Segmentation \and Image Registration \and Hyperelastic Regularization \and Generalized Gauss-Newton Method}

\section{Introduction}\label{SIntroduction}

The goal of image segmentation is to partition a given image into multiple meaningful segments, which assists many image analysis tasks. As one of the most important problems in image processing, image segmentation has a wide range of applications, especially in medical imaging \cite{balafar2010review,heimann2009statistical}. 

Generally, the methods to image segmentation can be divided into two categories: region-based methods and edge-based methods. For the region-based methods, the Mumford-Shah model \cite{mumford1989optimal} and the Chan-Vese model \cite{chan2001active} may be the most famous two. The Mumford-Shan model aims to find a piecewise smooth approximation and segment the boundary of the target object. The Chan-Vese model employs the level set implementation to solve the piecewise constant Mumford-Shah model, which uses the Heaviside function as an indicator function. When the Chan-Vese model appeared, it quickly attracted more and more attentions and a lot of works based on it were proposed to deal with various situations. Specifically, \cite{chan2000active} extends the Chan-Vese model to the vector-valued images, \cite{chan2002active} considers the texture images and \cite{lie2006variant} aims segmenting the multiple objects simultaneously. However, these models are nonconvex and the solution may be a local minimizer and depend on the position of the initial guess. To overcome this difficulty, \cite{chan2006algorithms} proposed a method to find the global minimizer by restating the Chan-Vese model to a convex model under certain conditions.
For the edge-based methods, the first one is proposed in \cite{kass1988snakes} by deforming a parameter curve towards the final solution curve to locate the edges of the target object. Here, the curve is deformed by acting the internal and external forces. To make the improvement, \cite{caselles1997geodesic} introduces a geodesic active contour model which allows the topology changes of the evolving curves and \cite{li2007active} proposes a vector field convolutions as a new external force to enhance the robustness with respect to the noise.

The above mentioned methods are all global segmentation methods, namely finding all the meaning objects in the given image. However, under some circumstances, we just want to pick up some particular target objects rather than all the target objects. The model to meet this requirement is called the selective segmentation model. A selective segmentation model is proposed in \cite{gout2005segmentation} by combing the geodesic active contour with the geometric constraint. \cite{badshah2010image} further incorporates an intensity-based constraint to improve the robustness. However, this model may fail if the adjacent objects have the similar intensity value with the target object. Then \cite{zhang2014local} develops an adaptively varying narrow band algorithm to overcome this difficulty. More recently, \cite{roberts2019convex} proposes a convex selective model by incorporating an edge-weighted geodesic distance, which allows arbitrary initialization. In addition, \cite{zhang2015fast} proposes an effective and efficient algorithm for the 3D selective segmentation.

Although there are many kinds of different segmentation methods, the task of image segmentation remains challenging in many scenarios. For instance, when the given image is corrupted or obscured, many existing segmentation tools may fail to segment the desired object accurately, but excluding some crucial regions. Also, when images are degraded by overexposure or underexposure, most intensity-based segmentation models cannot segment the target object accurately as a whole. Thus, it is necessary to develop a more accurate segmentation method to handle these situations. In order to overcome this difficulty, prior information about the target object is usually introduced. \cite{cremers2003towards} combines the shape prior with the Chan-Vese model. Here, a labelling level set function is introduced to indicate the regions where the shape prior is enforced. \cite{chan2005level} also introduces a labelling level set function but here, it allows the translation, rotation and scaling of the prior shapes. \cite{thiruvenkadam2007segmentation} combines the shape prior with the level set method which can segment multiple covering objects simultaneously. 

In this work, we consider the topological structure of the target object as the prior information. In 2018, Chan et al. proposed a topology-preserving quasiconformal-based segmentation model \cite{chan2018topology}. The main idea of this work is to bijectively deform a reference template with the prescribed topology to segment the target object in an image. The deformation map is obtained by finding an optimal quasiconformal mapping. Quasiconformal mappings have been widely used for different imaging tasks \cite{QCImaging}, including image registration \cite{QCregistration1,QCregistration2,QCregistration3,QCregistration4,QCregistration5,QCregistration6} and image analysis \cite{Shapeanalysis1,Shapeanalysis2,Shapeanalysis4,Shapeanalysis5,Shapeanalysis6}. Since the deformation map is bijective, the topology-preserving property can be guaranteed. This method has shown to be successful to achieve accurate segmentation results, even for degraded 2D images. The method is further extended to image segmentation problem with convexity prior enforced \cite{QCsegmentationconvex}. Nevertheless, this model suffers from a major limitation. As the method is based on the quasi-conformal theory that is only defined in complex space, the segmentation model can only deal with 2D images and can not be extended to 3D (or higher dimensional) segmentation problems.  

In this paper, our main goal is to extend the idea of \cite{chan2018topology} to 3D segmentation. More specifically, we propose a novel 3D topology-preserving registration-based segmentation model with the hyperelastic regularization \cite{burger2013hyperelastic,droske2004variational}. The existence of the solution of our proposed model is theoretically established. In addition, we propose a converging Generalized Gauss-Newton iterative scheme to solve the proposed model. Numerical experiments have been carried out on the synthetic and real images, which illustrate the effectiveness of our proposed method. 

The contributions of this paper are three-fold:
\begin{enumerate}
    \item First, this work proposes a novel topology-preserving segmentation model, which can handle images in 3D.
    \item Second, we prove the existence of the solution of our proposed model. Here, compared with the conventional variational model for image registration, the proposed model contains not only an infinite dimensional variable but also a finite dimensional variable. To employ the direct method in the calculus of variations, we make a modification for \cite{ruthotto2012hyperelastic} and introduce a product space to complete the proof.
    \item Based on the generalized Gauss-Newton framework, we develop an iterative scheme to numerically solve the proposed model. We also prove the subsequence generated by this iterative scheme can converge to a critical point.
\end{enumerate}

The paper is organized as follows. The related works are reviewed in Section \ref{SRelatedWork}. In Section \ref{SProposedModel}, a proposed segmentation model is given in details. In Section \ref{SNumericalImplementation}, numerical implementation is described and numerical experiments are illustrated in Section \ref{SNumericalResults}. Finally, a conclusion is summarized in Section \ref{SConclusion}.

\section{Related Work}\label{SRelatedWork}
In this section, we review the related works, including image segmentation, image registration and registration-based segmentation.

\subsection{Image Segmentation}
Image segmentation is to identify the boundary of the target object to extract the target object. Let $\Omega$ be a rectangular image domain and $I:\Omega\rightarrow\mathbb{R}$ be an image. Then the boundary of the target object can be considered as a closed curve $C\subset\Omega$. 

As one of the most classical segmentation model, the piecewise constant Mumford-Shah model \cite{mumford1989optimal} is to find a best piecewise constant function with unknown values $c_{1}$ and $c_{2}$ to approximate the image function $I(\bm{x})$ with proper regularizations. Specifically, its variational model can be formulated as follows:
\begin{equation}\label{PMS}
\min_{C,c_{1},c_{2}} \mathcal{E}(C,c_{1},c_{2}) := \lambda_{1}\int_{\mathrm{int}(C)}(I(\bm{x})-c_{1})^{2}\mathrm{d}\bm{x}+\lambda_{2}\int_{\mathrm{ext}(C)}(I(\bm{x})-c_{2})^{2}\mathrm{d}\bm{x}+\lambda_{3}\mathrm{Length}(C)+\lambda_{4}\mathrm{Area}(\mathrm{int}(C)),
\end{equation}
where $\lambda_{i}, i=1,...,4$ are nonnegative weighting parameters. 

In order to solve the model \eqref{PMS}, by using the Heaviside function $H$, Dirac measure $\delta_{0}$ and the level set function $\phi$, the variational model \eqref{PMS} can be converted into the following equivalent formulation, namely, the famous Chan-Vese model:
\begin{equation}\label{CV}
\begin{split}
\min_{\phi,c_{1},c_{2}} \mathcal{E}(\phi,c_{1},c_{2}) := & \lambda_{1}\int_{\Omega}(I(\bm{x})-c_{1})^{2}H(\phi(\bm{x}))\mathrm{d}\bm{x}+\lambda_{2}\int_{\Omega}(I(\bm{x})-c_{2})^{2}(1-H(\phi(\bm{x})))\mathrm{d}\bm{x}\\
&+\lambda_{3}\int_{\Omega}\delta_{0}(\phi(\bm{x}))|\nabla\phi(\bm{x})|\mathrm{d}\bm{x}+\lambda_{4}\int_{\Omega}H(\phi(\bm{x}))\mathrm{d}\bm{x}.
\end{split}
\end{equation}
For more details about the Chan-Vese model, please refer to \cite{brown2012completely,chan2000active,chan2005image,chan2001active,getreuer2012chan}.

\subsection{Image Registration}

Image registration aims to find a plausible transformation to match the corresponding data. The general variational framework for image registration \cite{modersitzki2004numerical,modersitzki2009fair} can be built as follows:
\begin{equation}\label{RegistrationFramework}
\min_{\bm{y}} \mathcal{J}(\bm{y}):=\mathcal{D}(T\circ\bm{y},R)+\alpha\mathcal{R}(\bm{y}),
\end{equation}
where $\bm{y}(\bm{x}):\mathbb{R}^{d}\rightarrow\mathbb{R}^{d}$ is the transformation, $T(\bm{x}):\Omega\subset\mathbb{R}^{d}\rightarrow\mathbb{R}$ is the template, $R(\bm{x}):\Omega\subset\mathbb{R}^{d}\rightarrow\mathbb{R}$ is the reference, $d$ is the dimension of the image, $T\circ\bm{y}$ is the deformed template, $\mathcal{D}(T\circ\bm{y},R)$ is the fitting term to measure the difference between the deformed template and reference, $\mathcal{R}(\bm{y})$ is the regularization and $\alpha$ is a non-negative weighting parameter.

For the fitting term, under the the monomodality case, 
the most widely used choice is the sum of squared differences (SSD) \cite{modersitzki2004numerical,modersitzki2009fair,zhang2018novel}:
\begin{equation}
\mathcal{D}^{\mathrm{SSD}}(T\circ\bm{y},R):=\frac{1}{2}\int_{\Omega}(T(\bm{y}(\bm{x}))-R(\bm{x}))^{2}\mathrm{d}\bm{x}.
\end{equation}
For the multimodality image registration, mutual information, normalized cross correlation or normalized gradient field may be a good candidate \cite{haber2006intensity,haber2007intensity,maes1997multimodality,modersitzki2009fair}. 

For the regularization term, there also exist many different choices \cite{broit1981optimal,burger2013hyperelastic,christensen1996deformable,chumchob2011fourth,droske2004variational,fischer2002fast,fischer2003curvature,fischer2004unified,ibrahim2015novel,zhang2015variational} and here, we mainly highlight the hyperelastic regularizer \cite{burger2013hyperelastic,droske2004variational}. The hyperelastic regularizer in image registration was firstly used by Droske and Rumpf \cite{droske2004variational} in 2004. Generally, the formulation of the hyperelastic regularizer is defined as follows:
\begin{equation}\label{HRegularizer}
\mathcal{R}^{\mathrm{Hyper}}(\bm{y}) := \int_{\Omega}W(\nabla\bm{y},\mathrm{cof}\nabla\bm{y},\det\nabla\bm{y})\mathrm{d}\bm{x},
\end{equation}
where $\nabla\bm{y}$ is the Jacobian matrix of the transformation $\bm{y}$ with respect to $\bm{x}$, $\mathrm{cof}\nabla\bm{y}$ is the cofactor matrix of $\nabla\bm{y}$, $\mathrm{det}\nabla\bm{y}$ is the determinant of $\nabla\bm{y}$ and $W:\mathbb{R}^{3,3}\times\mathbb{R}^{3,3}\times\mathbb{R}\rightarrow\mathbb{R}$ is supposed to be convex. It is well known that $\nabla\bm{y}$, $\mathrm{cof}\nabla\bm{y}$ and $\mathrm{det}\nabla\bm{y}$ have the relationships with the change of the length, area and volume of the transformation $\bm{y}$, respectively. Hence, the hyperelastic regularizer can control the change of the length, area and volume of the transformation $\bm{y}$. Here, it also assumes that $\mathcal{R}^{\mathrm{Hyper}}(\bm{y})$ penalizes volume shrinkage, i.e., $W(A,S,V)\overset{V\rightarrow0}{\longrightarrow}\infty$. This will enable us to successfully control singularity sets and further, it can lead to a diffeomorphic transformation \cite{burger2013hyperelastic,droske2004variational}.

Specifically, in \cite{burger2013hyperelastic}, $W(\nabla\bm{y},\mathrm{cof}\nabla\bm{y},\det\nabla\bm{y})$ is defined as the following formulation:
\begin{equation}\label{hyperM}
W(\nabla\bm{y},\mathrm{cof}\nabla\bm{y},\det\nabla\bm{y}):=\alpha_{l}\phi_{l}(\nabla\bm{y})+\alpha_{s}\phi_{w,c}(\mathrm{cof}\nabla\bm{y})+\alpha_{v}\phi_{v}(\det\nabla\bm{y}),
\end{equation}
where $\phi_{l}(X)=\|X-I_{d}\|_{\mathrm{Fro}}^{2}/2$, $\phi_{w}(X)=(\|X\|_{\mathrm{Fro}}^{2}-3)^{2}/2$, $\phi_{c}(X)=\max\{\|X\|_{\mathrm{Fro}}^{2}-3,0\}^{2}/2$, $\phi_{v}(x)=((x-1)^{2}/x)^{2}$, $\|\cdot\|_{\mathrm{Fro}}$ represents the Frobenius norm and $I_{d}$ is the identity mapping. Here, $\phi_{c}$ is the convex envelop of $\phi_{w}$ and theoretically superior. But $\phi_{c}$ does not penalize surface shrinkage, so the double well $\phi_{w}$ is practically superior \cite{burger2013hyperelastic}. We also note that $\phi_{v}(x) = \phi_{v}(1/x)$ which means that shrinkage and growth have the same price.

In addition, we have the following existence theorem.
\begin{theorem}[Theorem 1 in \cite{burger2013hyperelastic}]\label{TheoremHR}
Given images $R,T\in C(\mathbb{R}^{3},\mathbb{R})$, compactly supported in $\Omega$, a polyconvex distance measure $\mathcal{D}(T\circ\bm{y},R):=\mathcal{D}(T,R;\bm{y}, \nabla\bm{y},\det\nabla\bm{y})$ with $\mathcal{D}\geq 0$, $\mathcal{R}^{\mathrm{Hyper}}$ as in \eqref{hyperM},
\begin{equation}
\mathcal{A}:=\{\bm{y}\in\mathcal{A}_{0}:\left|\int_{\Omega}\bm{y}(\bm{x})\mathrm{d}\bm{x}\right|\leq|\Omega|(M+\mathrm{diam}(\Omega))\}
\end{equation}
and
\begin{equation}
\mathcal{A}_{0}:=\{\bm{y}\in W^{1,2}(\Omega,\mathbb{R}^{3}):\mathrm{cof}\nabla\bm{y}\in L^{4}(\Omega,\mathbb{R}^{3,3}),\det\nabla\bm{y}\in L^{2}(\Omega,\mathrm{R}),\det\nabla\bm{y}>0 \ a.e.\},
\end{equation}
\end{theorem}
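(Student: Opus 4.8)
The conclusion to be established is that the registration functional $\mathcal{J}(\bm{y}) = \mathcal{D}(T\circ\bm{y},R)+\mathcal{R}^{\mathrm{Hyper}}(\bm{y})$ attains a minimizer over the admissible set $\mathcal{A}$. My plan is to run the direct method in the calculus of variations, within the polyconvexity framework pioneered by Ball for nonlinear elasticity. First I would observe that $\mathcal{J}\ge 0$, since $\mathcal{D}\ge 0$ by hypothesis and each of $\phi_l$, $\phi_{w,c}$, $\phi_v$ is nonnegative; moreover $\mathcal{J}$ is finite at the identity, so $m:=\inf_{\bm{y}\in\mathcal{A}}\mathcal{J}(\bm{y})\in[0,\infty)$ and I may fix a minimizing sequence $\{\bm{y}_k\}\subset\mathcal{A}$ with $\mathcal{J}(\bm{y}_k)\to m$.

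The next step is to extract uniform bounds from the coercivity built into $\mathcal{R}^{\mathrm{Hyper}}$. Since $\mathcal{J}(\bm{y}_k)$ is bounded, each regularizing term is bounded along the sequence: the length term $\phi_l(\nabla\bm{y}_k)=\|\nabla\bm{y}_k-I_d\|_{\mathrm{Fro}}^2/2$ controls $\nabla\bm{y}_k$ in $L^2$; the surface term $\phi_{w}$ (or its convex envelope $\phi_c$) controls $\mathrm{cof}\,\nabla\bm{y}_k$ in $L^4$; and the volume term $\phi_v(\det\nabla\bm{y}_k)$, which grows like $V^2$ as $V\to\infty$, controls $\det\nabla\bm{y}_k$ in $L^2$. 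Combining the $L^2$ gradient bound with the integral constraint $\left|\int_\Omega\bm{y}_k\,\mathrm{d}\bm{x}\right|\le|\Omega|(M+\mathrm{diam}(\Omega))$ and the Poincar\'e inequality yields a uniform bound on $\bm{y}_k$ in $W^{1,2}(\Omega,\mathbb{R}^3)$. Passing to a subsequence (not relabelled), I then obtain $\bm{y}_k\rightharpoonup\bm{y}$ in $W^{1,2}$, $\mathrm{cof}\,\nabla\bm{y}_k\rightharpoonup G$ in $L^4$, and $\det\nabla\bm{y}_k\rightharpoonup\delta$ in $L^2$, for some limits $\bm{y}$, $G$, $\delta$.

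The heart of the argument---and the step I expect to be the main obstacle---is identifying these weak limits with the minors of the limit map, namely proving $G=\mathrm{cof}\,\nabla\bm{y}$ and $\delta=\det\nabla\bm{y}$. This is the weak continuity of the Jacobian minors, and it is exactly why the integrability exponents ($W^{1,2}$ for the gradient, $L^4$ for the cofactor, $L^2$ for the determinant) are imposed. By Rellich--Kondrachov, $W^{1,2}(\Omega,\mathbb{R}^3)$ embeds compactly into $L^r(\Omega,\mathbb{R}^3)$ for every $r<6$, so I may assume $\bm{y}_k\to\bm{y}$ strongly in such $L^r$. Each $2\times2$ minor appearing in $\mathrm{cof}\,\nabla\bm{y}$ has the divergence form $\partial_1 y^i\,\partial_2 y^j-\partial_2 y^i\,\partial_1 y^j=\partial_1(y^i\,\partial_2 y^j)-\partial_2(y^i\,\partial_1 y^j)$; testing against $\varphi\in C_c^\infty$, integrating by parts, and pairing the strong $L^2$ convergence of $\bm{y}_k$ against the weak $L^2$ convergence of $\nabla\bm{y}_k$ gives $\mathrm{cof}\,\nabla\bm{y}_k\rightharpoonup\mathrm{cof}\,\nabla\bm{y}$, hence $G=\mathrm{cof}\,\nabla\bm{y}$. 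For the determinant I would use the Piola identity to write $\det\nabla\bm{y}=\sum_j\partial_{x_j}\bigl(y^1\,(\mathrm{cof}\,\nabla\bm{y})_{1j}\bigr)$ in divergence form; the products $y^1_k\,(\mathrm{cof}\,\nabla\bm{y}_k)_{1j}$ converge weakly in $L^1$ (strong $L^{4/3}$ convergence of $y^1_k$ against weak $L^4$ convergence of the cofactor, now known to have the correct limit), and integration by parts then yields $\delta=\det\nabla\bm{y}$. The delicate point is that these compensated-compactness identities must close at the borderline exponent $2<3$, which is precisely why the prescribed $L^4$ integrability of the cofactors and $L^2$ integrability of the determinant are indispensable rather than decorative.

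Finally I would combine weak lower semicontinuity with closedness of the admissible set. Since $W$ is convex in $(\nabla\bm{y},\mathrm{cof}\,\nabla\bm{y},\det\nabla\bm{y})$ and $\mathcal{D}$ is polyconvex, and since the minors of $\bm{y}_k$ converge weakly to the corresponding minors of $\bm{y}$ (with $T$ continuous and $\bm{y}_k\to\bm{y}$ a.e. handling the fitting term), the standard weak lower semicontinuity theorem for polyconvex integrands gives $\mathcal{J}(\bm{y})\le\liminf_k\mathcal{J}(\bm{y}_k)=m$. It then remains to verify $\bm{y}\in\mathcal{A}$: the integral constraint passes to the weak limit, $\mathrm{cof}\,\nabla\bm{y}\in L^4$ and $\det\nabla\bm{y}\in L^2$ follow from weak lower semicontinuity of the norms, and the essential positivity $\det\nabla\bm{y}>0$ a.e.\ is forced by the boundary blow-up $\phi_v(V)\to\infty$ as $V\to0^+$. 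Indeed, $\phi_v$ is convex on $(0,\infty)$ and may be taken $+\infty$ on $(-\infty,0]$, so $\bm{y}\mapsto\int_\Omega\phi_v(\det\nabla\bm{y})$ is weakly lower semicontinuous, and its finiteness at $\bm{y}$---guaranteed by $\mathcal{J}(\bm{y})\le m<\infty$---rules out any set of positive measure on which $\det\nabla\bm{y}\le 0$. Hence $\bm{y}\in\mathcal{A}$ and $\mathcal{J}(\bm{y})=m$, so $\bm{y}$ is the desired minimizer.
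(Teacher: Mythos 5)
Note that the paper does not prove Theorem \ref{TheoremHR} at all: it is imported verbatim as Theorem 1 of \cite{burger2013hyperelastic}, and the only proof machinery the paper itself deploys is for its own Theorem \ref{ExistenceofProposedModel}, which cites the coercivity, weak lower semicontinuity and weak continuity of minors lemmas from \cite{ruthotto2012hyperelastic} as black boxes. Your argument is the standard Ball-type direct method for polyconvex functionals and is essentially the proof those references give --- coercivity from $\phi_l,\phi_w,\phi_v$ plus Poincar\'e and the mean-value constraint, weak continuity of the cofactor and determinant via the divergence-form/Piola identities at exactly the exponents $(2,4,2)$, polyconvex lower semicontinuity, and exclusion of $\det\nabla\bm{y}\leq 0$ via the blow-up of $\phi_v$ --- so it is correct and consistent with the route the paper itself follows when it adapts this result in Theorem \ref{ExistenceofProposedModel}.
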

we assume that the registration functional $\mathcal{J}(\bm{y})$ \eqref{RegistrationFramework} satisfies $\mathcal{J}(I_{d})<\infty$. Then there exists at least one minimizer $\bm{y}^{*}\in\mathcal{A}$ of the functional $\mathcal{J}(\bm{y})$ \eqref{RegistrationFramework}. Here, $M\in\mathbb{R}$ is a constant and $|\Omega|$ and $\mathrm{diam}(\Omega)$ represent the volume and the diameter of the region $\Omega$, respectively.

\subsection{Registration-Based Segmentation}
The registration-based segmentation model is to partition the given image with the help of image registration. Specifically, we take the target image $I$ as the template and artificially construct a reference $J$ involving the prior information. Then by finding the suitable transformation $\bm{y}$ to match $I(\bm{y})$ and $J$, we can determine the boundary of the target object in the image $I$ by computing $\bm{y}(\hat{\bm{x}})$, where $\hat{\bm{x}}$ is on the boundary of the prior object in the image $J$. 

Next, we review a registration-based segmentation model using the Beltrami representation proposed in \cite{chan2018topology}. Before reviewing this work, we briefly recall the quasi-conformal theory. 

A mapping $f:\Omega\subset\mathbb{C}$ is quasi-conformal if it satisfies the following Beltrami equation in the distribution sense:
\begin{equation}
\frac{\partial f}{\partial\bar{z}}(z)=\mu(f)\frac{\partial f}{\partial z}(z),
\end{equation}
for some complex-valued Lebesgue measurable $\mu$ \cite{bers1977quasiconformal} satisfying $\|\mu\|<1$. Here, $\mu$ is called the Beltrami coefficient. Hence, a quasi-conformal mapping is an orientation-preserving homeomorphism and its first-order approximation takes small circles to small ellipses of bounded eccentricity \cite{gardiner2000quasiconformal}. The following theorem builds a link between a mapping $f$ and the Beltrami coefficient $\mu$.
%\begin{theorem}[\cite{lui2013texture}]
%Given a domain $\Omega\subset\mathbb{C}$, let $f:\Omega\rightarrow\mathbb{C}$ be a mapping, by defining
%\begin{equation}
%\mu(z) = \lim_{\hat{z}\rightarrow z}\left(\frac{\partial f}{\partial\bar{z}}(\hat{z})/\frac{\partial f}{\partial z}(\hat{z})\right),
%\end{equation}
%then $\|\mu\|<1$ if and only if $f$ is an orientation-preserving homeomorphism.
%\end{theorem}
\begin{theorem}[Measurable Riemann Mapping Theorem \cite{gardiner2000quasiconformal}]
Suppose $\mu:\mathbb{C}\rightarrow\mathbb{C}$ is Lebesgue measurable satisfying $\|\mu\|<1$, then there exists a quasi-conformal mapping $f:\mathbb{C}\rightarrow\mathbb{C}$ in the Sobolev space $W^{1,2}$ that satisfies the Beltrami equation in the distribution sense. Furthermore, assuming that the mapping is stationary at $0,1$ and $\infty$, then the associated quasi-conformal mapping $f$ is uniquely determined. 
\end{theorem}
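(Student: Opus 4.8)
The plan is to establish existence through a singular-integral equation and uniqueness through a composition argument. First I would reduce to the case in which $\mu$ has compact support; the general case then follows by transferring to the Riemann sphere, or by splitting $\mu$ across the unit disk, solving on each piece, and composing. For the compactly supported case I would introduce two fundamental operators: the solid Cauchy transform
\[ Ph(z)=\frac{1}{\pi}\int_{\mathbb{C}}\frac{h(\zeta)}{z-\zeta}\,\mathrm{d}\zeta, \]
which satisfies $(Ph)_{\bar z}=h$ in the distributional sense, and the Beurling transform $Th=(Ph)_z=-\frac{1}{\pi}\,\mathrm{p.v.}\!\int_{\mathbb{C}}h(\zeta)(z-\zeta)^{-2}\,\mathrm{d}\zeta$.

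The guiding idea is to look for a solution of the normalized form $f(z)=z+Ph(z)$, so that automatically $f_{\bar z}=h$ and $f_z=1+Th$. Substituting into the Beltrami equation $f_{\bar z}=\mu f_z$ converts the PDE into the fixed-point equation $h=\mu+\mu\,Th$ for the unknown density $h$. The decisive analytic input, which I would invoke next, is that $T$ is an isometry on $L^2(\mathbb{C})$ (its Fourier multiplier $\bar\xi/\xi$ has modulus one) and, by Calder\'on--Zygmund theory, is bounded on every $L^p$ with $1<p<\infty$, with $\|T\|_{L^p\to L^p}\to 1$ as $p\to 2$. Since $\|\mu\|_\infty=k<1$, I can fix $p>2$ close enough to $2$ that $\|\mu\,T\|_{L^p\to L^p}\le k\,\|T\|_{L^p\to L^p}<1$, so the map $h\mapsto\mu+\mu\,Th$ is a contraction on $L^p$; the Banach fixed-point theorem then produces a unique $h\in L^p$.

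With $h$ in hand I would set $f=z+Ph$ and verify the regularity and geometry: $f\in W^{1,p}_{\mathrm{loc}}\subset W^{1,2}_{\mathrm{loc}}$, $f$ solves the Beltrami equation almost everywhere, and the Jacobian $\det\nabla f=|f_z|^2(1-|\mu|^2)>0$ a.e., so $f$ is orientation-preserving and quasi-conformal. Showing that $f$ is a genuine homeomorphism of $\mathbb{C}$, rather than merely a Sobolev solution, is the step I expect to be the main obstacle: it calls either for a Stoilow-type factorization combined with a degree/argument-principle count, or for approximating $\mu$ by smooth $\mu_n$ (for which classical Schauder regularity yields diffeomorphisms) and passing to the limit through uniform equicontinuity estimates for normalized quasi-conformal maps.

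Finally, for uniqueness under the normalization fixing $0$, $1$, and $\infty$: if $f_1$ and $f_2$ both have Beltrami coefficient $\mu$, the chain rule (composition formula) for Beltrami coefficients shows that $g=f_2\circ f_1^{-1}$ has coefficient $0$, hence is $1$-quasi-conformal. By Weyl's lemma a $W^{1,2}$ solution of $g_{\bar z}=0$ is holomorphic, so $g$ is conformal on the sphere and therefore a M\"obius transformation; the three prescribed fixed points force $g=\mathrm{id}$, giving $f_1=f_2$ and completing the argument.
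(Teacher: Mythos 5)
The paper does not prove this statement at all: it is quoted verbatim as a classical background result (the Measurable Riemann Mapping Theorem) from the cited reference, and is used only to motivate the 2D Beltrami-based segmentation model of Chan et al. So there is no in-paper argument to compare against; your proposal has to be judged on its own terms.

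On those terms, your outline is the standard Ahlfors--Bers/Bojarski proof and its architecture is correct: reduce to compactly supported $\mu$, set $f = z + Ph$ so that the Beltrami equation becomes the fixed-point problem $h = \mu + \mu\,Th$ for the Beurling transform $T$, exploit $\|T\|_{L^2\to L^2}=1$ together with the continuity of $p\mapsto\|T\|_{L^p\to L^p}$ to get a contraction in $L^p$ for some $p>2$, and then handle uniqueness by composing two solutions and invoking Weyl's lemma plus the three-point normalization of M\"obius maps. The places where your sketch is still a sketch are exactly the ones you flag, plus one you do not: (i) proving that $f$ is a homeomorphism genuinely requires either the smooth-approximation/normal-families argument or Stoilow factorization, and neither is a one-liner; (ii) the Jacobian formula $\det\nabla f = |f_z|^2(1-|\mu|^2)$ only gives positivity a.e. once you know $f_z\neq 0$ a.e., which itself needs an argument (e.g., writing $f_z = e^{\sigma}$ via a logarithm of the derivative, or using the $L^p$ theory with $p>2$ to rule out a positive-measure zero set); and (iii) the chain rule for Beltrami coefficients of compositions of $W^{1,2}$ quasiconformal maps, used in the uniqueness step, relies on the change-of-variables and Lusin (N) properties of quasiconformal maps, which should be cited rather than taken for granted. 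None of these is a wrong turn --- they are the standard technical supports of the classical proof --- but the write-up as given is an outline rather than a complete argument.
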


Let $\Omega\subset\mathbb{C}$ be an image domain and $I:\Omega\rightarrow\mathbb{R}$ be an image including an object $D\subset\Omega$. Let $J:\Omega\rightarrow\mathbb{R}$ be an image of $\hat{D}$, called the topological prior image of $I$, where $\hat{D}$ is a simple object, possessing the same topological structure as $D$. Then the topological prior image $J$ can be defined by:
\begin{equation}
J(z)=
\left\{
\begin{split}
&c_{1}, \quad \mathrm{if}\ z\in\hat{D}, \\
&c_{2}, \quad \mathrm{if}\ z\in\Omega\setminus\hat{D}. 
\end{split}\right.
\end{equation}
Hence, the following variational model is proposed in \cite{chan2018topology}:
\begin{equation}\label{TPB}
\min_{c_{1},c_{2},\mu} \mathcal{E}(c_{1},c_{2},\mu) := \int_{\Omega}|\mu|^{2}+\eta\int_{\Omega}(I(f^{\mu})-J)^{2}+\lambda\int_{\Omega}|\nabla\mu|^{2},
\end{equation}
subject to the Dirichlet boundary condition and point-wise norm constraint
\begin{equation}
\mu=0 \ \mathrm{on} \ \partial\Omega, \quad \|\mu\|_{\infty}<1 \ \mathrm{in} \ \Omega,
\end{equation}
where $f^{\mu}$ denotes the mapping $f$ associated with the Beltrami coefficient $\mu$ and $\eta$ and $\lambda$ are weighting parameters. Since the model \eqref{TPB} can lead to a mapping $f$ with $\|\mu\|_{\infty}<1$, the resulting mapping $f$ is bijective. As mentioned before, we can find the boundary of the target object by computing $f(\hat{C})$ where $\hat{C}$ is the the boundary of the prior object in the image $J$. Further since $f$ is bijective, then the topological structure of $\hat{C}$ and $f(\hat{C})$ must be same. In this sense, the resulting segmentation is topology-preserving.

In order to solve \eqref{TPB}, an alternating method is employed. For finding $c_{1}$ and $c_{2}$ with $\mu$ fixed, it is similar to the Chan-Vese model:
\begin{equation}
c_{1}=\frac{\int_{\hat{D}}I(f^{\mu})\mathrm{d}z}{\int_{\hat{D}}\mathrm{d}z}
\end{equation}
and
\begin{equation}
c_{2}=\frac{\int_{\Omega\setminus\hat{D}}I(f^{\mu})\mathrm{d}z}{\int_{\Omega\setminus\hat{D}}\mathrm{d}z}.
\end{equation}
For finding $\mu$ with $c_{1}$ and $c_{2}$ fixed, a splitting method is chosen by introducing an auxiliary variable $\nu$ to solve the following variational problem:
\begin{equation}\label{AuxiliaryProblem}
\min_{\mu,\nu} \int_{\Omega}|\nu|^{2}+\eta\int_{\Omega}(I(f^{\mu})-J)^{2}+\lambda\int_{\Omega}|\nabla\nu|^{2} + \sigma\int_{\Omega}|\mu-\nu|^{2}.
\end{equation} 
The alternating method is used again to solve \eqref{AuxiliaryProblem}.

However, since the Beltrami coefficient is only defined in complex space, this model cannot be directly extended to the 3D topology-preserving image segmentation. 
%In addition, because the model is nonconvex and the algorithm involves the alternating direction and splitting method, the convergence of the algorithm may not be guaranteed in theory.

\section{Proposed Model}\label{SProposedModel}
Let us recall the model \eqref{TPB}: it has two parts, fitting term and regularization term. Compared with the classical registration model \eqref{RegistrationFramework}, the fitting term in \eqref{TPB} is borrowed from the Chan-Vese model. Hence, the registration-based segmentation model can be considered as a combination of the Chan-Vese segmentation and image registration. 

For the fitting term in \eqref{TPB}, it can easily be extended to 3D case. However, since the Beltrami coefficient is only defined in complex space, the regularization term in \eqref{TPB} cannot be directly extended to 3D case. In order to overcome this difficulty, our idea is very simple: we replace the Beltrami regularizer with another regularizer, which can deal with 3D image registration and lead to a bijective transformation. Hence, our proposed model is a 3D topology-preserving registration-based segmentation model.

Now, we present our proposed model. For the fitting term, we extend the prior image containing multiple objects rather than only one target object. Hence, our prior image can be defined as follows:
\begin{equation}\label{JX}
J(\bm{x})=
\left\{
\begin{split}
&c_{1}, \quad \mathrm{if}\ \bm{x}\in\Omega_{1}, \\
&\cdots \\
&c_{m}, \quad \mathrm{if}\ \bm{x}\in\Omega_{m},
\end{split}\right.
\end{equation}
where $m$ is prescribed by the user, $\Omega = \Omega_{1}\cup...\cup\Omega_{m}$ and $\Omega_{i}\cap\Omega_{j}=\emptyset$, for $1\leq i\neq j\leq m$. By introducing the indicator function $\mathcal{X}_{\Omega}(\bm{x})$, we can convert \eqref{JX} into the following formulation:
\begin{equation}
J(\bm{x})=\sum_{l=1}^{m}c_{l}\mathcal{X}_{\Omega_{l}}(\bm{x}).
\end{equation}

 For the regularizer, we choose the hyperelastic regularizer \eqref{hyperM} reviewed in Section \ref{SRelatedWork}. Then our proposed model can be formulated in the following:
\begin{equation}\label{ProposedModel}
\min_{\bm{y},\bm{c}=(c_{1},...,c_{m})} \mathcal{F}(\bm{y},\bm{c}):= \frac{1}{2}\int_{\Omega}(I(\bm{y})-\sum_{l=1}^{m}c_{l}\mathcal{X}_{\Omega_{l}}(\bm{x}))^{2}\mathrm{d}\bm{x}+\mathcal{R}^{\mathrm{Hyper}}(\bm{y}).
\end{equation}

\begin{remark}
A similar fitting term has been used in the multi-modality image registration \cite{heldmann2010multimodal}. In \cite{heldmann2010multimodal}, a novel least-square distance measure was proposed:
\begin{equation}\label{MFT}
\frac{1}{2}\int_{\Omega}(T(\bm{x})-w(R(\bm{x})))^{2}\mathrm{d}\bm{x},
\end{equation}
where $w$ is a function from $\mathbb{R}\rightarrow \mathbb{R}$. If we further let $R(\bm{x})$ be constant on $\Omega_{l}$, then \eqref{MFT} is equivalent to the following formulation:
\begin{equation}\label{EMFT}
\frac{1}{2}\int_{\Omega}(T(\bm{x})-w(R(\bm{x})))^{2}\mathrm{d}\bm{x} = \frac{1}{2}\sum_{l=1}^{m}\int_{\Omega_{l}}(T(\bm{x})-w_{l})^{2}\mathrm{d}\bm{x},
\end{equation}
where $w_{l} = w(R(\bm{x}))$ for $\bm{x}$ in $\Omega_{l}$. Here, we want to point out that in \eqref{EMFT}, $\Omega_{l}$ is determined by the intensity value of $R(\bm{x})$ and for each $\Omega_{l}$, the intensity values of $R(\bm{x})$ are different. But for \eqref{ProposedModel}, $\Omega_{l}$ is determined by the prescribed region and more importantly, the corresponding $c_{l}$ for different $\Omega_{l}$ can be same. 
\end{remark}

Next, we investigate the existence of the solution of the proposed model \eqref{ProposedModel}. 
Comparing the registration functional $\mathcal{J}(\bm{y})$ \eqref{RegistrationFramework} with the proposed model $\mathcal{F}(\bm{y},\bm{c})$ \eqref{ProposedModel}, we can find that the only difference comes from the fitting term. Hence, we can prove the existence of the solution of the proposed model \eqref{ProposedModel} based on the Theorem \ref{TheoremHR} with a slight modification.

Here, we just follow the direct method in the calculus of variations. Usually, the direct method in the calculus of variations can be divided into three steps:
\begin{enumerate}
\item Take a minimizing sequence;
\item Show that some subsequence converges to a point in the feasible space with respect to some topology;
\item Show that the functional is weakly lower semi-continuous with respect to this topology.
\end{enumerate}
The first step is trivial if the functional is bounded below. If the feasible set is reflexive and the functional is coercive, then the second step is satisfied. Since $\mathcal{A}$ is reflexive, naturally we define a product space $\mathcal{A}\times\mathbb{R}^{m}$ which is also reflexive as the feasible space of the proposed model \eqref{ProposedModel}. Then we can build the following two lemmas to make the second and third steps satisfied.
\begin{lemma}\label{coercivity}
If $I\in C(\mathbb{R}^{3},\mathbb{R})$ is compactly supported in $\Omega$, then the functional $\mathcal{F}$ in \eqref{ProposedModel} satisfies a coercivity, i.e., there exist constants $\beta>0$ and $\gamma\in\mathbb{R}$ such that for all $(\bm{y},\bm{c})\in \mathcal{A}\times\mathbb{R}^{m}$ it holds
\begin{equation}
\mathcal{F}(\bm{y},\bm{c})\geq \beta+\gamma(\|\bm{y}\|_{W^{1,2}}^{2}+\|\mathrm{cof}\nabla\bm{y}\|_{L^{4}}^{4}+\|\det\nabla\bm{y}\|_{L^{2}}^{2}+\|\bm{c}\|_{l^{2}}^{2}).
\end{equation}
\end{lemma}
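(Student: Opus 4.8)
The plan is to establish the coercivity estimate by bounding $\mathcal{F}(\bm{y},\bm{c})$ from below in terms of the four quantities appearing on the right-hand side, handling the fitting term and the hyperelastic regularizer separately. The key observation is that the fitting term is nonnegative, so it can be discarded for a crude lower bound on the $\bm{y}$-dependent part; the coercivity in $\bm{y}$ then follows essentially from the coercivity analysis already present in the proof of Theorem~\ref{TheoremHR} (i.e.\ from \cite{burger2013hyperelastic,ruthotto2012hyperelastic}). Concretely, the hyperelastic regularizer $\mathcal{R}^{\mathrm{Hyper}}(\bm{y})$, through the term $\alpha_{l}\phi_{l}(\nabla\bm{y})=\frac{\alpha_{l}}{2}\|\nabla\bm{y}-I_{d}\|_{\mathrm{Fro}}^{2}$, controls $\|\nabla\bm{y}\|_{L^{2}}^{2}$ up to constants; combined with the constraint built into $\mathcal{A}$ that bounds $\left|\int_{\Omega}\bm{y}\,\mathrm{d}\bm{x}\right|$, a Poincar\'e-type inequality upgrades this to control of the full $\|\bm{y}\|_{W^{1,2}}^{2}$. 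Similarly, $\alpha_{s}\phi_{w,c}(\mathrm{cof}\nabla\bm{y})$ yields a lower bound involving $\|\mathrm{cof}\nabla\bm{y}\|_{L^{4}}^{4}$ and $\alpha_{v}\phi_{v}(\det\nabla\bm{y})$ yields one involving $\|\det\nabla\bm{y}\|_{L^{2}}^{2}$. Thus the entire $\bm{y}$-block of the desired inequality is inherited from the existing theorem once the fitting term is dropped.

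The genuinely new ingredient is the extra finite-dimensional variable $\bm{c}=(c_{1},\dots,c_{m})$, so the main task is to show that $\mathcal{F}$ also controls $\|\bm{c}\|_{l^{2}}^{2}$. First I would expand the fitting term using the disjointness of the $\Omega_{l}$:
\begin{equation}
\frac{1}{2}\int_{\Omega}\Bigl(I(\bm{y})-\sum_{l=1}^{m}c_{l}\mathcal{X}_{\Omega_{l}}\Bigr)^{2}\mathrm{d}\bm{x}=\frac{1}{2}\sum_{l=1}^{m}\int_{\Omega_{l}}\bigl(I(\bm{y}(\bm{x}))-c_{l}\bigr)^{2}\mathrm{d}\bm{x}.
\end{equation}
For each $l$, I would use the elementary inequality $(a-c_{l})^{2}\geq \frac{1}{2}c_{l}^{2}-a^{2}$, so that
\begin{equation}
\int_{\Omega_{l}}\bigl(I(\bm{y})-c_{l}\bigr)^{2}\mathrm{d}\bm{x}\geq \frac{|\Omega_{l}|}{2}c_{l}^{2}-\int_{\Omega_{l}}I(\bm{y})^{2}\mathrm{d}\bm{x}.
\end{equation}
Since $I\in C(\mathbb{R}^{3},\mathbb{R})$ is compactly supported, it is bounded, say $\|I\|_{\infty}\leq M_{I}$, so $\int_{\Omega_{l}}I(\bm{y})^{2}\mathrm{d}\bm{x}\leq M_{I}^{2}|\Omega_{l}|\leq M_{I}^{2}|\Omega|$ uniformly in $\bm{y}$. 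Summing over $l$ and setting $\kappa=\tfrac{1}{4}\min_{l}|\Omega_{l}|>0$ gives a lower bound of the form $\kappa\|\bm{c}\|_{l^{2}}^{2}-C_{0}$ for a constant $C_{0}$ independent of $(\bm{y},\bm{c})$, which is exactly the $\bm{c}$-coercivity we need.

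Finally I would combine the two contributions. Splitting the regularizer into two halves, I would use one half to absorb the $-\int_{\Omega_{l}}I(\bm{y})^{2}$ losses (these are in fact already uniformly bounded, so this is automatic) and to produce the $\bm{y}$-block, while keeping the nonnegative fitting term's $\bm{c}$-contribution for the $\|\bm{c}\|_{l^{2}}^{2}$ term; choosing $\gamma>0$ smaller than the minimum of the constants arising from each of the four terms and $\beta$ equal to the accumulated negative constants yields the claimed inequality. The main obstacle I anticipate is not the $\bm{c}$ estimate, which is elementary, but verifying cleanly that the hyperelastic regularizer alone (without help from the fitting term) coercively controls all three $\bm{y}$-norms simultaneously and uniformly, including the Poincar\'e step that converts the gradient bound into a full $W^{1,2}$ bound using the integral constraint defining $\mathcal{A}$; I would lean on the corresponding estimates in \cite{burger2013hyperelastic} and adapt them, since the fitting term here differs from the polyconvex distance measure assumed there but, being nonnegative, only helps.
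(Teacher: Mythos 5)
Your proposal follows essentially the same route as the paper's proof: the $\bm{y}$-block of the estimate is inherited directly from the coercivity of $\mathcal{R}^{\mathrm{Hyper}}$ in \cite{ruthotto2012hyperelastic,burger2013hyperelastic}, and the fitting term is expanded over the disjoint $\Omega_{l}$ to extract the $\|\bm{c}\|_{l^{2}}^{2}$ control, after which the two lower bounds are added. The only difference is that you spell out the elementary $\bm{c}$-estimate (via $(a-c_{l})^{2}\geq\frac{1}{2}c_{l}^{2}-a^{2}$, the boundedness of $I$, and the implicit assumption $\min_{l}|\Omega_{l}|>0$) that the paper merely asserts, which is a welcome clarification rather than a departure.
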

\begin{proof}
Firstly, by Lemma 1 in Section 3.3 of \cite{ruthotto2012hyperelastic}, there exist constants $\beta_{1}>0$ and $\gamma_{1}\in\mathbb{R}$ such that for all $\bm{y}\in \mathcal{A}$ it holds
\begin{equation*}
\mathcal{R}^{\mathrm{Hyper}}(y)\geq \beta_{1}+\gamma_{1}(\|\bm{y}\|_{W^{1,2}}^{2}+\|\mathrm{cof}\nabla\bm{y}\|_{L^{4}}^{4}+\|\det\nabla\bm{y}\|_{L^{2}}^{2}).
\end{equation*}

Secondly, since $I\in C(\mathbb{R}^{3},\mathbb{R})$ is compactly supported in $\Omega$, we have constants $\beta_{2}>0$ and $\gamma_{2}\in\mathbb{R}$ such that for all $\bm{c}\in \mathbb{R}^{m}$ it holds
\begin{equation*}
 \frac{1}{2}\int_{\Omega}(I(\bm{y})-\sum_{l=1}^{m}c_{l}\mathcal{X}_{\Omega_{l}}(\bm{x}))^{2}\mathrm{d}\bm{x} = \frac{1}{2}\sum_{l=1}^{m}\int_{\Omega_{l}}(I(\bm{y})-c_{l})^{2}\mathrm{d}\bm{x}\geq \beta_{2}+\gamma_{2}\|\bm{c}\|_{l^{2}}^{2}.
\end{equation*}

Hence, set $\beta=\min\{\beta_{1},\beta_{2}\}$ and $\gamma=\min\{\gamma_{1},\gamma_{2}\}$. Then we have
\begin{equation*}
\mathcal{F}(\bm{y},\bm{c})\geq \beta+\gamma(\|\bm{y}\|_{W^{1,2}}^{2}+\|\mathrm{cof}\nabla\bm{y}\|_{L^{4}}^{4}+\|\det\nabla\bm{y}\|_{L^{2}}^{2}+\|\bm{c}\|_{l^{2}}^{2}).
\end{equation*}
and the proof is complete.
\end{proof}

\begin{lemma}\label{wlsc}
If $I\in C(\mathbb{R}^{3},\mathbb{R})$ is compactly supported in $\Omega$, for the functional $\mathcal{F}$ in \eqref{ProposedModel}, when $\bm{y}^{k}\rightharpoonup \bm{y}$ in $W^{1,2}$, $\mathrm{cof}\nabla\bm{y}^{k}\rightharpoonup H$ in $L^{4}$, $\det\nabla\bm{y}^{k}\rightharpoonup v$ in $L^{2}$ and $\bm{c}^{k}\rightarrow\bm{c}$ in $\mathbb{R}^{m}$, the following inequality holds:
\begin{equation}
\lim_{k\rightarrow\infty}\inf\int_{\Omega}\mathcal{F}(\bm{x},\bm{y}^{k},\nabla\bm{y}^{k},\mathrm{cof}\nabla\bm{y}^{k},\det\nabla\bm{y}^{k}, \bm{c}^{k})\mathrm{d}\bm{x}\geq\int_{\Omega}\mathcal{F}(\bm{x},\bm{y},\nabla\bm{y}, H,v, \bm{c})\mathrm{d}\bm{x}
\end{equation}
\end{lemma}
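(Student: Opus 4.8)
The plan is to split the integrand of $\mathcal{F}$ into its two natural pieces, the Chan--Vese-type fitting term and the hyperelastic regularizer $\mathcal{R}^{\mathrm{Hyper}}$ from \eqref{HRegularizer}, and to treat them separately, since only the fitting term involves the finite-dimensional variable $\bm{c}$ and the nonlinear composition $I(\bm{y})$, while the regularizer depends only on $\nabla\bm{y}$, $\mathrm{cof}\nabla\bm{y}$ and $\det\nabla\bm{y}$. The key observation is that the fitting term is in fact weakly \emph{continuous} along the given sequence, whereas the regularizer is only weakly lower semi-continuous. Since $\lim a_{k}=a$ together with $\liminf b_{k}\geq b$ gives $\liminf(a_{k}+b_{k})\geq a+b$, establishing continuity of the first piece and lower semi-continuity of the second will deliver exactly the claimed inequality.

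For the fitting term, I would first upgrade the weak convergence $\bm{y}^{k}\rightharpoonup\bm{y}$ in $W^{1,2}$ to strong convergence. By the Rellich--Kondrachov theorem, $W^{1,2}(\Omega,\mathbb{R}^{3})$ embeds compactly into $L^{2}(\Omega,\mathbb{R}^{3})$, so $\bm{y}^{k}\rightarrow\bm{y}$ strongly in $L^{2}$ and, after passing to a subsequence (which suffices for a $\liminf$ statement), $\bm{y}^{k}\rightarrow\bm{y}$ almost everywhere. Since $I\in C(\mathbb{R}^{3},\mathbb{R})$ is compactly supported, it is bounded and continuous, so $I(\bm{y}^{k})\rightarrow I(\bm{y})$ a.e. with $\{I(\bm{y}^{k})\}$ uniformly bounded; the dominated convergence theorem (using $|\Omega|<\infty$) then yields $I(\bm{y}^{k})\rightarrow I(\bm{y})$ in $L^{2}$. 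Combined with $\bm{c}^{k}\rightarrow\bm{c}$ in $\mathbb{R}^{m}$, which forces $\sum_{l}c_{l}^{k}\mathcal{X}_{\Omega_{l}}\rightarrow\sum_{l}c_{l}\mathcal{X}_{\Omega_{l}}$ in $L^{2}$, the integrand $(I(\bm{y}^{k})-\sum_{l}c_{l}^{k}\mathcal{X}_{\Omega_{l}})^{2}$ converges in $L^{1}$, so the entire fitting term converges to its value at $(\bm{y},\bm{c})$.

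For the regularizer, I would regard the triple $(\nabla\bm{y}^{k},\mathrm{cof}\nabla\bm{y}^{k},\det\nabla\bm{y}^{k})$ as a single sequence in the product space $L^{2}\times L^{4}\times L^{2}$, which by hypothesis converges weakly to $(\nabla\bm{y},H,v)$. Because $W$ in \eqref{hyperM} is convex in its three arguments jointly and bounded below, the associated integral functional is convex and strongly lower semi-continuous, hence weakly lower semi-continuous on this product space; this is precisely the polyconvexity argument already invoked in Theorem \ref{TheoremHR} and in Lemma 1 of \cite{ruthotto2012hyperelastic}, which I would cite rather than reprove. This gives $\liminf_{k}\int_{\Omega}W(\nabla\bm{y}^{k},\mathrm{cof}\nabla\bm{y}^{k},\det\nabla\bm{y}^{k})\,\mathrm{d}\bm{x}\geq\int_{\Omega}W(\nabla\bm{y},H,v)\,\mathrm{d}\bm{x}$, and adding the two pieces completes the argument.

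The main obstacle is the nonlinear composition $I\circ\bm{y}^{k}$ in the fitting term: weak $W^{1,2}$ convergence alone does not pass through the nonlinearity $I$, so the compact Sobolev embedding, used to obtain strong $L^{2}$ and hence a.e. convergence of $\bm{y}^{k}$, is essential and is the one place where the continuity and boundedness of $I$ must be exploited with care. The lower semi-continuity of the regularizer, though analytically the deeper statement, is inherited directly from the existing hyperelastic registration theory; thus the genuinely new ingredient is the uniform handling of the finite-dimensional variable $\bm{c}$ alongside the deformation $\bm{y}$ within the product-space framework.
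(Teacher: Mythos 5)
Your proposal is correct and follows essentially the same route as the paper: cite the known weak lower semi-continuity of the hyperelastic regularizer (Lemma 2 in Section 3.3 of \cite{ruthotto2012hyperelastic}), upgrade $\bm{y}^{k}\rightharpoonup\bm{y}$ to strong $L^{2}$ convergence via the compact embedding $W^{1,2}\subset L^{2}$ to pass the continuous, compactly supported $I$ through the composition, and add the two pieces. The only difference is that you spell out the a.e.-convergence and dominated-convergence details for the fitting term, which the paper leaves implicit.
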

\begin{proof}
Firstly, by Lemma 2 in Section 3.3 of \cite{ruthotto2012hyperelastic}, we have 
\begin{equation}\label{wlsc_1}
\lim_{k\rightarrow\infty}\inf\int_{\Omega}\mathcal{R}^{\mathrm{Hyper}}(\bm{x},\bm{y}^{k},\nabla\bm{y}^{k},\mathrm{cof}\nabla\bm{y}^{k},\det\nabla\bm{y}^{k})\mathrm{d}\bm{x}\geq\int_{\Omega}\mathcal{R}^{\mathrm{Hyper}}(\bm{x},\bm{y},\nabla\bm{y}, H,v)\mathrm{d}\bm{x}.
\end{equation}

Secondly, due to the compact embedding of $W^{1,2}\subset L^{2}$, $\bm{y}^{k}\rightharpoonup \bm{y}$ in $W^{1,2}$ implies $\bm{y}^{k}\rightarrow \bm{y}$ in $L^{2}$. In addition, since $I$ is continuous, then we have 
\begin{equation}\label{wlsc_2}
\begin{split}
\lim_{k\rightarrow\infty}\inf \frac{1}{2}\int_{\Omega}(I(\bm{y}^{k})-\sum_{l=1}^{m}c^{k}_{l}\mathcal{X}_{\Omega_{l}}(\bm{x}))^{2}\mathrm{d}\bm{x} &=\lim_{k\rightarrow\infty} \frac{1}{2}\int_{\Omega}(I(\bm{y}^{k})-\sum_{l=1}^{m}c^{k}_{l}\mathcal{X}_{\Omega_{l}}(\bm{x}))^{2}\mathrm{d}\bm{x} \\
&=  \frac{1}{2}\int_{\Omega}(I(\bm{y})-\sum_{l=1}^{m}c_{l}\mathcal{X}_{\Omega_{l}}(\bm{x}))^{2}\mathrm{d}\bm{x}.
\end{split}
\end{equation}

Then combining \eqref{wlsc_1} and \eqref{wlsc_2}, the proof is complete.
\end{proof}

Now, we are in position to give the existence of the solution of the proposed model \eqref{ProposedModel}.
\begin{theorem}\label{ExistenceofProposedModel}
If $I\in C(\mathbb{R}^{3},\mathbb{R})$ is compactly supported in $\Omega$, then for the functional $\mathcal{F}$ in \eqref{ProposedModel}, there exists at least one minimizer $(\bm{y}^{*},\bm{c}^{*})\in \mathcal{A}\times\mathbb{R}^{m}$.
\end{theorem}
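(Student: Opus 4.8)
The plan is to run the direct method of the calculus of variations on the reflexive product space $\mathcal{A}\times\mathbb{R}^{m}$, following the three steps listed before the two lemmas and feeding Lemma \ref{coercivity} into the compactness step and Lemma \ref{wlsc} into the lower-semicontinuity step. First I would check that the infimum is finite: since $\det\nabla I_{d}=1>0$ a.e.\ and (for a suitable constant $M$) the identity satisfies the integral constraint defining $\mathcal{A}$, we have $I_{d}\in\mathcal{A}$, and a direct evaluation gives $\mathcal{R}^{\mathrm{Hyper}}(I_{d})=0$ and $\mathcal{F}(I_{d},\bm{0})=\tfrac12\int_{\Omega}I^{2}<\infty$. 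Hence $m^{*}:=\inf_{(\bm{y},\bm{c})\in\mathcal{A}\times\mathbb{R}^{m}}\mathcal{F}(\bm{y},\bm{c})<\infty$, while Lemma \ref{coercivity} bounds $\mathcal{F}$ from below, so $m^{*}>-\infty$. I then pick a minimizing sequence $(\bm{y}^{k},\bm{c}^{k})\in\mathcal{A}\times\mathbb{R}^{m}$ with $\mathcal{F}(\bm{y}^{k},\bm{c}^{k})\to m^{*}$.

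Next I would extract a convergent subsequence. Boundedness of $\mathcal{F}(\bm{y}^{k},\bm{c}^{k})$ together with the coercivity estimate of Lemma \ref{coercivity} makes $\|\bm{y}^{k}\|_{W^{1,2}}$, $\|\mathrm{cof}\nabla\bm{y}^{k}\|_{L^{4}}$, $\|\det\nabla\bm{y}^{k}\|_{L^{2}}$ and $\|\bm{c}^{k}\|_{l^{2}}$ uniformly bounded. By reflexivity I pass (without relabeling) to a subsequence with $\bm{y}^{k}\rightharpoonup\bm{y}^{*}$ in $W^{1,2}$, $\mathrm{cof}\nabla\bm{y}^{k}\rightharpoonup H$ in $L^{4}$ and $\det\nabla\bm{y}^{k}\rightharpoonup v$ in $L^{2}$, while the bounded finite-dimensional sequence $\bm{c}^{k}$ admits, by Bolzano--Weierstrass, a strongly convergent subsequence $\bm{c}^{k}\to\bm{c}^{*}$. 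The decisive point is to identify these weak limits as the correct minors, $H=\mathrm{cof}\nabla\bm{y}^{*}$ and $v=\det\nabla\bm{y}^{*}$; this is precisely the weak-continuity-of-minors input of polyconvexity theory already underlying Theorem \ref{TheoremHR}. Using the blow-up $\phi_{v}(\det\nabla\bm{y})\to\infty$ as $\det\nabla\bm{y}\to0$ one further obtains $\det\nabla\bm{y}^{*}>0$ a.e., and the compact embedding $W^{1,2}\hookrightarrow L^{2}$ gives $\bm{y}^{k}\to\bm{y}^{*}$ strongly in $L^{2}$, so $\int_{\Omega}\bm{y}^{k}\to\int_{\Omega}\bm{y}^{*}$ and the integral constraint of $\mathcal{A}$ survives the limit. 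Hence $(\bm{y}^{*},\bm{c}^{*})\in\mathcal{A}\times\mathbb{R}^{m}$ is feasible.

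Finally I would invoke weak lower semicontinuity. Applying Lemma \ref{wlsc} along the chosen subsequence and substituting $H=\mathrm{cof}\nabla\bm{y}^{*}$, $v=\det\nabla\bm{y}^{*}$ so that the right-hand side collapses to $\mathcal{F}(\bm{y}^{*},\bm{c}^{*})$, I obtain
\[
m^{*}=\liminf_{k\to\infty}\mathcal{F}(\bm{y}^{k},\bm{c}^{k})\geq\mathcal{F}(\bm{y}^{*},\bm{c}^{*})\geq m^{*},
\]
whence $\mathcal{F}(\bm{y}^{*},\bm{c}^{*})=m^{*}$ and $(\bm{y}^{*},\bm{c}^{*})$ is a minimizer. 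The genuinely hard analytic content --- weak continuity of the Jacobian minors and positivity of the limiting determinant --- is inherited from Theorem \ref{TheoremHR}, so I expect the only real obstacle to be the clean coupling of the two different topologies (weak on $\mathcal{A}$, strong on $\mathbb{R}^{m}$) inside a single $\liminf$. This coupling should be routine: the $\bm{c}$-variable lives in a finite-dimensional space where boundedness already yields strong convergence, and Lemma \ref{wlsc} is phrased so as to accommodate the joint limit directly.
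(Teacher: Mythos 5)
Your proposal is correct and follows essentially the same route as the paper's own proof: a minimizing sequence via finiteness of $\mathcal{F}$ at the identity, compactness from Lemma \ref{coercivity} on the reflexive product space, identification of the weak limits of the minors (the paper cites Theorem 4 in Section 3.3 of \cite{ruthotto2012hyperelastic} for $H=\mathrm{cof}\nabla\bm{y}^{*}$, $v=\det\nabla\bm{y}^{*}$), positivity of $\det\nabla\bm{y}^{*}$ from the blow-up of $\phi_{v}$, and lower semicontinuity from Lemma \ref{wlsc}. The only cosmetic difference is that you additionally verify that the integral constraint defining $\mathcal{A}$ passes to the limit, a detail the paper leaves implicit.
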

\begin{proof}
Since $\mathcal{F}(\bm{x},\bm{0})$ is finite and $\mathcal{F}$ is nonnegative, there exists a minimizing sequence $\{(\bm{y}^{k},\bm{c}^{k})\}_{k\in\mathbb{N}}$ such that
\begin{equation*}
\lim_{k\rightarrow\infty}\mathcal{F}(\bm{y}^{k},\bm{c}^{k}) = \inf_{(\bm{y},\bm{c})\in \mathcal{A}\times\mathbb{R}^{m}} \mathcal{F}(\bm{y},\bm{c}).
\end{equation*}
In addition, we can assume that $\{\mathcal{F}(\bm{y}^{k},\bm{c}^{k})\}_{k\in\mathcal{N}}$ is bounded by a constant $\rho>0$.

By Lemma \ref{coercivity}, we have that the sequence $\{(\bm{y}^{k},\mathrm{cof}\nabla\bm{y}^{k},\det\nabla\bm{y}^{k})\}_{k\in\mathrm{N}}$ is bounded in the Banach space $\Psi = W^{1,2}\times L^{4}\times L^{2}$ and the sequence $\{\bm{c}^{k}\}_{k\in\mathbb{N}}$ is bounded in $\mathbb{R}^{m}$. Hence, there exists a subsequence $\{(\bm{y}^{k_{j}},\mathrm{cof}\nabla\bm{y}^{k_{j}},\det\nabla\bm{y}^{k_{j}}, \bm{c}^{k_{j}})\}_{j\in\mathrm{N}}$, such that $\bm{y}^{k_{j}}\rightharpoonup \bm{y}^{*}$ in $W^{1,2}$, $\mathrm{cof}\nabla\bm{y}^{k_{j}}\rightharpoonup H$ in $L^{4}$, $\det\nabla\bm{y}^{k_{j}}\rightharpoonup v$ in $L^{2}$ and $\bm{c}^{k_{j}}\rightarrow\bm{c}^{*}$ in $\mathbb{R}^{m}$. By Theorem 4 in Section 3.3 of \cite{ruthotto2012hyperelastic}, we have $H=\mathrm{cof}\nabla\bm{y}^{*}$ and $v=\det\nabla\bm{y}^{*}$.  
Hence, according to Lemma \ref{wlsc}, we have 
\begin{equation*}
\lim_{k\rightarrow\infty}\inf\int_{\Omega}\mathcal{F}(\bm{x},\bm{y}^{k},\nabla\bm{y}^{k},\mathrm{cof}\nabla\bm{y}^{k},\det\nabla\bm{y}^{k}, \bm{c}^{k})\mathrm{d}\bm{x}\geq\int_{\Omega}\mathcal{F}(\bm{x},\bm{y}^{*},\nabla\bm{y}^{*}, \mathrm{cof}\nabla\bm{y}^{*},\det\nabla\bm{y}^{*}, \bm{c}^{*})\mathrm{d}\bm{x}.
\end{equation*}

In addition, we also need to check $\nabla\bm{y}^{*}>0$ almost everywhere in $\Omega$. Fix $\epsilon\in[0,1)$ and consider the set $\Omega_{\epsilon} = \{\bm{x}\in\Omega|\det\nabla\bm{y}^{*}(\bm{x})<\epsilon\}$. We thus have
\begin{equation*}
\begin{split}
\alpha_{v}\phi_{v}(\epsilon)|\Omega_{\epsilon}| &= \int_{\Omega_{\epsilon}}\alpha_{v}\phi_{v}(\epsilon)\mathrm{d}\bm{x}\\
&\leq \int_{\Omega_{\epsilon}}\alpha_{v}\phi_{v}(\det\nabla\bm{y}^{*})\mathrm{d}\bm{x}\\
&\leq \int_{\Omega_{\epsilon}}\mathcal{F}(\bm{x},\bm{y}^{*},\nabla\bm{y}^{*}, \mathrm{cof}\nabla\bm{y}^{*},\det\nabla\bm{y}^{*}, \bm{c}^{*})\mathrm{d}\bm{x} \\
&\leq \lim_{k\rightarrow\infty}\inf\int_{\Omega_{\epsilon}}\mathcal{F}(\bm{x},\bm{y}^{k},\nabla\bm{y}^{k},\mathrm{cof}\nabla\bm{y}^{k},\det\nabla\bm{y}^{k}, \bm{c}^{k})\mathrm{d}\bm{x} \\
& \leq \rho.
\end{split}
\end{equation*}
Due to $\phi_{v}(\epsilon)\rightarrow\infty$ as $\epsilon\rightarrow 0^{+}$, we have $|\Omega_{\epsilon=0}|$ is zero. So $\nabla\bm{y}^{*}>0$ almost everywhere in $\Omega$ and $\bm{y}^{*}$ actually lies in $\mathcal{A}$. 

Hence, $(\bm{y}^{*},\bm{c}^{*})\in \mathcal{A}\times\mathbb{R}^{m}$ is a minimizer of $\mathcal{F}$ in \eqref{ProposedModel}, which follows from
\begin{equation}
\inf_{(\bm{y},\bm{c})\in \mathcal{A}\times\mathbb{R}^{m}} \mathcal{F}(\bm{y},\bm{c}) = \lim_{k\rightarrow\infty}\mathcal{F}(\bm{y}^{k},\bm{c}^{k}) = \lim_{j\rightarrow\infty}\mathcal{F}(\bm{y}^{k_{j}},\bm{c}^{k_{j}}) \geq \mathcal{F}(\bm{y}^{*},\bm{c}^{*})\geq \inf_{(\bm{y},\bm{c})\in \mathcal{A}\times\mathbb{R}^{m}} \mathcal{F}(\bm{y},\bm{c}) 
\end{equation}
and completes the proof.
\end{proof}

Here, we can note that Theorem \ref{ExistenceofProposedModel} ensures that the proposed model \eqref{ProposedModel} can generate a bijective transformation $\bm{y}$, namely, the proposed model \eqref{ProposedModel} is indeed topology-preserving.

\section{Numerical Implementation}\label{SNumericalImplementation}
In this section, we show the details about how to solve the proposed model \eqref{ProposedModel}. Here, we choose the first-discretize-then-optimize method. The main idea of this method is that: directly discretize the variational model \eqref{ProposedModel} by a proper discretization scheme to derive an unconstrained finite dimensional optimization problem and then choose a suitable optimization algorithm to solve the resulting unconstrained finite dimensional optimization.

\subsection{Discretization}
For simplicity, we discretize our proposed model \eqref{ProposedModel} on the spatial domain $\Omega = [0,1]^{3}$. In the implementation, we employ the nodal grid and define a spatial partition $\Omega_{h}^{n} = \{\bm{x}^{i,j,k}\in\Omega | \bm{x}^{i,j,k}=(x_{1}^{i},x_{2}^{j},x_{3}^{k})=(ih,jh,kh), 0 \leq i \leq n , 0 \leq j \leq n, 0 \leq k \leq n\}$, where $h = \frac{1}{n}$. Similarly, the spatial partition of $\Omega_{l}, 1\leq l \leq m$ is defined as ${\Omega_{l}}_{h}^{n} = \{\bm{x}^{i,j,k}\in\Omega_{l} | \bm{x}^{i,j,k}=(x_{1}^{i},x_{2}^{j},x_{3}^{k})=(ih,jh,kh), 0 \leq i \leq n , 0 \leq j \leq n, 0 \leq k \leq n \}$. We discretize the transformation $\bm{y}$ on the nodal grid, namely $\bm{y}^{i,j,k} = (y_{1}^{i,j,k},y_{2}^{i,j,k},y_{3}^{i,j,k}) = (y_{1}(x_{1}^{i},x_{2}^{j},x_{3}^{k}), y_{2}(x_{1}^{i},x_{2}^{j},x_{3}^{k}),y_{3}(x_{1}^{i},x_{2}^{j},x_{3}^{k}))$. In order to simplify the presentation, according to the lexicographical ordering, we reshape
\begin{displaymath}
X = (x_{1}^{0},...,x_{1}^{n},x_{2}^{0},...,x_{2}^{n},x_{3}^{0},...,x_{3}^{n})^{T} \in \mathbb{R}^{3(n+1)^{3}\times 1},
\end{displaymath}
\begin{displaymath}
Y = (y_{1}^{0,0,0},...,y_{1}^{n,n,n}, y_{2}^{0,0,0},...,y_{2}^{n,n,n},y_{3}^{0,0,0},...,y_{3}^{n,n,n})^{T} \in \mathbb{R}^{3(n+1)^{3}\times 1}
\end{displaymath}
and 
\begin{displaymath}
C = (c_{1},...,c_{m})^{T} \in \mathbb{R}^{m\times 1}.
\end{displaymath}

\subsubsection{Discretization of Fitting Term in \eqref{ProposedModel}}\label{SDF}
Here, we assume that the intensity values of the discretized image are defined on the cell-centered grid.  Hence, we first give an averaging matrix $P$ from the nodal grid $Y$ to the cell-centered grid $PY$ \cite{haber2004numerical,haber2007image}. Then for the deformed template image $I(\bm{y})$, we can set $\vec I(PY) \in \mathbb{R}^{n^{3}\times 1}$ as the discretized deformed template image. To discretize the prior image $J(\bm{x})=\sum_{l=1}^{m}c_{l}\mathcal{X}_{\Omega_{l}}(\bm{x})$, we define a matrix $M\in\mathbb{R}^{n^{3}\times m}$, where $M_{i,j}$ is $1$ if the intensity value of the $i$-th voxel of the discretized prior image is $c_{j}$ otherwise $M_{i,j}$ is $0$. Then we have $MC$ as the discretized prior image.

Consequently, for the fitting term, we obtain the following discretization:
\begin{equation}\label{disSSD}
\frac{1}{2}\int_{\Omega}(I(\bm{y})-J)^{2}\mathrm{d}\bm{x} \approx \frac{h^{3}}{2}(\vec I(PY)-MC)^{T}(\vec I(PY)-MC).
\end{equation}

\subsubsection{Discretization of Regularizer in \eqref{ProposedModel}}

For the length part of the regularizer \eqref{hyperM}, by using the forward difference, we have the following approximation:
\begin{equation}\label{disR1}
\int_{\Omega} \alpha_{l}\phi_{l}(\nabla\bm{y})\mathrm{d}\bm{x}\approx
\frac{\alpha_{l} h^{3}}{2}(Y-X)^{T}A^{T}A(Y-X),
\end{equation}
where $A$ is shown in \textbf{Appendix} \ref{A}.

\begin{figure}[!ht]
\centering
\includegraphics[width=2.0in,height=2.0in]{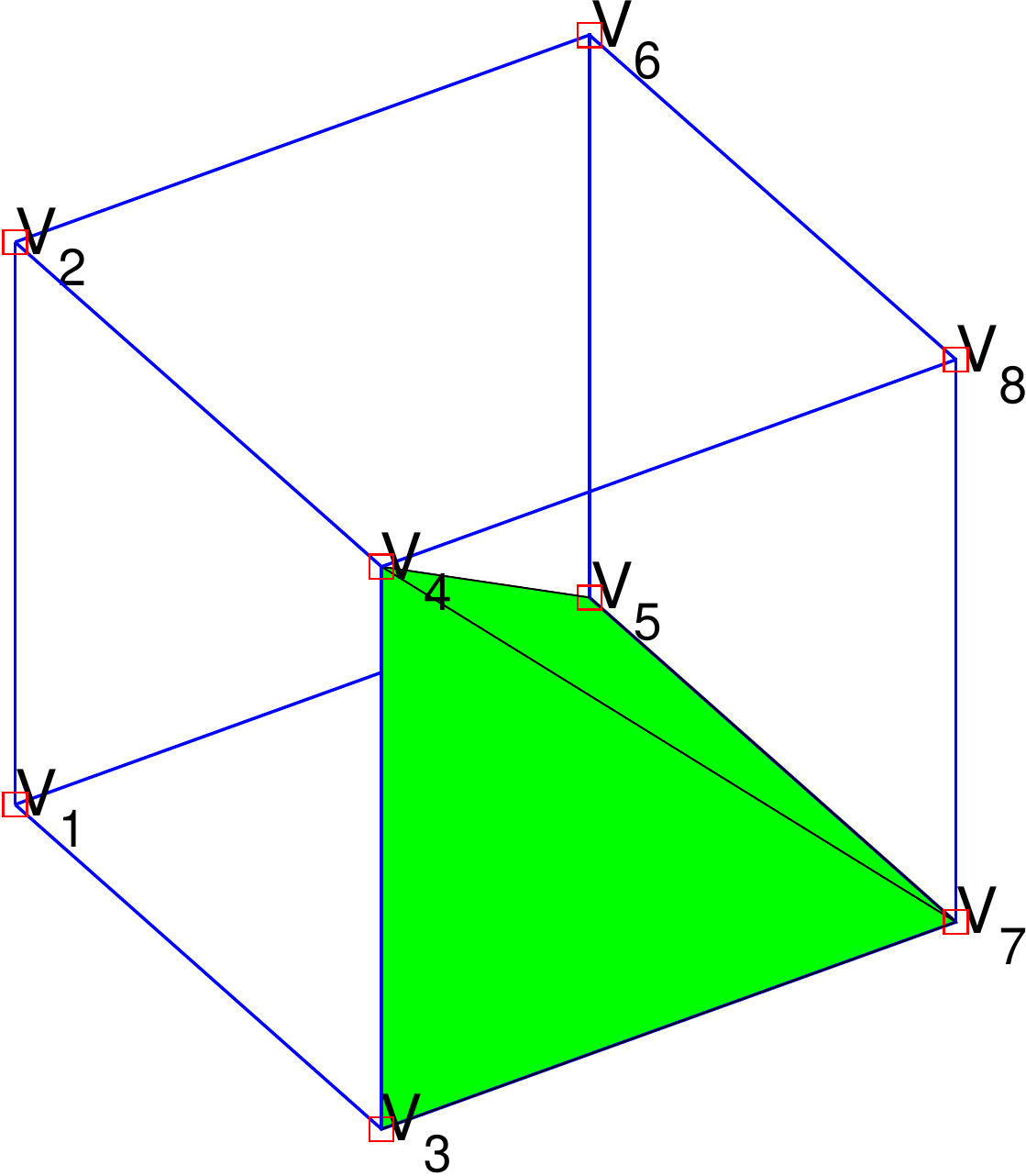}
\caption{Partition of a voxel. $V_{1},...,V_{8}$ are vertices.}\label{partition}
\end{figure}

To discretize the surface term and volume term in \eqref{hyperM}, \cite{burger2013hyperelastic} has ensured the regularity of various partitions and in \cite{burger2013hyperelastic}, each voxel divided into 24 tetrahedrons is employed. Although this division possesses the symmetry, its disadvantage is the computational costs. In addition, in order to take the multilevel strategy, an interpolation operator from coarse level to fine level is indispensable. However, the usually used bilinear interpolation is not consistent for the Jacobian determinant in the coarse and fine levels, which means that the discretized Jacobian determinant is positive in the coarse level but after interpolation, it may be negative in the fine level. Hence, in the implementation, we take the standard finite element division to divide each voxel into 6 tetrahedrons (Figure \ref{partition}). The computational cost is less and much importantly, in the multilevel strategy, we can use nodal interpolation to make the Jacobian determinant in the coarse and fine levels consistent.

In each tetrahedron, we use three linear interpolation functions to approximate $y_{1}$, $y_{2}$ and $y_{3}$. Hence, for the surface term and volume term in \eqref{hyperM}, we have the following approximation:
\begin{equation}\label{disR2}
\int_{\Omega} \alpha_{s}\phi_{w}(\mathrm{cof}\nabla\bm{y})+\alpha_{v}\phi_{v}(\det\nabla\bm{y})\mathrm{d}\bm{x}\approx \frac{h^{3}}{6} (\alpha_{s}\bm{\phi}_{w}(\bm{s}(Y))+\alpha_{v}\bm{\phi}_{v}(\bm{v}(Y)))^{T}e,
\end{equation}
where $\bm{s}(Y)$ and $\bm{v}(Y)$ are shown in \textbf{Appendix} \ref{S_V}, $\bm{\phi}_{w}(\bm{s}(Y))$ is a vector function whose $i$th component is $\phi_{w}(\bm{s}(Y)_{i})$, $\bm{\phi}_{v}(\bm{v}(Y))$ is a vector function whose $i$th component is $\phi_{v}(\bm{v}(Y)_{i})$ and $e$ is a vector whose all components are all equal to $1$.

Combining  \eqref{disSSD}, \eqref{disR1} and \eqref{disR2}, for the proposed model \eqref{ProposedModel}, we get its corresponding finite dimensional optimization problem:
\begin{equation}\label{DProposedModel}
\begin{split}
\min_{Y,C} F(Y,C):= \frac{h^{3}}{2}(\vec I(PY)-MC)^{T}(\vec I(PY)-MC)&+\frac{\alpha_{l} h^{3}}{2}(Y-X)^{T}A^{T}A(Y-X)
\\
&+ \frac{h^{3}}{6} (\alpha_{s}\bm{\phi}_{w}(\bm{s}(Y))+\alpha_{v}\bm{\phi}_{v}(\bm{v}(Y)))^{T}e.
\end{split}
\end{equation}

\begin{remark}
(i) Since $PY$ usually does not correspond to voxel points, interpolation operator is necessary at all steps. Here we choose cubic-spline interpolation \cite{modersitzki2009fair} to compute
$\vec I(P{Y})$. Linear interpolation cannot be applied because it is not differentiable at grid points. If $\bm{y}(\bm{x})$ is out of $\Omega$, the intensity value of $T(\bm{y}(\bm{x}))$ is set to be $0$.
(ii) For the boundary conditions, we consider both the Dirichlet boundary conditions and the natural boundary conditions.
\end{remark}

\subsection{Optimization Method}
In this part, we show the details about the optimization method to solve the resulting finite optimization problem \eqref{DProposedModel}. Different from the Chan-Vese model \eqref{CV} and Beltrami representation based model \eqref{TPB}, we consider the variables $Y$ and $C$ as a whole part. Hence, we do not need to employ the alternating direction method.

Our aim is to generate a sequence $\{(Y^{k},C^{k})|\bm{v}(Y^{k})>0\}_{k\in\mathbb{N}}$ converging to a point $(Y,C)$ that satisfies $\bm{v}(Y)>0$, where $\bm{v}(Y)$ in \textbf{Appendix} \ref{S_V} is the discretized Jacobian determinant and $\bm{v}(Y)>0$ indicates that all the components of $\bm{v}(Y)$ are larger than $0$. 

Here, we choose the line search method and its iterative scheme is as follows:
\begin{equation}\label{update}
\begin{pmatrix}
Y^{k+1}\\
C^{k+1}\\
\end{pmatrix} = \begin{pmatrix}
Y^{k}\\
C^{k}\\
\end{pmatrix} + \eta^{k} p^{k},
\end{equation}
where $\eta^{k}$ is the step length generated by a line search strategy and $p^{k}$ is the search direction. We first discuss how to compute the search direction $p^{k}$ and then discuss how to define the step length $\eta^{k}$.

\subsubsection{Search Direction $p$}
In the implementation, the search direction $p$ is generated by solving the generalized Gauss-Newton system:
\begin{equation}\label{GaussnNewtonsystem}
\hat{H}p = -d,
\end{equation}
where $d$ and $\hat{H}$ are the gradient and the approximated Hessian of \eqref{DProposedModel}, respectively. 

Before computing the approximated Hessian $\hat{H}$, we first briefly review the Gauss-Newton method and the generalized Gauss-Newton method. 

Consider the following least-square problem:
\begin{equation}
\min_{\bm{x}} f_{1}(\bm{x}):=\frac{1}{2}\|\bm{r}(\bm{x})\|^{2},
\end{equation}
where $\bm{r}(\bm{x}) = (r_{1}(\bm{x}),...,r_{m}(\bm{x}))^{T}: \mathbb{R}^{n}\rightarrow\mathbb{R}^{m}$ is a residual vector function and each $r_{j}$ for $1\leq j\leq m$ is a smooth function from $\mathbb{R}^{n}\rightarrow\mathbb{R}$. Then the gradient and Hessian of $f_{1}(\bm{x})$ are as follows, respectively:
\begin{equation}\label{GHF}
\begin{split}
\nabla f_{1}(\bm{x}) &= J_{\bm{r}}(\bm{x})^{T}\bm{r}(\bm{x}), \\
\nabla^{2}f_{1}(\bm{x}) & = J_{\bm{r}}(\bm{x})^{T}J_{\bm{r}}(\bm{x})+\sum_{j=1}^{m}r_{j}(\bm{x})\nabla^{2}r_{j}(\bm{x}),\\
J_{\bm{r}}(\bm{x}) & = \begin{bmatrix} \nabla r_{1}(\bm{x}) \cdots \nabla r_{m}(\bm{x}) \end{bmatrix}^{T}.
\end{split}
\end{equation}
The Gauss-Newton method is to solve the Gauss-Newton system
\begin{equation}\label{GNS}
J_{\bm{r}}^{T}J_{\bm{r}}p = -\nabla f_{1}
\end{equation}
to obtain the search direction. From \eqref{GHF}, we can see that the Gauss-Newton method is a modified Newton's method. Here, if $J_{\bm{r}}$ is full rank, \eqref{GNS} will lead to a descent direction. In addition, the Gauss-Newton system only involves the first order information and omit the second order information which can save the computational cost \cite{nocedal2006numerical}.  

Next, consider a general minimization problem:
\begin{equation}
\min_{\bm{x}} f_{2}(\bm{x}) := g(\bm{h}(\bm{x})),
\end{equation}
where $g: \mathbb{R}^{m}\rightarrow\mathbb{R}$ is a smooth function and $\bm{h} = (h_{1},...,h_{m})^{T}:\mathbb{R}^{n}\rightarrow\mathbb{R}^{m}$ is a smooth vector function. Then the gradient and Hessian of $f_{2}(\bm{x})$ are as follows, respectively:
\begin{equation}
\begin{split}
\nabla f_{2}(\bm{x}) &= J_{\bm{h}}(\bm{x})^{T}\nabla g(\bm{h}(\bm{x})), \\
\nabla^{2}f_{2}(\bm{x}) & = J_{\bm{h}}(\bm{x})^{T}\nabla^{2}g(\bm{h}(\bm{x}))J_{\bm{h}}(\bm{x})+\sum_{j=1}^{m}[\nabla g(\bm{h}(\bm{x}))]_{j}\nabla^{2}h_{j}(\bm{x}),\\
J_{\bm{h}}(\bm{x}) & = \begin{bmatrix} \nabla h_{1}(\bm{x}) \cdots \nabla h_{m}(\bm{x}) \end{bmatrix}^{T}.
\end{split}
\end{equation}
Directly following the Gauss-Newton method and omitting the second order term, we can get the generalized Gauss-Newton system \cite{diehl2019local}:
\begin{equation}\label{GGNS}
J_{\bm{h}}(\bm{x})^{T}\nabla^{2}g(\bm{h}(\bm{x}))J_{\bm{h}}(\bm{x})p = -\nabla f_{2}.
\end{equation}
Here, if $J_{\bm{h}}$ is full rank and $\nabla^{2}g(\bm{h}(\bm{x}))$ is symmetric positive definite, then the search direction $p$ derived by \eqref{GGNS} is a descent direction becasue the matrix of \eqref{GGNS} is symmetric positive definite.

Now, we return to the computation of the gradient $d$ and the approximated Hessian $\hat{H}$ of \eqref{DProposedModel}.
To compute the gradient $d$ easily, we introduce $\bm{s}_{1},...,\bm{s}_{9}$ and set 
\begin{equation}
\begin{split}
&\bm{s}_{1} = D_{5}Y\odot D_{9}Y-D_{6}Y\odot D_{8}Y,\ \bm{s}_{2} = D_{6}Y\odot D_{7}Y-D_{4}Y\odot D_{9}Y,\ \bm{s}_{3}=D_{4}Y\odot D_{8}Y-D_{5}Y\odot D_{7}Y, \\
&\bm{s}_{4} =D_{3}Y\odot D_{8}Y-D_{2}Y\odot D_{9}Y, \ \bm{s}_{5} = D_{1}Y\odot D_{9}Y-D_{3}Y\odot D_{7}Y,\ \bm{s}_{6} = D_{2}Y\odot D_{7}Y-D_{1}Y\odot D_{8}Y, \\
&\bm{s}_{7} =D_{2}Y\odot D_{6}Y-D_{3}Y\odot D_{5}Y,\ \bm{s}_{8} = D_{3}Y\odot D_{4}Y-D_{1}Y\odot D_{6}Y,\ \bm{s}_{9} = D_{1}Y\odot D_{5}Y-D_{2}Y\odot D_{4}Y.
\end{split}
\end{equation}
Here, $D_{l}, 1\leq l \leq9$ are defined in \textbf{Appendix} \ref{S_V} and $\odot$ indicates the Hadamard product. Recall the definition of $\bm{\phi}_{w}(\bm{s}(Y))$ in \eqref{disR2} and we have $\bm{\phi}_{w}(\bm{s}(Y)) = \frac{1}{2}(S-3)\odot(S-3)$, where $S = \sum_{i=1}^{9}\bm{s}_{i}\odot\bm{s}_{i}$. Then the gradient $d$ of \eqref{DProposedModel} is shown:
\begin{equation}
d= 
\begin{pmatrix}
h^{3}P^{T}\vec I_{PY}^{T}(\vec I(PY)-MC)+\alpha_{l}h^{3}A^{T}A(Y-X)+\frac{h^{3}}{6}(\alpha_{s}\mathrm{d}S^{T}(S-3)+\alpha_{v}\mathrm{d}\bm{v}^{T}\mathrm{d}\bm{\phi}_{v})  \\
-h^{3}M^{T}(\vec I(PY)-MC)
\end{pmatrix},
\end{equation}
where $\vec I_{PY}$ is the Jacobian of $\vec I$ with respect to $PY$, $\mathrm{d}S$ is the Jacobian of $S$ with respect to $Y$, $\mathrm{d}\bm{v}$ is the Jacobian of $\bm{v}$ with respect to $Y$ and $\mathrm{d}\bm{\phi}_{v}$ is a vector whose $i$th component is $\phi'_{v}(\bm{v}_{i})$.

Then following the idea of the generalized Gauss-Newton method, we omit the second order term and get the approximated Hessian $\hat{H}$ of \eqref{DProposedModel}:
\begin{equation}\label{AHessian}
\hat{H} = 
\begin{pmatrix}
h^{3}P^{T}\vec I_{PY}^{T}\vec I_{PY}P+\alpha_{l}h^{3}A^{T}A+\frac{h^{3}}{6}(\alpha_{s}\mathrm{d}S^{T}\mathrm{d}S+\alpha_{v}\mathrm{d}\bm{v}^{T}\mathrm{d}^{2}\bm{\phi}_{v}\mathrm{d}\bm{v}) + \gamma I &  -h^{3}P^{T}\vec I_{PY}^{T}M \\
-h^{3} M^{T}\vec I_{PY}P & h^{3}M^{T}M
\end{pmatrix},
\end{equation}
where $I$ is the identity matrix, $\mathrm{d}^{2}\bm{\phi}_{v}$ is a diagonal matrix and the $i$th component of the diagonal is $\phi''_{v}(\bm{v}_{i})$. The choice of $\gamma$ depends on the choice of the boundary condition: $\gamma$ is $0$ for the Dirichlet boundary conditions or $\gamma$ is a positive number for the natural boundary conditions.

The following lemma illustrates that the approximated Hessian $\hat{H}$ \eqref{AHessian} is symmetric positive definite.
\begin{lemma}\label{lemma}
The approximated Hessian $\hat{H}$ \eqref{AHessian} is symmetric positive definite.
\end{lemma}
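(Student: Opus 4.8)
The plan is to use the block structure of $\hat H$ and to recognize it as a sum of generalized Gauss--Newton contributions, exactly as reviewed around \eqref{GGNS}. First I would verify symmetry directly: the top-left block is a sum of the manifestly symmetric matrices $h^{3}P^{T}\vec I_{PY}^{T}\vec I_{PY}P$, $\alpha_{l}h^{3}A^{T}A$, $\tfrac{h^{3}}{6}\alpha_{s}\mathrm{d}S^{T}\mathrm{d}S$, $\tfrac{h^{3}}{6}\alpha_{v}\mathrm{d}\bm v^{T}\mathrm{d}^{2}\bm\phi_{v}\mathrm{d}\bm v$ (note that $\mathrm{d}^{2}\bm\phi_{v}$ is diagonal) and $\gamma I$; the bottom-right block $h^{3}M^{T}M$ is symmetric; and the two off-diagonal blocks $-h^{3}P^{T}\vec I_{PY}^{T}M$ and $-h^{3}M^{T}\vec I_{PY}P$ are mutual transposes. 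Hence $\hat H=\hat H^{T}$.

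For definiteness I would separate the fitting contribution from the regularization contribution. Writing the residual Jacobian $J_{\bm r}=\begin{bmatrix}\vec I_{PY}P & -M\end{bmatrix}$, the four fitting blocks assemble exactly into $h^{3}J_{\bm r}^{T}J_{\bm r}$, so that
\begin{equation*}
\hat H=h^{3}J_{\bm r}^{T}J_{\bm r}+\begin{pmatrix}G & 0\\ 0 & 0\end{pmatrix},\qquad G:=\alpha_{l}h^{3}A^{T}A+\tfrac{h^{3}}{6}\bigl(\alpha_{s}\mathrm{d}S^{T}\mathrm{d}S+\alpha_{v}\mathrm{d}\bm v^{T}\mathrm{d}^{2}\bm\phi_{v}\mathrm{d}\bm v\bigr)+\gamma I.
\end{equation*}
The term $h^{3}J_{\bm r}^{T}J_{\bm r}$ is positive semi-definite as a Gram matrix. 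In $G$, the pieces $A^{T}A$ and $\mathrm{d}S^{T}\mathrm{d}S$ are Gram matrices, and $\mathrm{d}\bm v^{T}\mathrm{d}^{2}\bm\phi_{v}\mathrm{d}\bm v$ is positive semi-definite provided every diagonal entry of $\mathrm{d}^{2}\bm\phi_{v}$ is nonnegative, i.e. $\phi_{v}''\geq0$ at each component of $\bm v(Y)$. Since the iterates are kept in $\bm v(Y)>0$ and $\phi_{v}(x)=((x-1)^{2}/x)^{2}$ is convex on $(0,\infty)$ --- with $u:=x-2+1/x\geq0$ one has $\phi_{v}=u^{2}$ and $\phi_{v}''=2(u')^{2}+2u\,u''\geq0$ because $u''=2/x^{3}>0$ --- this nonnegativity holds, so $G$ and hence $\hat H$ are positive semi-definite.

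The remaining and, I expect, principal obstacle is to strengthen semi-definiteness to strict positive definiteness. Take any $z=(u;w)^{T}$ with $z^{T}\hat H z=0$; expanding the block form gives $z^{T}\hat H z=h^{3}\|\vec I_{PY}Pu-Mw\|^{2}+u^{T}Gu$, so both $\vec I_{PY}Pu-Mw=0$ and $u^{T}Gu=0$ must hold. Under natural boundary conditions one has $\gamma>0$, so $u^{T}Gu\geq\gamma\|u\|^{2}=0$ forces $u=0$; then $Mw=0$, and since each $\Omega_{l}$ is nonempty the matrix $M$ has full column rank (equivalently $M^{T}M$ is nonsingular), giving $w=0$. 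Under Dirichlet boundary conditions $\gamma=0$, but the boundary values of $Y$ are fixed, and the kernel of the discrete gradient operator $A$ consists only of constant fields, which must vanish once the boundary is prescribed; hence $A^{T}A$ is positive definite on the free interior nodes and again $u^{T}Gu=0$ forces $u=0$, whence $w=0$. In both cases $z=0$, so $\hat H$ is positive definite. The crux is therefore verifying that $G$ is coercive on $\ker J_{\bm r}$, which is precisely what the two boundary-condition cases are designed to secure.
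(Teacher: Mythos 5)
Your proof is correct and follows essentially the same route as the paper's: expand $z^{T}\hat H z$ into the nonnegative fitting Gram term $h^{3}\|\vec I_{PY}Pu-Mw\|^{2}$ plus the regularizer terms, then use the two boundary-condition cases ($\gamma>0$ for natural conditions; $A^{T}A$ positive definite for Dirichlet conditions; full column rank of $M$ to kill $w$) to force the vector to vanish. Your explicit check that $\phi_{v}''\geq 0$ on $(0,\infty)$ is a welcome extra detail: the paper actually overstates $\mathrm{d}^{2}\bm{\phi}_{v}$ as positive definite (it vanishes where $\bm v(Y)_{i}=1$), but, as your argument makes clear, only nonnegativity of that block is needed.
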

\begin{proof}
If $\hat{H}$ \eqref{AHessian} is not symmetric positive definite, there exists a nonzero vector $v=(v^{T}_{1},v^{T}_{2})^{T}$ such that $v^{T}\hat{H}v\leq 0$. Hence, we have 
\begin{equation}\label{Equality}
v^{T}\hat{H}v = h^{3}(\vec{I}_{PY}Pv_{1}-Mv_{2})^{T}(\vec{I}_{PY}Pv_{1}-Mv_{2}) + \alpha_{l}h^{3}v_{1}^{T}A^{T}Av_{1}+\frac{h^{3}}{6}v_{1}^{T}(\alpha_{s}\mathrm{d}S^{T}\mathrm{d}S+\alpha_{v}\mathrm{d}\bm{v}^{T}\mathrm{d}^{2}\bm{\phi}_{v}\mathrm{d}\bm{v})v_{1} + \gamma v_{1}^{T}v_{1}.
\end{equation}
Since we assume that $\bm{v}(Y)>0$, $\mathrm{d}^{2}\bm{\phi}_{v}$ is a positive definite matrix. Then each term in the right hand side of \eqref{Equality} is nonnegative. 
\begin{enumerate}
\item[i] Natural boundary conditions. Here, $\gamma$ is a positive number. To satisfy $v^{T}\hat{H}v\leq 0$, $v_{1}$ must be a zero vector and $Mv_{2}$ is also a zero vector. Because $M$ is a full row rank matrix constructed in Section \ref{SDF}, $v_{2}$ is also a zero vector, which is a contraction.
\item[ii] Dirichlet boundary conditions. Here, $\gamma$ is $0$. But $A$ is full rank under the Dirichlet boundary conditions and hence, $A^{T}A$ is symmetric positive definite. Following the above discussion, we obtain that $v_{1}$ and $v_{2}$ are both zero vectors, which is also a contradiction.
\end{enumerate}
Hence, the approximated Hessian $\hat{H}$ \eqref{AHessian} is symmetric positive definite.
\end{proof}

Lemma \ref{lemma} ensures that the search direction $p$ generated by solving the generalized Gauss-Newton system \eqref{GaussnNewtonsystem} is a descent direction. Here, we choose MINRES to solve this system \eqref{GaussnNewtonsystem} \cite{barrett1994templates,paige1975solution} and the tolerance for the relative residual is set to 0.1. At the same time, we consider a preconditioner, which is a band matrix $Q = \begin{pmatrix} T & \\ & h^{3}M^{T}M  \end{pmatrix}$ and T is a tridiagonal matrix composed of the diagonals of blocks of upper right part of the approximated Hessian $\hat{H}$. We note that $M^{T}M$ is a diagonal matrix. By using the Cholesky decomposition and two back substitutions, the computational cost of solving $Qx=b$ is only $\mathcal{O}(3(n+1)^{3}+m)$.

\begin{remark}
(i) Here, the reason that we choose MINRES rather than CG is based on our numerical experience. Compared with CG, MINRES can use less iterations to reach the stopping criteria.
(ii) In the implementation, we provide a matrix-free version which can speed up the algorithm since we do not need to formulate and store the matrix $\hat{H}$.
\end{remark}

\subsubsection{Step Length $\eta$} 
The step length $\eta$ is chosen according to the backtracking strategy and simultaneously satisfies the sufficient decrease condition and guarantees the bijectivity. Hence, the line search strategy can be summarized in Algorithm \ref{Alg:LS}.

\newcommand{\ID}[0]{\mbox{ID}}
\begin{algorithm}[!ht]
\caption{Line Search Strategy for finding the step length $\eta$:
 $\eta \leftarrow$ LS($F,Y,C,p,d$)}
\label{Alg:LS}
\begin{algorithmic}
\STATE{Step 1: Set $\delta=10^{-4}$;}
\STATE{Step 2: Find the smallest integer $i_{k}\geq0$ such that $\eta=(0.5)^{i_{k}}$ ensures \\
\quad \qquad \quad $F(\tilde{Y},\tilde{C})\leq F(Y,C)+\eta\delta d^{T}p$ and $\bm{v}(\tilde{Y})>0$, where $(\tilde{Y}^{T},\tilde{C}^{T})^{T}=(Y^{T},C^{T})^{T}+\eta p$.
}
\end{algorithmic}
\end{algorithm}

The following lemma guarantees the existence of the step length $\eta$ provided by Algorithm \ref{Alg:LS}.
\begin{lemma}\label{STE}
If the current iterative point $(Y,C)$ satisfies $\bm{v}(Y)>0$, $p$ is obtained by solving \eqref{GaussnNewtonsystem} and $\delta\in(0,1)$, there exists $\xi>0$ such that
\begin{displaymath}
F(\tilde{Y},\tilde{C})\leq F(Y,C)+t\delta d^{T}p \quad\mathrm{and}\quad \bm{v}(\tilde{Y})>0, 
\end{displaymath}
for all $t\in[0,\xi)$, where $(\tilde{Y}^{T},\tilde{C}^{T})^{T}=(Y^{T},C^{T})^{T}+t p$.
\end{lemma}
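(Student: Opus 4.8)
The plan is to verify the two requirements—the Armijo sufficient-decrease inequality and the positivity constraint $\bm{v}(\tilde{Y})>0$—separately, and then take $\xi$ to be the smaller of the two resulting thresholds. First I would record the decisive consequence of Lemma~\ref{lemma}: since $\hat{H}$ is symmetric positive definite and $p$ solves $\hat{H}p=-d$, we have
\begin{equation*}
d^{T}p=-p^{T}\hat{H}p<0
\end{equation*}
whenever $p\neq0$, so $p$ is a strict descent direction for $F$ at $(Y,C)$. (If $p=0$ then $d=0$, both conclusions are immediate, and any $\xi>0$ works, so I would focus on $p\neq0$.) Writing the search direction as $p=(p_{Y}^{T},p_{C}^{T})^{T}$, only $\tilde{Y}=Y+tp_{Y}$ enters the positivity constraint.

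Next I would handle the positivity constraint. Because $(Y,C)$ satisfies $\bm{v}(Y)>0$ componentwise and each component of $\bm{v}$ is a polynomial in the entries of $Y$ (it is a $3\times3$ determinant with entries linear in $Y$), the map $t\mapsto\bm{v}(Y+tp_{Y})$ is continuous with every component strictly positive at $t=0$. Continuity then yields a threshold $\xi_{1}>0$ such that $\bm{v}(\tilde{Y})>0$ for all $t\in[0,\xi_{1})$. This same step guarantees that $F$ is smooth along the segment: the only potential nonsmoothness in $F$ comes from $\phi_{v}$, which blows up at $0$, but on $[0,\xi_{1})$ the argument $\bm{v}(\tilde{Y})$ stays bounded away from $0$, so $t\mapsto F(Y+tp_{Y},C+tp_{C})$ is differentiable there (using also that $\vec{I}$ is interpolated by cubic splines, as noted earlier).

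For the sufficient-decrease inequality I would then expand $F$ to first order. Setting $\Phi(t):=F(Y+tp_{Y},C+tp_{C})$, differentiability gives $\Phi(t)=\Phi(0)+t\,d^{T}p+o(t)$, whence
\begin{equation*}
F(\tilde{Y},\tilde{C})-F(Y,C)-t\delta\,d^{T}p=t(1-\delta)\,d^{T}p+o(t).
\end{equation*}
Since $\delta\in(0,1)$ and $d^{T}p<0$, the leading coefficient $(1-\delta)d^{T}p$ is strictly negative; dividing by $t>0$ and letting $t\to0^{+}$ shows the right-hand side is negative for all sufficiently small $t>0$, say $t\in(0,\xi_{2})$. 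Combined with equality at $t=0$, the Armijo condition holds on $[0,\xi_{2})$. Setting $\xi=\min\{\xi_{1},\xi_{2}\}$ then makes both conclusions hold simultaneously for all $t\in[0,\xi)$.

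The only genuine subtlety—and the step I would treat most carefully—is justifying the first-order expansion of $F$. This is precisely where the positivity $\bm{v}(\tilde{Y})>0$ obtained from the continuity argument is indispensable, since it keeps $\phi_{v}$ away from its singularity, and where the smoothness of the cubic-spline image interpolation is used to differentiate the fitting term. The descent property from Lemma~\ref{lemma} and the Taylor estimate are otherwise routine, so the proof is essentially the classical Armijo existence argument adapted to the extra constraint $\bm{v}>0$.
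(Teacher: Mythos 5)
Your proof is correct and follows essentially the same route as the paper: the paper likewise establishes the Armijo inequality from the strict inequality $d^{T}p=-d^{T}\hat{H}^{-1}d<\delta d^{T}p$ (using Lemma~\ref{lemma}) together with the first-order difference quotient, obtains the positivity threshold from the fact that $\bm{v}(Y+tp_{Y})$ is a (cubic) polynomial in $t$ with $\bm{v}(Y)>0$, and takes $\xi$ as the minimum of the two thresholds. Your additional remarks — treating the degenerate case $p=0$ and noting that the positivity of $\bm{v}(\tilde{Y})$ is what keeps $\phi_{v}$ away from its singularity and hence justifies differentiating $F$ along the segment — are welcome refinements of details the paper leaves implicit, not a different argument.
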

\begin{proof}
Firstly, as $F$ is differentiable and $\delta\in(0,1)$, we have
\begin{equation}
\begin{split}
\lim_{t\rightarrow 0}\frac{F(\tilde{Y},\tilde{C})-F(Y,C)}{t} &= d^{T}p = -d^{T}\hat{H}^{-1}d \\
& < \delta d^{T}\hat{H}^{-1}d \ \ (\hat{H}\ \mathrm{is\ SPD})\\
& = \delta d^{T}p. 
\end{split}
\end{equation}
Hence, there exists $\xi_{1}>0$ such that
\begin{equation}
\frac{F(\tilde{Y},\tilde{C})-F(Y,C)}{t}<  \delta d^{T}p,
\end{equation}
for all $t\in(0,\xi_{1})$. Therefore,
\begin{displaymath}
F(\tilde{Y},\tilde{C})\leq F(Y,C)+t\delta d^{T}p,\ \forall\ t\in[0,\xi_{1}).
\end{displaymath}

Secondly, recall the definition of $\bm{v}$ in \eqref{matrixrepresentation}. Then we have 
\begin{equation}
\bm{v}(\tilde{Y}) = \bm{v}(Y) + t\bm{f}_{1}+t^{2}\bm{f}_{2}+t^{3}\bm{f}_{3},
\end{equation}
where $t\bm{f}_{1}, t^{2}\bm{f}_{2}$ and $t^{3}\bm{f}_{3}$ are the combinations of the corresponding terms containing $t, t^{2}$ and $t^{3}$. Since $\bm{v}(Y)>0$, there exists $\xi_{2}>0$ such that $\bm{v}(\tilde{Y})>0$ for all $t\in[0,\xi_{2})$. 

Finally, set $\xi=\min\{\xi_{1},\xi_{2}\}$ and the proof is complete.
\end{proof}

Hence, for the step length $\eta$, we just need to find the smallest integer $i_{k}\geq 0$ such that $\eta = (0.5)^{i_{k}}\leq \xi$.

\subsubsection{Convergence of The Generalized Gauss-Newton Method}

Now, we can summarize the generalized Gauss-Newton method in Algorithm \ref{Alg:GNM}.
\begin{algorithm}[!ht]
\caption{Generalized Gauss-Newton Method for Topology-Preserving Image Segmentation:
 $(Y,C) \leftarrow$ GGN($\alpha_{l}, \alpha_{s}, \alpha_{v},Y^{0}, C^{0}, I, J$)}
\label{Alg:GNM}
\begin{algorithmic}
\STATE{Step 1:  For \eqref{DProposedModel}, compute $F(Y^{0},C^{0})$, $d^{0}$ and $\hat{H}^{0}$;}
\STATE{Step 2: Set $k=0$;}
\WHILE{``the stopping criteria are not satisfied''}
\STATE{--- Solve $\hat{H}^{k}p^{k}=-d^{k}$ from \eqref{GaussnNewtonsystem};}
\STATE{--- Update $(Y^{k+1},C^{k+1})$ by \eqref{update};}
\STATE{--- $k=k+1$;}
\STATE{--- compute $F(Y^{k},C^{k})$, $d^{k}$ and $\hat{H}^{k}$;}
\ENDWHILE
\end{algorithmic}
\end{algorithm}
Here, the stopping criteria in Algorithm \ref{Alg:GNM} is consistent with the literature \cite{modersitzki2009fair,zhang2018novel}, namely,  when the change in the objective function, the norm of the update and the norm of the gradient are all sufficiently small, the iterations are terminated.

Next, we discuss the convergence of Algorithm \ref{Alg:GNM}. 
%Here, we first assume that $C$ is bounded, namely, there exist  $a_{1},...,a_{m}$ and $b_{1},...,b_{m}$ such that $C\in \Pi_{l=1}^{m}[a_{l},b_{l}]$. This is reasonable because each component of $C$ represent the intensity value of the specific region of the prior image.
First, we review a theorem from \cite{chen2019improved}.
\begin{theorem}[Theorem 2 in \cite{chen2019improved}]\label{GC}
Consider a finite-dimensional optimization problem:
\begin{equation}
\min_{x\in\mathbb{R}^{n}} f(x) \quad \mathrm{s.t.} \quad x\in\mathcal{X},
\end{equation}
where $f:\mathbb{R}^{n}\rightarrow\mathbb{R}$ is a differentiable function and $\mathcal{X}\subset\mathbb{R}^{n}$ is an open set. The iterative scheme is as follows:
\begin{equation}\label{IS}
x^{k+1} = x^{k} -\eta^{k}(B^{k})^{-1}\nabla f(x_{k}) \quad \mathrm{and} \quad x^{k+1}\in\mathcal{X},
\end{equation}
where $\eta^{k}$ is derived by Armijo strategy and $B^{k}$ is a symmetric and positive definite matrix. If the following conditions are satisfied:
\begin{itemize}
\item[A1] The set $L(x^{0}) = \{x\in\mathbb{R}^{n}|f(x)\leq f(x^{0})\}$ is compact.
\item[A2] $\nabla f:\mathbb{R}^{n}\rightarrow\mathbb{R}^{n}$ is $L$-Lipschitz.
\item[A3] There exist constants $\kappa_{1}\geq\kappa_{0}>0$ such that
\begin{displaymath}
\kappa_{0}I \preceq B^{k} \preceq \kappa_{1}I, \quad \forall k.
\end{displaymath}
\end{itemize}
then given $x^{0}\in\mathcal{X}$, the sequence $\{x^{k}\}\subset\mathcal{X}$ generated by the iterative scheme \eqref{IS} from $x^{0}$ admits a subsequence that converges either to a point in the boundary of $\mathcal{X}$ or to a critical point of $f$ in $\mathcal{X}$.
\end{theorem}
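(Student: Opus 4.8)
The plan is to run the classical ``descent direction $+$ Armijo line search'' convergence argument, taking care that the \emph{open} feasible set $\mathcal{X}$ may let a limit point escape to $\partial\mathcal{X}$. Write $p^{k}=-(B^{k})^{-1}\nabla f(x^{k})$ for the search direction. First I would record the two consequences of A3 that make $p^{k}$ a uniformly good descent direction: $-\nabla f(x^{k})^{T}p^{k}=\nabla f(x^{k})^{T}(B^{k})^{-1}\nabla f(x^{k})\geq \kappa_{1}^{-1}\|\nabla f(x^{k})\|^{2}$ and $\|p^{k}\|\leq \kappa_{0}^{-1}\|\nabla f(x^{k})\|$. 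The Armijo acceptance rule gives $f(x^{k+1})\leq f(x^{k})+\delta\eta^{k}\nabla f(x^{k})^{T}p^{k}$, so $\{f(x^{k})\}$ is nonincreasing and every iterate lies in the sublevel set $L(x^{0})$, which is compact by A1. Hence $\{f(x^{k})\}$ is bounded below and converges; combining the sufficient-decrease inequality with the lower bound on $-\nabla f^{T}p$ then yields $\eta^{k}\|\nabla f(x^{k})\|^{2}\to 0$.

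Next I would extract a convergent subsequence. Since $\{x^{k}\}\subset L(x^{0})$ is bounded, some $x^{k_{j}}\to x^{*}\in\overline{\mathcal{X}}$. If $x^{*}\in\partial\mathcal{X}$ the first alternative of the conclusion holds and we are done, so I assume $x^{*}$ is interior and aim to prove $\nabla f(x^{*})=0$. Suppose instead $\|\nabla f(x^{*})\|=c>0$; by continuity $\|\nabla f(x^{k_{j}})\|\geq c/2$ for large $j$, so $\eta^{k_{j}}\|\nabla f(x^{k_{j}})\|^{2}\to 0$ forces $\eta^{k_{j}}\to 0$. The contradiction will come from a uniform lower bound on the accepted step length near an interior point.

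To produce that bound I would exploit openness of $\mathcal{X}$. Choose $r>0$ with $\overline{B}(x^{*},2r)\subset\mathcal{X}$; for large $j$, $x^{k_{j}}\in B(x^{*},r)$, and since $\|p^{k_{j}}\|\leq \kappa_{0}^{-1}\|\nabla f(x^{k_{j}})\|\leq \kappa_{0}^{-1}G$ with $G=\max_{L(x^{0})}\|\nabla f\|<\infty$, the segment $x^{k_{j}}+tp^{k_{j}}$ stays in $\mathcal{X}$ for all $t\leq t_{0}:=r\kappa_{0}/G$, so feasibility never binds for small $t$. On this segment the $L$-Lipschitz bound A2 gives $f(x^{k_{j}}+tp^{k_{j}})\leq f(x^{k_{j}})+t\nabla f(x^{k_{j}})^{T}p^{k_{j}}+\tfrac{L}{2}t^{2}\|p^{k_{j}}\|^{2}$, whence the Armijo test holds whenever $t\leq 2(1-\delta)\,|\nabla f(x^{k_{j}})^{T}p^{k_{j}}|/(L\|p^{k_{j}}\|^{2})$. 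The eigenvalue bounds give $|\nabla f^{T}p|/\|p\|^{2}\geq \kappa_{0}^{2}/\kappa_{1}$, so the test holds for all $t\leq \bar t:=2(1-\delta)\kappa_{0}^{2}/(L\kappa_{1})$, a constant independent of $j$. Thus feasibility and sufficient decrease both hold for every $t\leq\min\{t_{0},\bar t\}$; since backtracking starts at $1$ and halves, $\eta^{k_{j}}\geq \tfrac12\min\{1,t_{0},\bar t\}>0$, contradicting $\eta^{k_{j}}\to 0$. Therefore $\nabla f(x^{*})=0$ and $x^{*}$ is a critical point in $\mathcal{X}$.

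The routine pieces are the sufficient-decrease telescoping and the Lipschitz/backtracking estimate, both standard. The one genuinely delicate point---and the reason the statement carries the boundary alternative---is that on an open $\mathcal{X}$ the line search can be cut short by the feasibility requirement $x^{k+1}\in\mathcal{X}$ rather than by insufficient decrease, so the step-length lower bound is only available \emph{after} we know the relevant limit point is interior. I therefore expect the main obstacle to be the careful treatment of the feasibility-versus-Armijo dichotomy in the backtracking, which is exactly where the ball $\overline{B}(x^{*},2r)\subset\mathcal{X}$ argument does the real work of adapting the classical proof to the open-set setting.
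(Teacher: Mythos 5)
The paper itself contains no proof of this statement: Theorem \ref{GC} is imported verbatim from \cite{chen2019improved} and used as a black box, so there is no in-paper argument to compare yours against. Judged on its own, your proof is correct and complete. The two consequences of A3, namely $-\nabla f(x^{k})^{T}p^{k}\geq\kappa_{1}^{-1}\|\nabla f(x^{k})\|^{2}$ and $\|p^{k}\|\leq\kappa_{0}^{-1}\|\nabla f(x^{k})\|$, are right (they follow from $\kappa_{1}^{-1}I\preceq(B^{k})^{-1}\preceq\kappa_{0}^{-1}I$); telescoping the Armijo decrease over the compact sublevel set $L(x^{0})$ gives $\sum_{k}\eta^{k}\|\nabla f(x^{k})\|^{2}<\infty$ and hence $\eta^{k}\|\nabla f(x^{k})\|^{2}\to0$; and the dichotomy at a limit point $x^{*}\in\overline{\mathcal{X}}$ is handled properly. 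The one genuinely nonstandard point --- that on an open feasible set the backtracking can be truncated by the feasibility test rather than by failure of sufficient decrease --- is exactly where your $\overline{B}(x^{*},2r)\subset\mathcal{X}$ device does the work: it shows that near an \emph{interior} limit point feasibility holds for all $t\leq t_{0}=r\kappa_{0}/G$, so the classical Lipschitz estimate $\bar t=2(1-\delta)\kappa_{0}^{2}/(L\kappa_{1})$ yields the uniform lower bound $\eta^{k_{j}}\geq\tfrac12\min\{1,t_{0},\bar t\}$, contradicting $\eta^{k_{j}}\to0$ when $\nabla f(x^{*})\neq0$. This matches the structure one would expect of the proof in \cite{chen2019improved}, and it is also precisely the mechanism the present paper relies on when it later rules out the boundary alternative for its own functional $F$ (there by showing $F$ blows up as $\bm{v}(Y)\to0$). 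I see no gap.
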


Based on Theorem \ref{GC}, we have the following convergence theorem for Algorithm \ref{Alg:GNM}.
\begin{theorem}
For the resulting finite-dimensional optimization problem \eqref{DProposedModel}, given $(Y^{0},C^{0})$ satisfying $\bm{v}(Y^{0})>0$, the sequence $\{(Y^{k},C^{k})|\bm{v}(Y^{k})>0\}_{k\in\mathbb{N}}$ generated by Algorithm \ref{Alg:GNM} from $(Y^{0},C^{0})$ admits a sequence that converges to a critical point $(Y^{*},C^{*})$ of $F$ and $\bm{v}(Y^{*})>0$.
\end{theorem}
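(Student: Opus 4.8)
The plan is to realize Algorithm \ref{Alg:GNM} as a special case of the iterative scheme \eqref{IS} and then verify the hypotheses A1--A3 of Theorem \ref{GC}. First I would set $\mathcal{X}=\{(Y,C)\in\mathbb{R}^{3(n+1)^{3}}\times\mathbb{R}^{m}:\bm{v}(Y)>0\}$, which is open because $\bm{v}$ is a polynomial (hence continuous) function of $Y$, and I would note that $F$ in \eqref{DProposedModel} is differentiable on $\mathcal{X}$ since $\phi_{v}$ is smooth for positive arguments. Taking $B^{k}=\hat{H}^{k}$, Lemma \ref{lemma} guarantees that each $B^{k}$ is symmetric positive definite, while Lemma \ref{STE} together with Algorithm \ref{Alg:LS} guarantees that the backtracking (Armijo) step length $\eta^{k}$ exists and that every iterate stays in $\mathcal{X}$. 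The sufficient-decrease condition then forces the whole sequence into the sublevel set $L(Y^{0},C^{0})=\{(Y,C)\in\mathcal{X}:F(Y,C)\leq F(Y^{0},C^{0})\}$.

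Next I would check A1 and A2. For A1, I would argue that $L(Y^{0},C^{0})$ is compact. Boundedness in $Y$ follows from a discrete coercivity estimate in the spirit of Lemma \ref{coercivity}: under Dirichlet conditions $A$ is full rank, so the length term $\frac{\alpha_{l}h^{3}}{2}\|A(Y-X)\|^{2}$ dominates $\|Y\|^{2}$, while under natural conditions the same is achieved after the translational normalization already built into the admissible set $\mathcal{A}$ of Theorem \ref{TheoremHR}; boundedness in $C$ then comes from the fitting term together with the full row rank of $M$. Crucially, closedness and containment in the open set $\mathcal{X}$ are supplied by the barrier structure of $\phi_{v}$: since $\phi_{v}(x)\to\infty$ as $x\to 0^{+}$, the objective blows up as any component of $\bm{v}(Y)$ approaches the boundary $\bm{v}(Y)=0$, so no sequence in $L(Y^{0},C^{0})$ can approach $\partial\mathcal{X}$. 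Hence $L(Y^{0},C^{0})$ is a closed and bounded subset of $\mathcal{X}$, i.e. compact. For A2, since $F$ is $C^{\infty}$ on the compact set $L(Y^{0},C^{0})\subset\mathcal{X}$, its gradient is Lipschitz there, which is all the convergence argument needs.

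For A3 I would establish the uniform spectral bounds $\kappa_{0}I\preceq\hat{H}^{k}\preceq\kappa_{1}I$. The map $(Y,C)\mapsto\hat{H}(Y,C)$ is continuous on $\mathcal{X}$, since its only potentially singular ingredient $\mathrm{d}^{2}\bm{\phi}_{v}$ is finite and continuous wherever $\bm{v}(Y)>0$. By Lemma \ref{lemma} the smallest eigenvalue $\lambda_{\min}(\hat{H}(Y,C))$ is strictly positive at every point of $\mathcal{X}$; being a continuous positive function on the compact set $L(Y^{0},C^{0})$, it attains a positive minimum $\kappa_{0}>0$, and the largest eigenvalue attains a finite maximum $\kappa_{1}$. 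Because all iterates lie in $L(Y^{0},C^{0})$, these bounds hold uniformly in $k$.

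Having verified A1--A3, Theorem \ref{GC} applies and produces a subsequence of $\{(Y^{k},C^{k})\}$ converging either to a point of $\partial\mathcal{X}$ or to a critical point of $F$ inside $\mathcal{X}$. The boundary alternative is ruled out precisely by the containment established in A1: the limit of any convergent subsequence lies in the compact set $L(Y^{0},C^{0})\subset\mathcal{X}$, so it cannot sit on $\partial\mathcal{X}$; therefore the limit $(Y^{*},C^{*})$ is a critical point of $F$ with $\bm{v}(Y^{*})>0$, which is the claim. I expect the main obstacle to be exactly A1 and the accompanying boundary analysis, since one must package the growth of the regularizer at infinity and the barrier behaviour of $\phi_{v}$ at $\bm{v}(Y)=0$ simultaneously, so that $L(Y^{0},C^{0})$ is shown to be both compact and strictly interior to $\mathcal{X}$; this single fact both certifies A1 and excludes the boundary case, and the uniform lower bound $\kappa_{0}$ in A3 is a secondary difficulty resting on the same containment.
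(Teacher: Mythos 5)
Your proposal is correct and follows essentially the same route as the paper: it casts Algorithm \ref{Alg:GNM} as an instance of Theorem \ref{GC} with $B^{k}=\hat{H}^{k}$, verifies A1--A3 via the boundedness of the iterates, the barrier behaviour of $\phi_{v}$, and Lemmas \ref{lemma} and \ref{STE}, and excludes the boundary alternative because $F$ blows up as any component of $\bm{v}(Y)$ tends to $0$. The only cosmetic differences are that you work with the sublevel set $L(Y^{0},C^{0})$ where the paper constructs an explicit compact box $\mathcal{X}_{1}=\{Y\,|\,a\leq\bm{v}(Y)\leq b\}\times\Pi_{l}[a_{l},b_{l}]$ containing the iterates, and that you obtain $\kappa_{0}$ in A3 by continuity of $\lambda_{\min}(\hat{H})$ on that compact set, whereas the paper reads off an explicit uniform lower bound ($\lambda_{\min}(A^{T}A)$ under Dirichlet conditions, $\gamma$ under natural conditions).
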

\begin{proof}
Define $\mathcal{X} = \{Y|\bm{v}(Y)>0\}\times\mathbb{R}^{m}$ and obviously, $\mathcal{X}$ is an open set. Because the initial guess point $(Y^{0},C^{0})$ satisfies $\bm{v}(Y^{0})>0$, according to Lemma \ref{STE}, Algorithm \ref{Alg:GNM} will generate a sequence $\{(Y^{k},C^{k})|\bm{v}(Y^{k})>0\}_{k\in\mathbb{N}} \subset \mathcal{X}$ such that $F(Y^{0},C^{0})>F(Y^{1},C^{1})>\cdots>F(Y^{k},C^{k})>\cdots$. 

Next, we will prove that the three conditions in Theorem \ref{GC} are satisfied. 

For A1, in reality, we just need to prove that there exists a compact set $\mathcal{X}_{1}$ such that $\{(Y^{k},C^{k})|\bm{v}(Y^{k})>0\}_{k\in\mathbb{N}} \subset \mathcal{X}_{1}$. Since the discretized deformed template $\vec I(PY)$ is bounded, $C$ is bounded and no matter the Dirichlet boundary conditions or the natural boundary conditions are employed, $Y$ is also bounded. Recall \eqref{DProposedModel} and note that $F(Y,C)$ will be infinity if any component of $\bm{v}(Y)$ goes to $0$ or $\infty$. So there exist $a, a_{1},...,a_{m}$ and $b, b_{1},...,b_{m}$ such that $\{(Y^{k},C^{k})|\bm{v}(Y^{k})>0\}_{k\in\mathbb{N}} \subset \mathcal{X}_{1} = \{Y|a\leq\bm{v}(Y)\leq b\}\times \Pi_{l=1}^{m}[a_{l},b_{l}]$.

For A2, since $F(Y,C)$ in \eqref{DProposedModel} is second order differentiable for $(Y,C)\in\mathcal{X}_{1}$, $\nabla F(Y,C)$ is $L$-Lipschitz for $(Y,C)\in\mathcal{X}_{1}$.

For A3, since $\{(Y^{k},C^{k})\}_{k\in\mathbb{N}}$ generated by Algorithm \ref{Alg:GNM} are in $\mathcal{X}_{1}$, this condition is also satisfied. First, since $\mathcal{X}_{1}$ is compact, we can find $\kappa_{1}$ such that $\hat{H}^{k} \preceq \kappa_{1}I$. If the Dirichlet boundary conditions are employed, since $A^{T}A$ is a constant matrix which is symmetric and positive definite, we can set the smallest eigenvalue of $A^{T}A$ as $\kappa_{0}$ such that $\kappa_{0}I \preceq \hat{H}^{k}$. Or if the natural boundary conditions are employed, since $\gamma$ is a fixed positive number, we can set  $\gamma$ as $\kappa_{0}$ such that $\kappa_{0}I \preceq \hat{H}^{k}$.

Hence, from Theorem \ref{GC}, we can get that the sequence $\{(Y^{k},C^{k})|\bm{v}(Y^{k})>0\}_{k\in\mathbb{N}}$ generated by Algorithm \ref{Alg:GNM} from $(Y^{0},C^{0})$ satisfying $\bm{v}(Y^{0})>0$ admits a subsequence that converges to a critical point $(Y^{*},C^{*})$ of $F$ and $\bm{v}(Y^{*})>0$ or to a point $(Y^{*},C^{*})$ which is in the boundary of $\mathcal{X}$, namely satisfying $\bm{v}(Y^{*})=0$. But we know that $F(Y^{*},C^{*})$ will be infinite if $\bm{v}(Y^{*})=0$, which is a contradiction. So the proof is complete.
\end{proof}

\begin{remark}
According to the worst-case analysis in \cite{chen2019improved}, Algorithm \ref{Alg:GNM} takes at most $\mathcal{O}(\epsilon^{-2})$ iterations to generate a point $(\hat{Y},\hat{C})$ satisfying $\bm{v}(\hat{Y})>0$ such that $\|\nabla F(\hat{Y},\hat{C})\|\leq\epsilon$.
\end{remark}

\subsubsection{Multilevel Strategy}

As a standard procedure to provide a good initial guess, the multilevel strategy is used in the implementation \cite{haber2006multilevel,modersitzki2009fair}. Firstly, we coarsen the template and the reference by $L$ levels. 
On the coarsest level, set $Y^{0} = X$ and $c_{l} = \frac{\sum_{\bm{x}^{i,j,k}\in\Omega_{l}}I(\bm{y}(\bm{x}^{i,j,k}))}{\sharp(\bm{x}^{i,j,k}\in\Omega_{l})}, 1\leq l\leq m$ as the initial guess point, where $\sharp(\bm{x}^{i,j,k}\in\Omega_{l}^{c})$ denotes the number of cell-centered points in $\Omega_{l}$.
Then we can obtain $(Y_{1},C_{1})$ by solving our model \eqref{DProposedModel} on the coarsest level. To give a good initial guess for the finer level, we adopt the nodal interpolation on $Y_{1}$ to obtain $Y_{2}^{0}$ and recompute $C_{2}^{0}$ as the initial guess for the next level. We repeat this process and get the final registration on the finest level.
The most important advantage of the this strategy is that it can save computational time to provide a good initial guess for the finer level because there are fewer variables on the coarser level. Also, it can help to avoid to trap into a local minimum since the coarser level only shows the main features and patterns.

\section{Numerical Experiments}\label{SNumericalResults}
In this section, we test the proposed model \eqref{ProposedModel} with 2D and 3D images. All codes are implemented by Matlab R2019a on a MacbookPro with 2.2 GHz Quad-Core Intel Core i7 processor and 16 GB RAM. Our implementation is made public and can be downloaded from https://sites.google.com/view\\/daopingzhang/home.

\subsection{2D Examples}

In this subsection, we test our proposed model on three real 2D images. All images are resized into $256\times256$ and their intensities are rescaled into $[0, 255]$. Here, for the parameters in the 2D case, $\alpha_{s}$ should be set $0$ and according to our numerical experiences, $[50,10^{3}]\times[10,10^{3}]$ may be the suitable range for $(\alpha_{l},\alpha_{v})$. Here, for all the three examples, we just set $\alpha_{l}=10^{2}$ and $\alpha_{v}=10^{2}$. We compare our proposed model with the Chan-Vese model and a 2D selective model \cite{zhang2014local}. For the Chan-Vese model, we call the MATLAB function \textit{activecontour} and the code of \cite{zhang2014local} is downloaded from https://www.liverpool.ac.uk/$\sim$cmchenke/softw/select$\_$2D-B-2014.htm.

\bigskip 

\noindent {\bf Example 1:} In this example, we test our proposed model \eqref{ProposedModel} on a 2D image from FAIR \cite{modersitzki2009fair} with two connected components with complex geometry. We test our proposed model with four different topological priors as the initial contours, which are shown in the first column of Figure \ref{2DExample_1}. The first three priors possess the same topological structure and the fourth one has another topological structure. The results generated by the Chan-Vese model, the selective model and the proposed model \eqref{ProposedModel} are listed in the second to forth columns of Figure \ref{2DExample_1}, respectively. The fifth column displays the corresponding transformation $\bm{y}$ obtained by the proposed model \eqref{ProposedModel}. We can observe that the results generated by the proposed model \eqref{ProposedModel} are indeed topological preserving. This can be confirmed by monitoring the minimum of the Jacobian determinant of the transformations, which are positive. However, for the Chan-Vese model, it does not preserve the topological structure of the prior information. In addition, for these four priors, the Chan-Vese model all can segment these two pieces of tissues. This is because the Chan-Vese model is a global segmentation model. For the selective model, it can lead to the similar results with the proposed model for the first, second and fourth case. But it also segments two pieces for the third case, which means that it does not preserve topology. For the proposed model \eqref{ProposedModel}, the segmentation results are different by the different priors. More specifically, by placing the initial contour at different locations, our proposed method is able to capture different connected components (see (a)$\&$(c) and (e)$\&$(g)). Hence, this model can be considered as a selective model. The users can provide different priors and place them at different locations, according to the structure of the target object and their preferences, to capture different objects in an image. The energy plot versus iterations is displayed in Figure \ref{Energy2D}. The computational time is also listed in Figure \ref{2DExample_1}. We can see that although the computational time of the proposed model is more than the Chan-Vese model, it saves too much time compared with the selective model.

In addition, we also investigate the case of wrong priors. The result is displayed in Figure \ref{2DExample_1_2}. From Figure \ref{2DExample_1_2}, first we can see that our proposed model can keep the topological structure. Second, the proposed model still segments the object target because one part of the prior is right. For the other part of the prior, its position seems fixed and does not affect the results. But for the Chan-Vese model and the selective model, they give the segmentation which are not topology-preserving.  

\begin{figure}[htbp]
\centering
\subfigure[$I$ and $J$ (red)]{
\includegraphics[width=1.1in,height=1.1in]{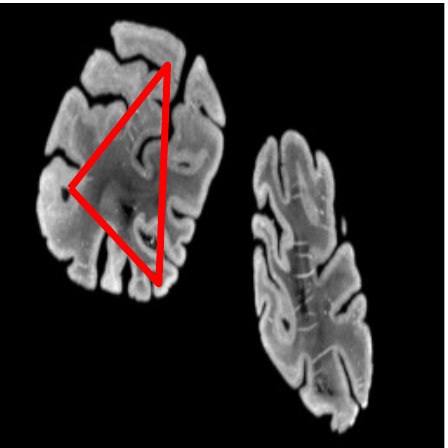}}
\subfigure[CV (4.39 sec)]{
\includegraphics[width=1.1in,height=1.1in]{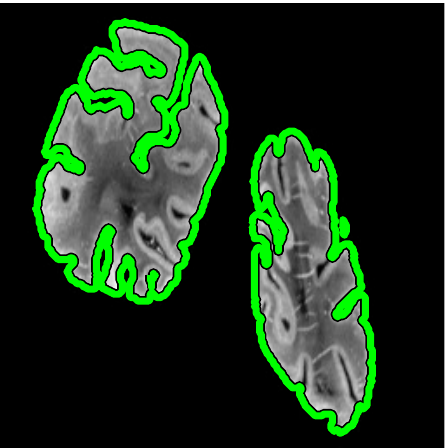}}
\subfigure[\cite{zhang2014local} (119.12 sec)]{
\includegraphics[width=1.1in,height=1.1in]{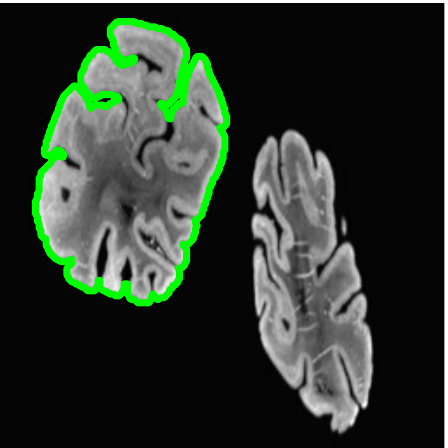}}
\subfigure[PM (3.90 sec)]{
\includegraphics[width=1.1in,height=1.1in]{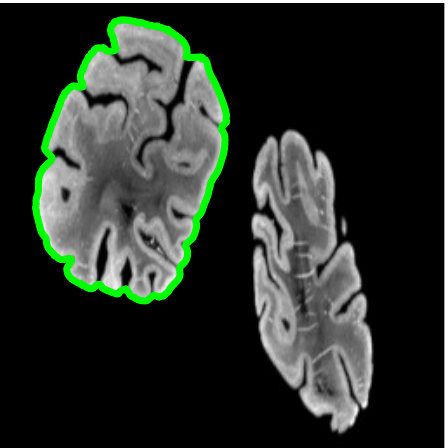}}
\subfigure[$\bm{y}$, $\det\nabla\bm{y}\in\lbrack 0.30,14.20\rbrack$]{
\includegraphics[width=1.1in,height=1.1in]{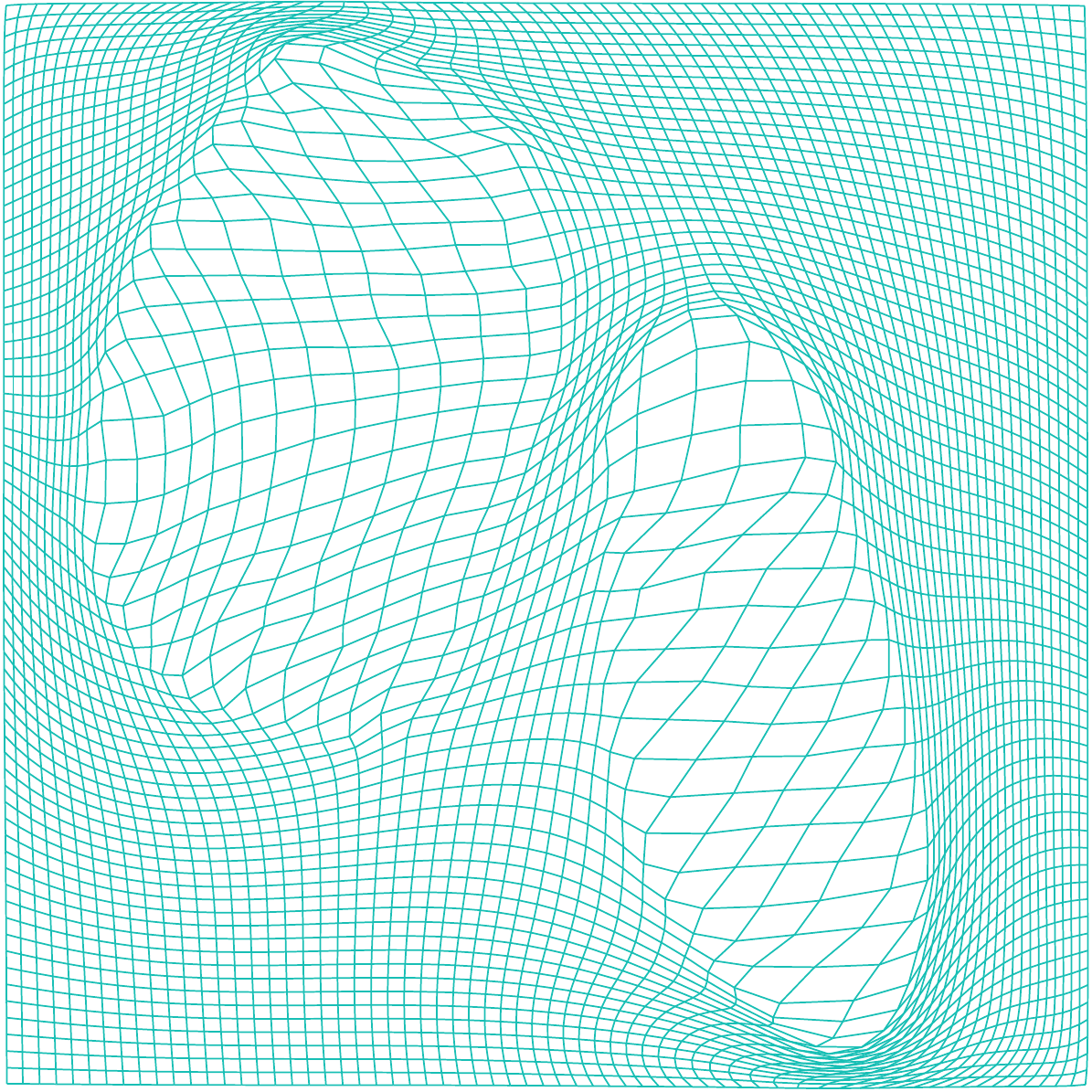}}\\
\subfigure[$I$ and $J$ (red)]{
\includegraphics[width=1.1in,height=1.1in]{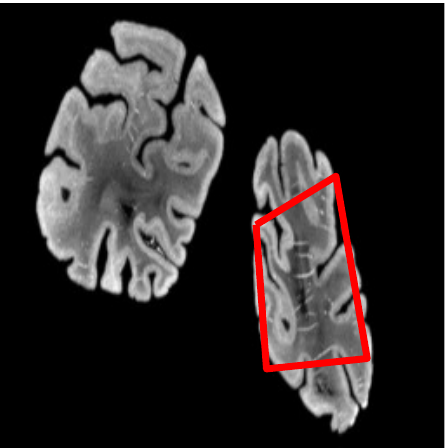}}
\subfigure[CV (2.69 sec)]{
\includegraphics[width=1.1in,height=1.1in]{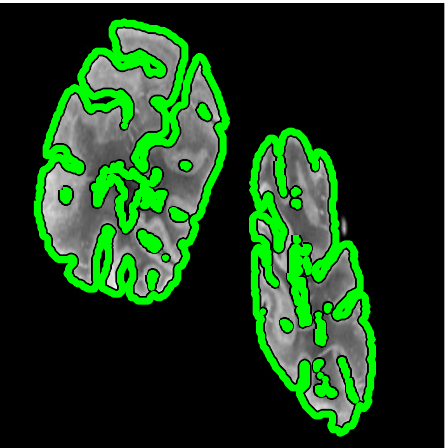}}
\subfigure[\cite{zhang2014local} (35.40 sec)]{
\includegraphics[width=1.1in,height=1.1in]{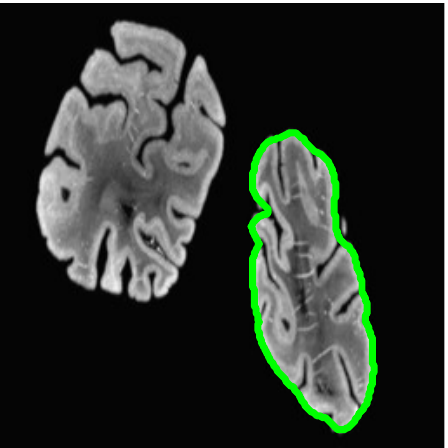}}
\subfigure[PM (4.24 sec)]{
\includegraphics[width=1.1in,height=1.1in]{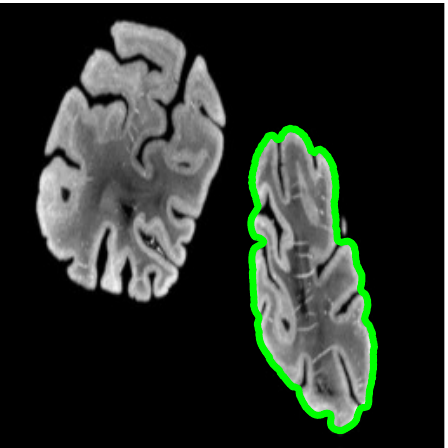}}
\subfigure[$\bm{y}$, $\det\nabla\bm{y}\in\lbrack 0.30,16.31\rbrack$]{
\includegraphics[width=1.1in,height=1.1in]{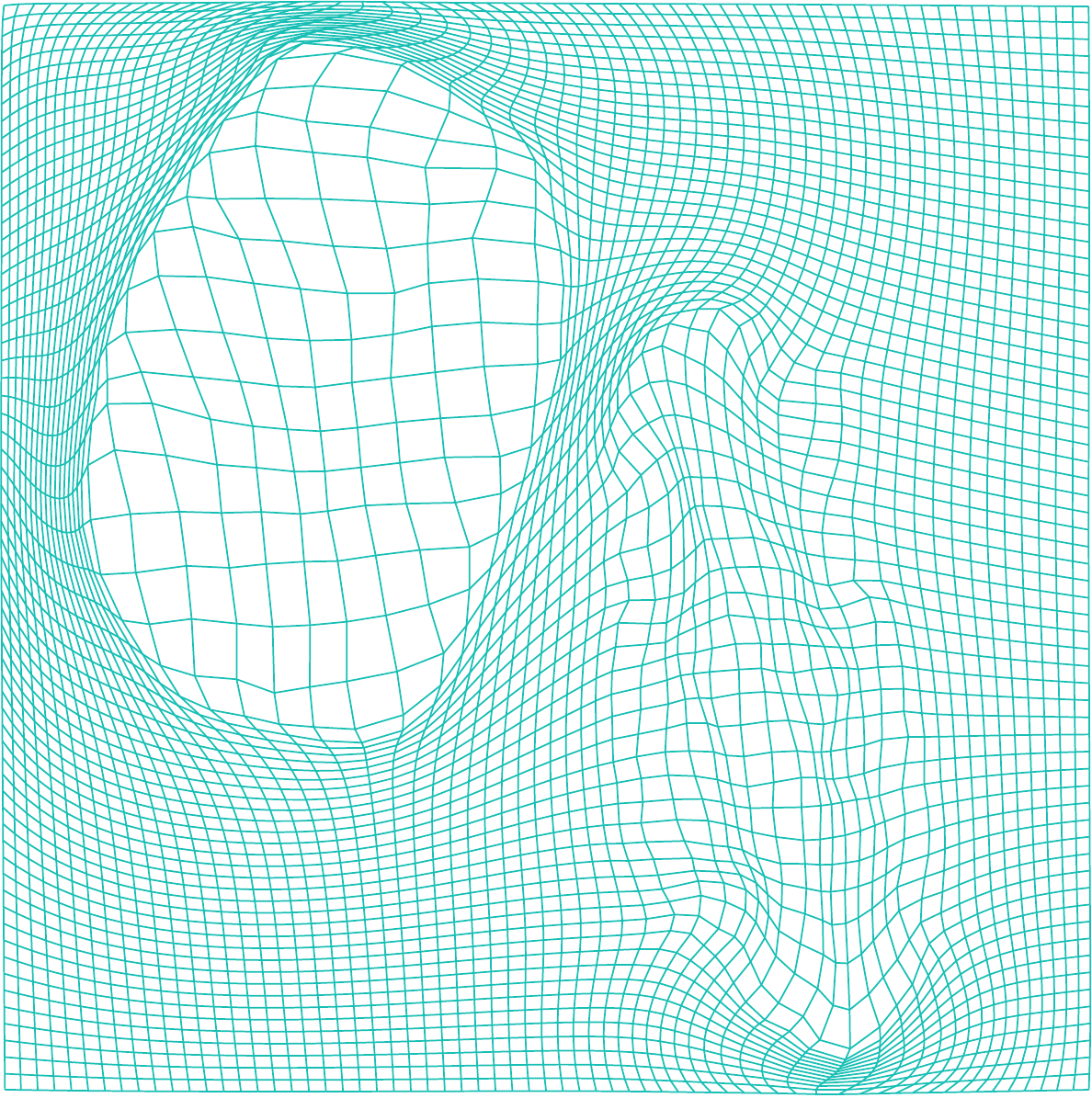}}\\
\subfigure[$I$ and $J$ (red)]{
\includegraphics[width=1.1in,height=1.1in]{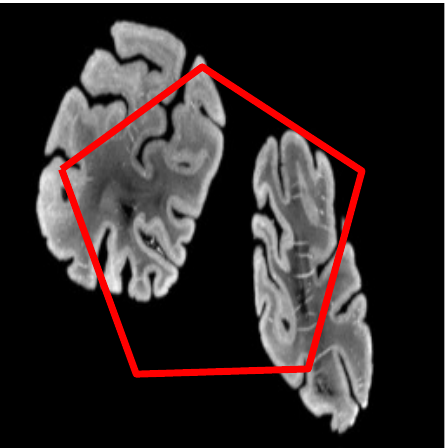}}
\subfigure[CV (2.66 sec)]{
\includegraphics[width=1.1in,height=1.1in]{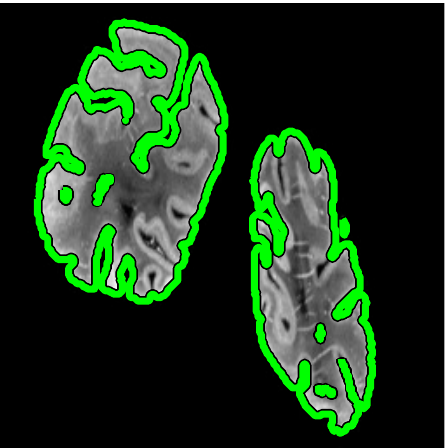}}
\subfigure[\cite{zhang2014local} (145.33 sec)]{
\includegraphics[width=1.1in,height=1.1in]{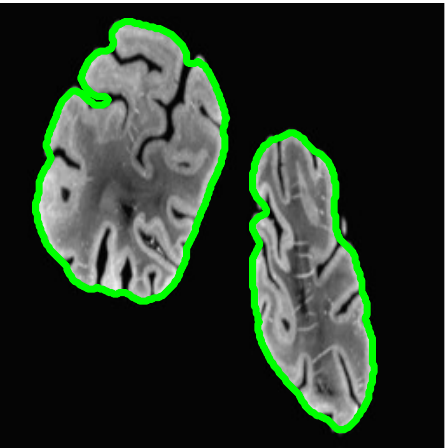}}
\subfigure[PM (6.71 sec)]{
\includegraphics[width=1.1in,height=1.1in]{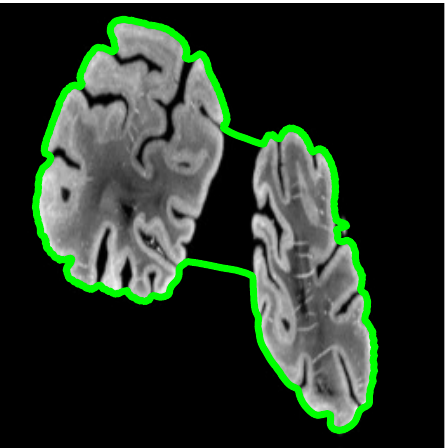}}
\subfigure[$\bm{y}$, $\det\nabla\bm{y}\in\lbrack 0.29,7.74\rbrack$]{
\includegraphics[width=1.1in,height=1.1in]{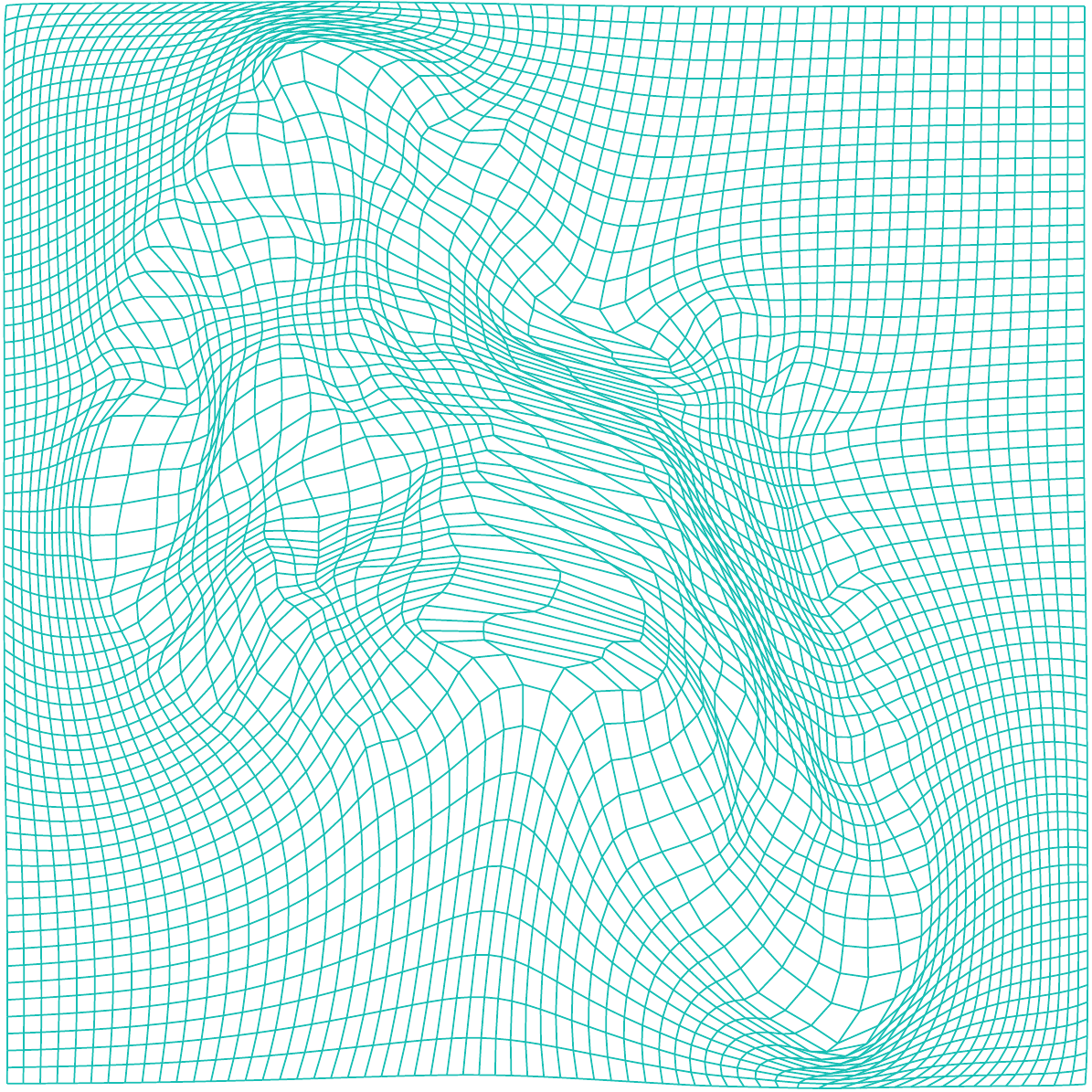}}\\
\subfigure[$I$ and $J$ (red)]{
\includegraphics[width=1.1in,height=1.1in]{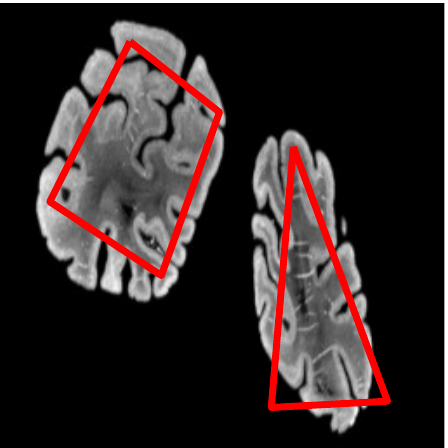}}
\subfigure[CV (3.17 sec)]{
\includegraphics[width=1.1in,height=1.1in]{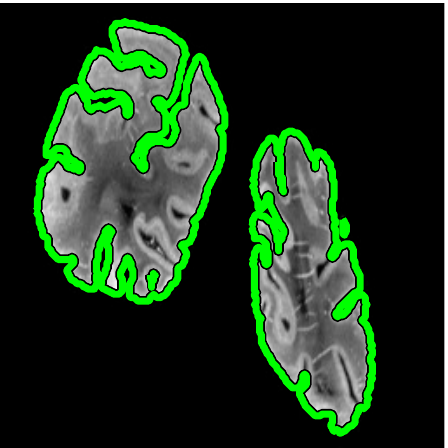}}
\subfigure[\cite{zhang2014local} (58.77 sec)]{
\includegraphics[width=1.1in,height=1.1in]{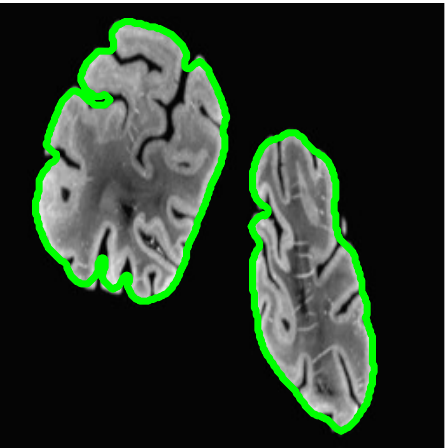}}
\subfigure[PM (3.29 sec)]{
\includegraphics[width=1.1in,height=1.1in]{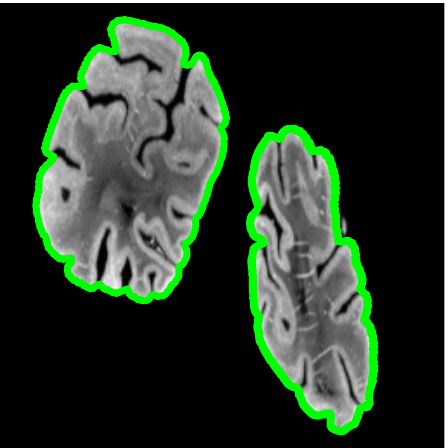}}
\subfigure[$\bm{y}$, $\det\nabla\bm{y}\in\lbrack 0.29,8.31\rbrack$]{
\includegraphics[width=1.1in,height=1.1in]{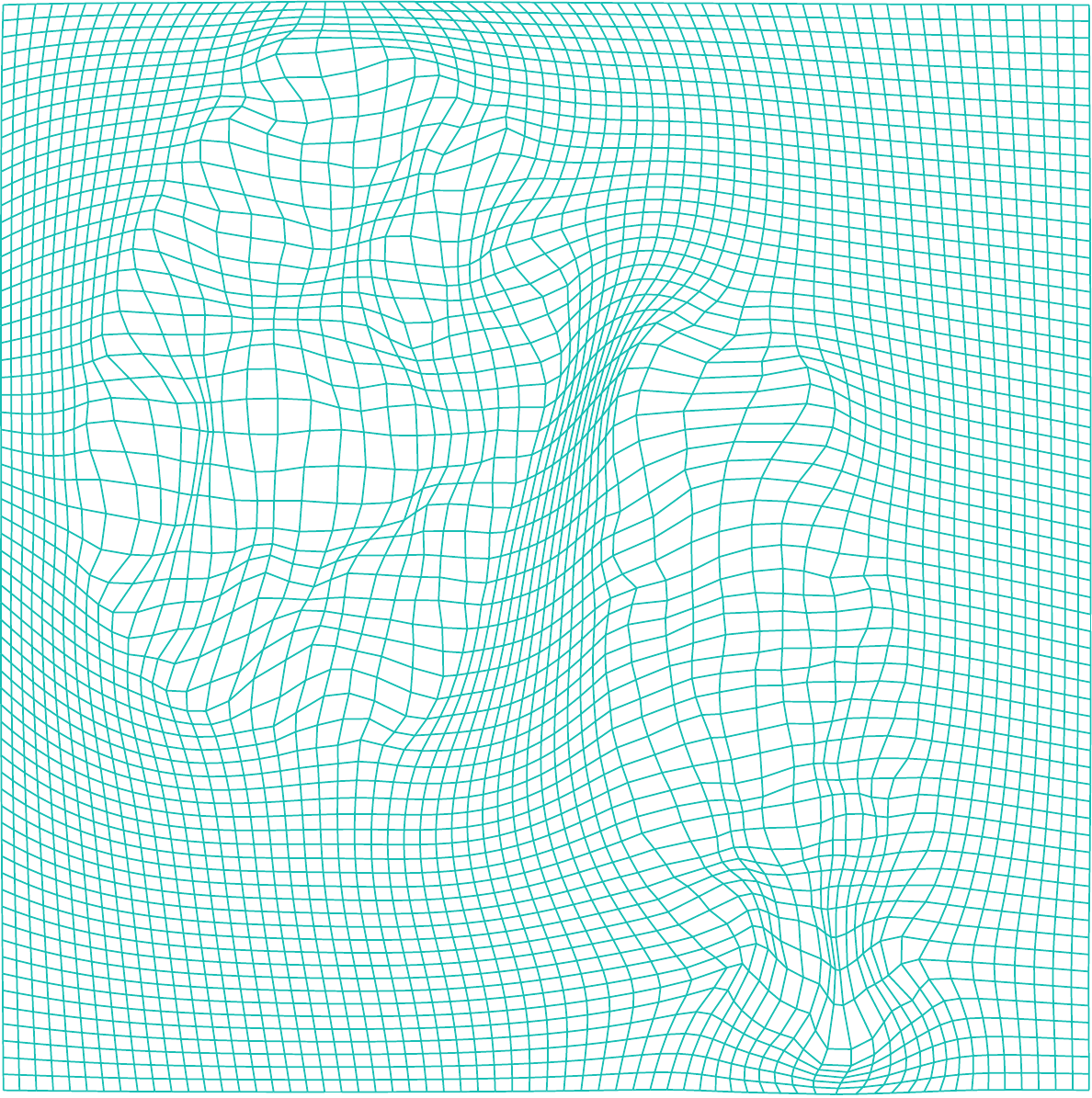}}
\caption{Results of First 2D Example. First column: boundary of the topological prior $J$ (red) superimposed on the input image $I$. Second column: results generated by the Chan-Vese model. Third column: results generated by a selective model \cite{zhang2014local}. Fourth column: results generated by the proposed model (PM) \eqref{ProposedModel}. Fifth column: the corresponding transformations $\bm{y}$ generated by the proposed model \eqref{ProposedModel}.}\label{2DExample_1}
\end{figure}

\begin{figure}[htbp]
\centering
\includegraphics[width=4.0in,height=2.0in]{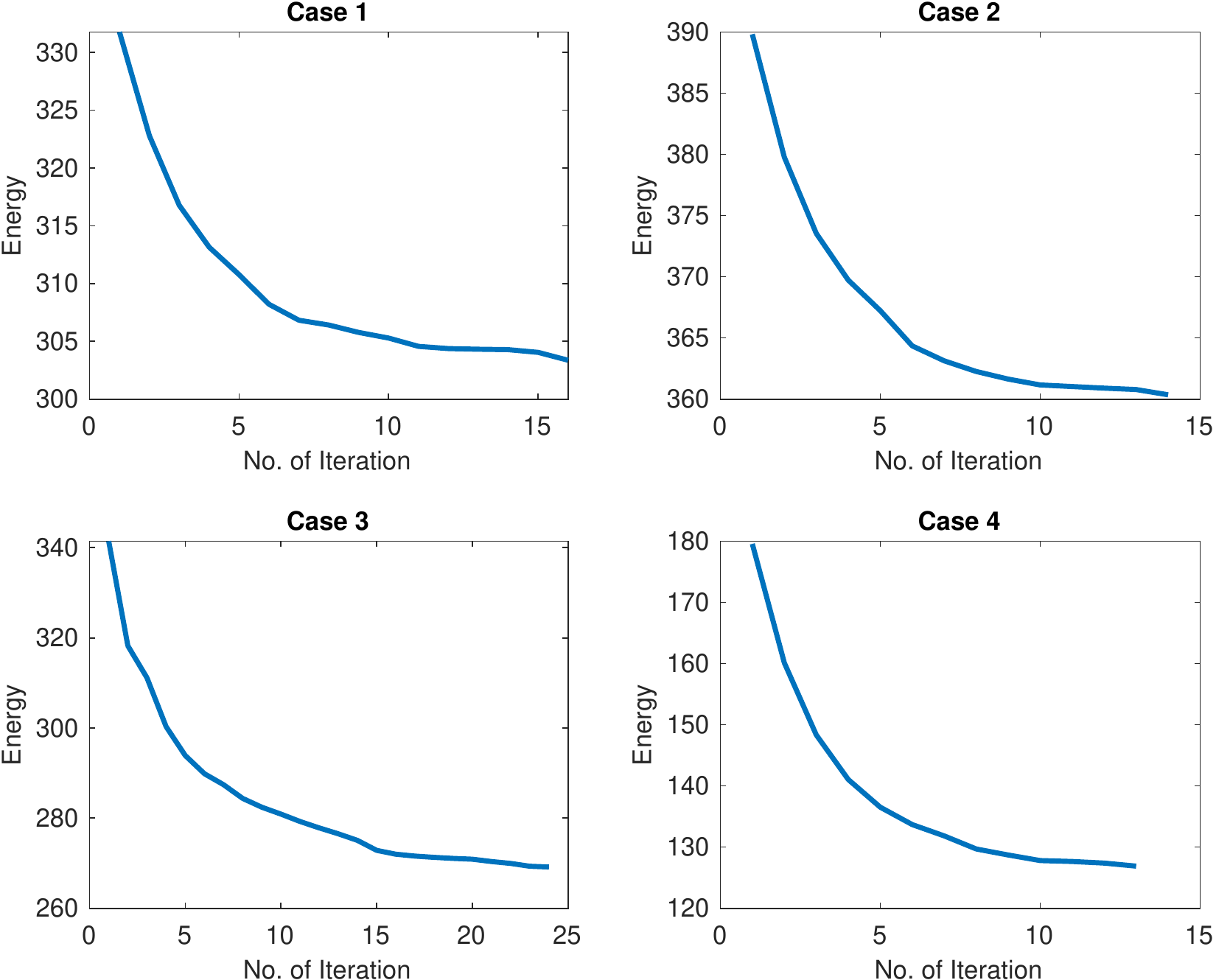}
\caption{Energy versus iterations of First 2D Example.}\label{Energy2D}
\end{figure}

\begin{figure}[htbp]
\centering
\subfigure[$I$ and $J$ (red)]{
\includegraphics[width=1.1in,height=1.1in]{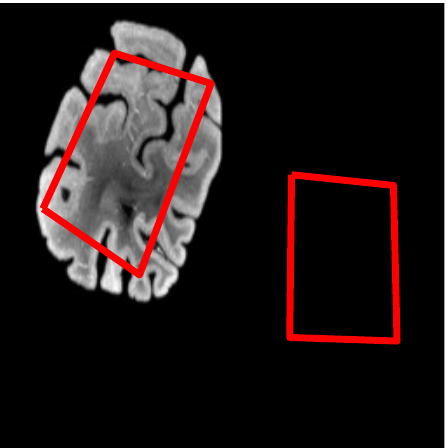}}
\subfigure[CV (2.36 sec)]{
\includegraphics[width=1.1in,height=1.1in]{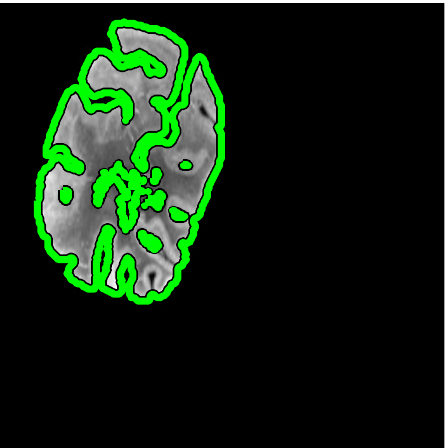}}
\subfigure[\cite{zhang2014local} (77.49 sec)]{
\includegraphics[width=1.1in,height=1.1in]{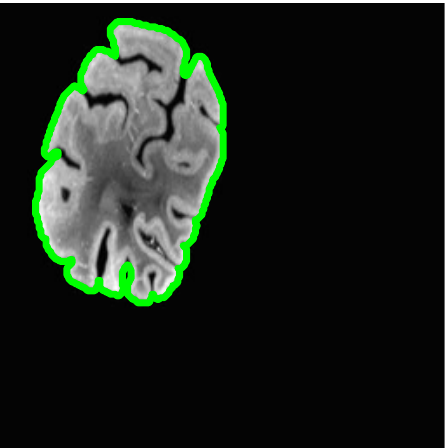}}
\subfigure[PM (5.81 sec)]{
\includegraphics[width=1.1in,height=1.1in]{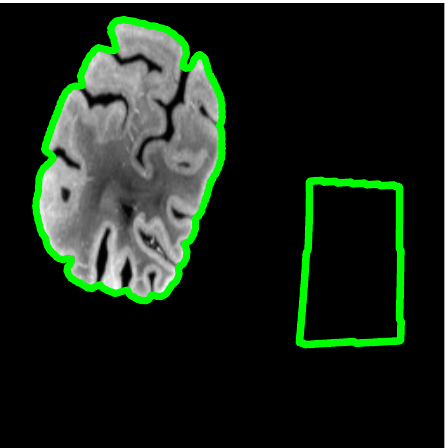}}
\subfigure[$\bm{y}$, $\det\nabla\bm{y}\in\lbrack 0.27,8.31\rbrack$]{
\includegraphics[width=1.1in,height=1.1in]{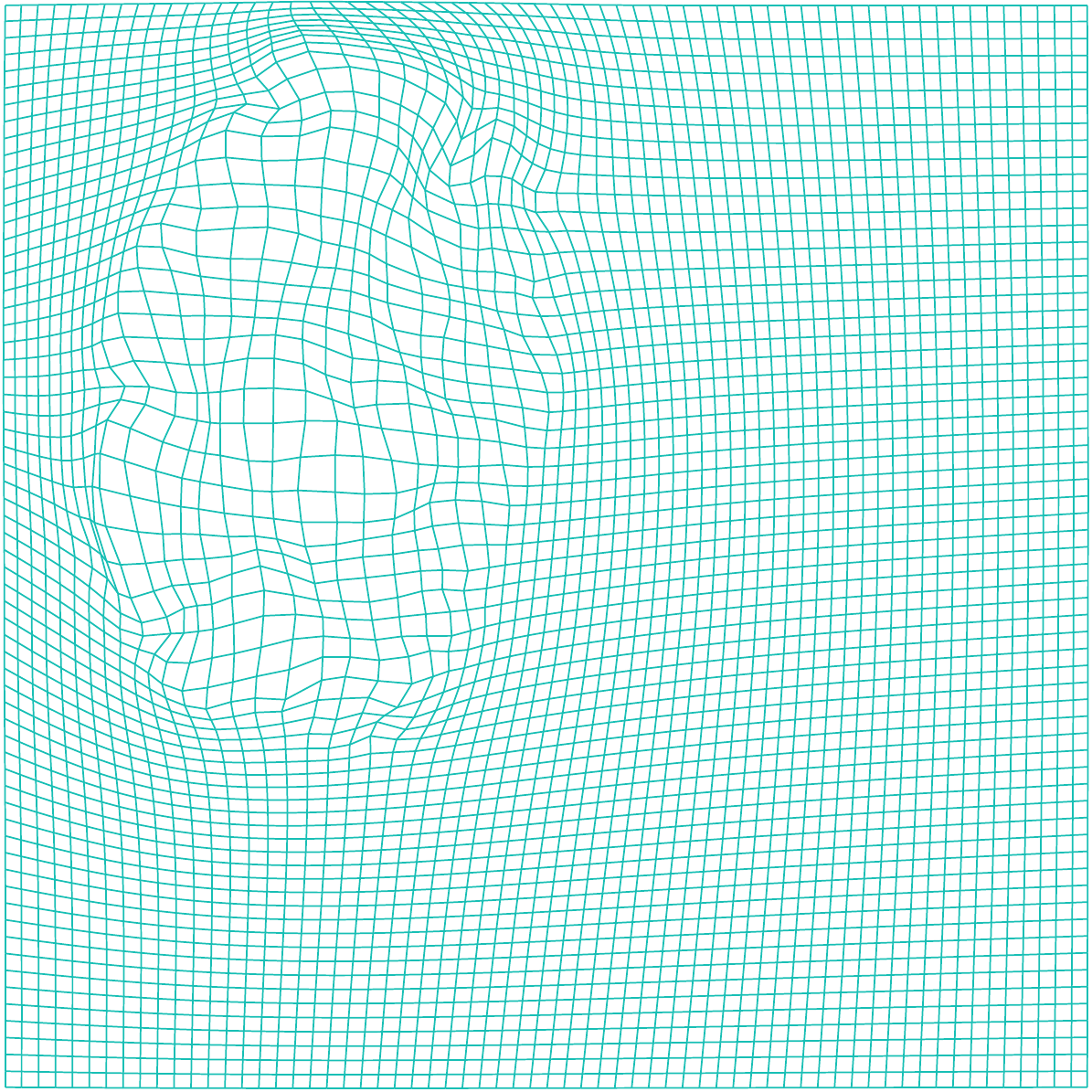}}
\caption{Results of First 2D Example. First column: boundary of the topological prior $J$ (red) superimposed on the input image $I$. Second column: result generated by the Chan-Vese model. Third column: result generated by a selective model \cite{zhang2014local}. Fourth column: result generated by the proposed model (PM) \eqref{ProposedModel}. Fifth column: the corresponding transformation $\bm{y}$ generated by the proposed model \eqref{ProposedModel}.}\label{2DExample_1_2}
\end{figure}

\bigskip

\noindent {\bf Example 2:} In this example, we test our proposed model on a 2D brain MRI also from FAIR \cite{modersitzki2009fair}, which is shown in Figure \ref{2DExample_2}. We use two different topological priors, as shown in the first column of Figure \ref{2DExample_2}. The second to fourth column show the results obtained by the Chan-Vese model, the selective model and our proposed model respectively. The first row shows the results with a simply-connected initial contour. As shown in (b), the segmentation result obtained by Chan-Vese model cannot preserve the topological prior. The result obtained by our proposed model is shown in (d). Our method successfully preserves the topology. The second row shows the results with a doubly-connected initial contour. Again, the result obtained by the Chan-Vese model cannot preserve the topological prior, as shown in (g). The result obtained by our proposed method is shown in (i), which preserves the topology. The transformations obtained by the proposed model \eqref{ProposedModel} are bijective and the minimum of the Jacobian determinant of the transformations are positive. For the selective model, we can see that for these two different priors, it leads to the same results. In addition, the results obtained by the Chan-Vese model take the different topological structure, which is unpredictable. But the proposed model \eqref{ProposedModel} can ensure the topological structures of the results are consistent with the topological structures of priors.

\bigskip

\begin{figure}[htbp]
\centering
\subfigure[$I$ and $J$ (red)]{
\includegraphics[width=1.1in,height=1.1in]{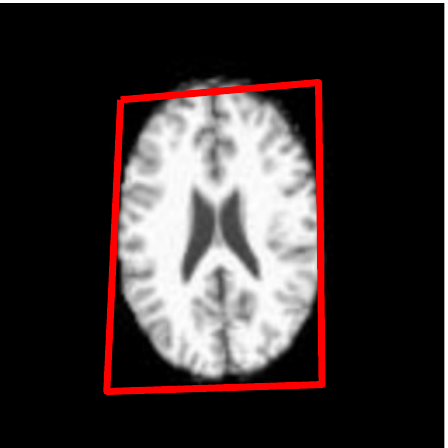}}
\subfigure[CV (1.63 sec)]{
\includegraphics[width=1.1in,height=1.1in]{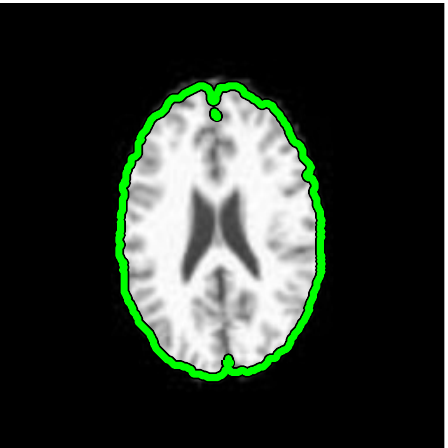}}
\subfigure[\cite{zhang2014local} (18.87 sec)]{
\includegraphics[width=1.1in,height=1.1in]{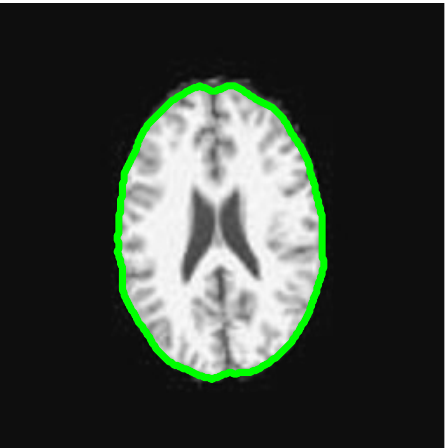}}
\subfigure[PM (4.54 sec)]{
\includegraphics[width=1.1in,height=1.1in]{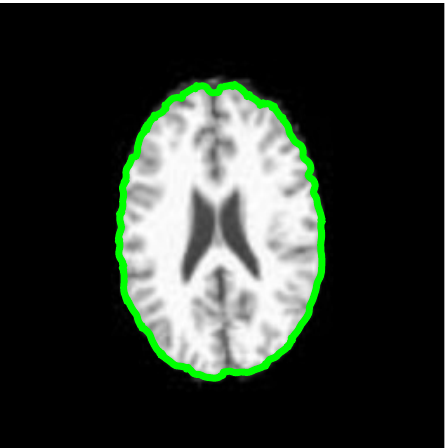}}
\subfigure[$\bm{y}$, $\det\nabla\bm{y}\in\lbrack 0.27,11.11\rbrack$]{
\includegraphics[width=1.1in,height=1.1in]{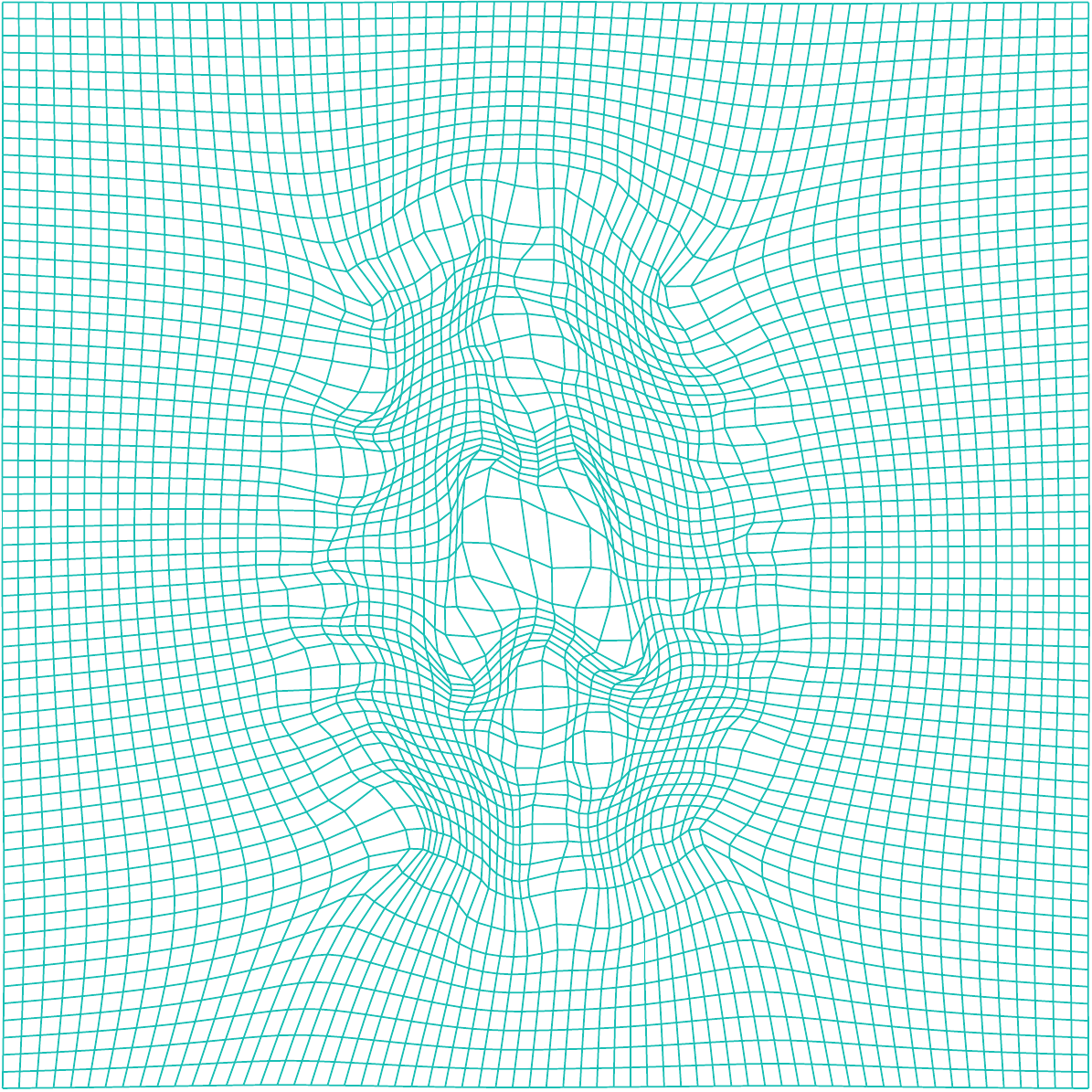}}\\
\subfigure[$I$ and $J$ (red)]{
\includegraphics[width=1.1in,height=1.1in]{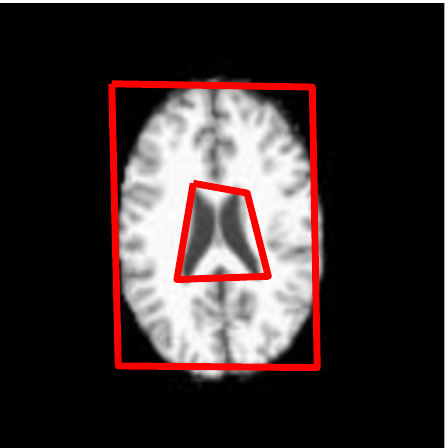}}
\subfigure[CV (1.84 sec)]{
\includegraphics[width=1.1in,height=1.1in]{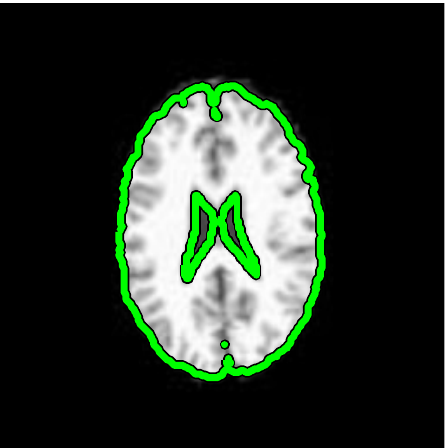}}
\subfigure[\cite{zhang2014local} (17.05 sec)]{
\includegraphics[width=1.1in,height=1.1in]{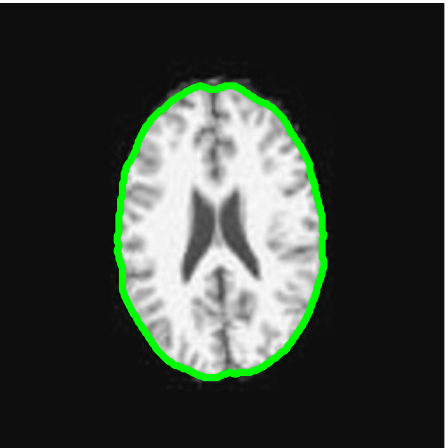}}
\subfigure[PM (3.66 sec)]{
\includegraphics[width=1.1in,height=1.1in]{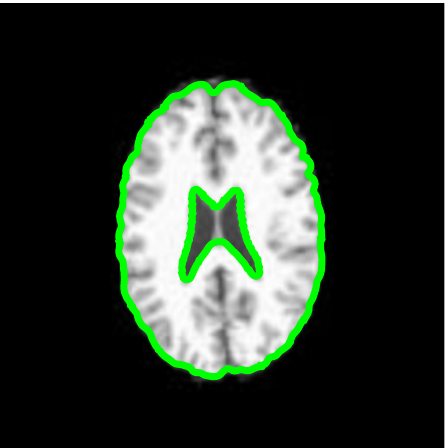}}
\subfigure[$\bm{y}$, $\det\nabla\bm{y}\in\lbrack 0.28,8.81\rbrack$]{
\includegraphics[width=1.1in,height=1.1in]{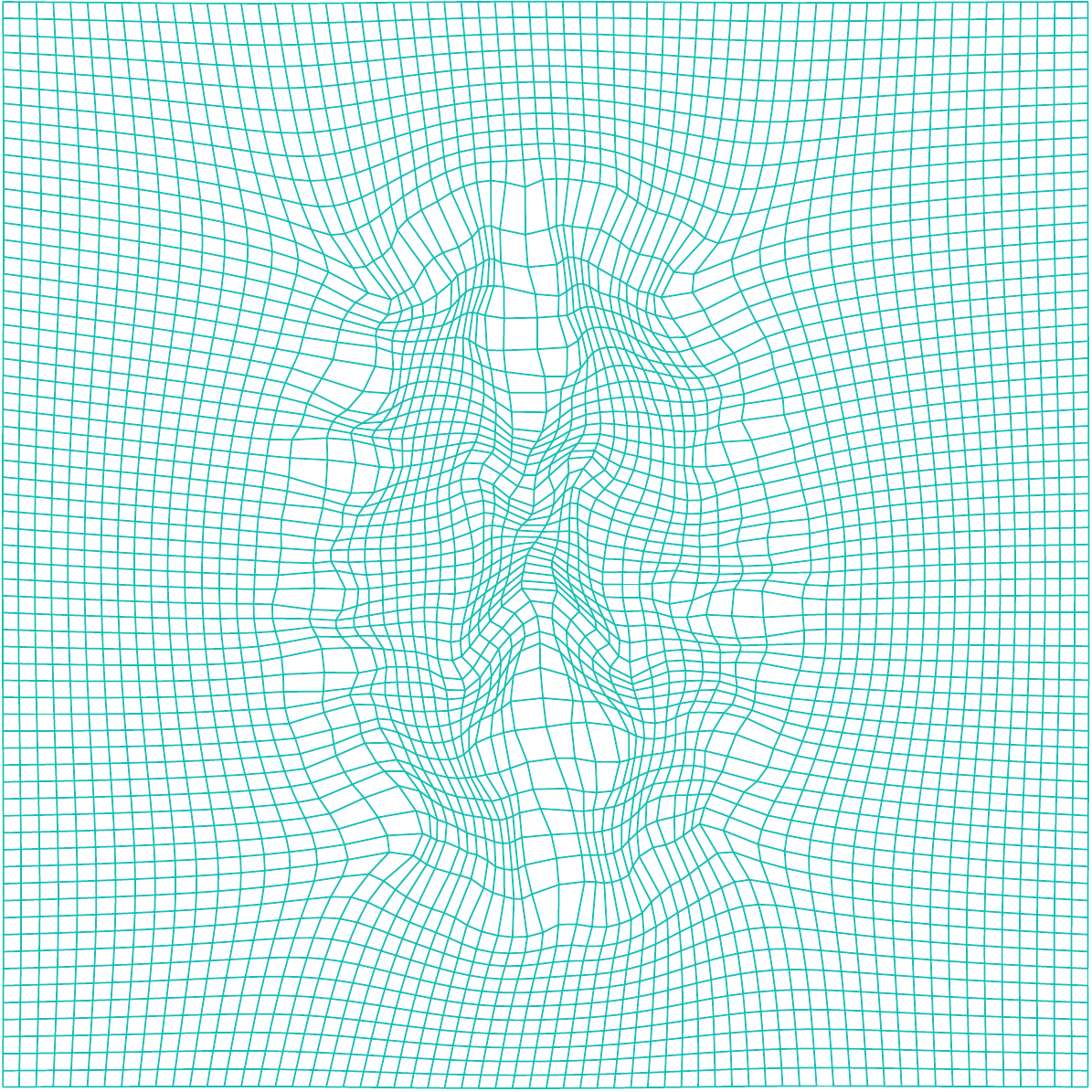}}

\caption{Results of Second 2D Example. First column: boundary of the topological prior $J$ (red) superimposed on the input image $I$. Second column: results generated by the Chan-Vese model. Third column: results generated by a selective model \cite{zhang2014local}. Fourth column: results generated by the proposed model (PM) \eqref{ProposedModel}. Fifth column: the corresponding transformations $\bm{y}$ generated by the proposed model \eqref{ProposedModel}.}\label{2DExample_2}
\end{figure}

\noindent {\bf Example 3:} In Figure \ref{2DExample_3}, we test our proposed model on a 2D lung CT scan, one slice of a video file downloaded from https://www.youtube.com/watch?v=RMYzgm4eJDE. In this example, we consider three different topological priors, which are shown in the first column. The segmentation results generated by our proposed model with different initial contours are shown in the fourth column. They successfully preserves the topological structures as prescribed by the priors. As shown in the last column, the minimum of the Jacobian determinant of the transformations are all positive, meaning that our results are indeed topology preserving. The second column shows the results obtained by the Chan-Vese model and the third column shows the results obtained by the selective model. The results by the Chan-Vese model are inaccurate with a number of topological noise. For the selective model, it can give the similar results with the proposed model but it costs more computational time and for the second case, it changes the topological structure. This example illustrate that the accuracy of the segmentation result can be significantly improved by imposing the topological prior. 

\begin{figure}[htbp]
\centering
\subfigure[$I$ and $J$ (red)]{
\includegraphics[width=1.1in,height=1.1in]{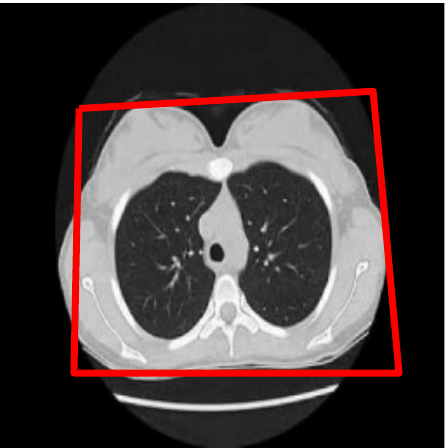}}
\subfigure[CV (1.85 sec)]{
\includegraphics[width=1.1in,height=1.1in]{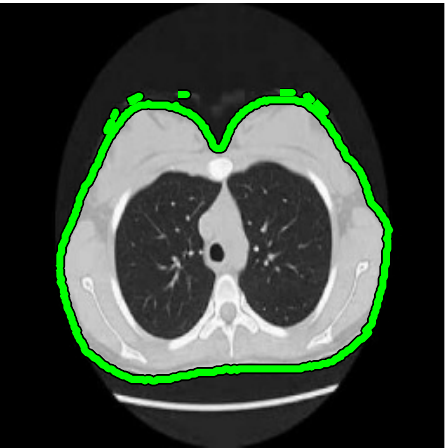}}
\subfigure[\cite{zhang2014local} (27.85 sec)]{
\includegraphics[width=1.1in,height=1.1in]{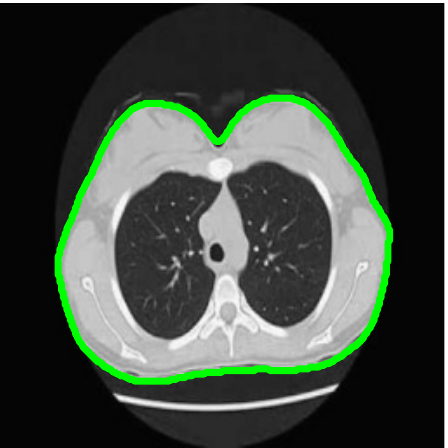}}
\subfigure[PM (8.66 sec)]{
\includegraphics[width=1.1in,height=1.1in]{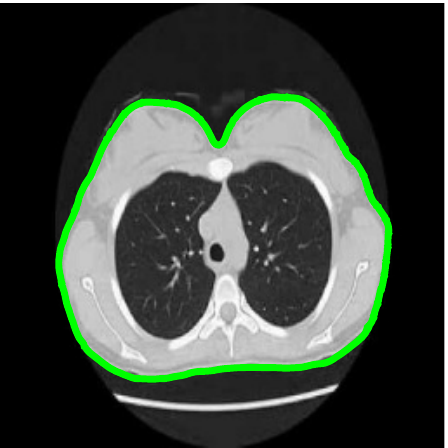}}
\subfigure[$\bm{y}$, $\det\nabla\bm{y}\in\lbrack 0.23,11.51\rbrack$]{
\includegraphics[width=1.1in,height=1.1in]{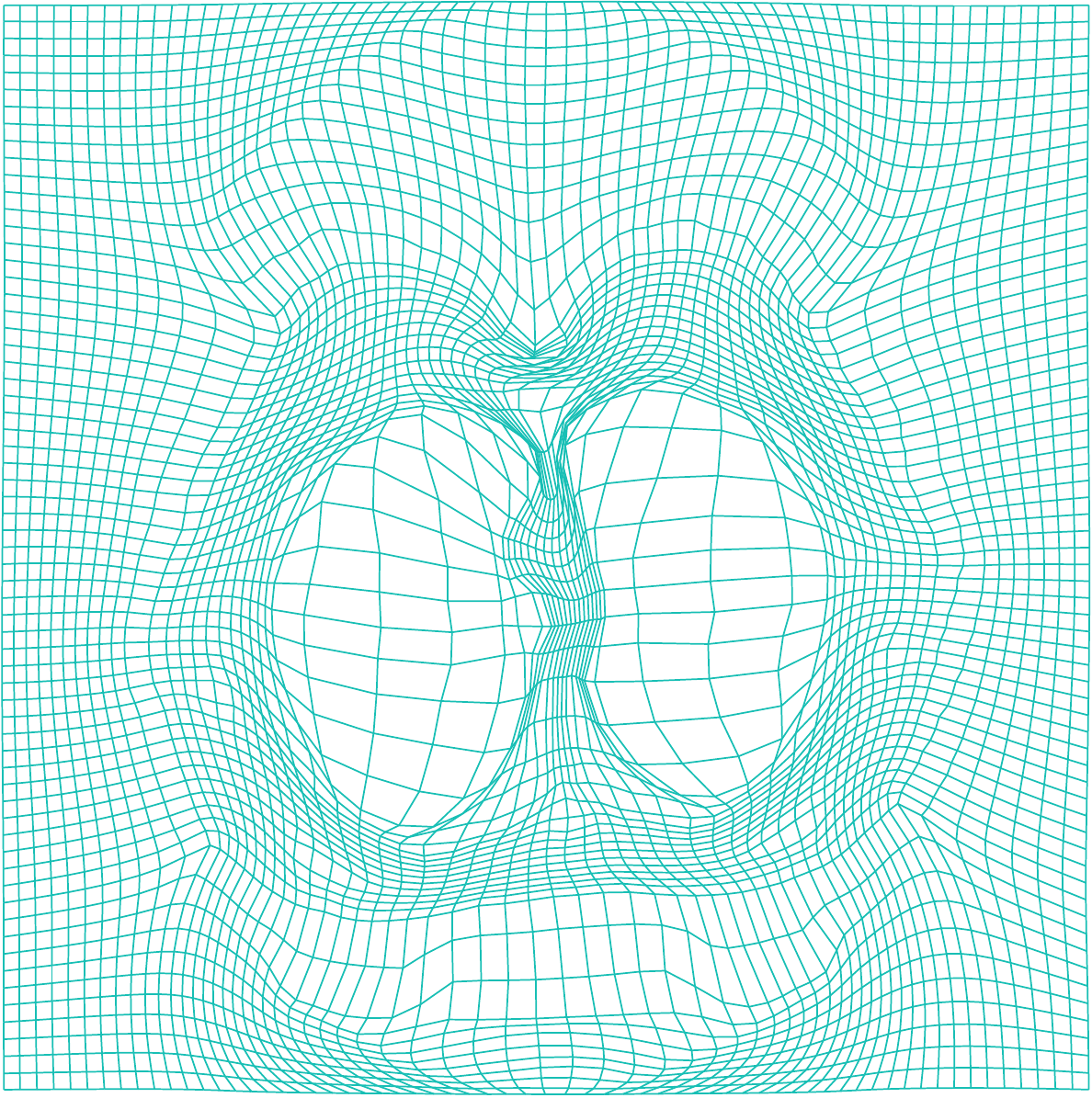}}\\
\subfigure[$I$ and $J$ (red)]{
\includegraphics[width=1.1in,height=1.1in]{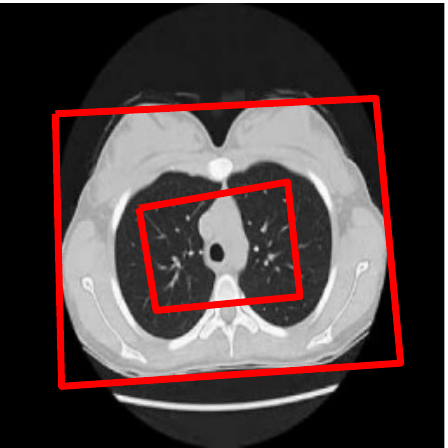}}
\subfigure[CV (2.62 sec)]{
\includegraphics[width=1.1in,height=1.1in]{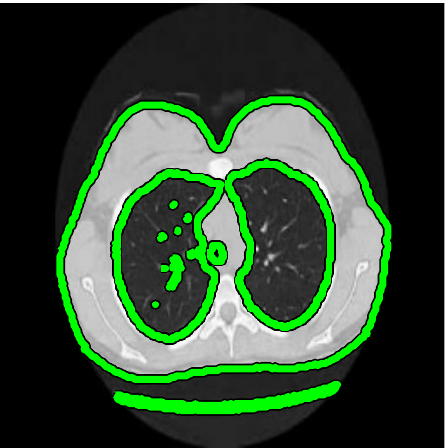}}
\subfigure[\cite{zhang2014local} (48.51 sec)]{
\includegraphics[width=1.1in,height=1.1in]{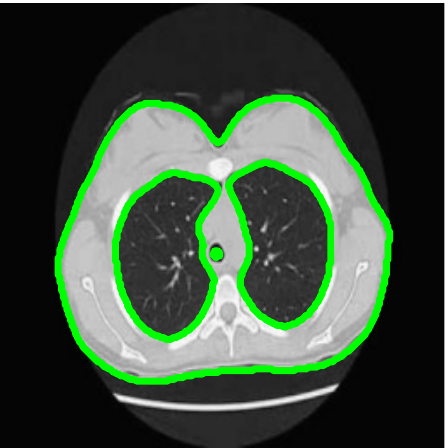}}
\subfigure[PM (4.14 sec)]{
\includegraphics[width=1.1in,height=1.1in]{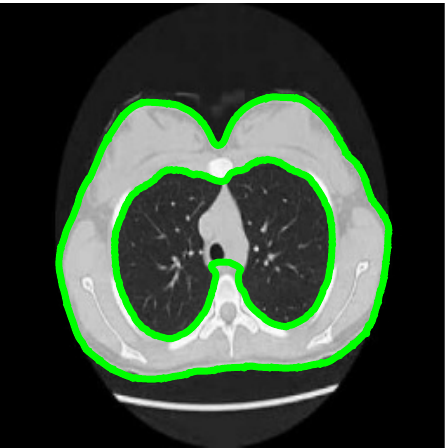}}
\subfigure[$\bm{y}$, $\det\nabla\bm{y}\in\lbrack 0.26,10.01\rbrack$]{
\includegraphics[width=1.1in,height=1.1in]{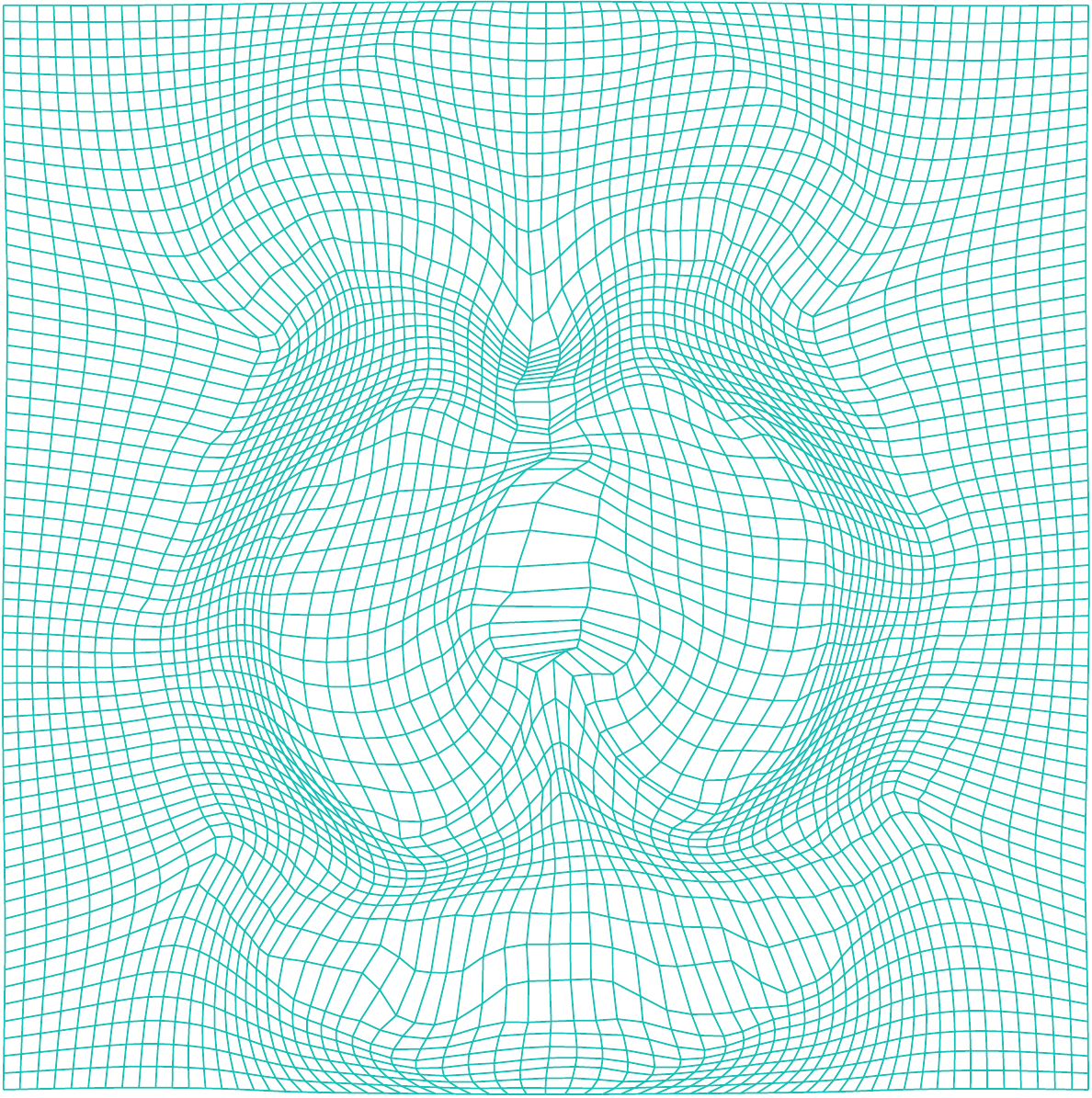}}\\
\subfigure[$I$ and $J$ (red)]{
\includegraphics[width=1.1in,height=1.1in]{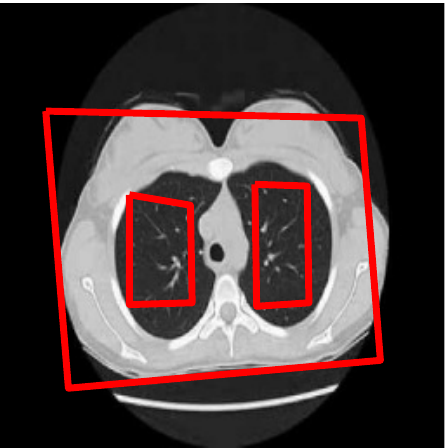}}
\subfigure[CV (2.89 sec)]{
\includegraphics[width=1.1in,height=1.1in]{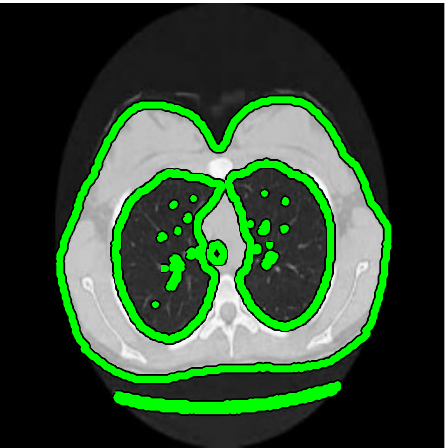}}
\subfigure[\cite{zhang2014local} (26.16 sec)]{
\includegraphics[width=1.1in,height=1.1in]{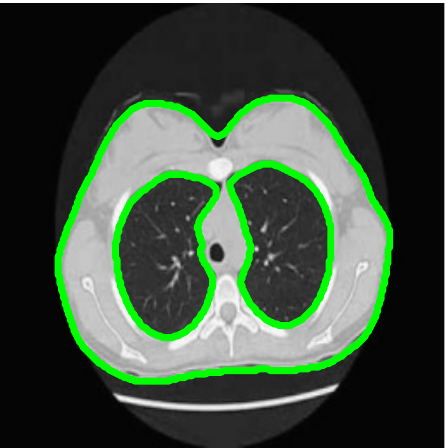}}
\subfigure[PM (6.24 sec)]{
\includegraphics[width=1.1in,height=1.1in]{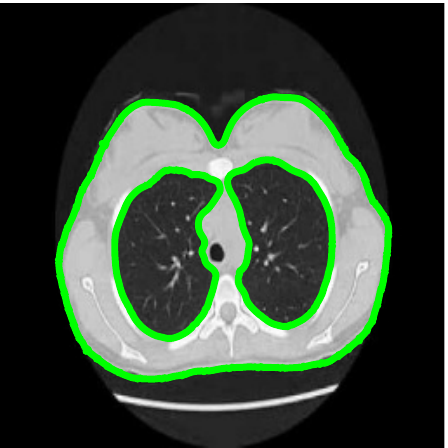}}
\subfigure[$\bm{y}$, $\det\nabla\bm{y}\in\lbrack 0.29, 12.33\rbrack$]{
\includegraphics[width=1.1in,height=1.1in]{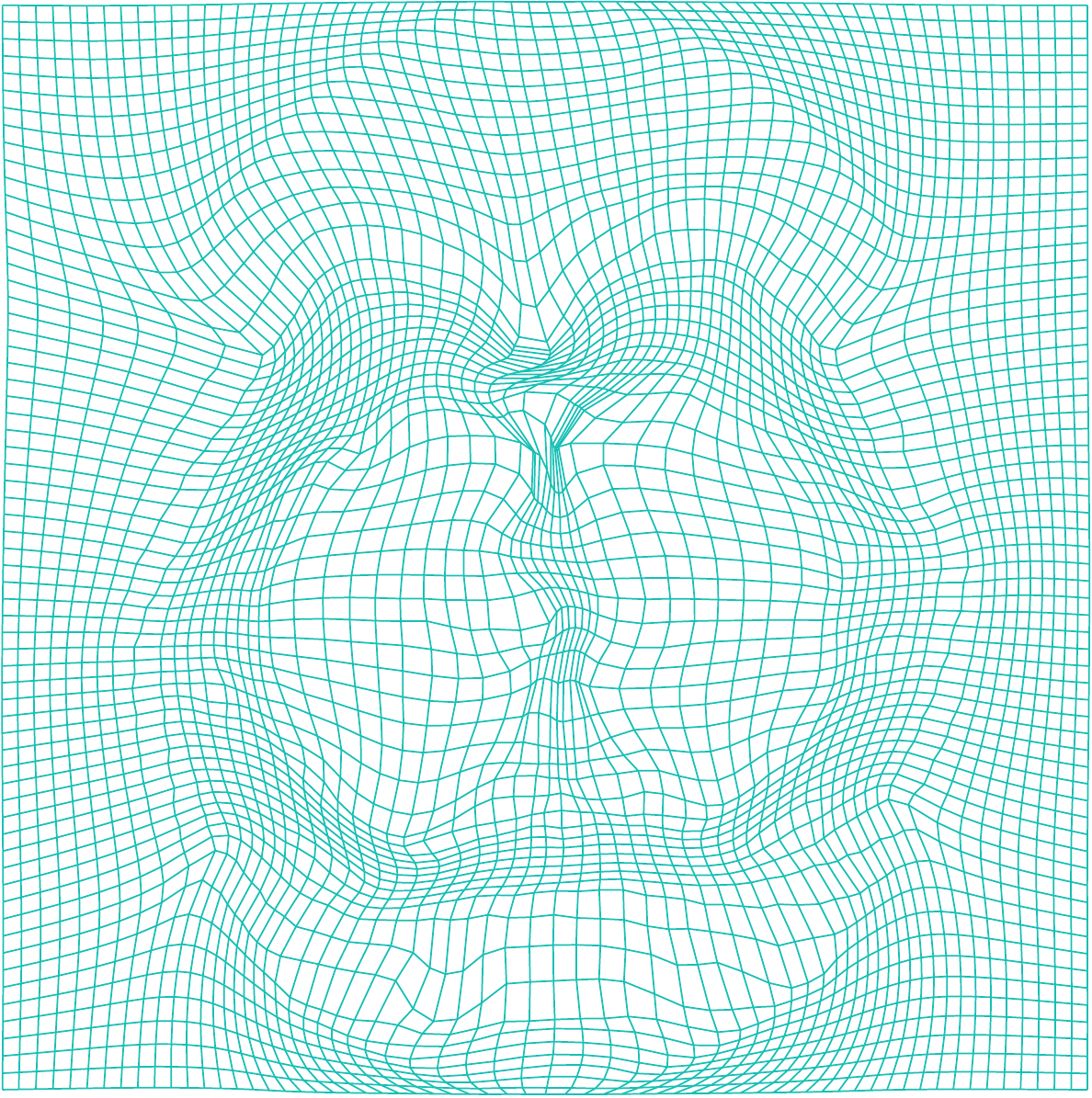}}

\caption{Results of Third 2D Example. First column: boundary of the topological prior $J$ (red) superimposed on the input image $I$. Second column: results generated by the Chan-Vese model. Third column: results generated by a selective model \cite{zhang2014local}. Fourth column: results generated by the proposed model (PM) \eqref{ProposedModel}. Fifth column: the corresponding transformations $\bm{y}$ generated by the proposed model \eqref{ProposedModel}.}\label{2DExample_3}
\end{figure}

\begin{figure}[htbp]
\centering
\includegraphics[width=4.0in,height=2.0in]{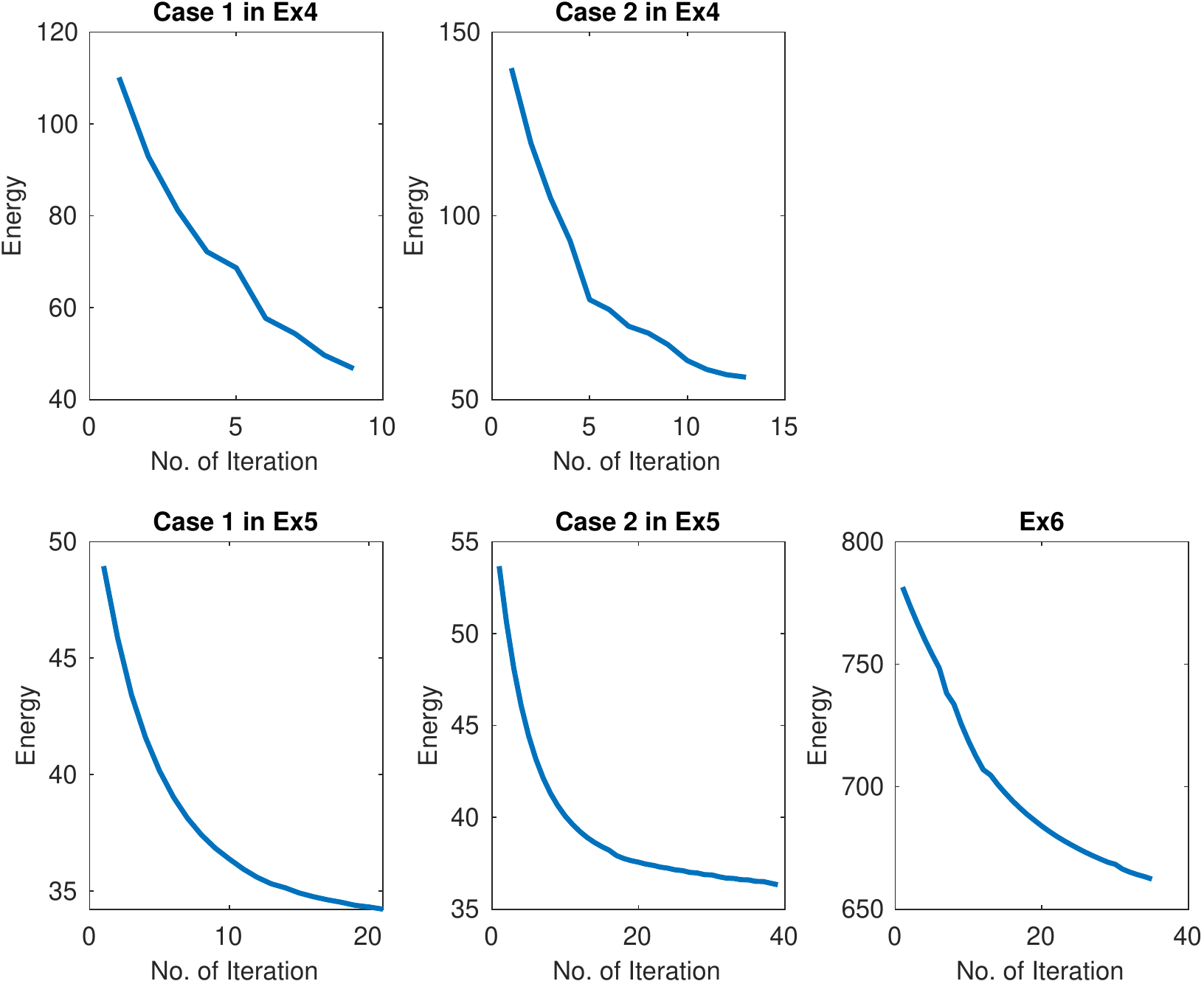}
\caption{Energy versus iterations of 3D Examples.}\label{Energy3D}
\end{figure}

\subsection{3D Example}
In this subsection, we test our proposed model on three 3D images, one synthetic image and two real images. All images are resized into $128\times128\times128$ and their intensities are rescaled into $[0, 255]$. All the real 3D images are downloaded from https://www.dir-lab.com. Except for the Chan-Vese model, we also compare our proposed model \eqref{ProposedModel} with a 3D selective model \cite{zhang2015fast}, whose code is downloaded from https://www.liverpool.ac.uk/$\sim$cmchenke/softw/select$\_$3D-2015.htm. For the parameters in the 3D case, based on our numerical experiences, $[10,10^{2}]\times[1,10] \times [1,10]$ may be the suitable range for $(\alpha_{l},\alpha_{s},\alpha_{v})$ with respect to the accuracy and computational time.

\bigskip

\noindent {\bf Example 4:} In this example, we test our proposed model on a synthetic image, which is shown in Figure \ref{Synthetic_Image_1} (a). A simple topological prior is introduced, as shown in Figure \ref{Synthetic_Image_1} (b). For the parameters of our proposed model, we set $\alpha_{l}=10^{2}$, $\alpha_{s}=10$ and $\alpha_{v}=10$. The segmentation result obtained by our proposed model is shown in (c). Our method can produce a topology-preserving segmentation result. We compare our method with the Chan-Vese model and the selective model by using the default parameters. Their segmentation results are shown in (d,e). Here, we can see that both of the two models can also produce topology-preserving segmentation results.    

\begin{figure}[htbp]
\centering
\subfigure[Target Image]{
\includegraphics[width=1.9in,height=1.5in]{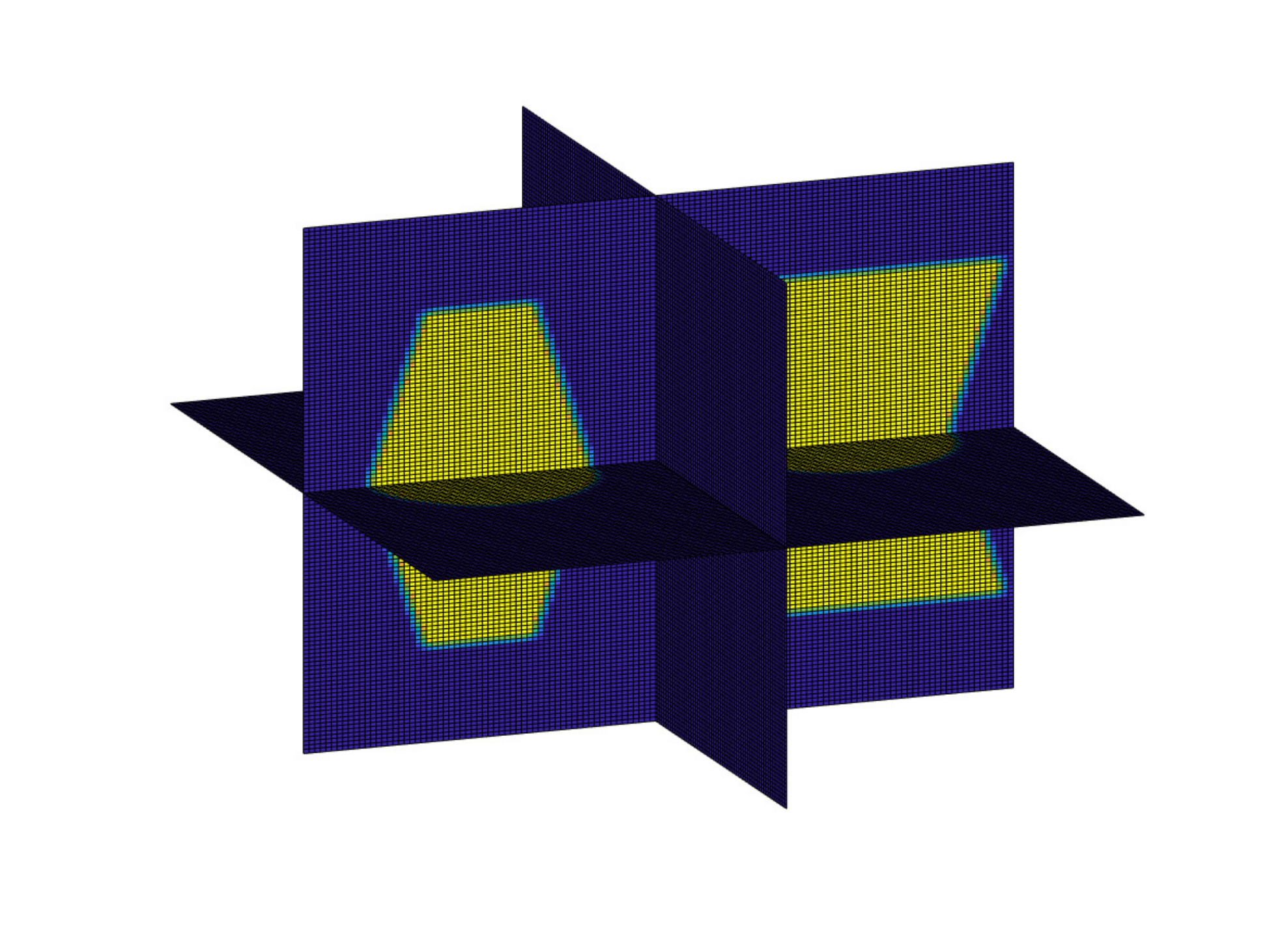}}
\subfigure[Prior Image]{
\includegraphics[width=1.9in,height=1.5in]{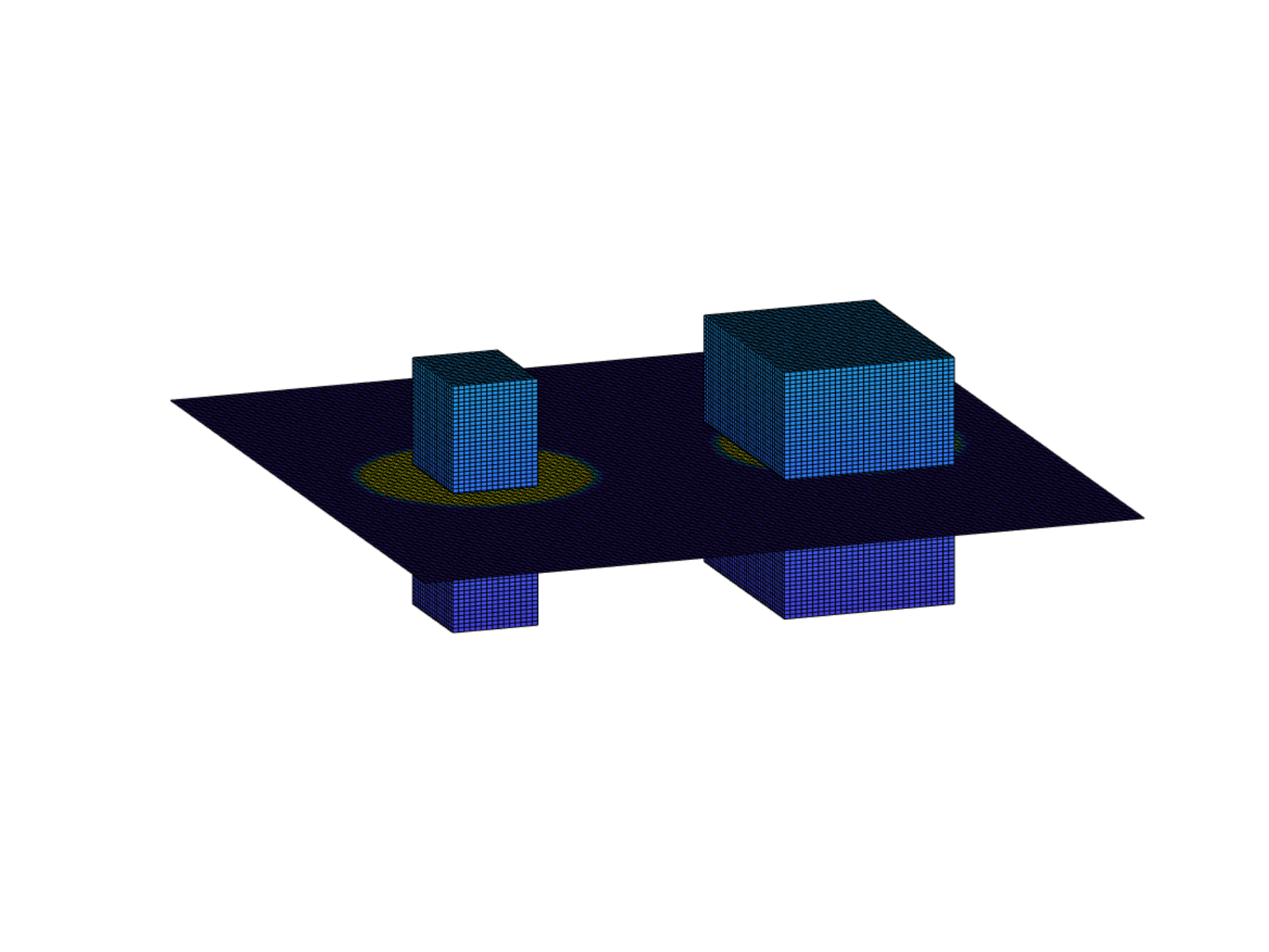}}\\
\subfigure[PM (276.64 sec)]{
\includegraphics[width=1.9in,height=1.5in]{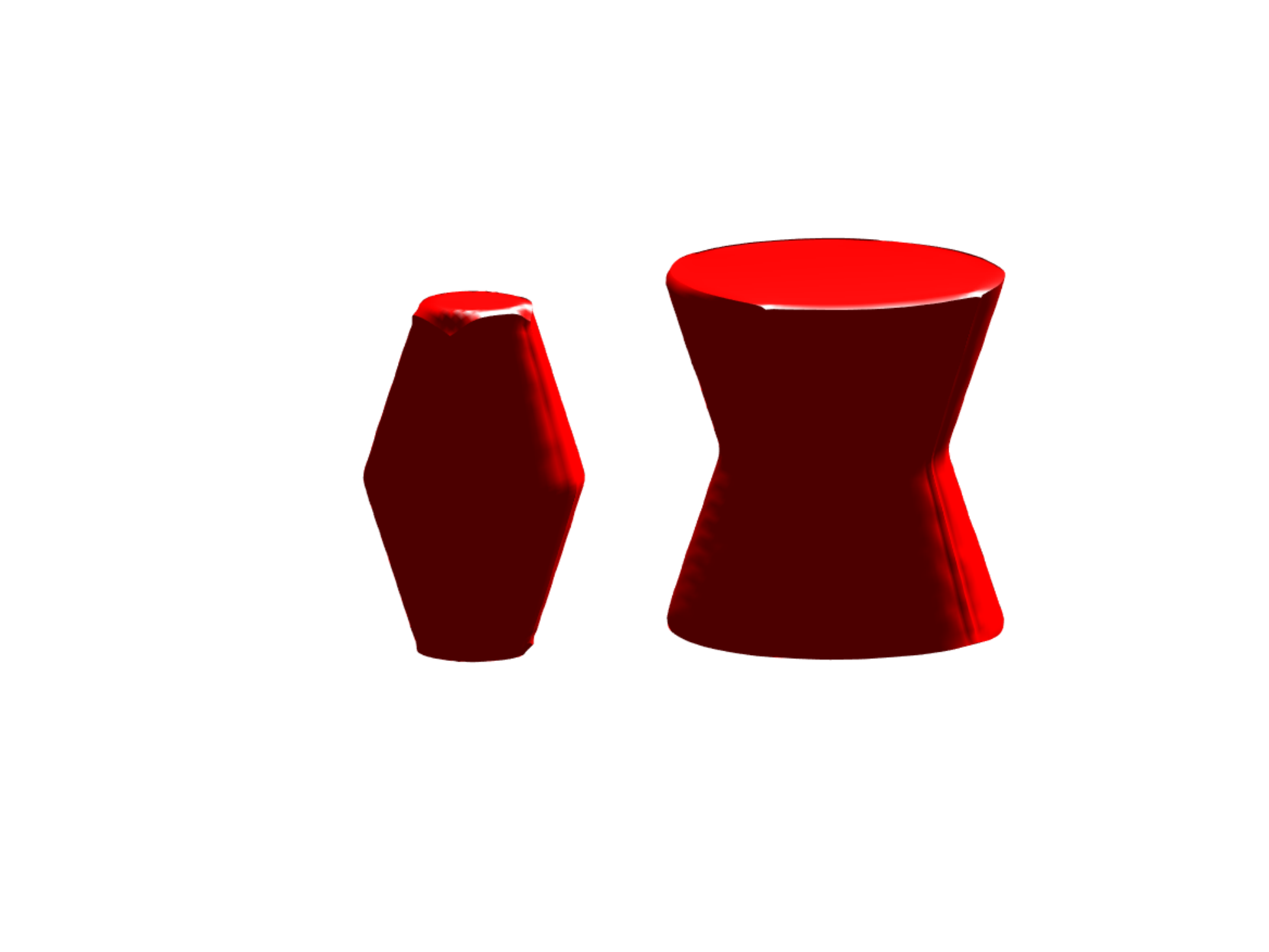}}
\subfigure[CV (83.84 sec)]{
\includegraphics[width=1.9in,height=1.5in]{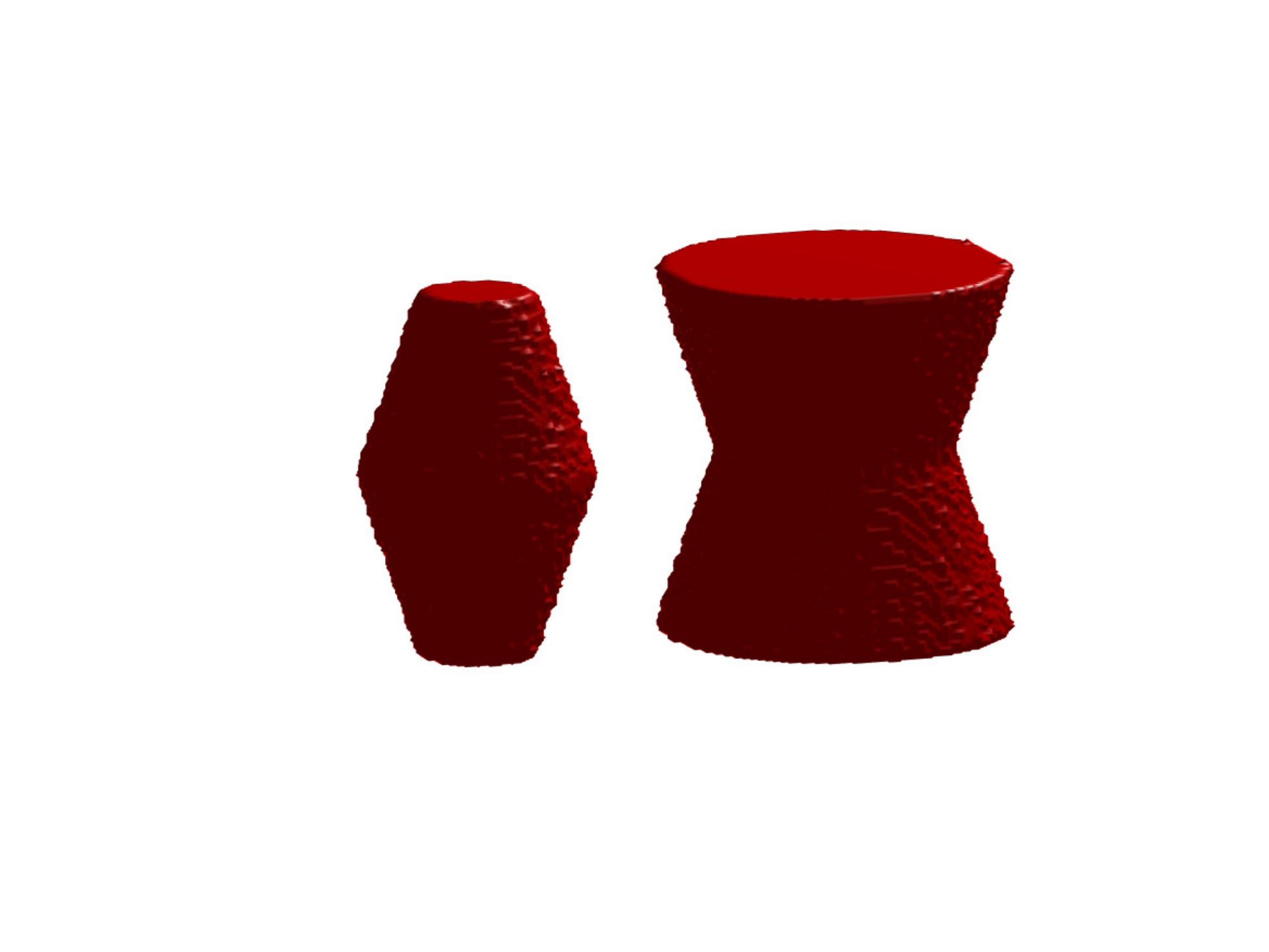}}
\subfigure[\cite{zhang2015fast} (125.46 sec)]{
\includegraphics[width=1.9in,height=1.5in]{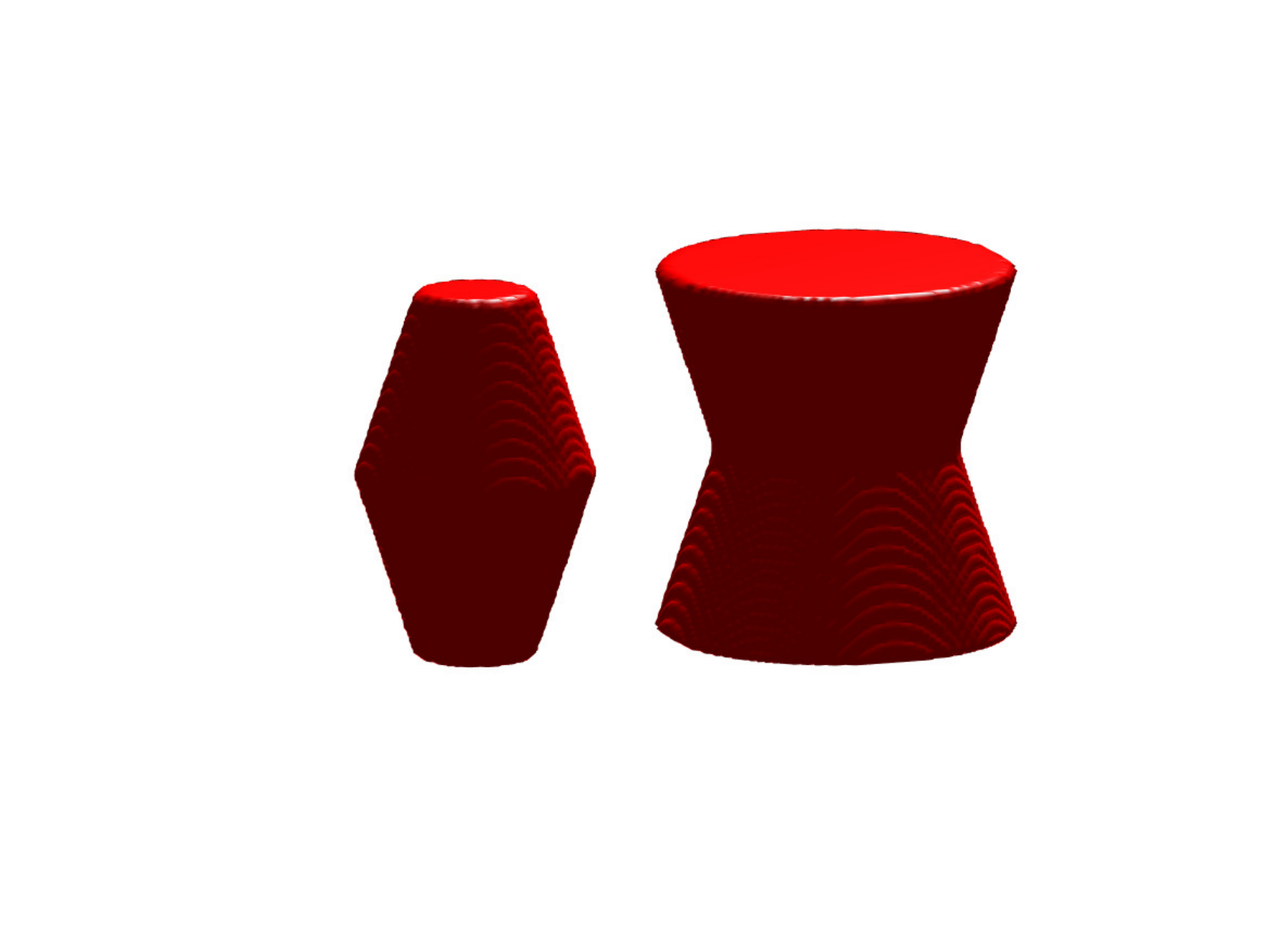}}
\caption{Here, we show the target image and prior image of the first 3D example in the first row. The second row shows the segmentation results by the proposed model \eqref{ProposedModel}, the Chan-Vese model and the selective model \cite{zhang2015fast}, respectively.}\label{Synthetic_Image_1}
\end{figure}

Next, we change the intensity value of the central part of the synthetic image to $0$ as shown in Figure \ref{Synthetic_Image_2} (a,b). We apply our proposed model and the other two models with the same parameters and topological prior as the above case on this degraded image. From (c,f), (d,g) and (e,h), we observe that our model can again generate a topology-preserving segmentation result but the others cannot.

\begin{figure}[htbp]
\centering
\subfigure[Target Image]{
\includegraphics[width=1.9in,height=1.5in]{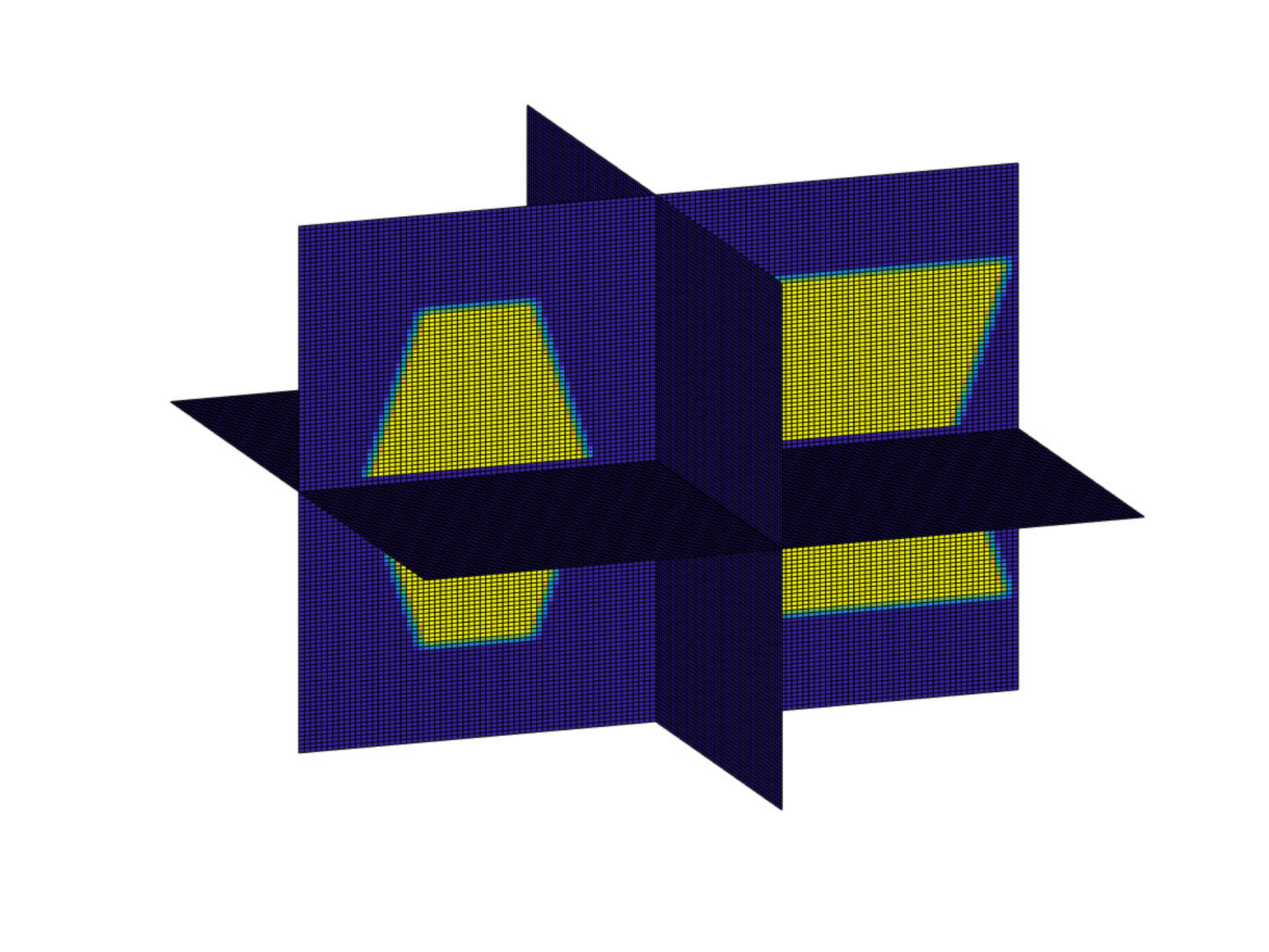}}
\subfigure[Target Image]{
\includegraphics[width=1.9in,height=1.5in]{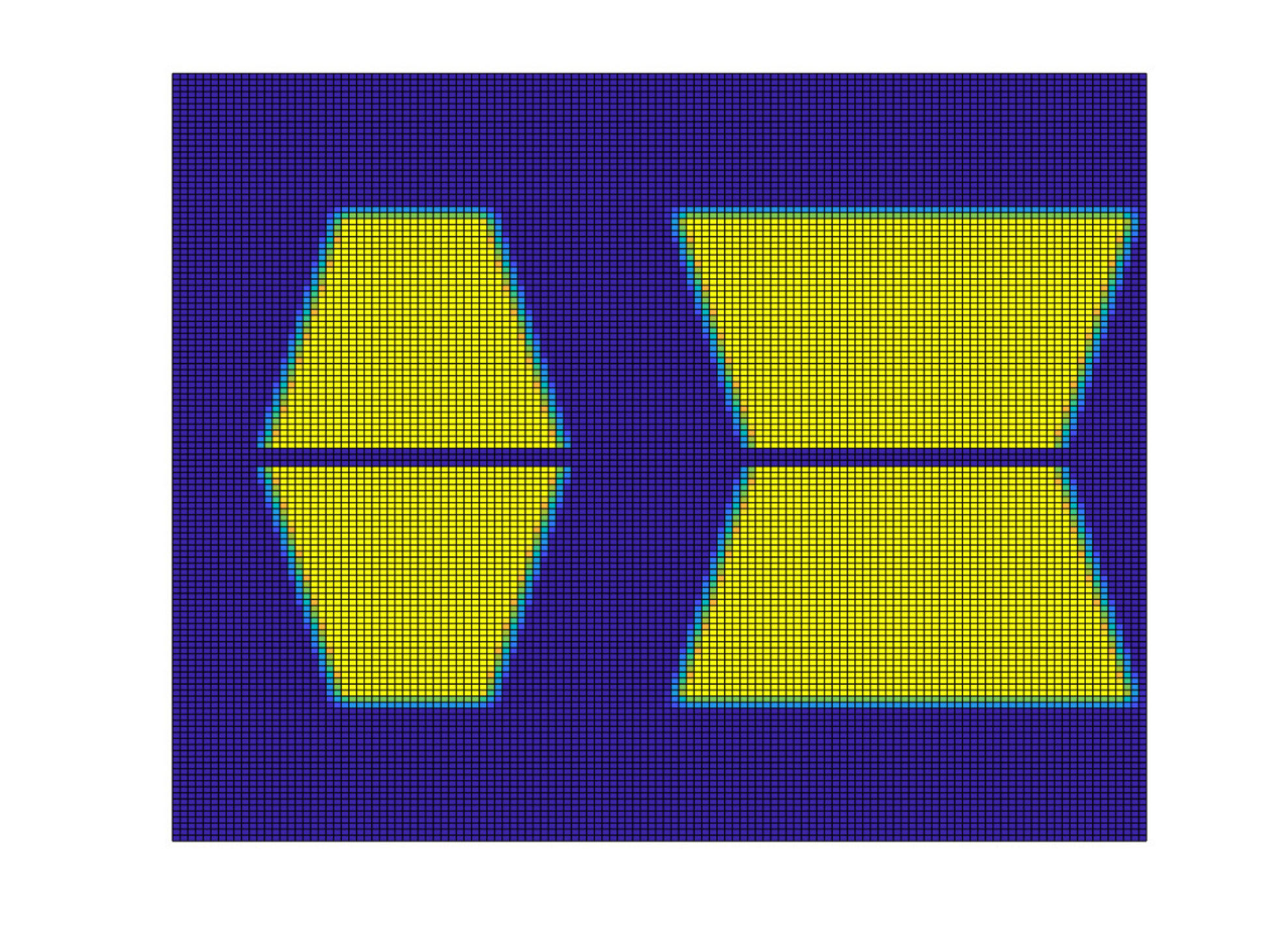}}\\
\subfigure[PM (433.07 sec)]{
\includegraphics[width=1.9in,height=1.5in]{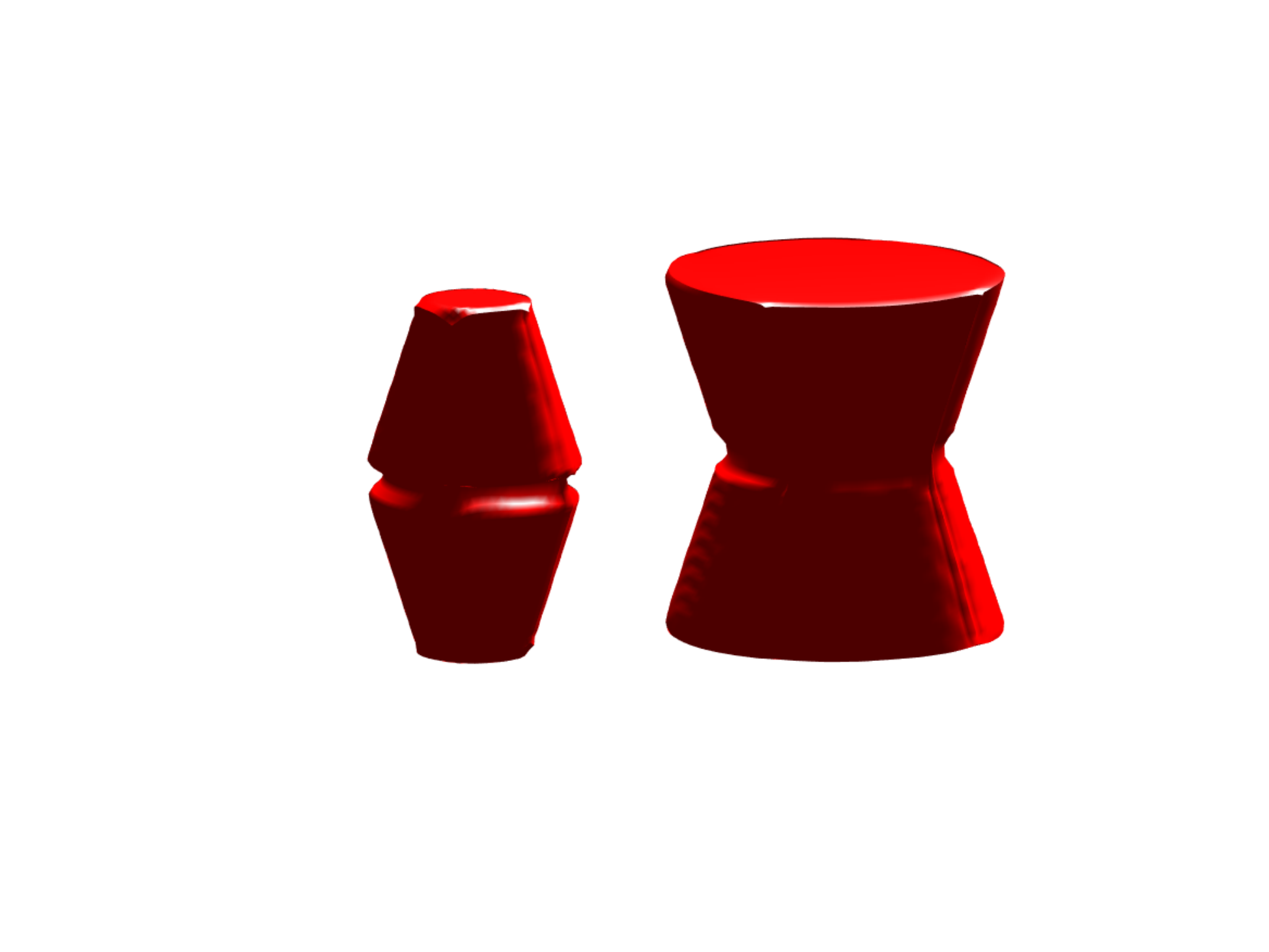}}
\subfigure[CV (92.55 sec)]{
\includegraphics[width=1.9in,height=1.5in]{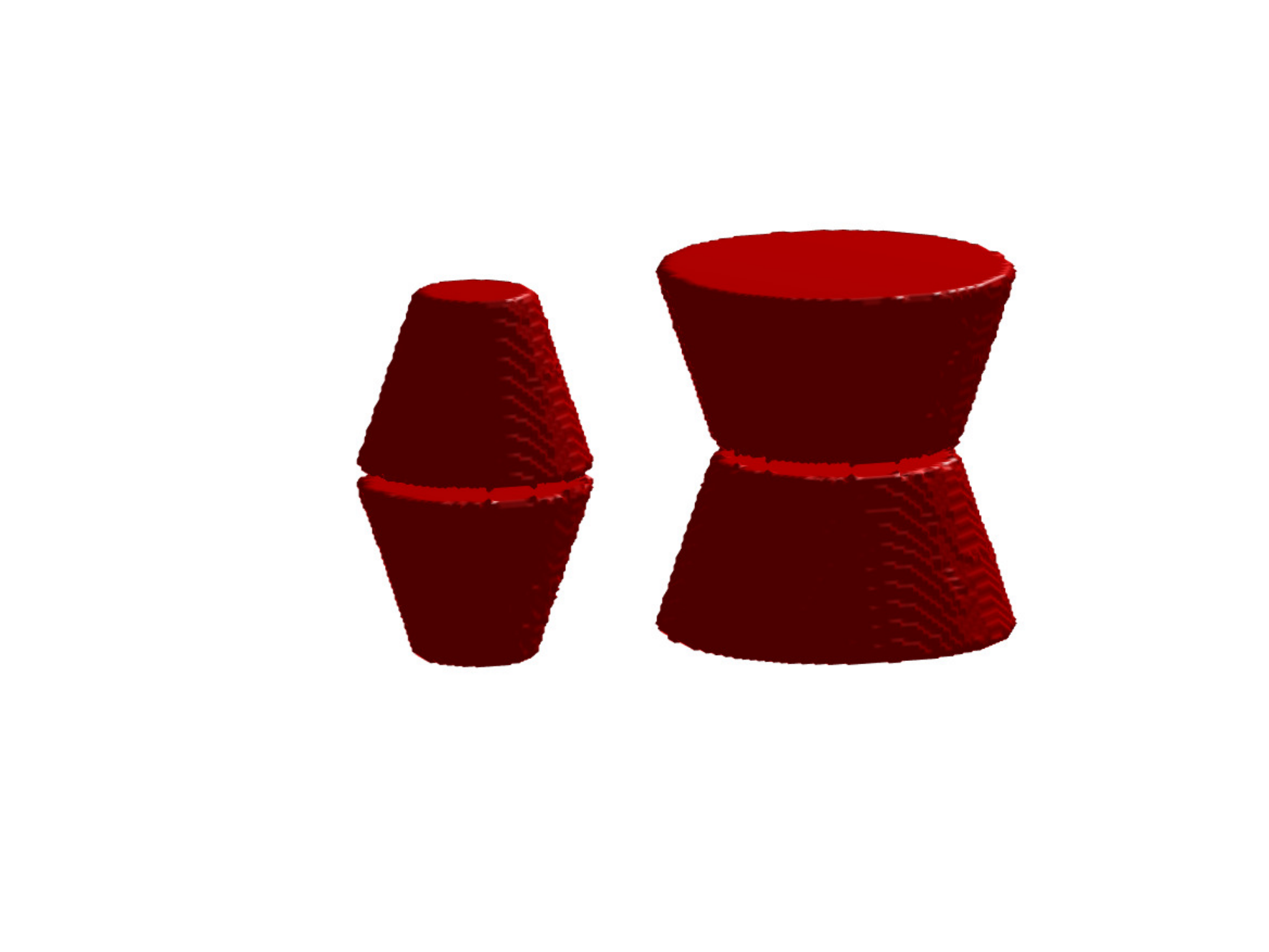}}
\subfigure[\cite{zhang2015fast} (129.80 sec)]{
\includegraphics[width=1.9in,height=1.5in]{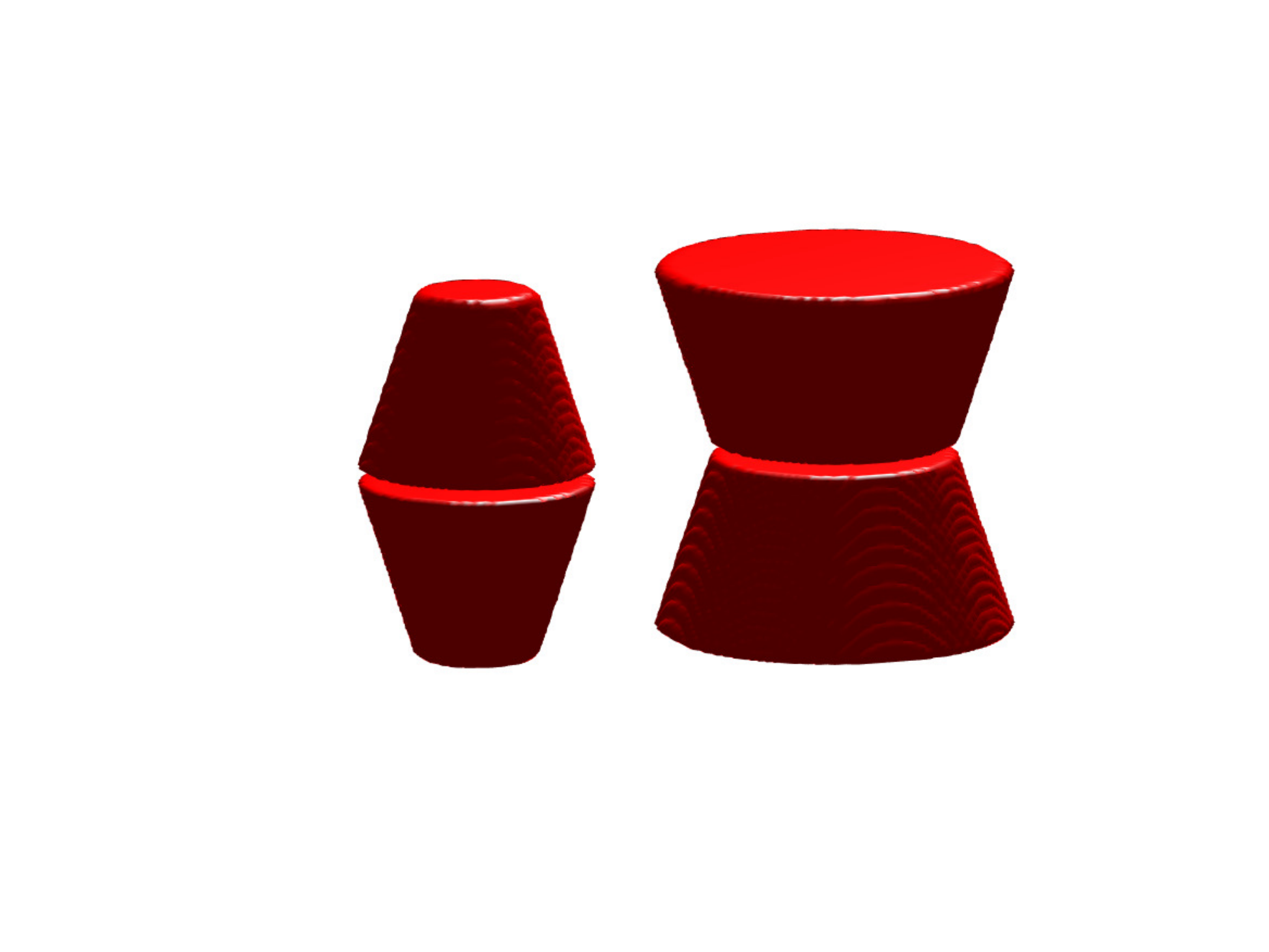}}\\
\subfigure[PM]{
\includegraphics[width=1.9in,height=1.5in]{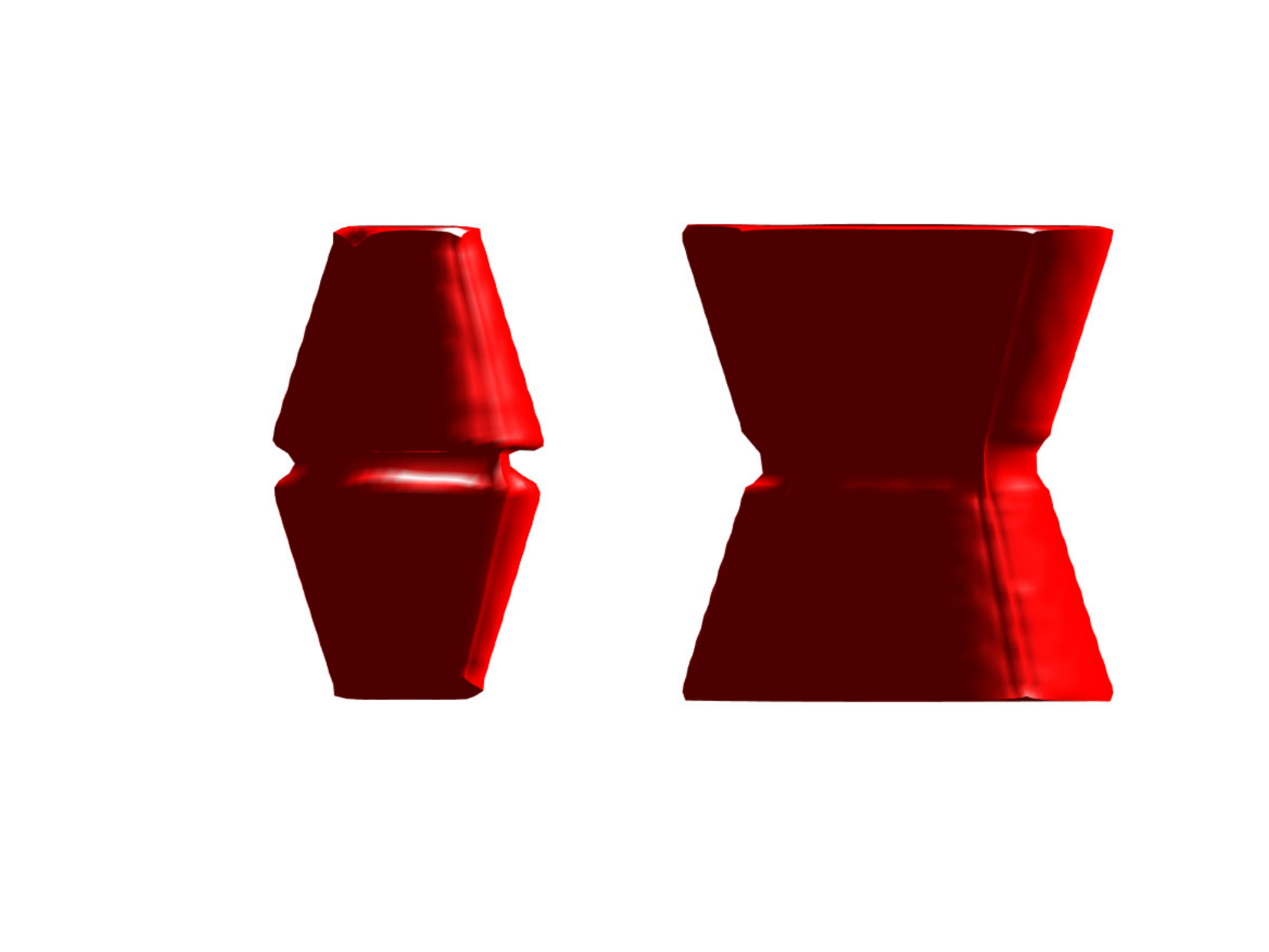}}
\subfigure[CV]{
\includegraphics[width=1.9in,height=1.5in]{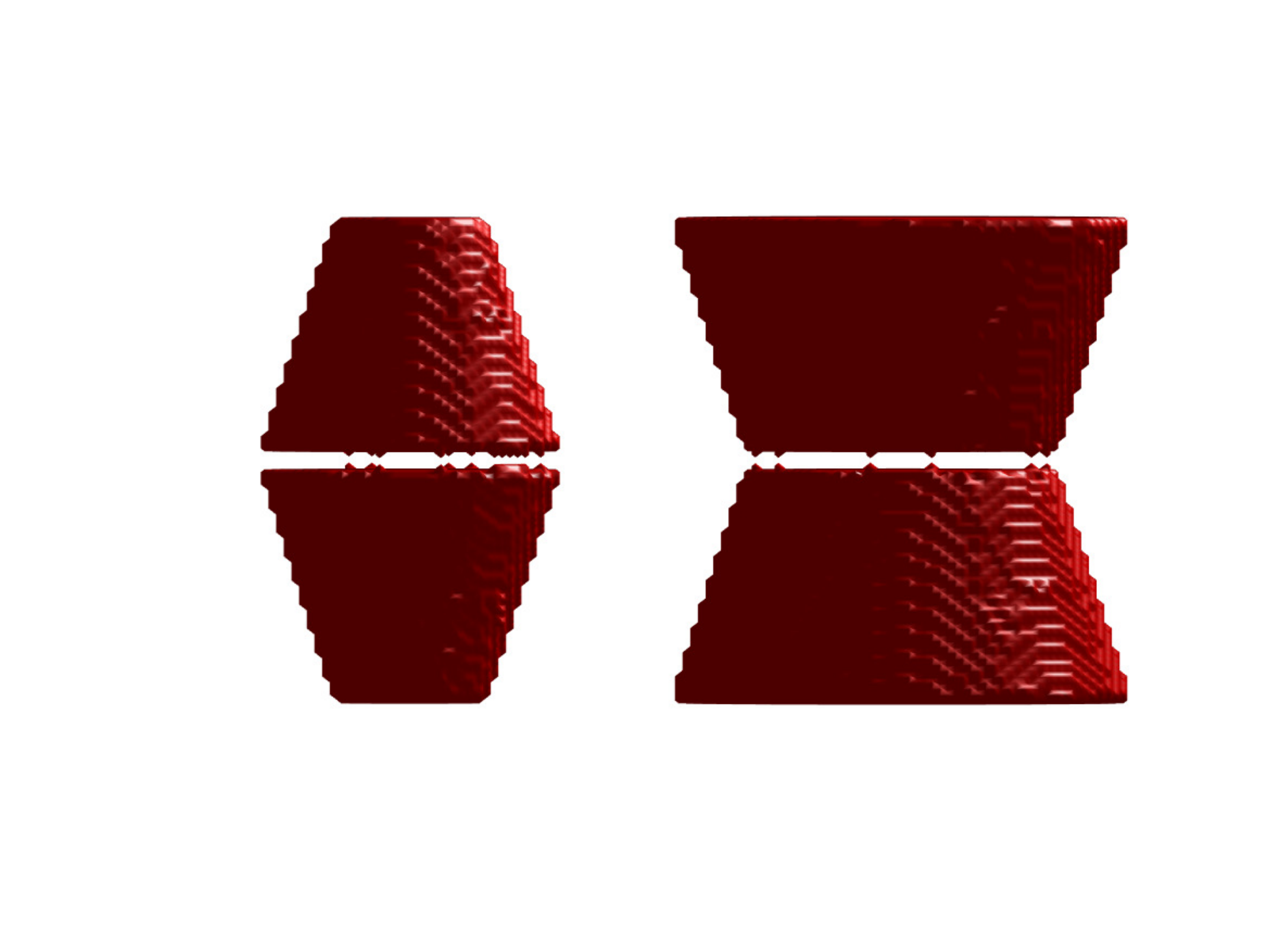}}
\subfigure[\cite{zhang2015fast}]{
\includegraphics[width=1.9in,height=1.5in]{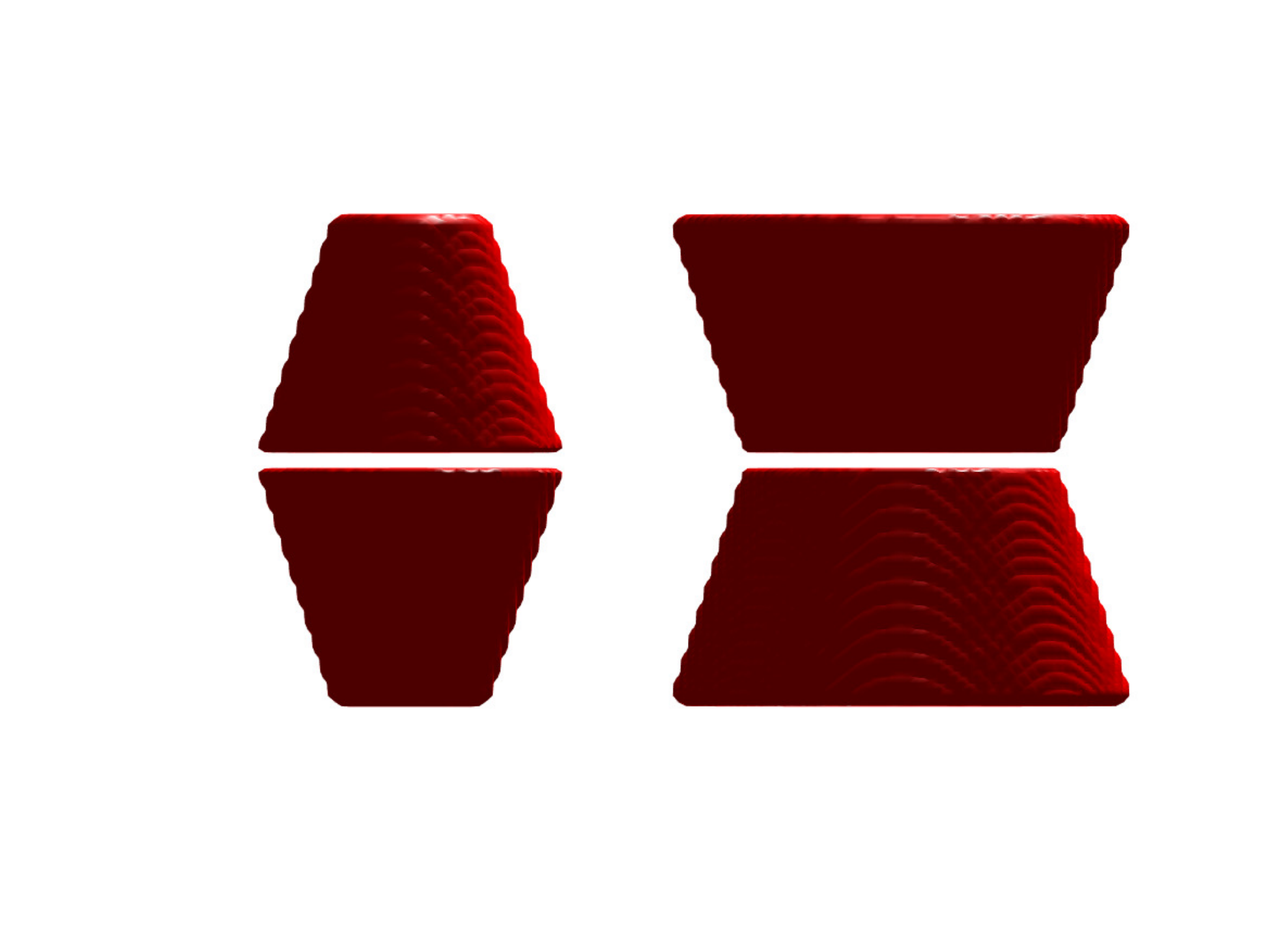}}
\caption{Here, we show the target images in the first row for the second case of the first 3D example. The second and third rows show the segmentation results by the proposed model \eqref{ProposedModel}, the Chan-Vese model and the selective model \cite{zhang2015fast} from different angles, respectively.}\label{Synthetic_Image_2}
\end{figure}

\bigskip

\noindent {\bf Example 5:} In this example, we test our proposed model on a 3D lung CT scans, which are slices of a 3D lung as shown in Figure \ref{3DTarget_1}. According to the topological structure of a human lung, we prescribe a simple topological prior as shown in Figure \ref{3DResult1} (b). The two cuboids give the prior of the lungs. For the parameters of our proposed model \eqref{ProposedModel}, we set $\alpha_{l}=10$, $\alpha_{s}=1$ and $\alpha_{v}=1$. Again, we compare our method with the Chan-Vese segmentation model and a selective model \cite{zhang2015fast} using the default parameter. The segmentation result obtained by our proposed model is shown in (c-e). Our method can produce a topology-preserving segmentation result. With the help of the hyperelastic regularizer, the smoothness of the segmentation result obtained by the proposed model can also be guaranteed. The segmentation results obtained by the Chan-Vese model and the selective model \cite{zhang2015fast} are shown in (f-k). Here, as a global segmentation method, except for the lung, the Chan-Vese model also segments many other parts. To see the inner part clearly, we manually modify the results by the Chan-Vese model to remove the outer outliner, which are shown in (l-n). Obviously, the Chan-Vese model cannot give a topology-preserving result even with the manual modification. For the selective model, from (i-k), we first see that it can not preserve the topological structure and second, the obtained results just segment the outline of the lung and are not accurate enough . For the reason of the latter phenomenon, it is possible that the prior is not good enough and not close to the target objects. Nevertheless, our method can pick up the corresponding segmentation result according to this prior. It again demonstrates the advantage of introducing topological prior to enhance the accuracy of the segmentation result.

%\begin{figure}
%\centering
%\subfigure{
%\includegraphics[width=6in,height=6in]{fig1//slice_lung_1.eps}}
%\caption{The slices of the first 3D target %image.}\label{3DTarget_1}
%\end{figure}

\begin{figure}[htbp]
\centering
\subfigure[12nd Slice]{
\includegraphics[width=0.8in,height=0.8in]{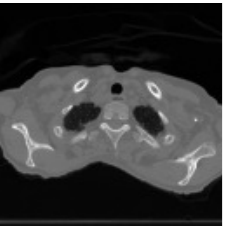}}
\subfigure[22nd Slice]{
\includegraphics[width=0.8in,height=0.8in]{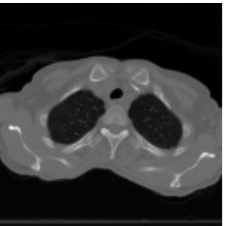}}
\subfigure[32nd Slice]{
\includegraphics[width=0.8in,height=0.8in]{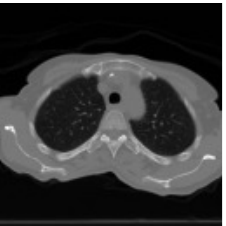}}
\subfigure[42nd Slice]{
\includegraphics[width=0.8in,height=0.8in]{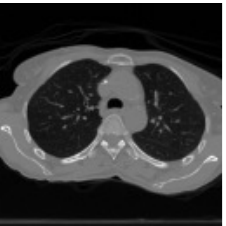}}
\subfigure[52nd Slice]{
\includegraphics[width=0.8in,height=0.8in]{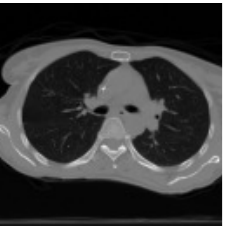}}\\
\subfigure[62nd Slice]{
\includegraphics[width=0.8in,height=0.8in]{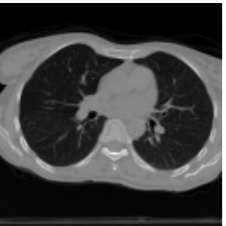}}
\subfigure[72nd Slice]{
\includegraphics[width=0.8in,height=0.8in]{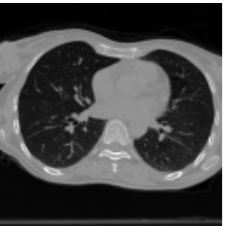}}
\subfigure[82nd Slice]{
\includegraphics[width=0.8in,height=0.8in]{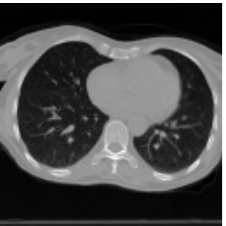}}
\subfigure[92nd Slice]{
\includegraphics[width=0.8in,height=0.8in]{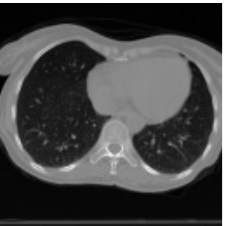}}
\subfigure[102nd Slice]{
\includegraphics[width=0.8in,height=0.8in]{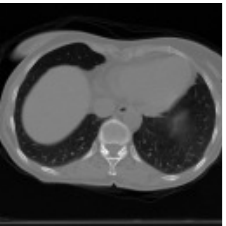}}
\caption{Some slices of the second 3D example.}\label{3DTarget_1}
\end{figure}

\begin{figure}[htbp]
\centering
\subfigure[Target Image]{
\includegraphics[width=1.9in,height=1.5in]{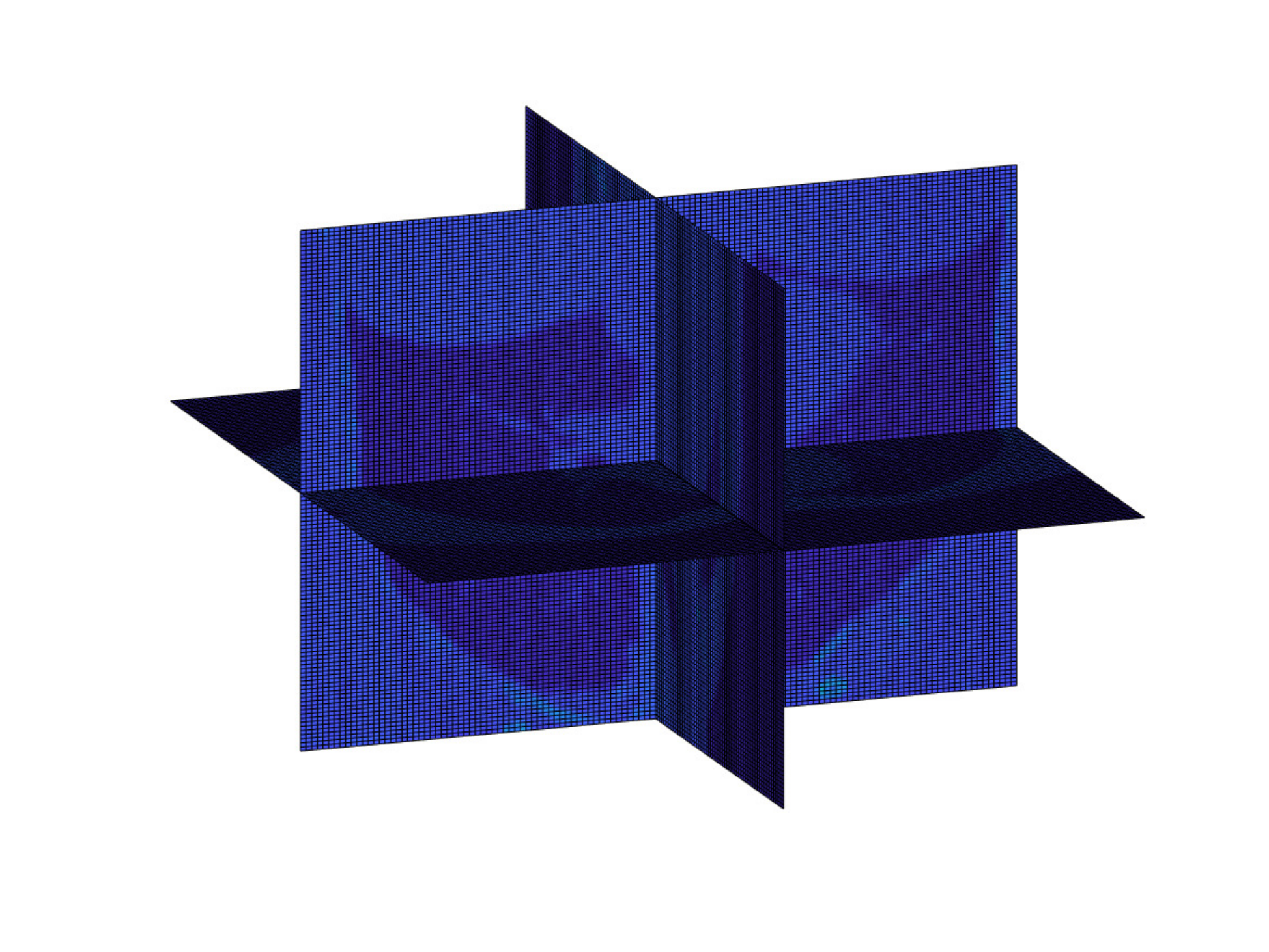}}
\subfigure[Prior Image]{
\includegraphics[width=1.9in,height=1.5in]{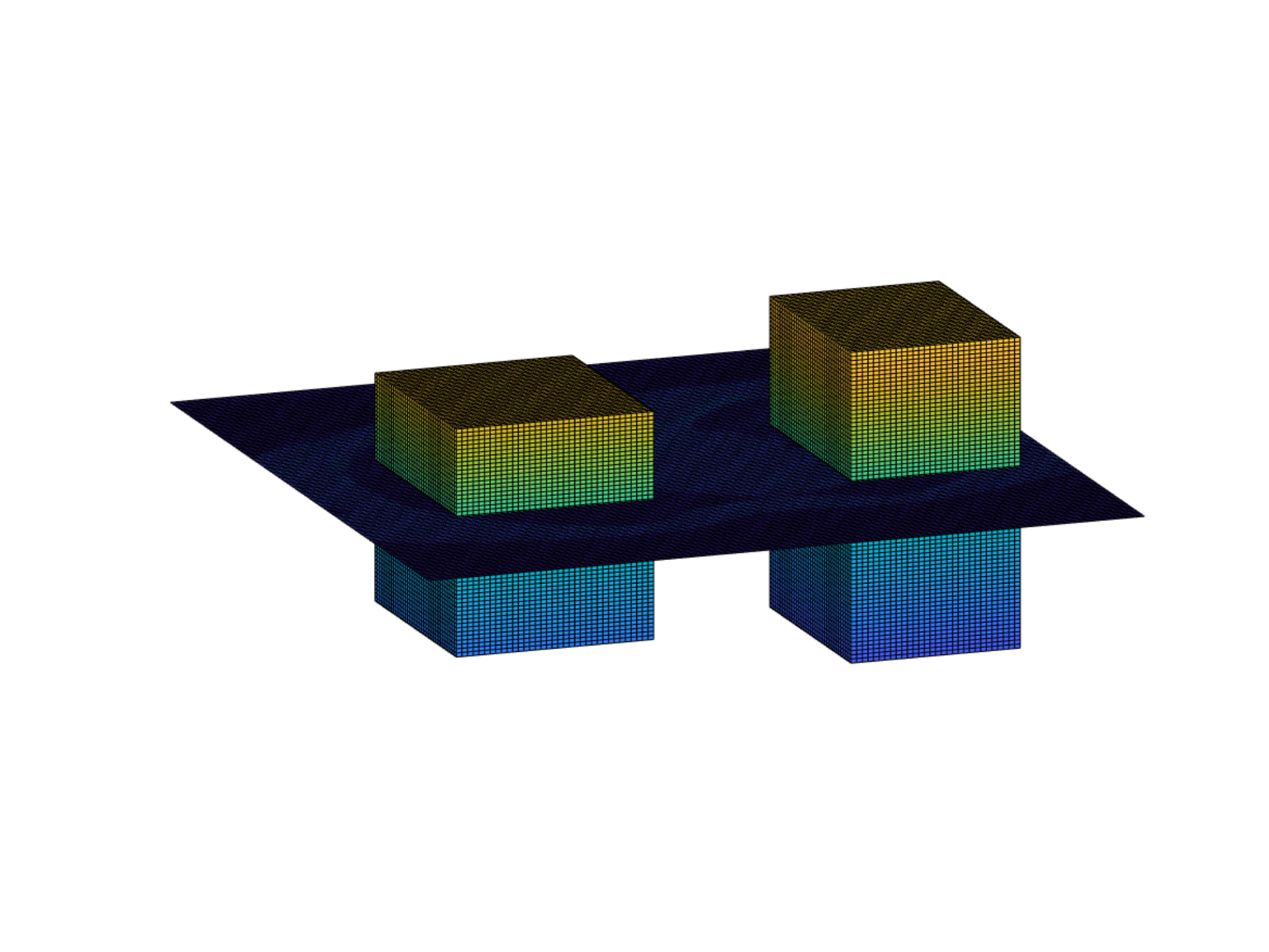}}\\
\subfigure[PM (861.23 sec)]{
\includegraphics[width=1.9in,height=1.5in]{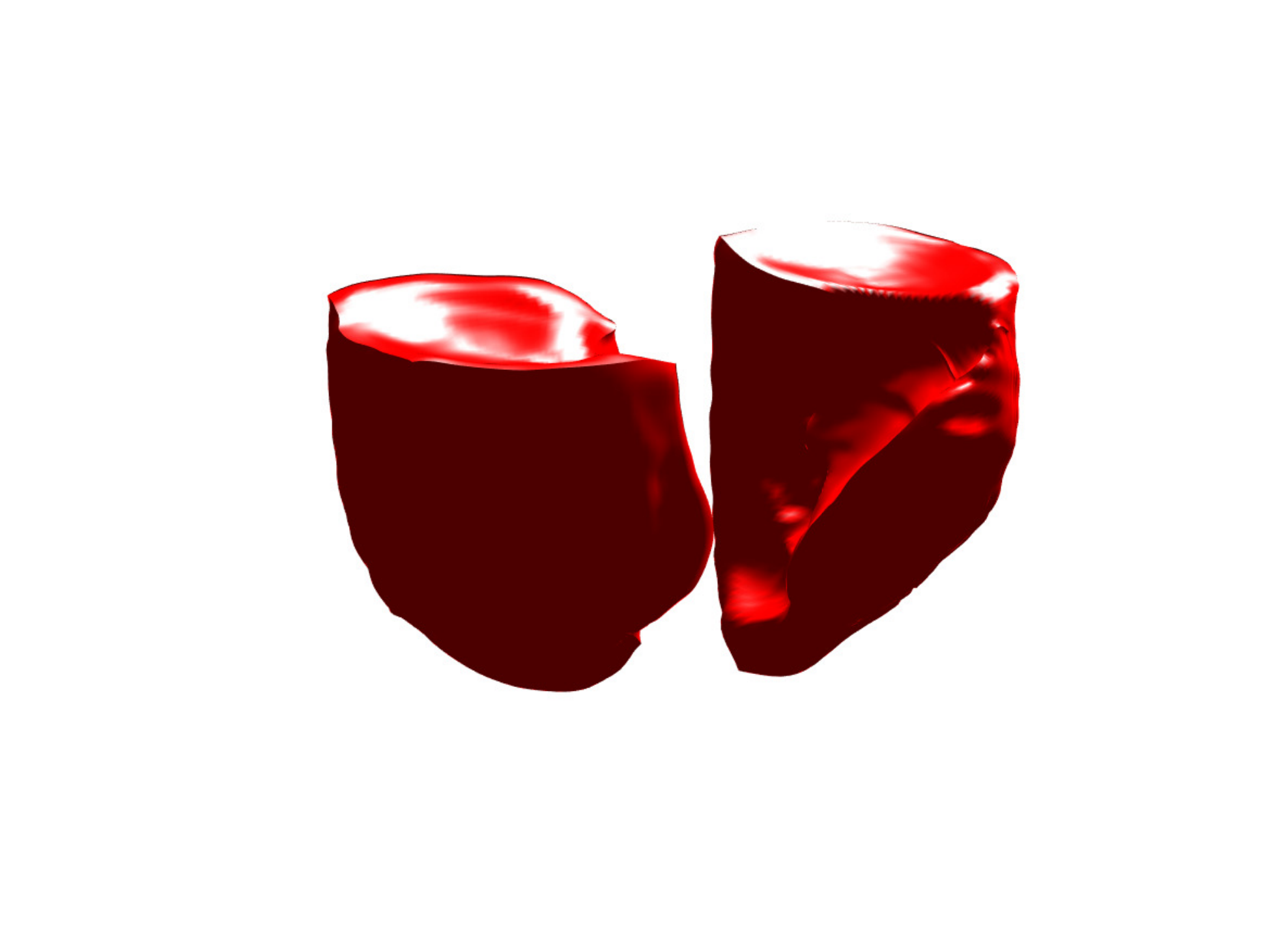}}
\subfigure[PM]{
\includegraphics[width=1.9in,height=1.5in]{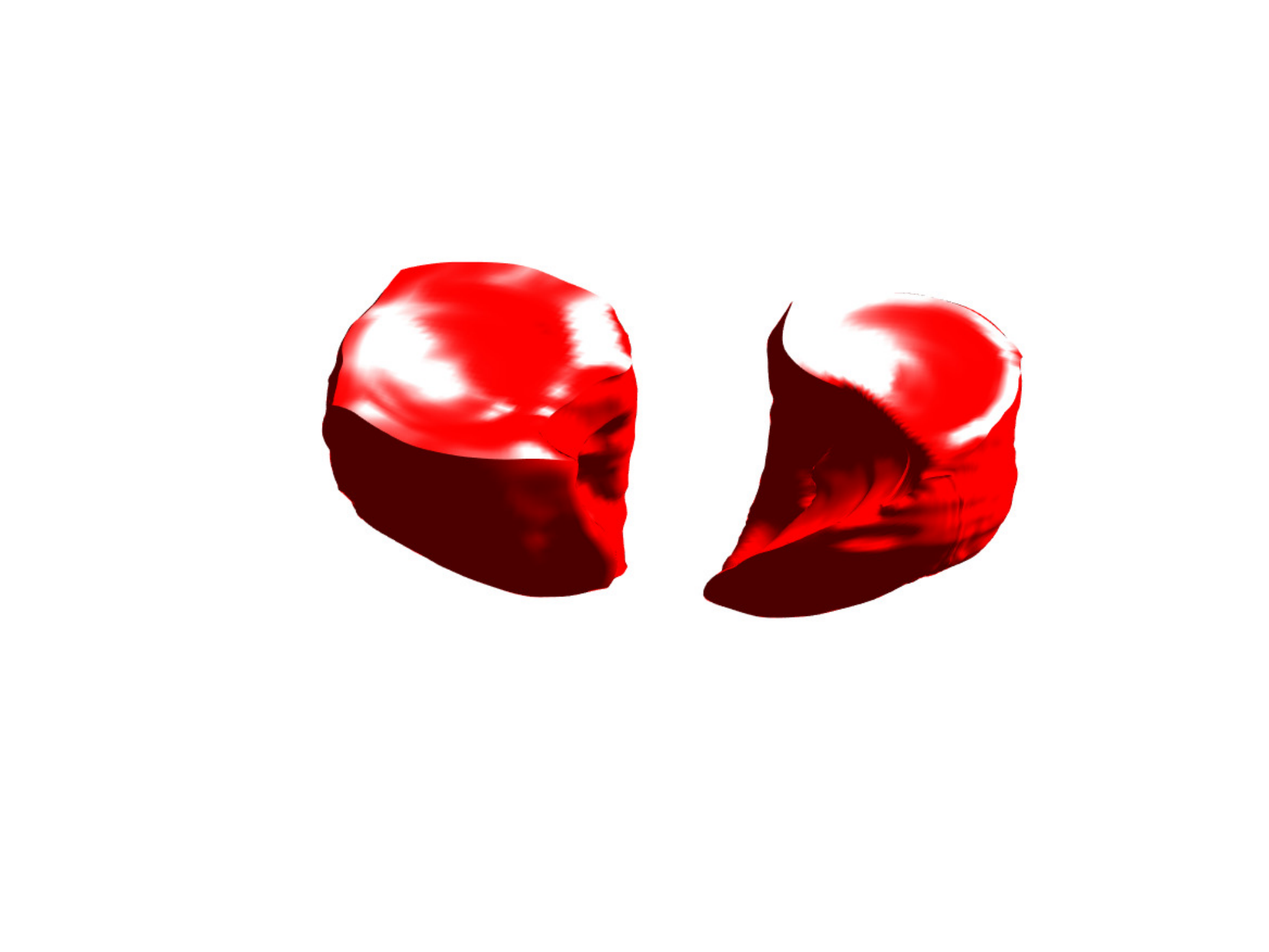}}
\subfigure[PM]{
\includegraphics[width=1.9in,height=1.5in]{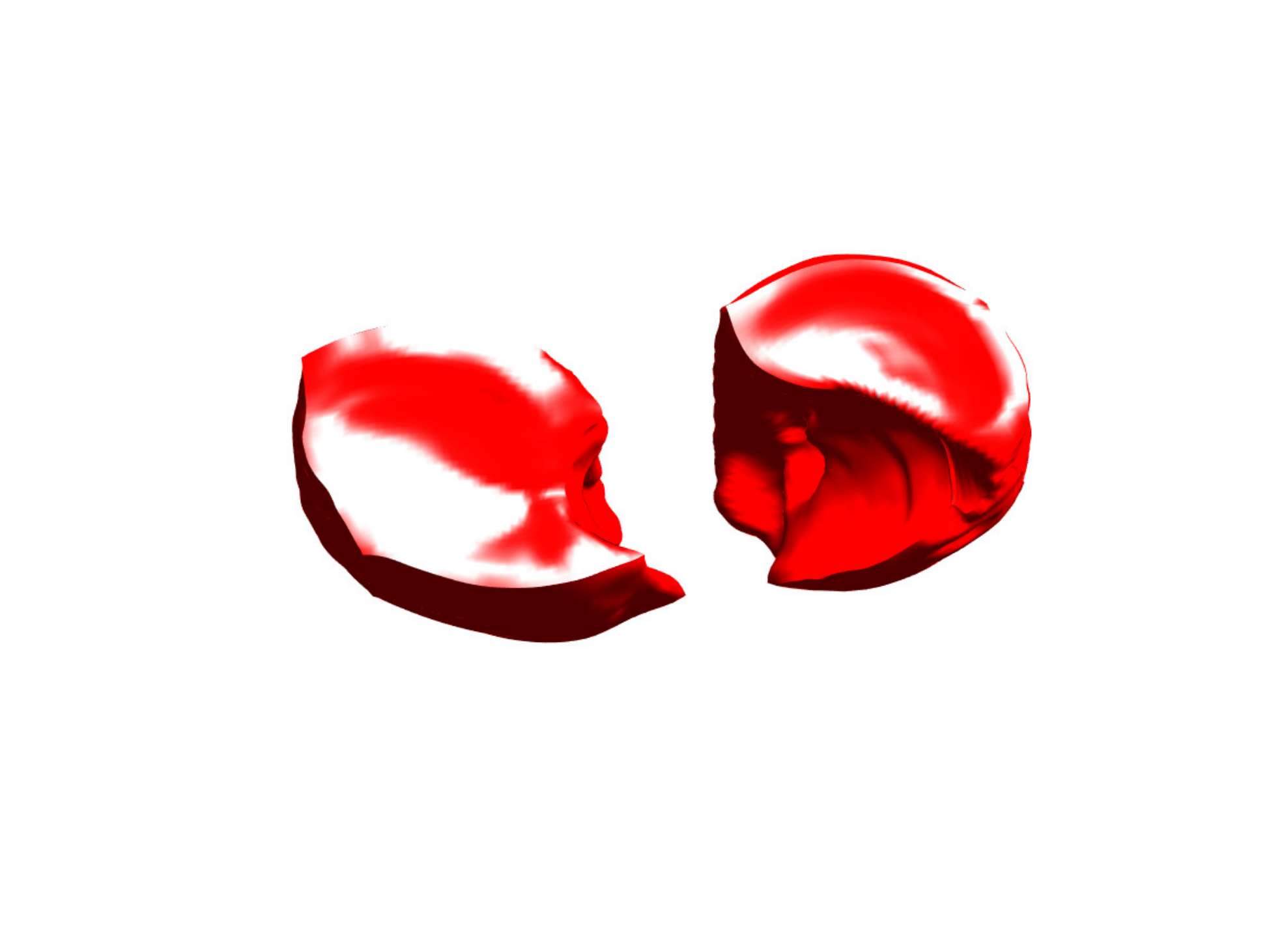}}\\
\subfigure[CV (253.09 sec)]{
\includegraphics[width=1.9in,height=1.5in]{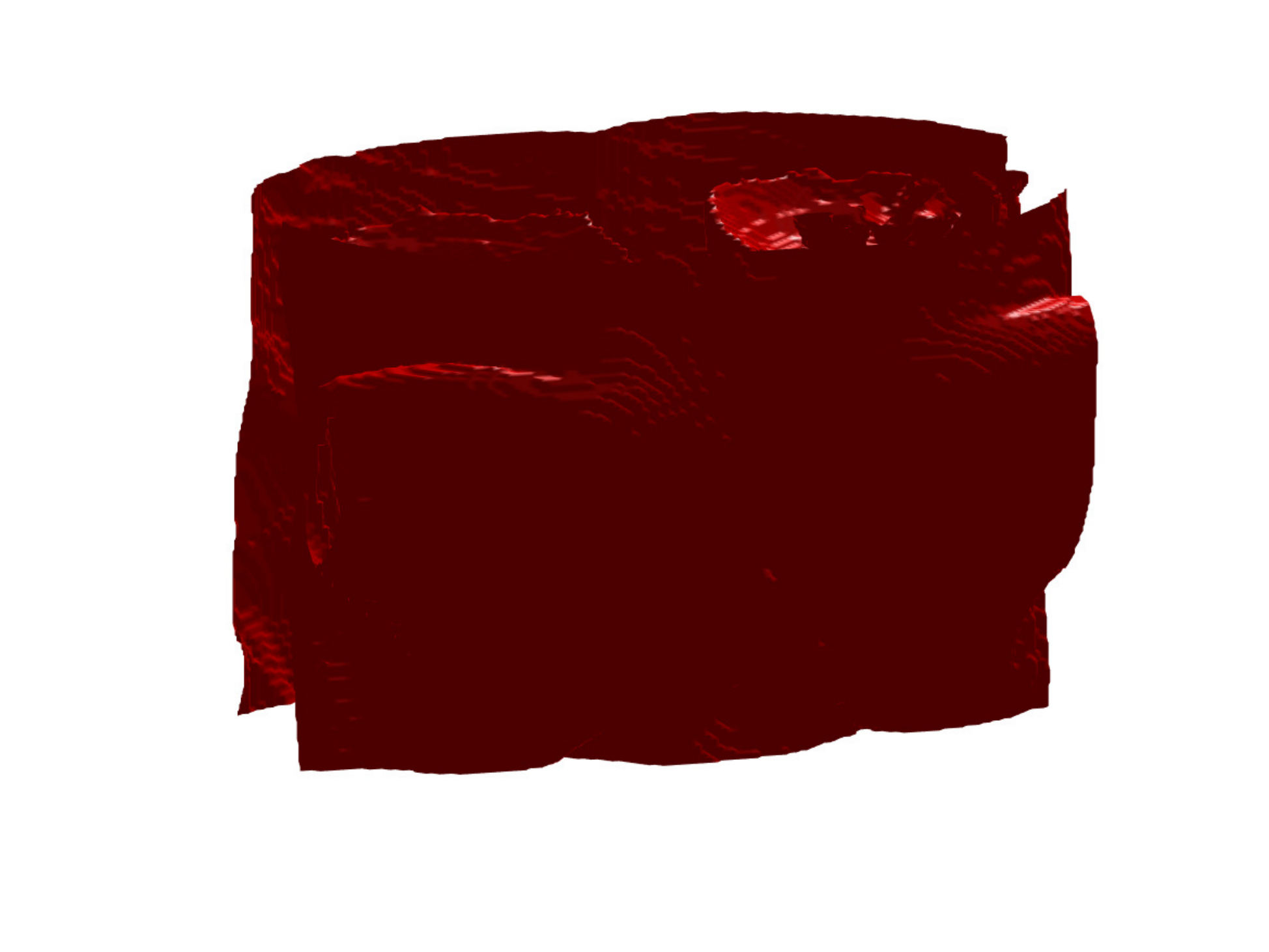}}
\subfigure[CV]{
\includegraphics[width=1.9in,height=1.5in]{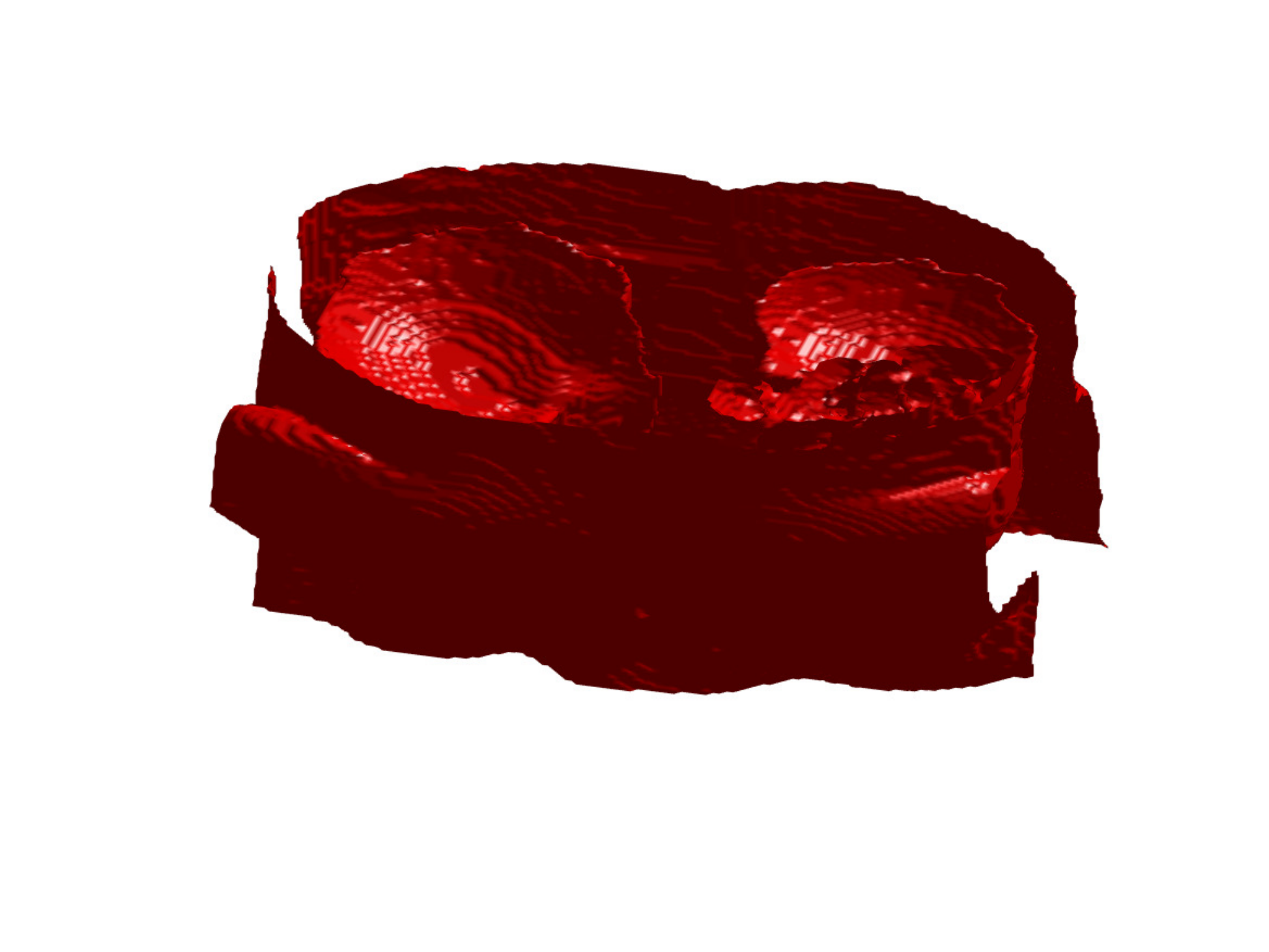}}
\subfigure[CV]{
\includegraphics[width=1.9in,height=1.5in]{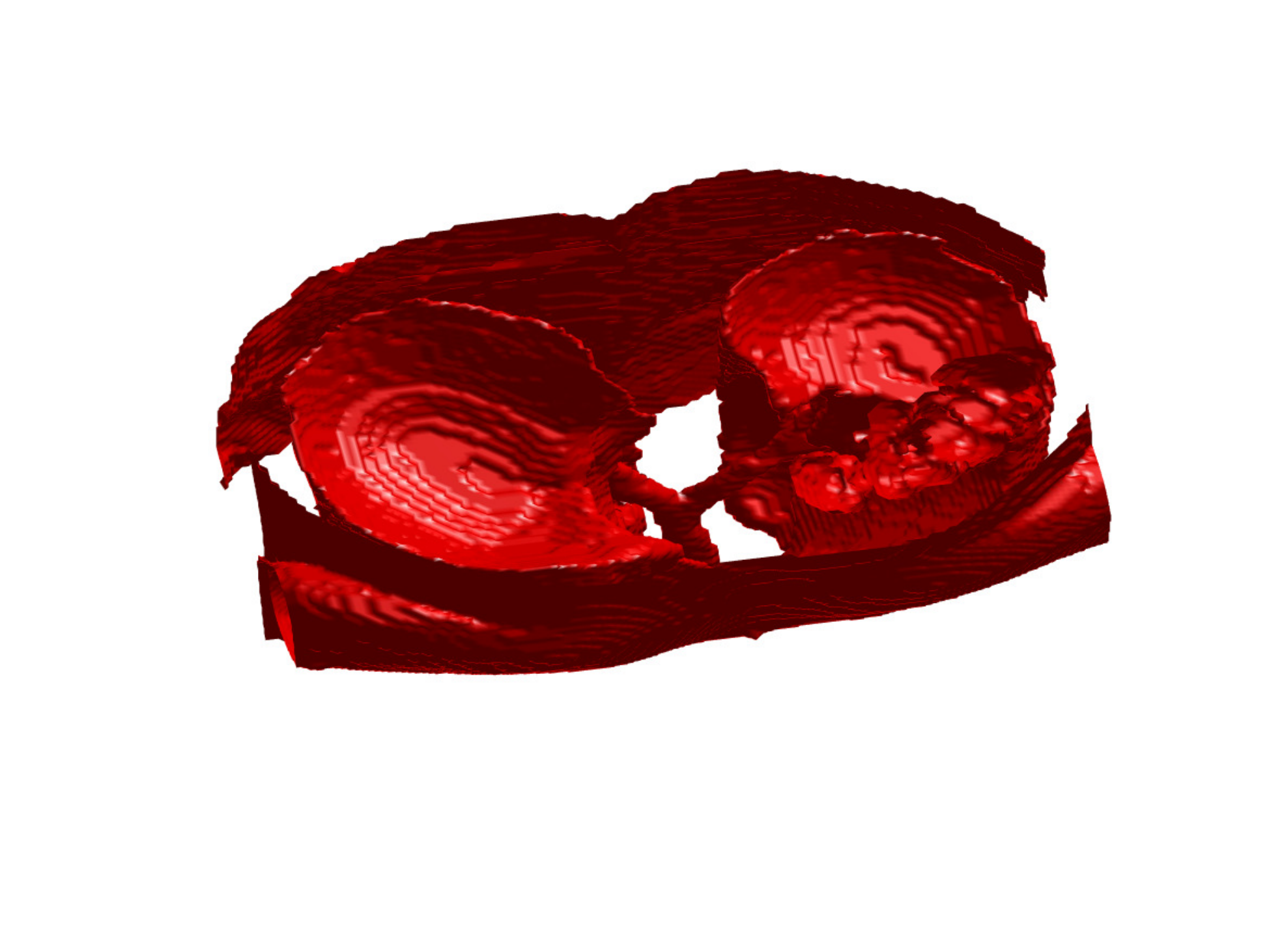}}\\
\subfigure[\cite{zhang2015fast} (682.60 sec)]{
\includegraphics[width=1.9in,height=1.5in]{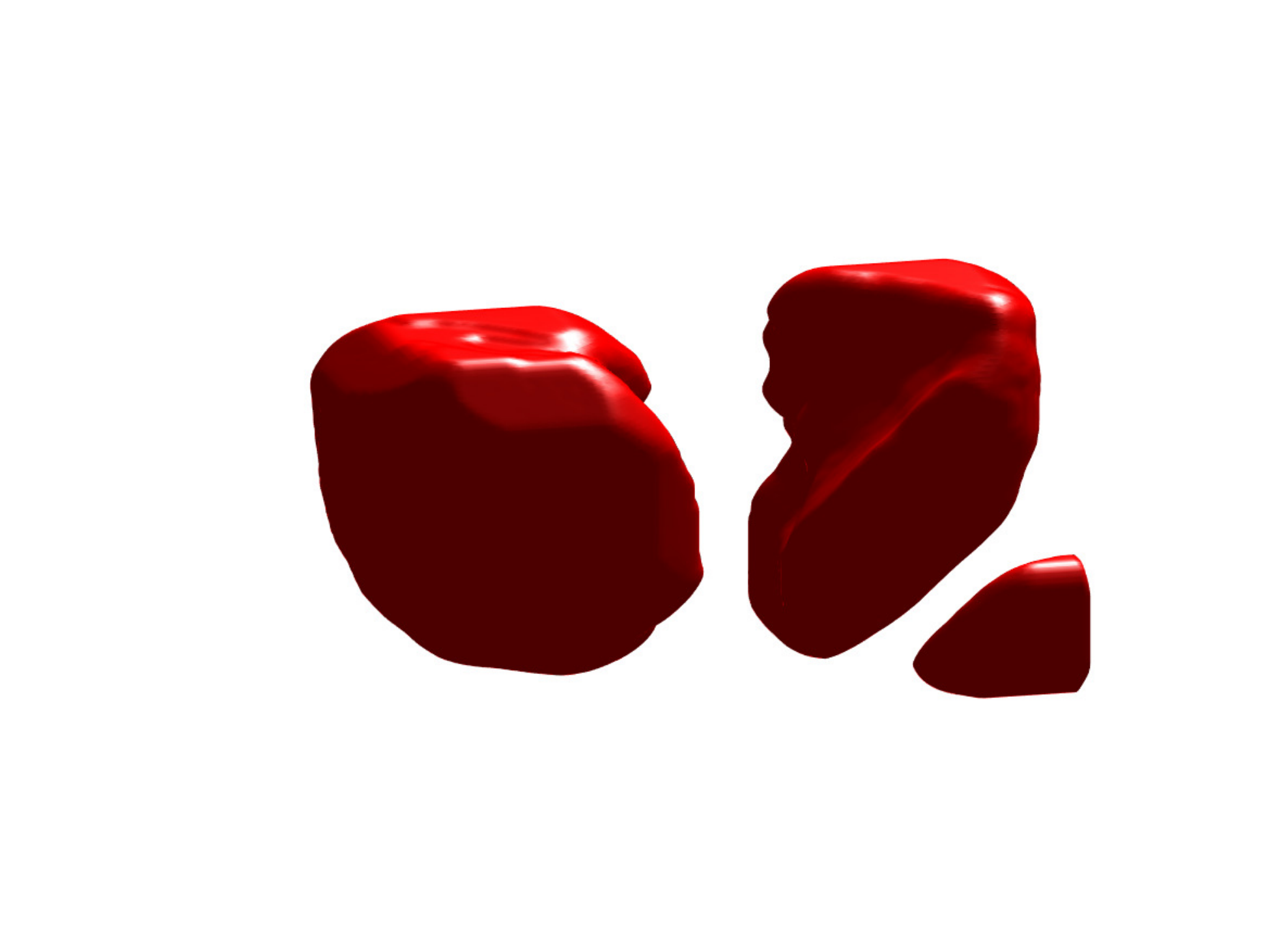}}
\subfigure[\cite{zhang2015fast}]{
\includegraphics[width=1.9in,height=1.5in]{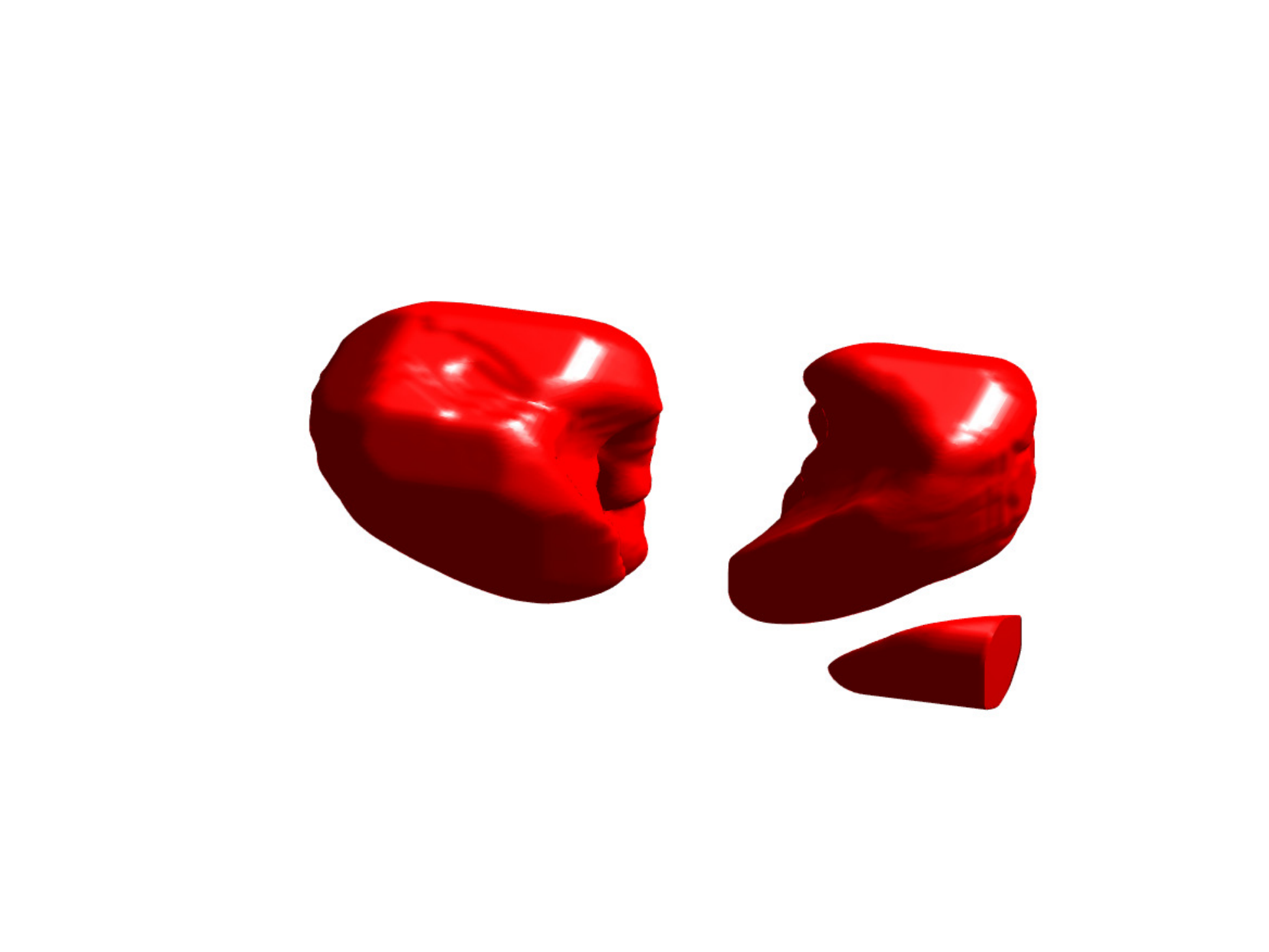}}
\subfigure[\cite{zhang2015fast}]{
\includegraphics[width=1.9in,height=1.5in]{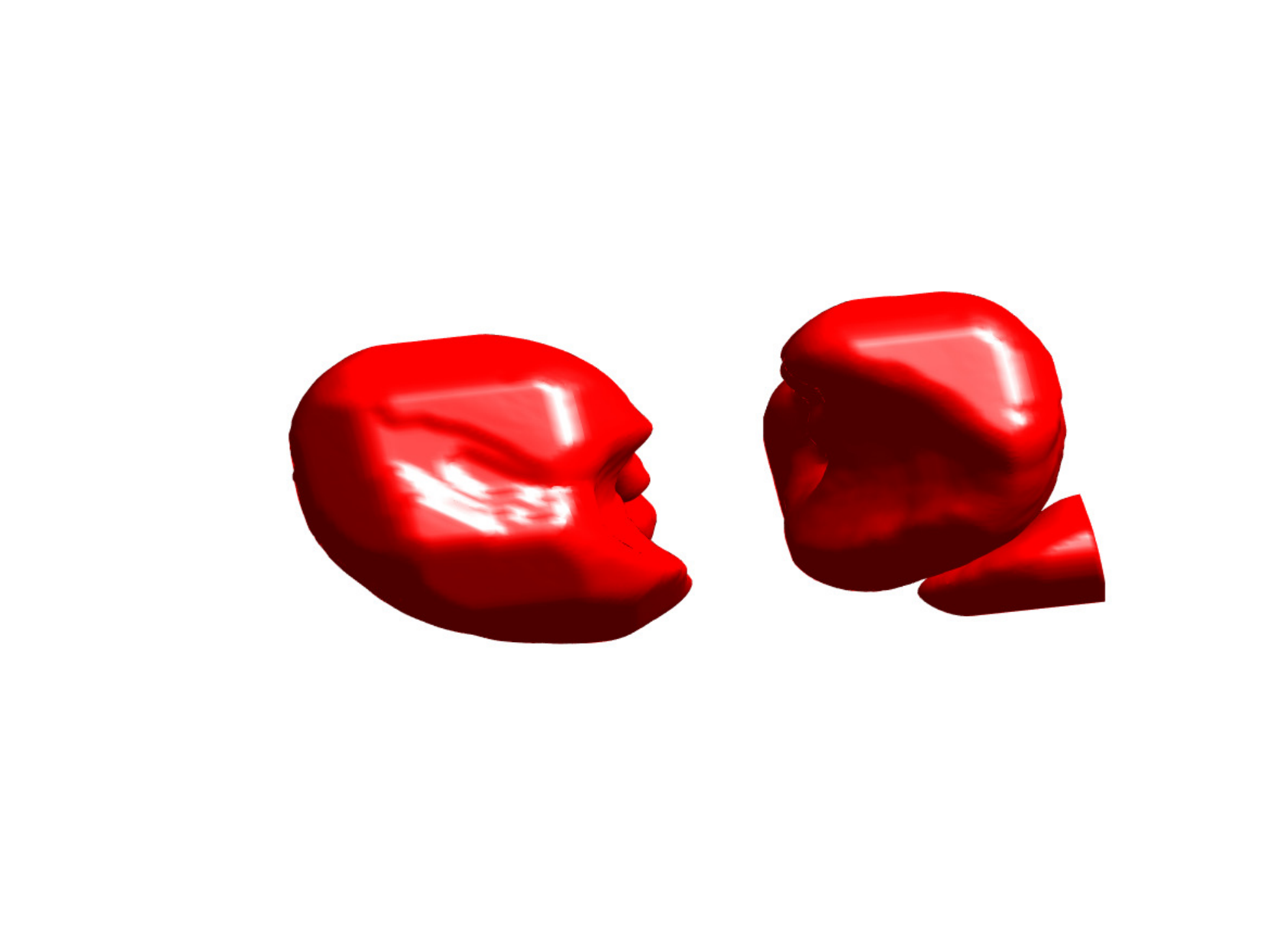}}\\
\subfigure[CV with modification]{
\includegraphics[width=1.9in,height=1.5in]{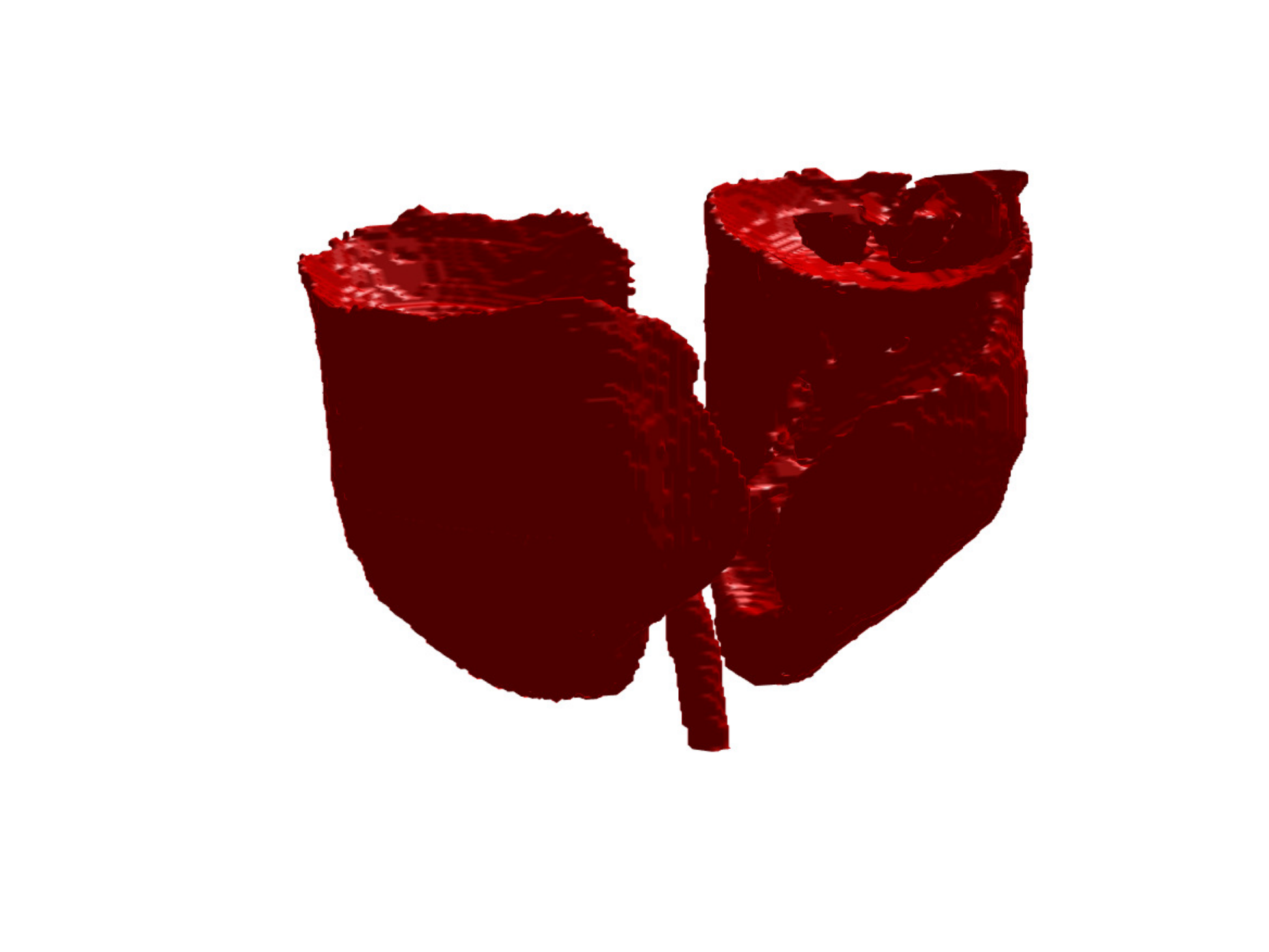}}
\subfigure[CV with modification]{
\includegraphics[width=1.9in,height=1.5in]{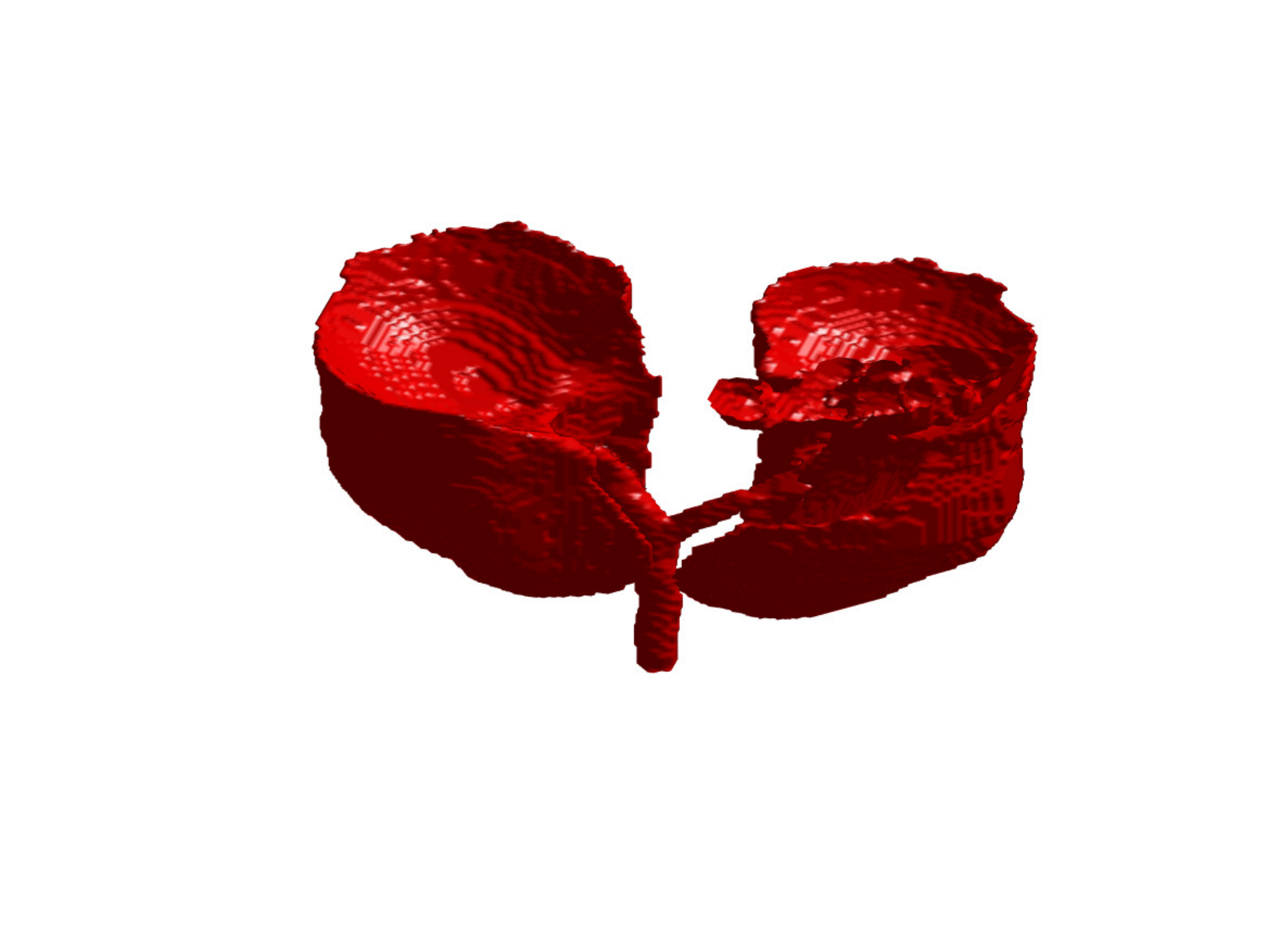}}
\subfigure[CV with modification]{
\includegraphics[width=1.9in,height=1.5in]{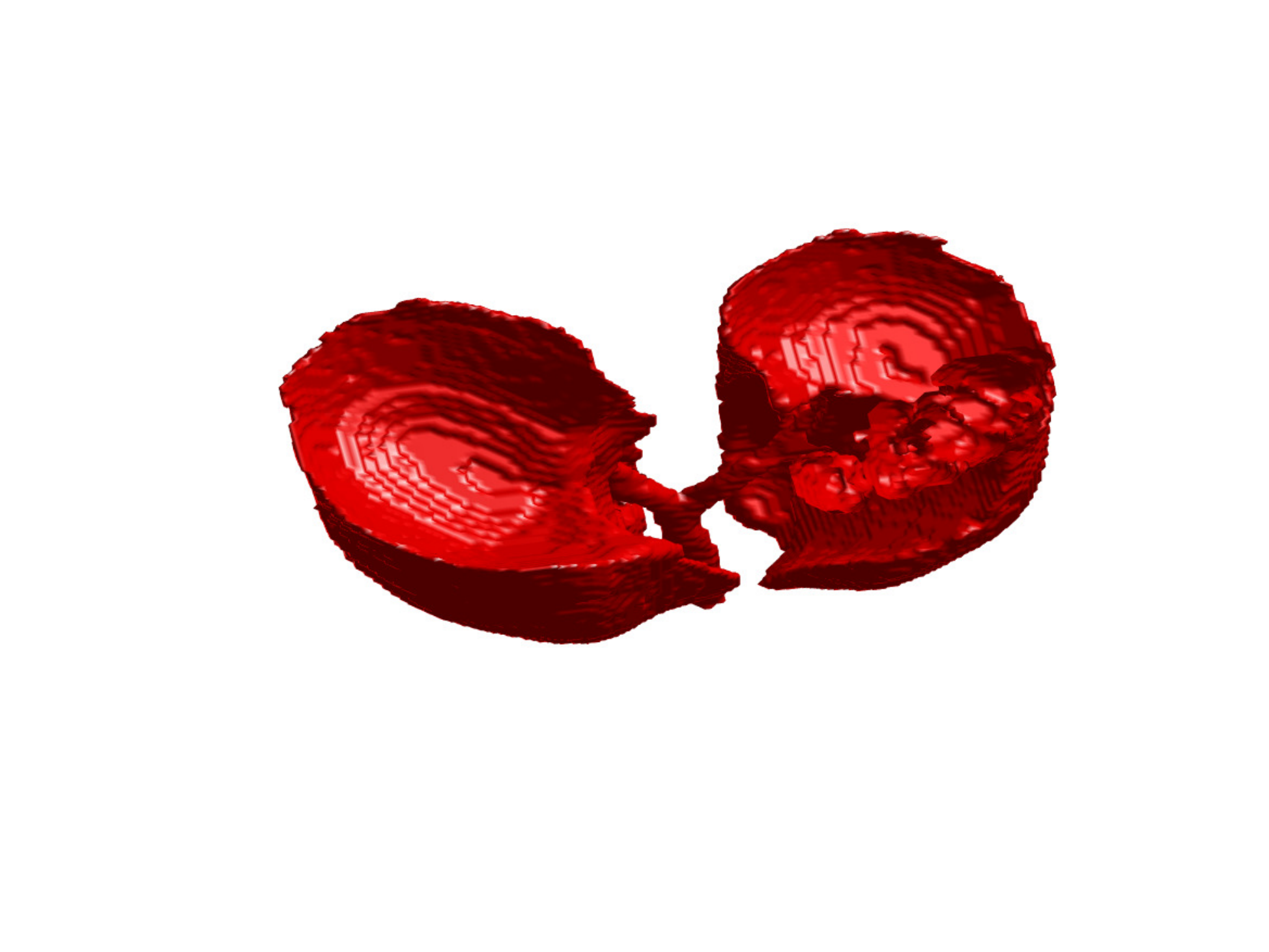}}
\caption{Here, we show the target image and prior image of the second 3D example in the first row. The second to fourth rows show the segmentation results by the proposed model \eqref{ProposedModel}, the Chan-Vese model and the selective model \cite{zhang2015fast} from different angles, respectively. The fifth row shows the segmentation results by the Chan-Vese model with modification to remove the outer part.}\label{3DResult1}
\end{figure}

To simulate the underexposure case, we rescale the 71st to 75th slices' intensity value of the image. From Figure \ref{3DTarget_A}, we can see that the 72nd slice is much darker than the others. For the parameters and prior reference, we follow the choice mentioned above. The segmentation results obtained by the proposed model, the Chan-Vese model and the selective model \cite{zhang2015fast} are listed in Figure \ref{3DResultA}. Once again, we observe that our proposed model \eqref{ProposedModel} successfully gives a topology-preserving segmentation result but the Chan-Vese model does not. In particular, due to the inconsistency of the intensity values of the images, the Chan-Vese model separates the lungs into two parts, which is unreasonable in the real application. For  the selective model, although it dose not separate the target into two parts, the resulting result is not satisfied. Furthermore, from Figure \ref{3DResultA} and \ref{3DResult1}, we can see that the segmentation results produced by our proposed model in this example are similar, which shows that our proposed model is robust with respect to data's perturbations.

%\begin{figure}
%\centering
%\subfigure{
%\includegraphics[width=6in,height=6in]{fig1//3D_targetimage.eps}}
%\caption{The slices of the second case of the second 3D target %image.}\label{3DTarget_A}
%\end{figure}

\begin{figure}[htbp]
\centering
\subfigure[12nd Slice]{
\includegraphics[width=0.8in,height=0.8in]{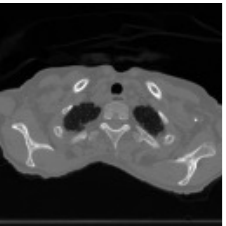}}
\subfigure[22nd Slice]{
\includegraphics[width=0.8in,height=0.8in]{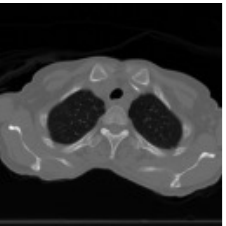}}
\subfigure[32nd Slice]{
\includegraphics[width=0.8in,height=0.8in]{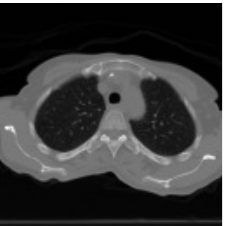}}
\subfigure[42nd Slice]{
\includegraphics[width=0.8in,height=0.8in]{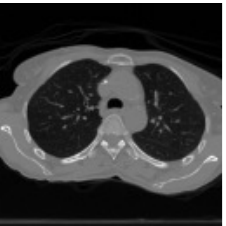}}
\subfigure[52nd Slice]{
\includegraphics[width=0.8in,height=0.8in]{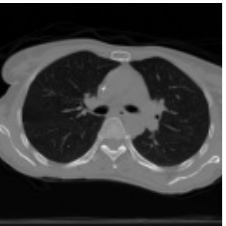}}\\
\subfigure[62nd Slice]{
\includegraphics[width=0.8in,height=0.8in]{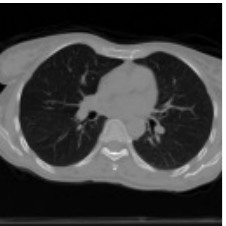}}
\subfigure[72nd Slice]{
\includegraphics[width=0.8in,height=0.8in]{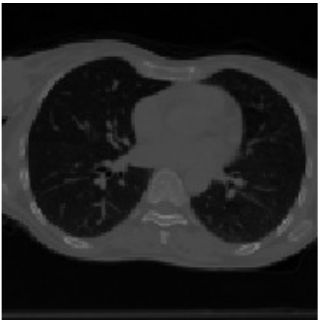}}
\subfigure[82nd Slice]{
\includegraphics[width=0.8in,height=0.8in]{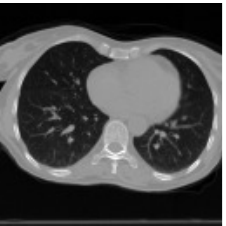}}
\subfigure[92nd Slice]{
\includegraphics[width=0.8in,height=0.8in]{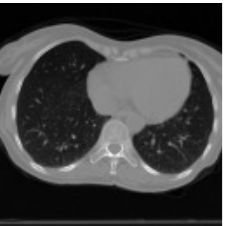}}
\subfigure[102nd Slice]{
\includegraphics[width=0.8in,height=0.8in]{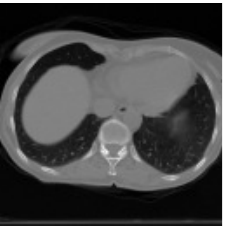}}
\caption{Some slices of the second case of the second 3D example.}\label{3DTarget_A}
\end{figure}

\begin{figure}[htbp]
\centering
\subfigure[Target Image]{
\includegraphics[width=1.9in,height=1.5in]{fig1//view_lung_1-eps-converted-to.pdf}}
\subfigure[Prior Image]{
\includegraphics[width=1.9in,height=1.5in]{fig1//prior_lung_1-eps-converted-to.pdf}}\\
\subfigure[PM (1540.5 sec)]{
\includegraphics[width=1.9in,height=1.5in]{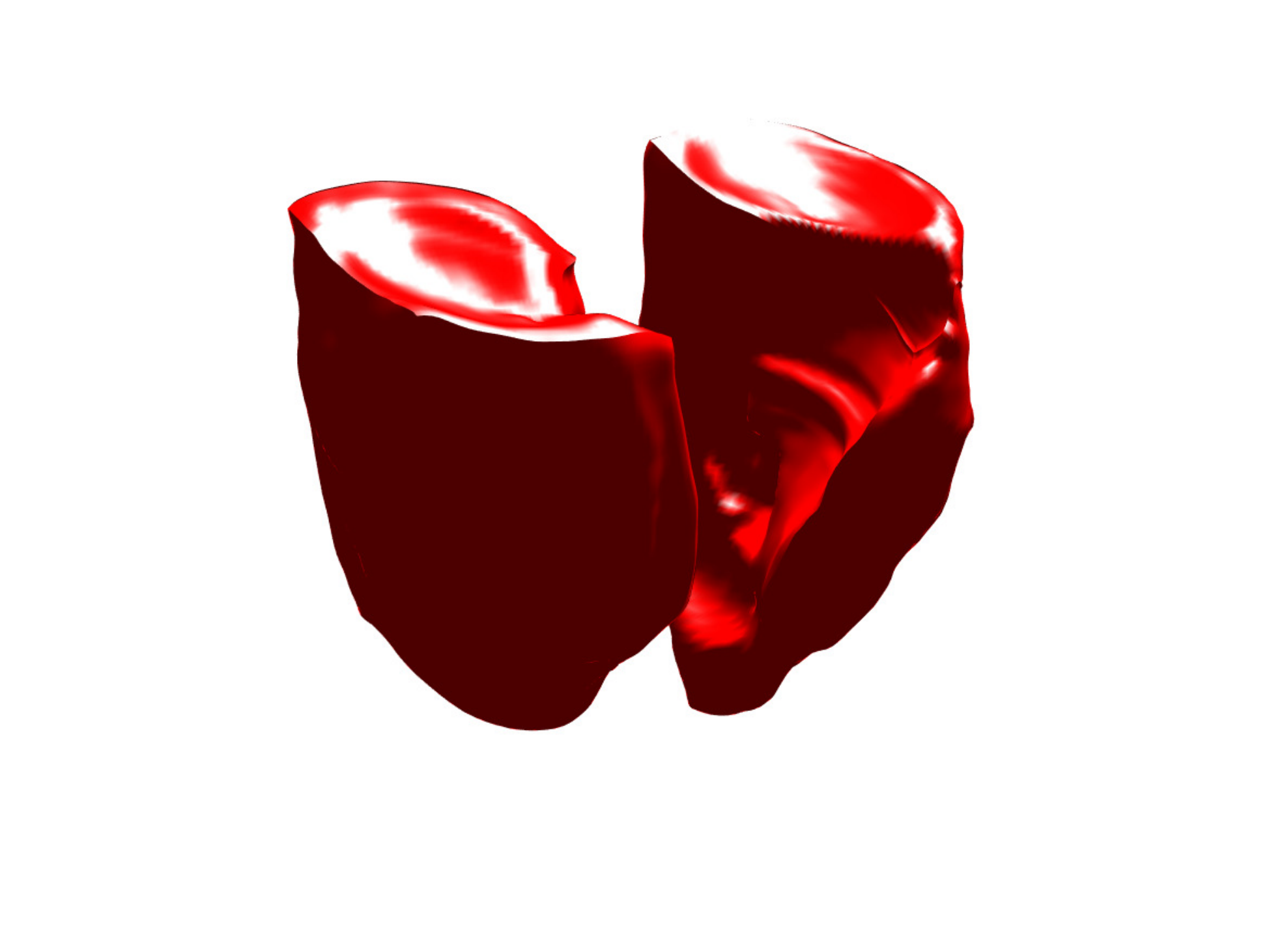}}
\subfigure[PM]{
\includegraphics[width=1.9in,height=1.5in]{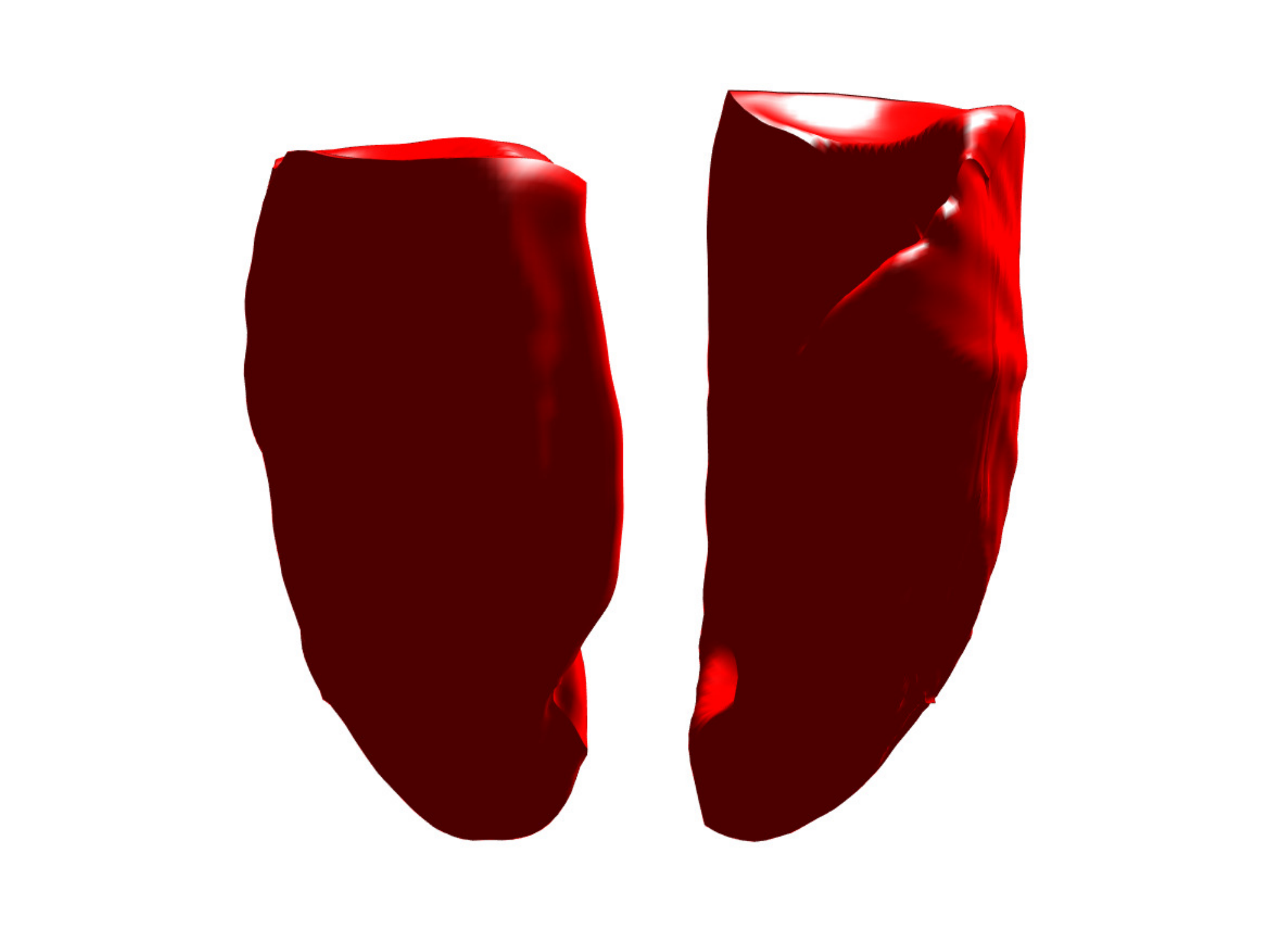}}
\subfigure[PM]{
\includegraphics[width=1.9in,height=1.5in]{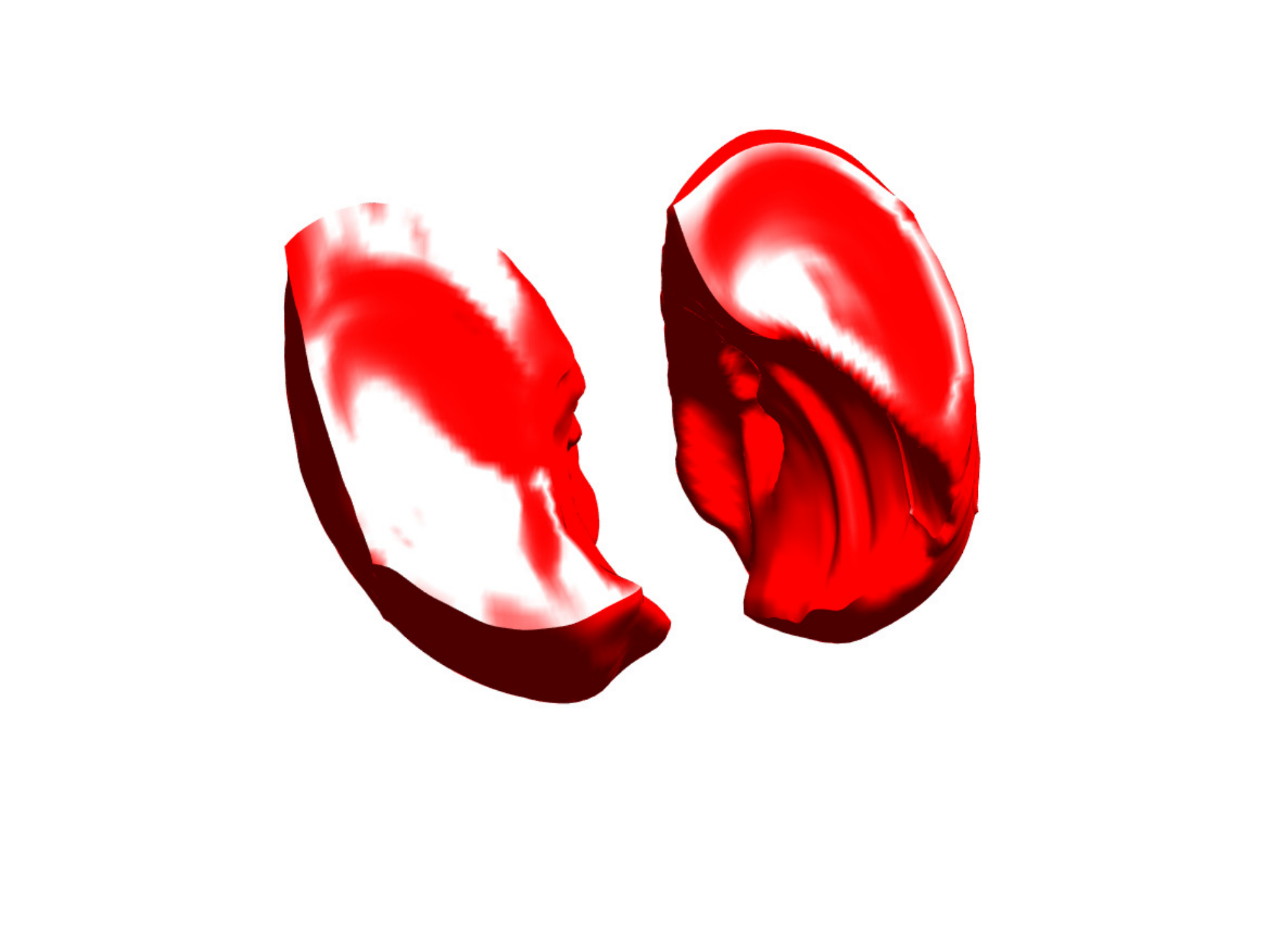}}\\
\subfigure[CV (245.55 sec)]{
\includegraphics[width=1.9in,height=1.5in]{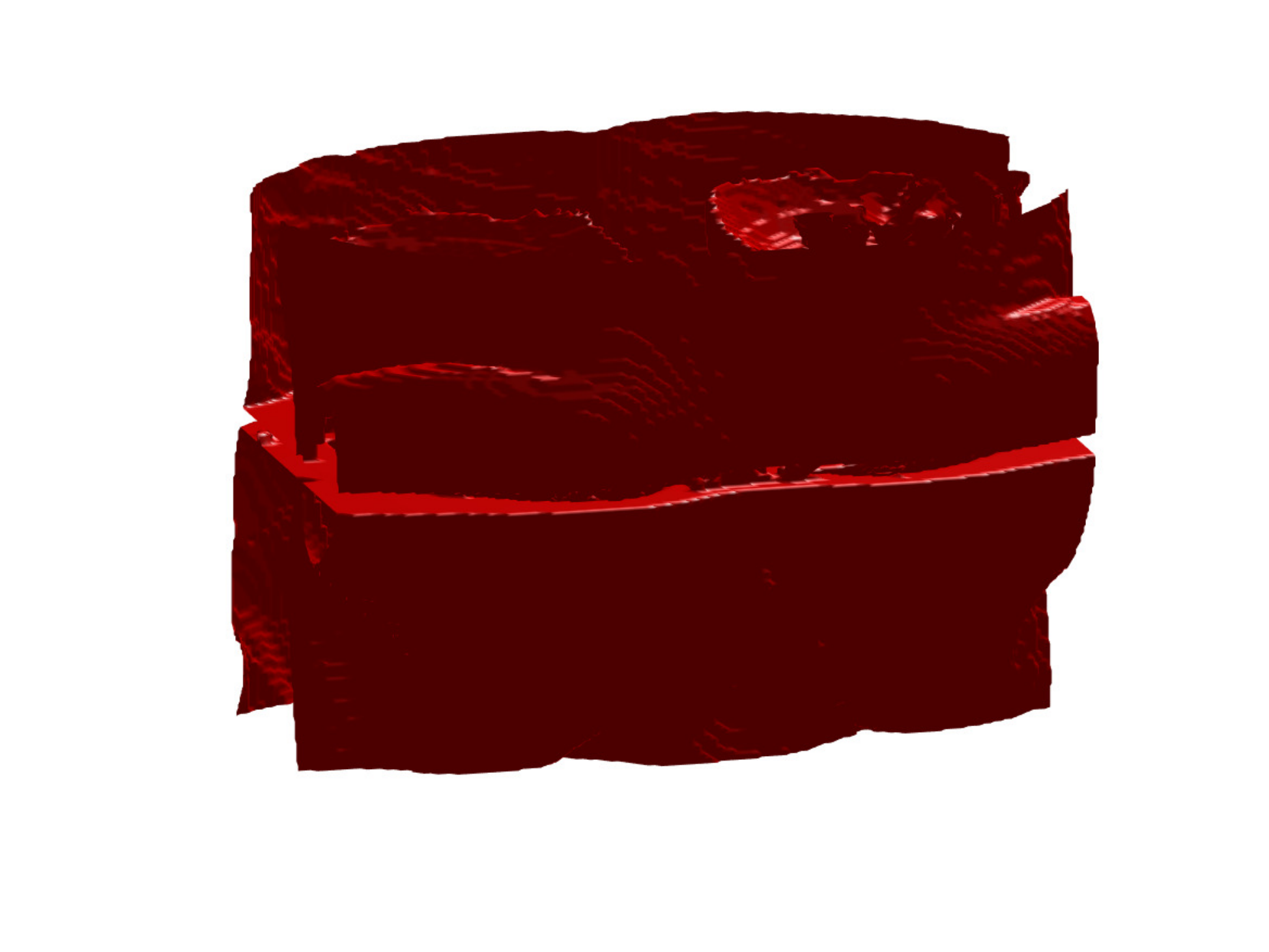}}
\subfigure[CV]{
\includegraphics[width=1.9in,height=1.5in]{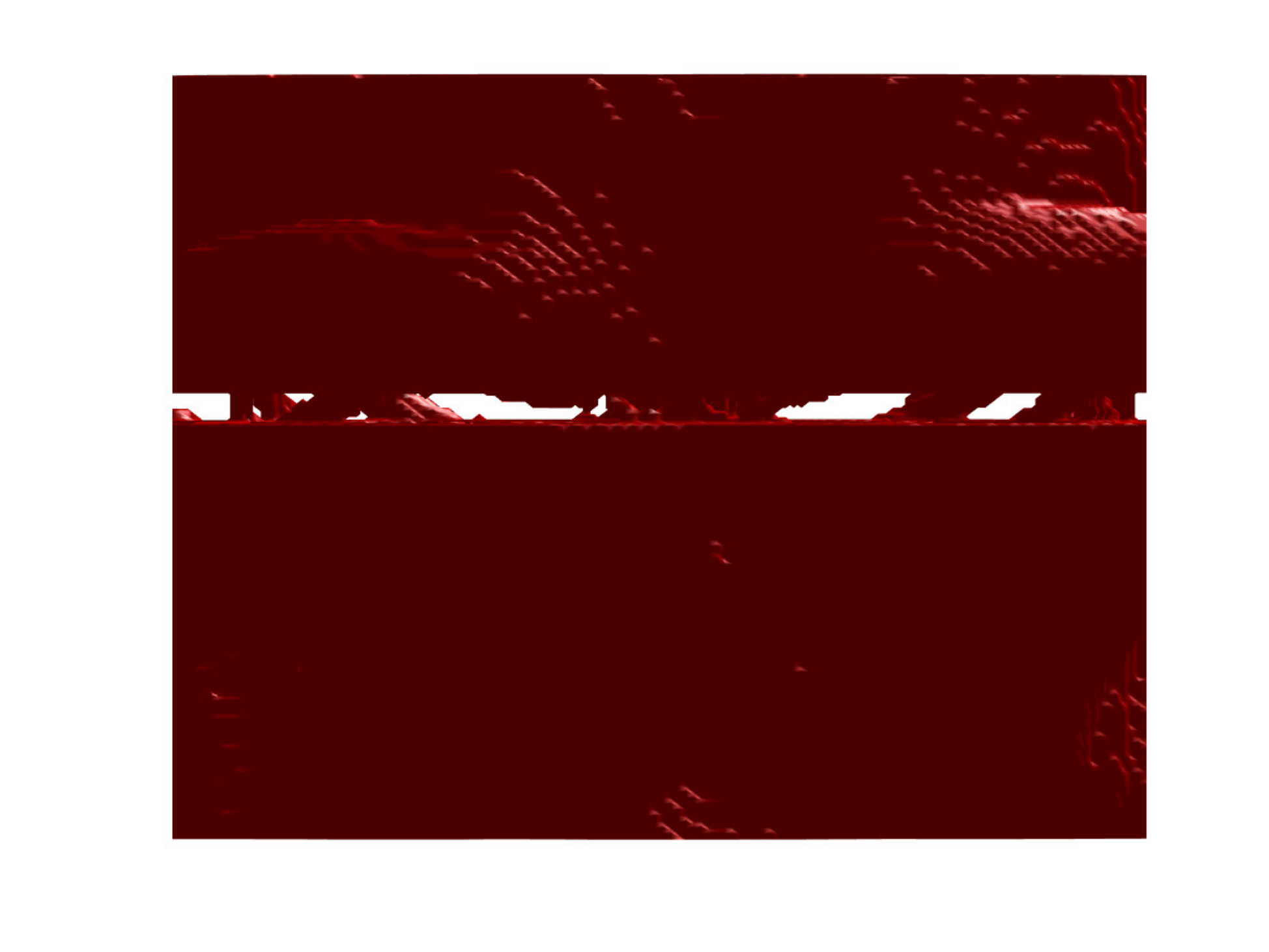}}
\subfigure[CV]{
\includegraphics[width=1.9in,height=1.5in]{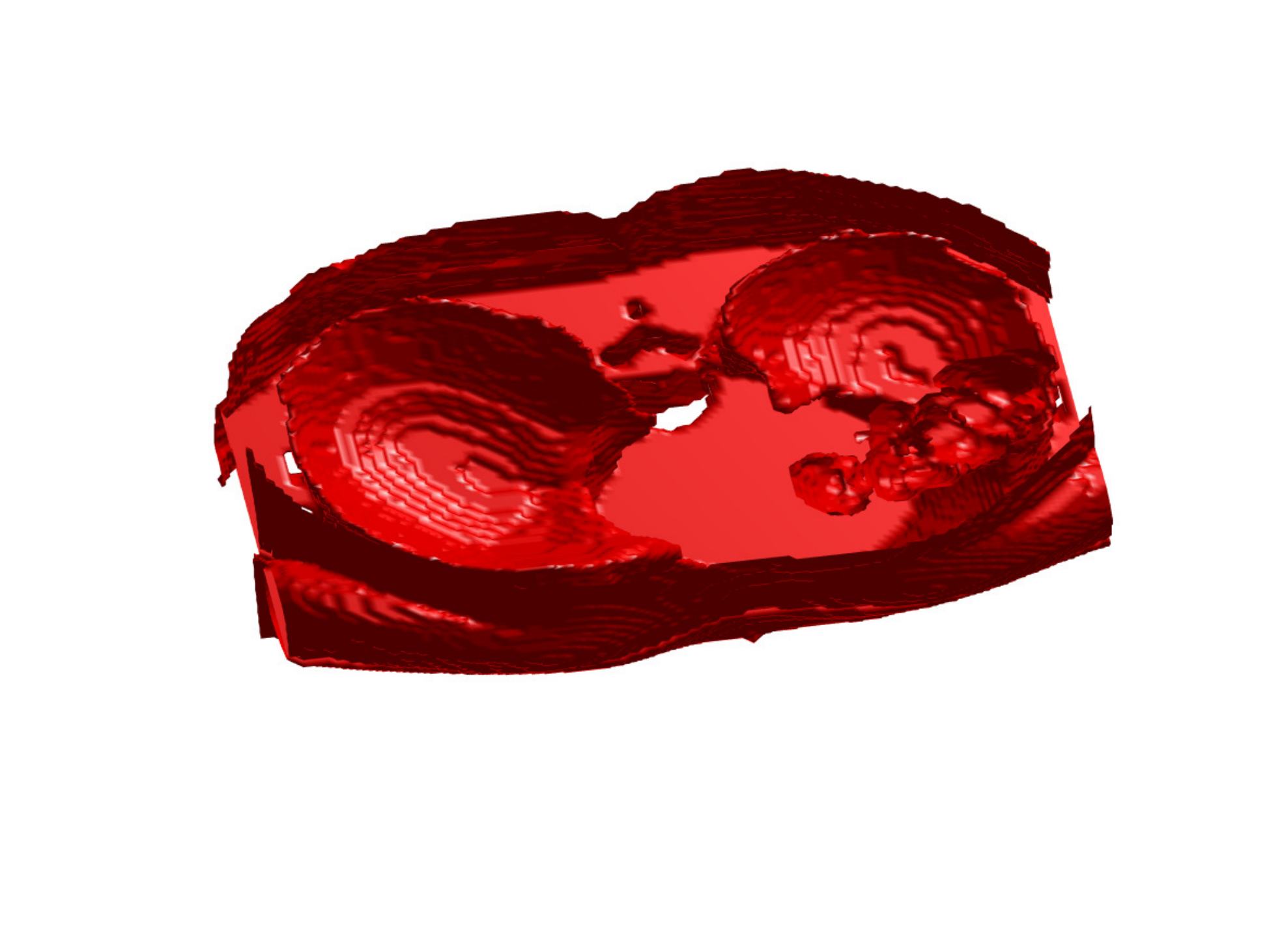}}\\
\subfigure[\cite{zhang2015fast} (1009.2 sec)]{
\includegraphics[width=1.9in,height=1.5in]{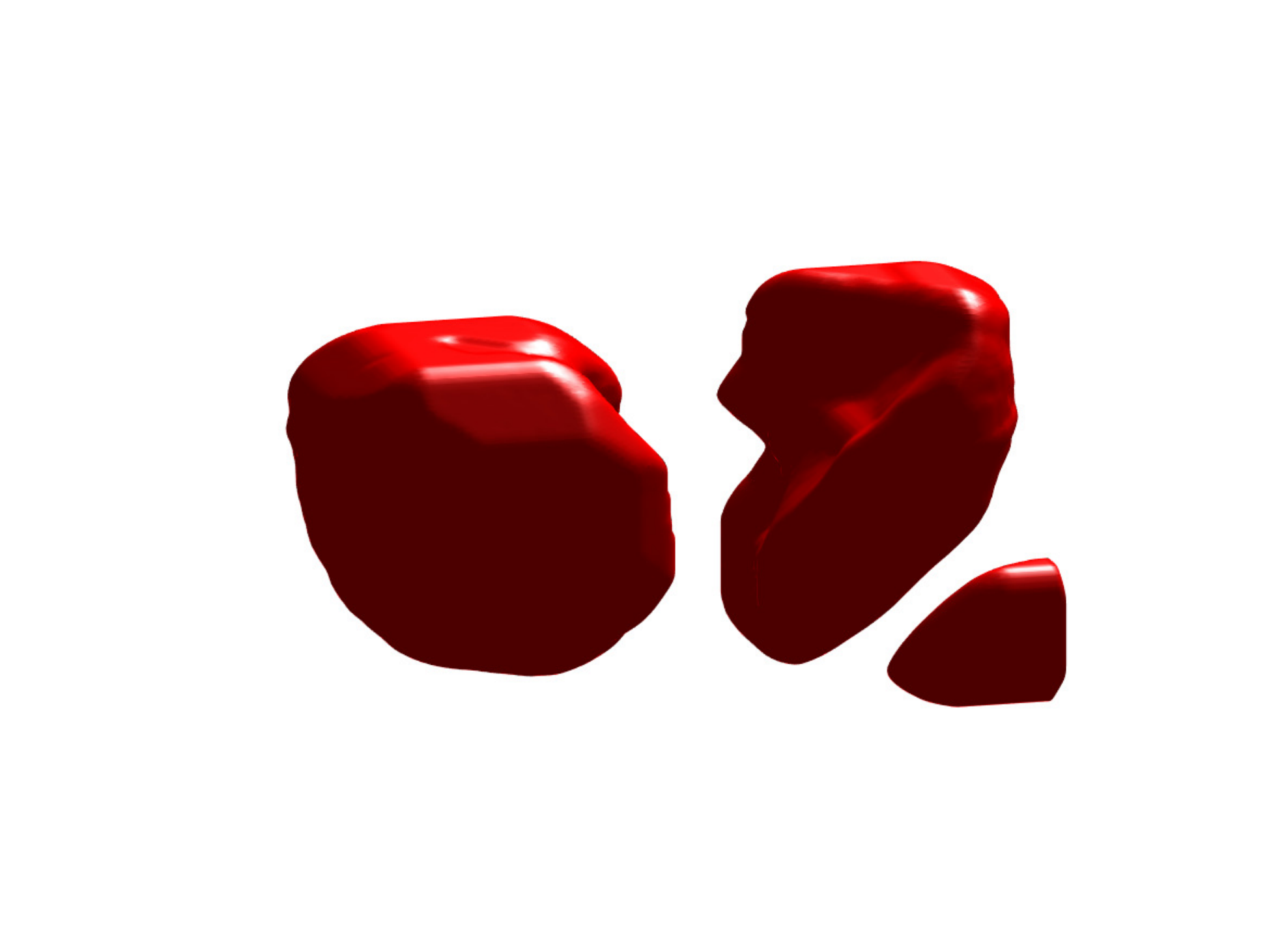}}
\subfigure[\cite{zhang2015fast}]{
\includegraphics[width=1.9in,height=1.5in]{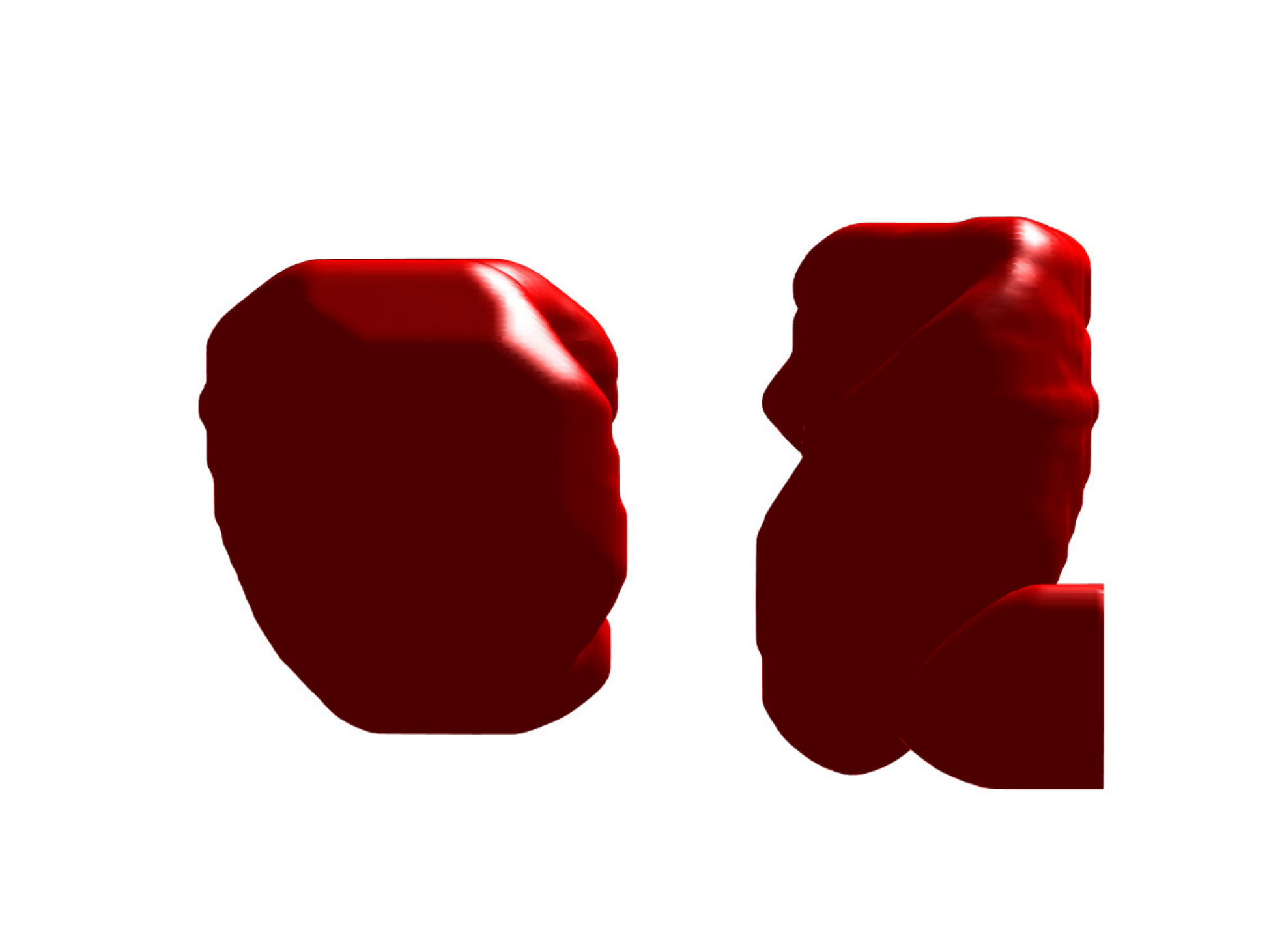}}
\subfigure[\cite{zhang2015fast}]{
\includegraphics[width=1.9in,height=1.5in]{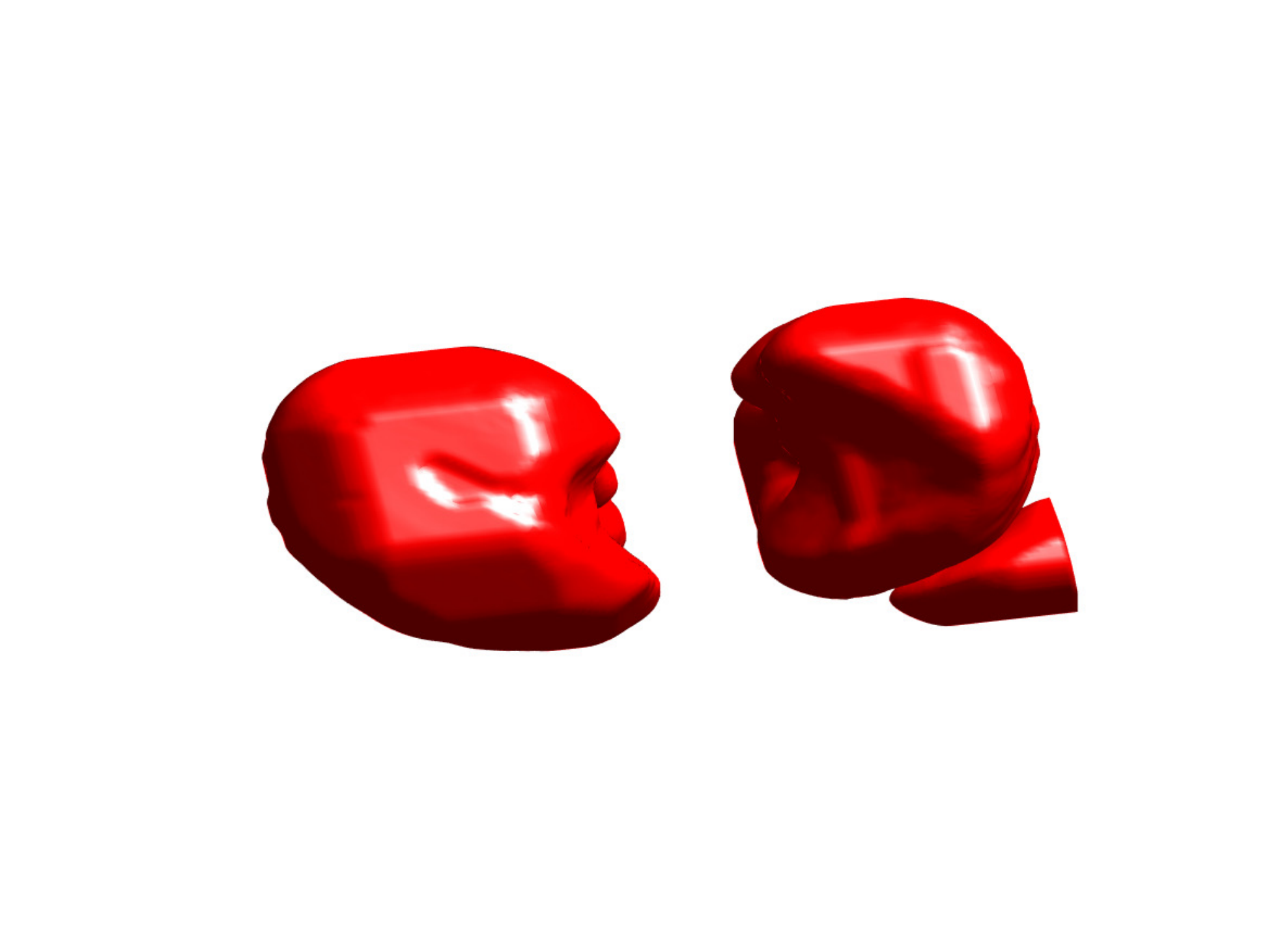}}\\
\subfigure[CV with modification]{
\includegraphics[width=1.9in,height=1.5in]{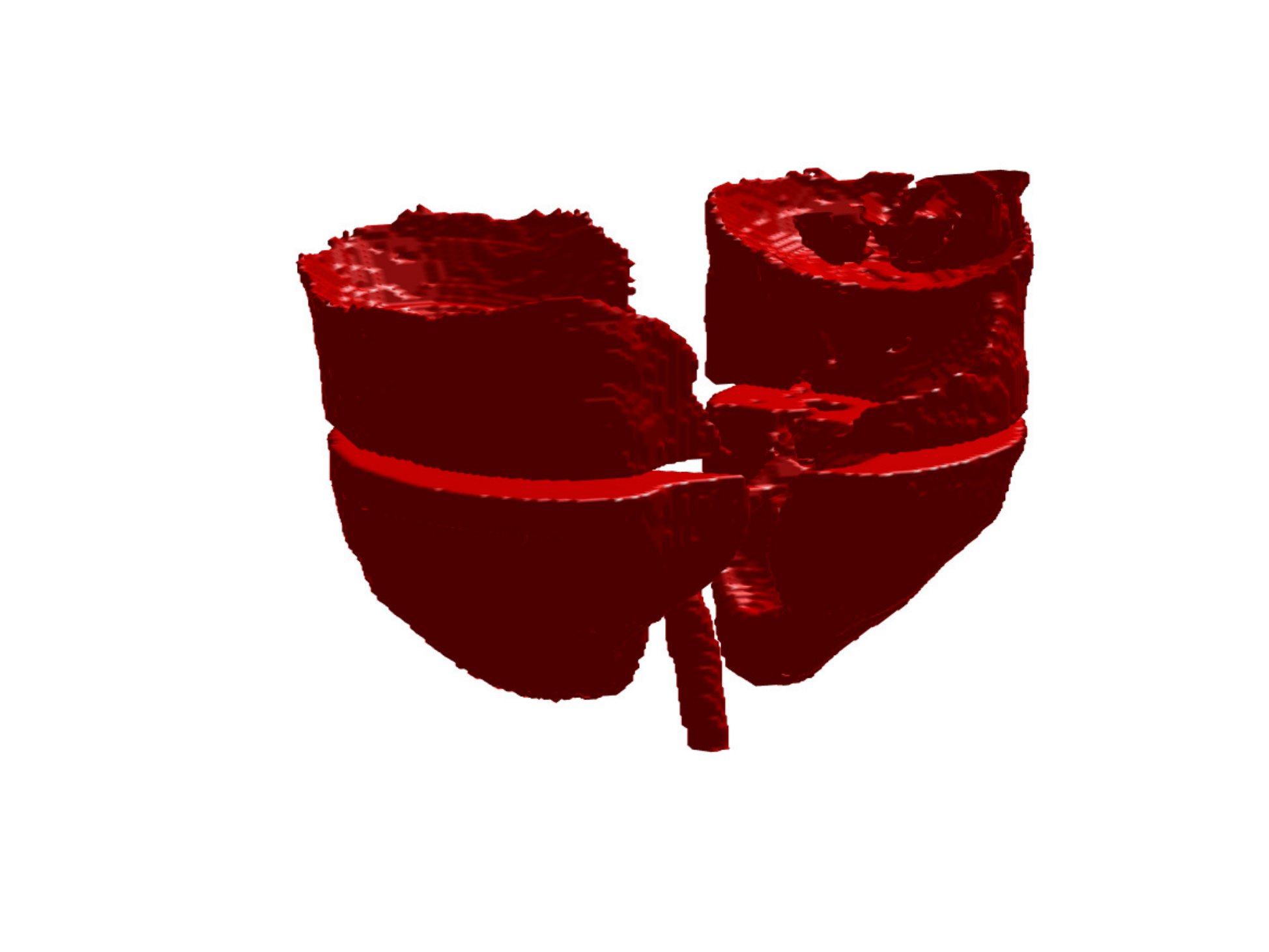}}
\subfigure[CV with modification]{
\includegraphics[width=1.9in,height=1.5in]{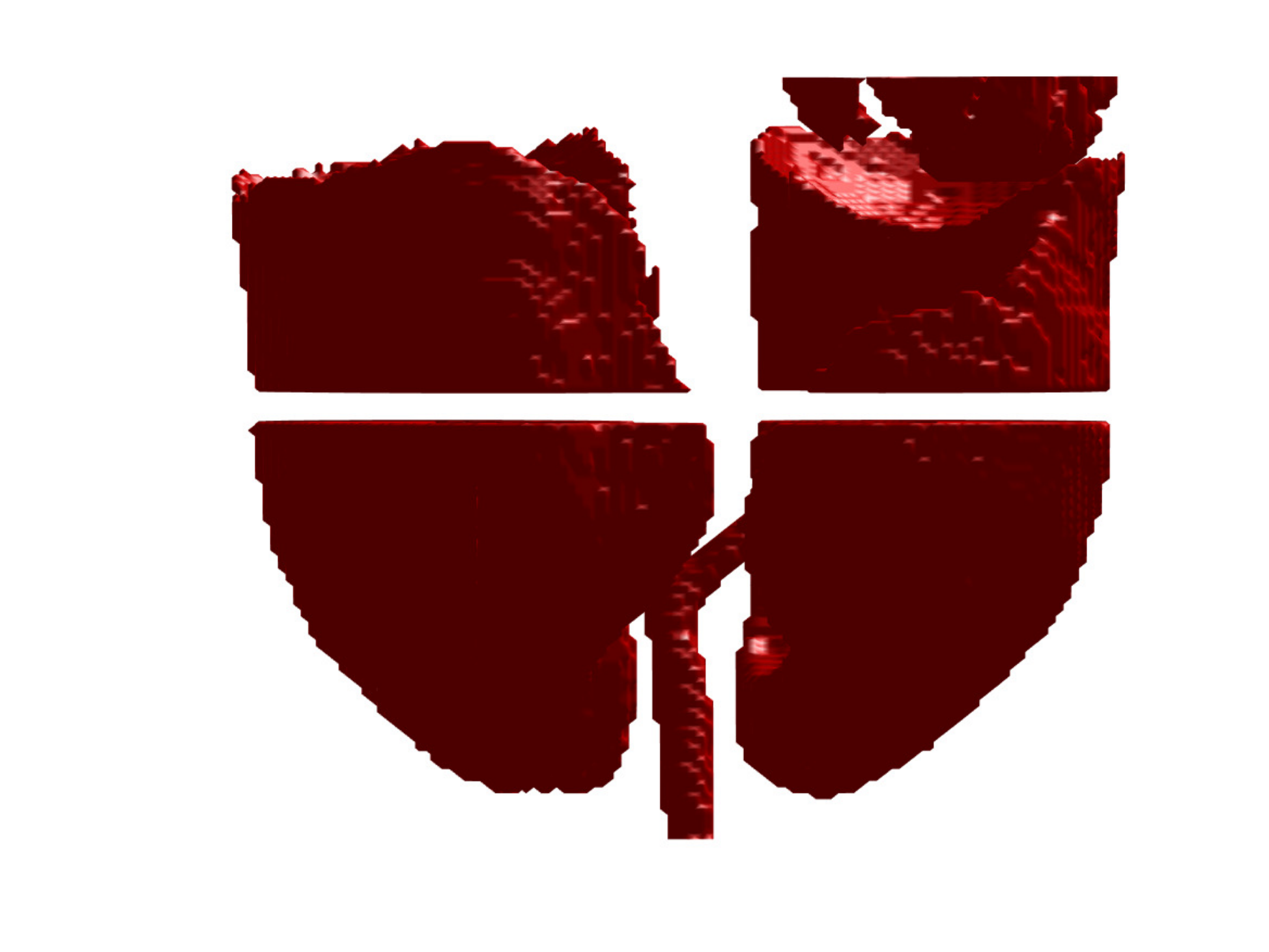}}
\subfigure[CV with modification]{
\includegraphics[width=1.9in,height=1.5in]{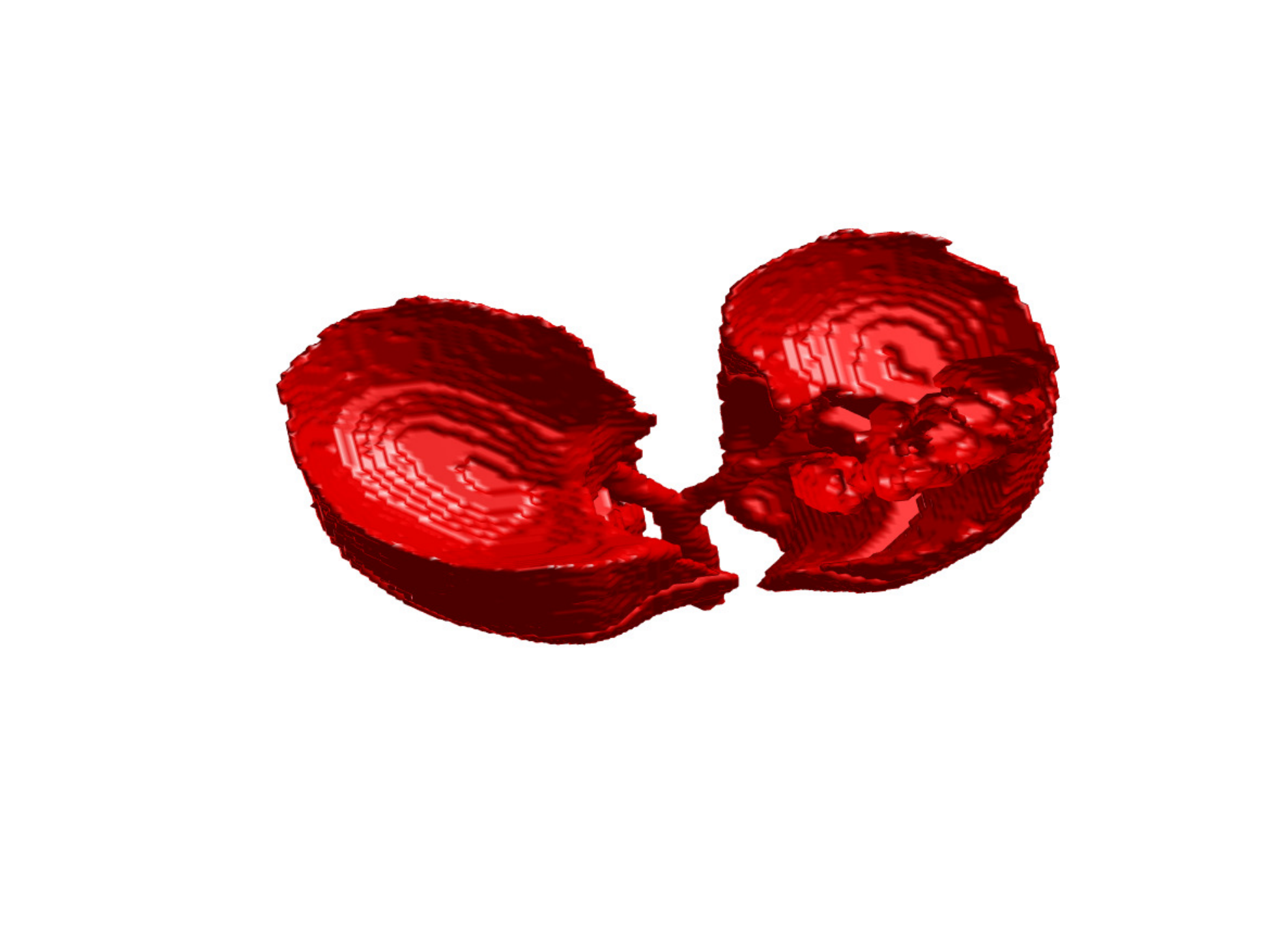}}
\caption{Here, we show the target image and prior image of the second 3D example in the first row. The second to fourth rows show the segmentation results by the proposed model \eqref{ProposedModel}, the Chan-Vese model and the selective model \cite{zhang2015fast} from different angles, respectively. The fifth row shows the segmentation results by the Chan-Vese model with modification to remove the outer part.}\label{3DResultA}
\end{figure}

\bigskip

\noindent {\bf Example 6:} 
The third 3D example is also a 3D lung CT scans. The slices are shown in Figure \ref{3DTarget_2} and the 3D view is shown in Figure \ref{3DResult2} (a). Again, we use two cuboids as the topological prior, as shown in Figure \ref{3DResult2} (b). For the parameters of our proposed model \eqref{ProposedModel}, we set $\alpha_{l}=10$, $\alpha_{s}=1$ and $\alpha_{v}=10$. And for the Chan-Vese model, we again use the default parameter. The segmentation results obtained by our proposed model, the Chan-Vese model and a selective model are displayed in Figure \ref{3DResult2} (c-k). Note that in this example, the Chan-Vese model can only segment the outer contour without giving any meaningful information about the inner part. A possible reason is that the contrast of the inner part is not obvious enough. The selective model can give a satisfied result with preserving topology. However, with the prior reference, our proposed model can successfully segment the lungs and the result is topology-preserving. This example again demonstrates that the segmentation result can be significantly improved by providing a reasonable topological prior. 

\bigskip

Finally, the energy plots versus iterations of all cases in 3D examples are displayed in Figure \ref{Energy3D}. For the computational time, we can see that although our proposed model needs more time than the other two models, this sacrifice should be deserved because our model can absolutely preserve the topological structure. 

\begin{remark}
We note that for these 2D and 3D examples, our proposed model \eqref{ProposedModel} possesses the same advantage with \cite{chan2018topology}: the shapes of the priors do not need to be similar with the target objectives. However, our proposed model \eqref{ProposedModel} can deal with the 3D segmentation, which is a bottleneck of the registration-based segmentation model using the Beltrami representation \eqref{TPB}.
\end{remark}
 
%\begin{figure}
%\centering
%\subfigure{
%\includegraphics[width=6in,height=6in]{fig1//slice_lung_2.eps}}
%\caption{The slices of the third 3D target %image.}\label{3DTarget_2}
%\end{figure}

\begin{figure}[htbp]
\centering
\subfigure[12nd Slice]{
\includegraphics[width=0.8in,height=0.8in]{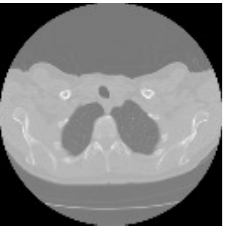}}
\subfigure[22nd Slice]{
\includegraphics[width=0.8in,height=0.8in]{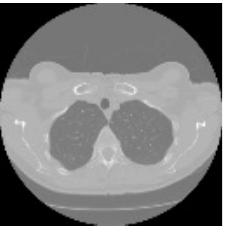}}
\subfigure[32nd Slice]{
\includegraphics[width=0.8in,height=0.8in]{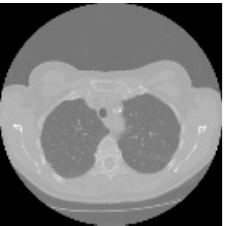}}
\subfigure[42nd Slice]{
\includegraphics[width=0.8in,height=0.8in]{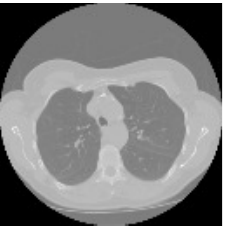}}
\subfigure[52nd Slice]{
\includegraphics[width=0.8in,height=0.8in]{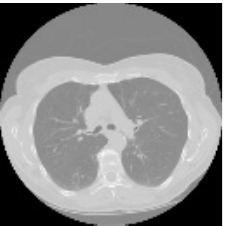}}\\
\subfigure[62nd Slice]{
\includegraphics[width=0.8in,height=0.8in]{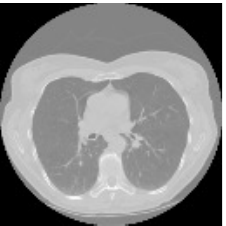}}
\subfigure[72nd Slice]{
\includegraphics[width=0.8in,height=0.8in]{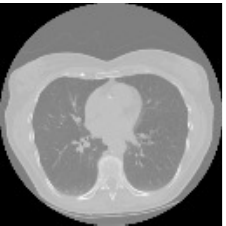}}
\subfigure[82nd Slice]{
\includegraphics[width=0.8in,height=0.8in]{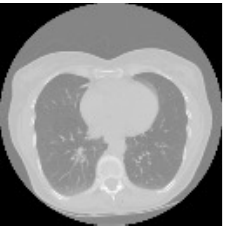}}
\subfigure[92nd Slice]{
\includegraphics[width=0.8in,height=0.8in]{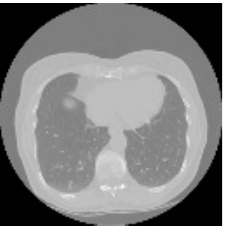}}
\subfigure[102nd Slice]{
\includegraphics[width=0.8in,height=0.8in]{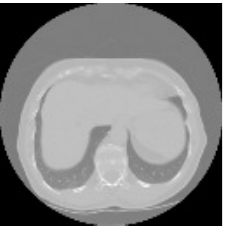}}
\caption{Some slices of the third 3D example.}\label{3DTarget_2}
\end{figure}

\begin{figure}[htbp]
\centering
\subfigure[Target Image]{
\includegraphics[width=1.9in,height=1.5in]{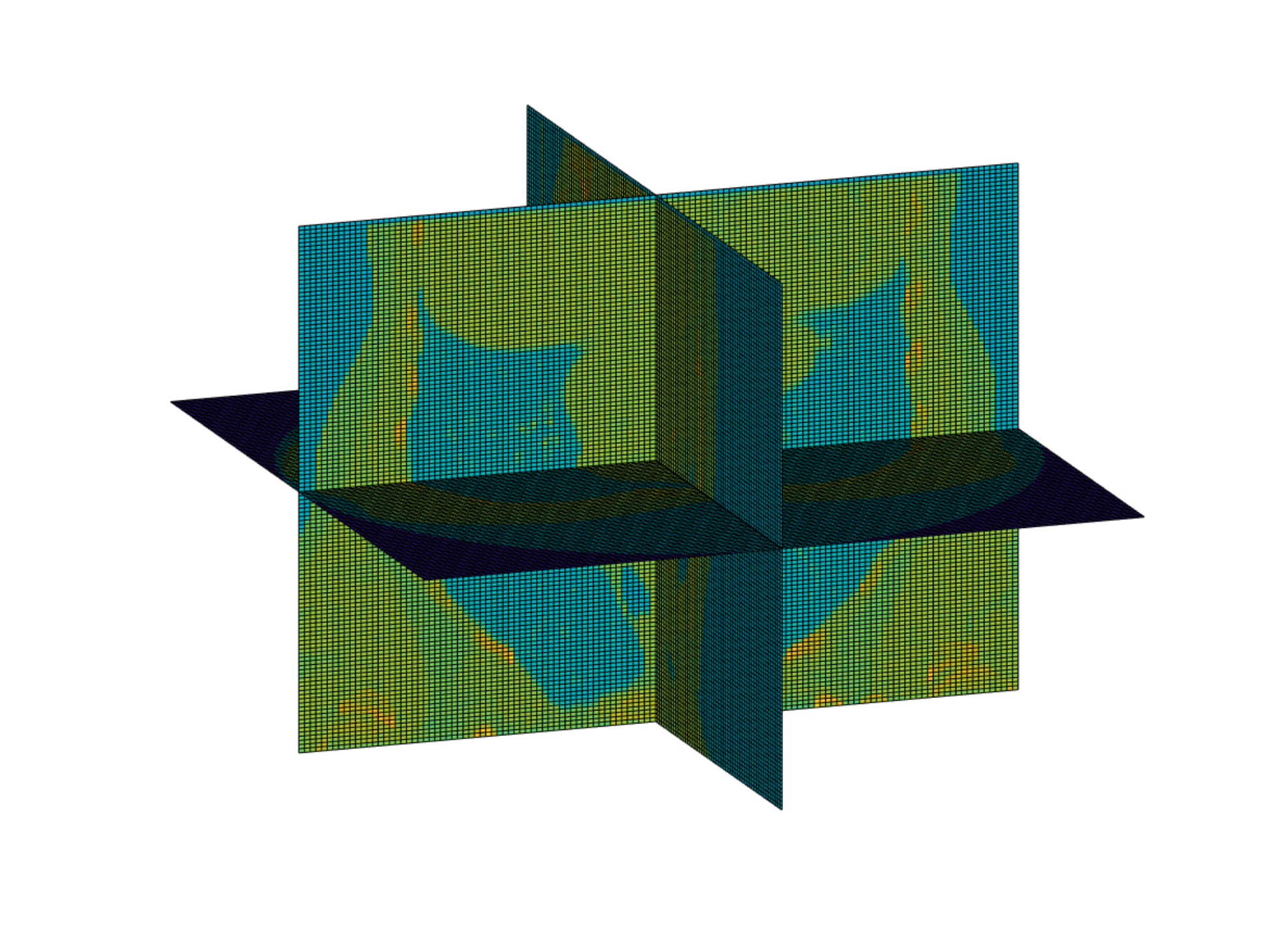}}
\subfigure[Prior Image]{
\includegraphics[width=1.9in,height=1.5in]{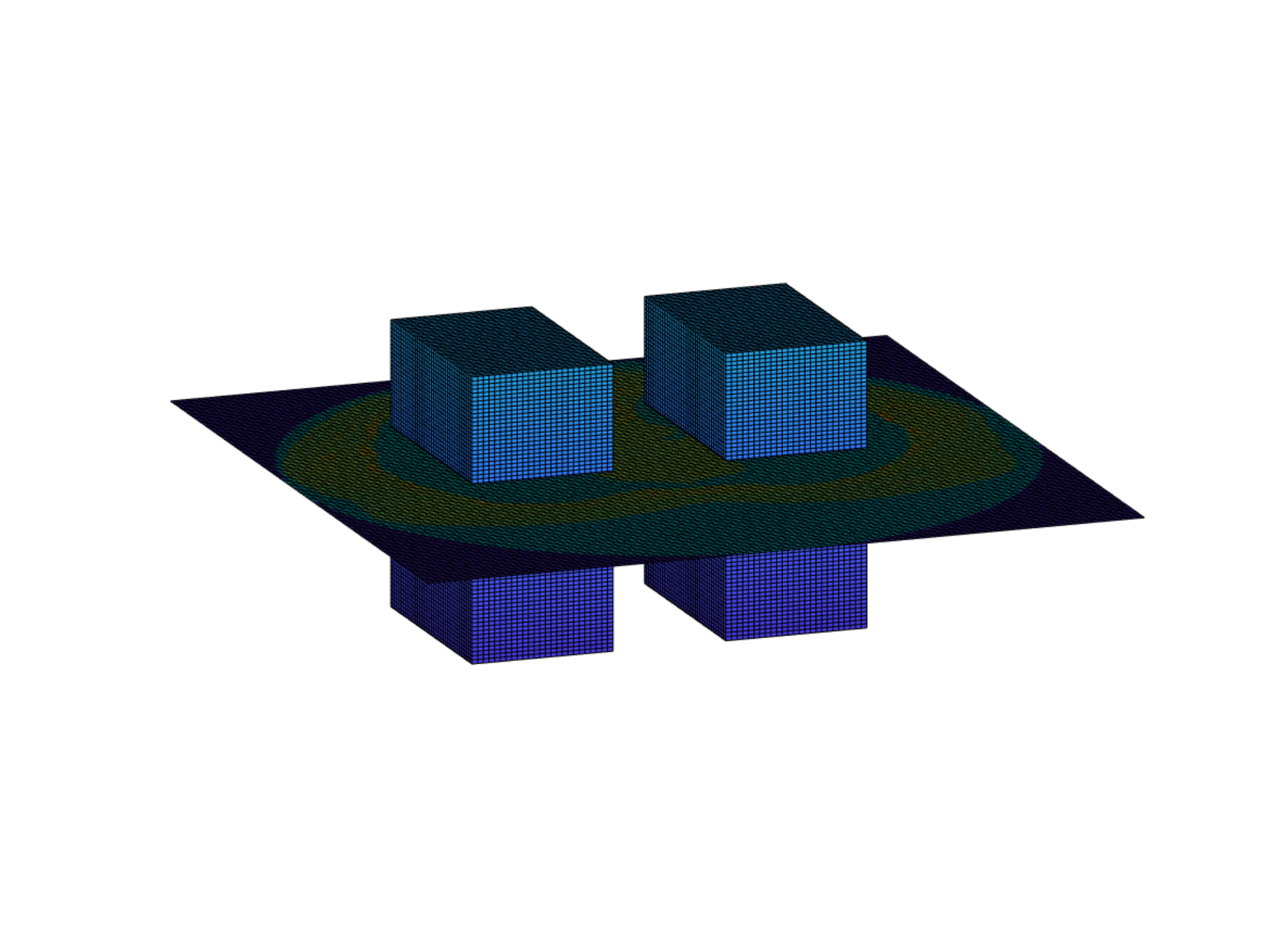}}\\
\subfigure[PM (1904.5 sec)]{
\includegraphics[width=1.9in,height=1.5in]{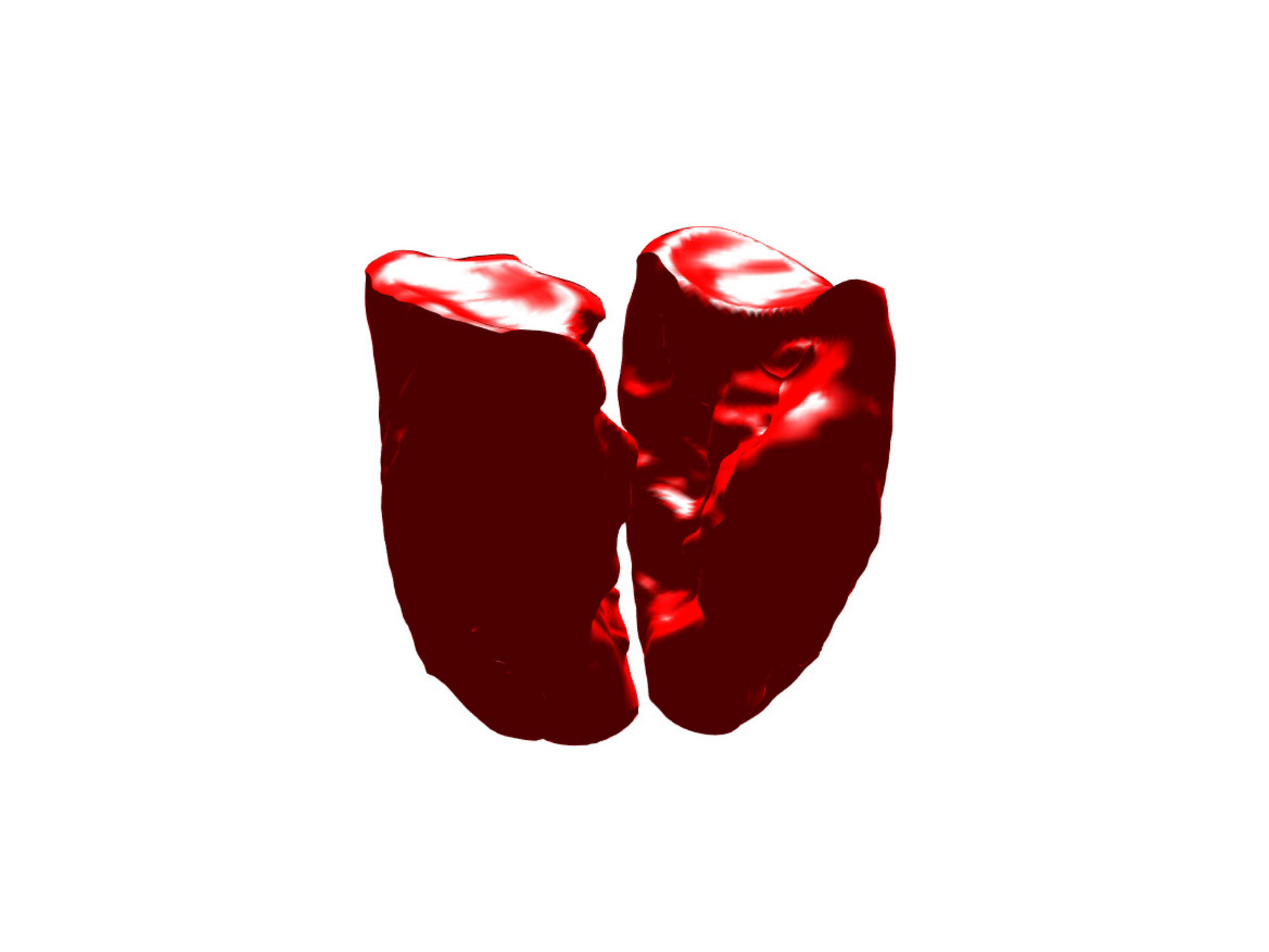}}
\subfigure[PM]{
\includegraphics[width=1.9in,height=1.5in]{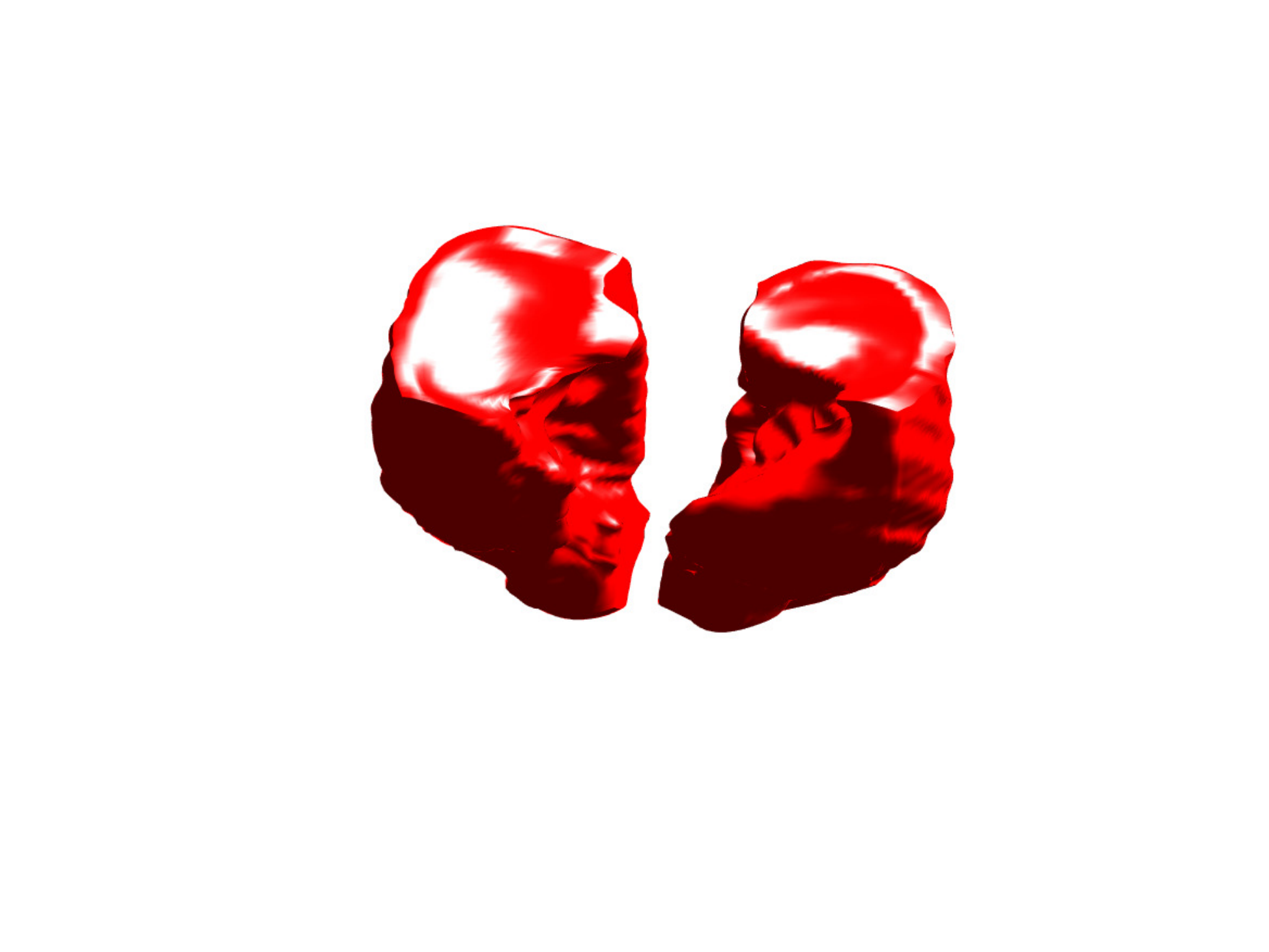}}
\subfigure[PM]{
\includegraphics[width=1.9in,height=1.5in]{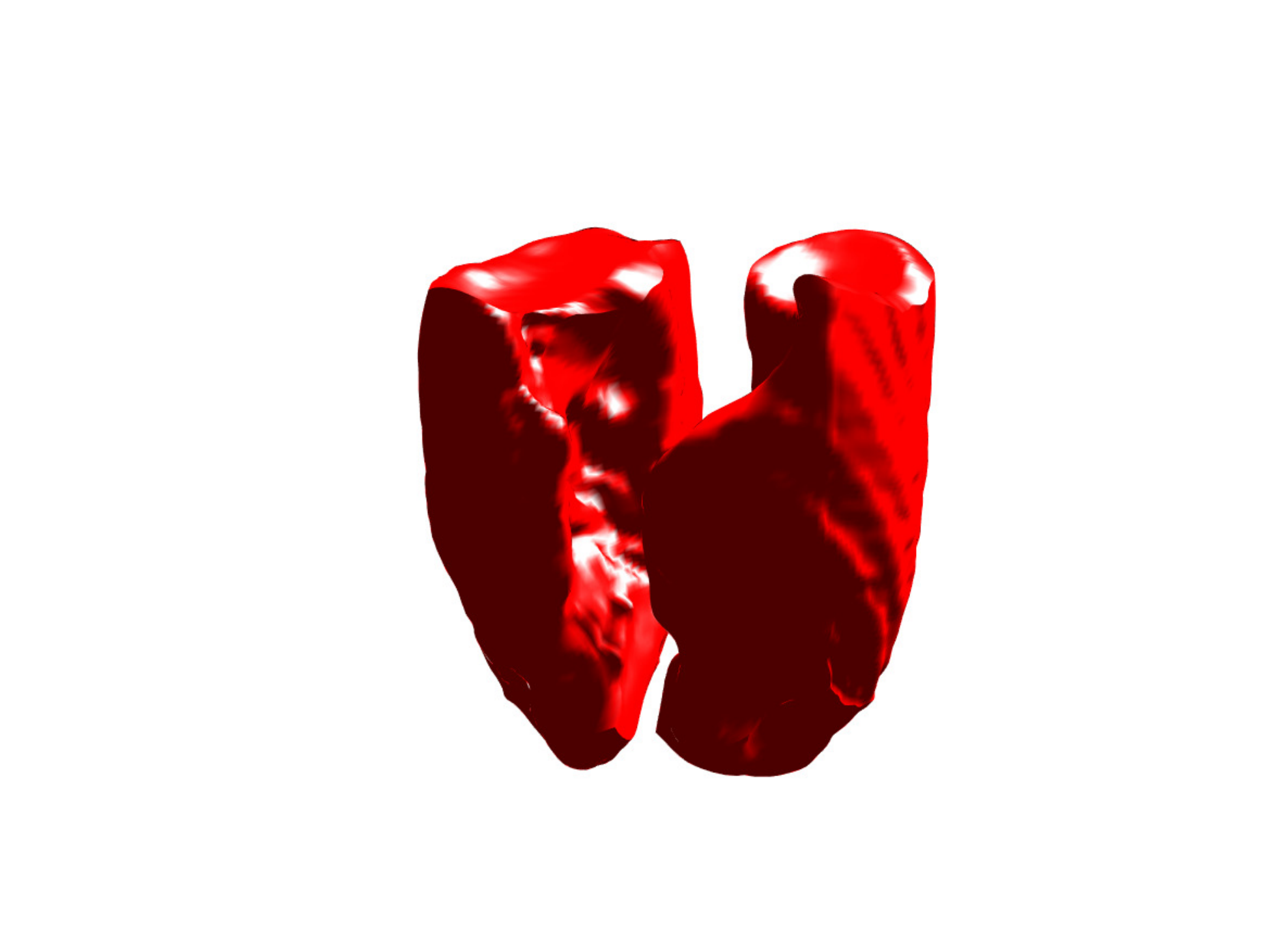}}\\
\subfigure[CV (150.12 sec)]{
\includegraphics[width=1.9in,height=1.5in]{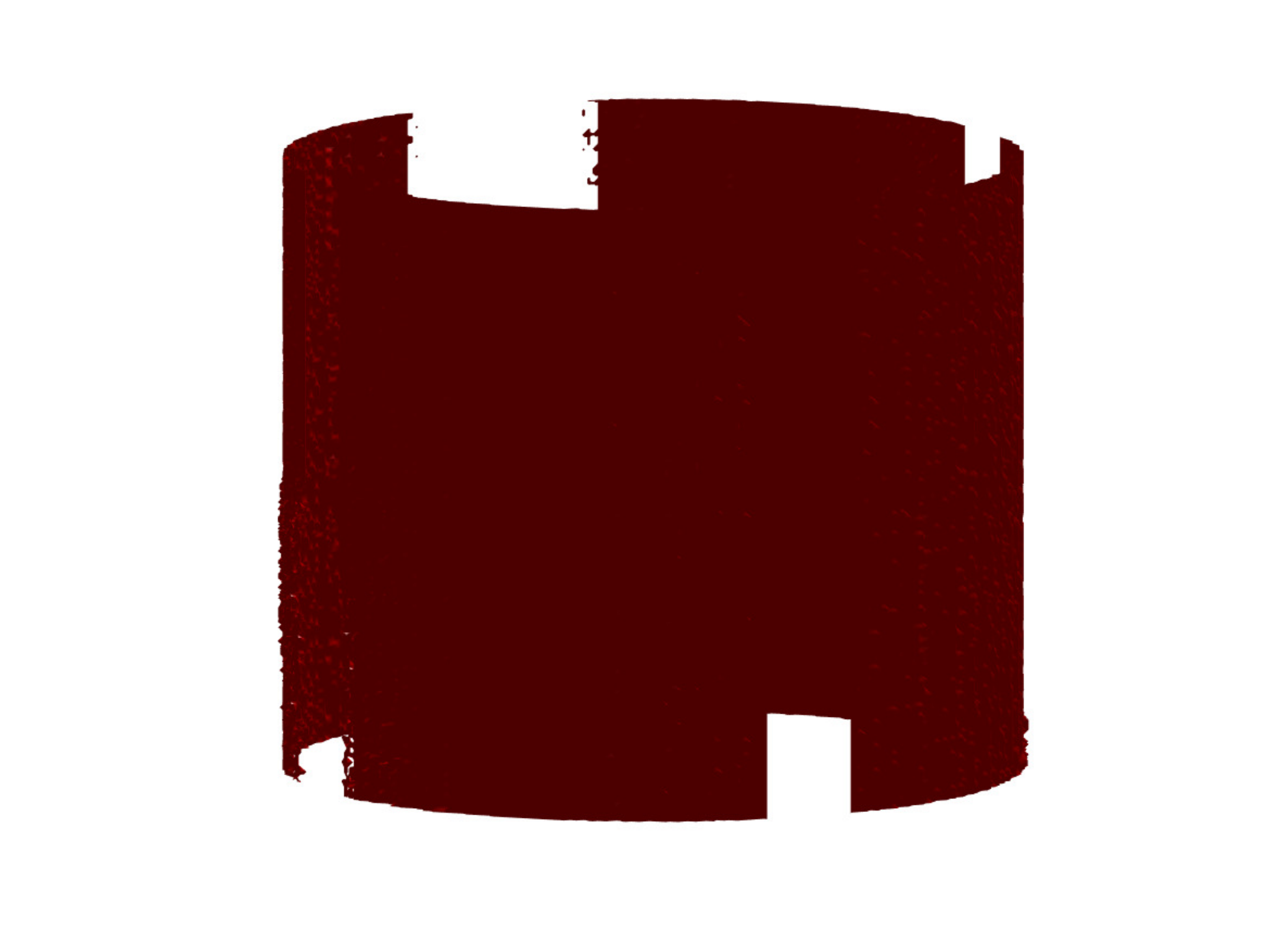}}
\subfigure[CV]{
\includegraphics[width=1.9in,height=1.5in]{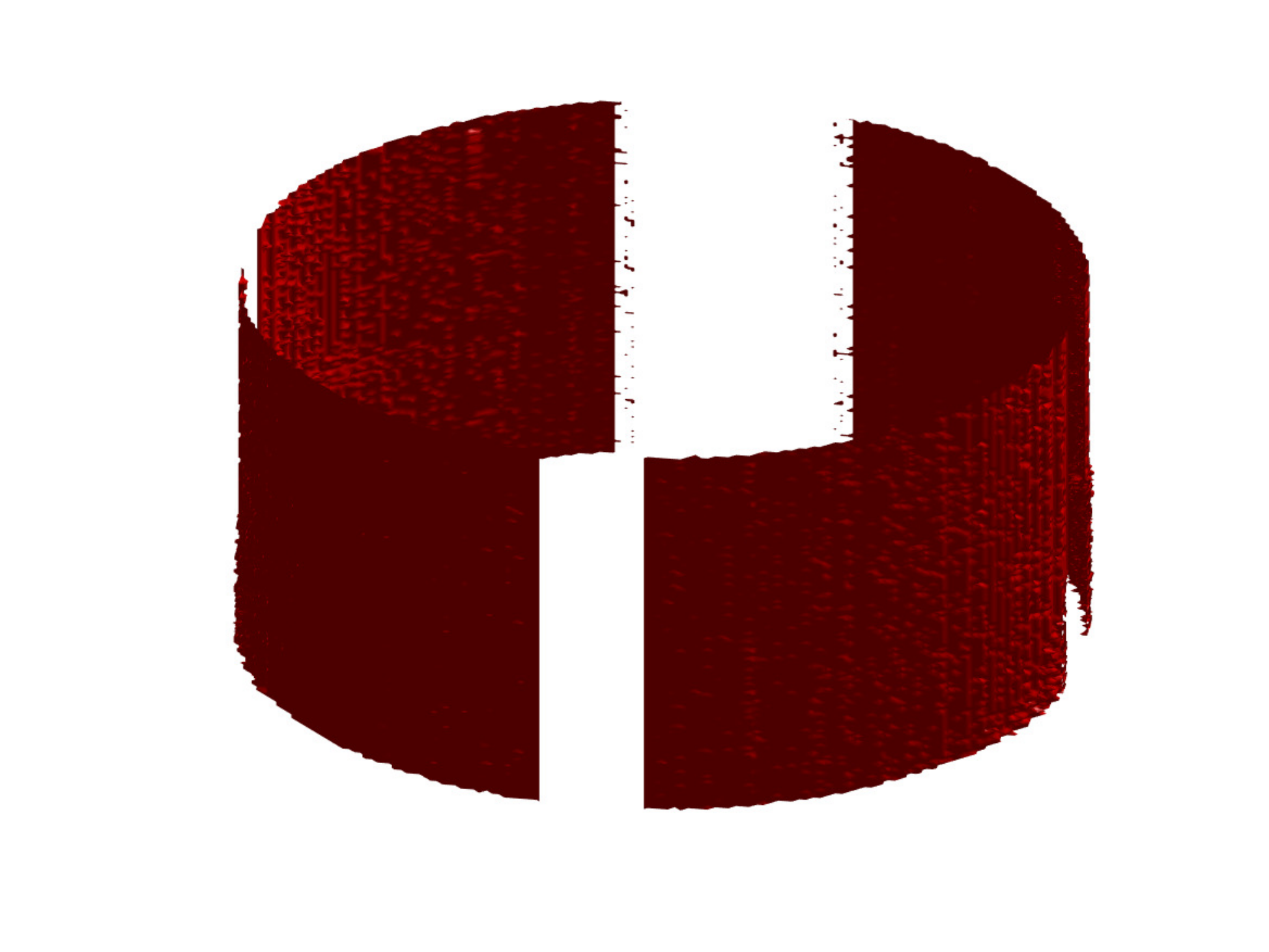}}
\subfigure[CV]{
\includegraphics[width=1.9in,height=1.5in]{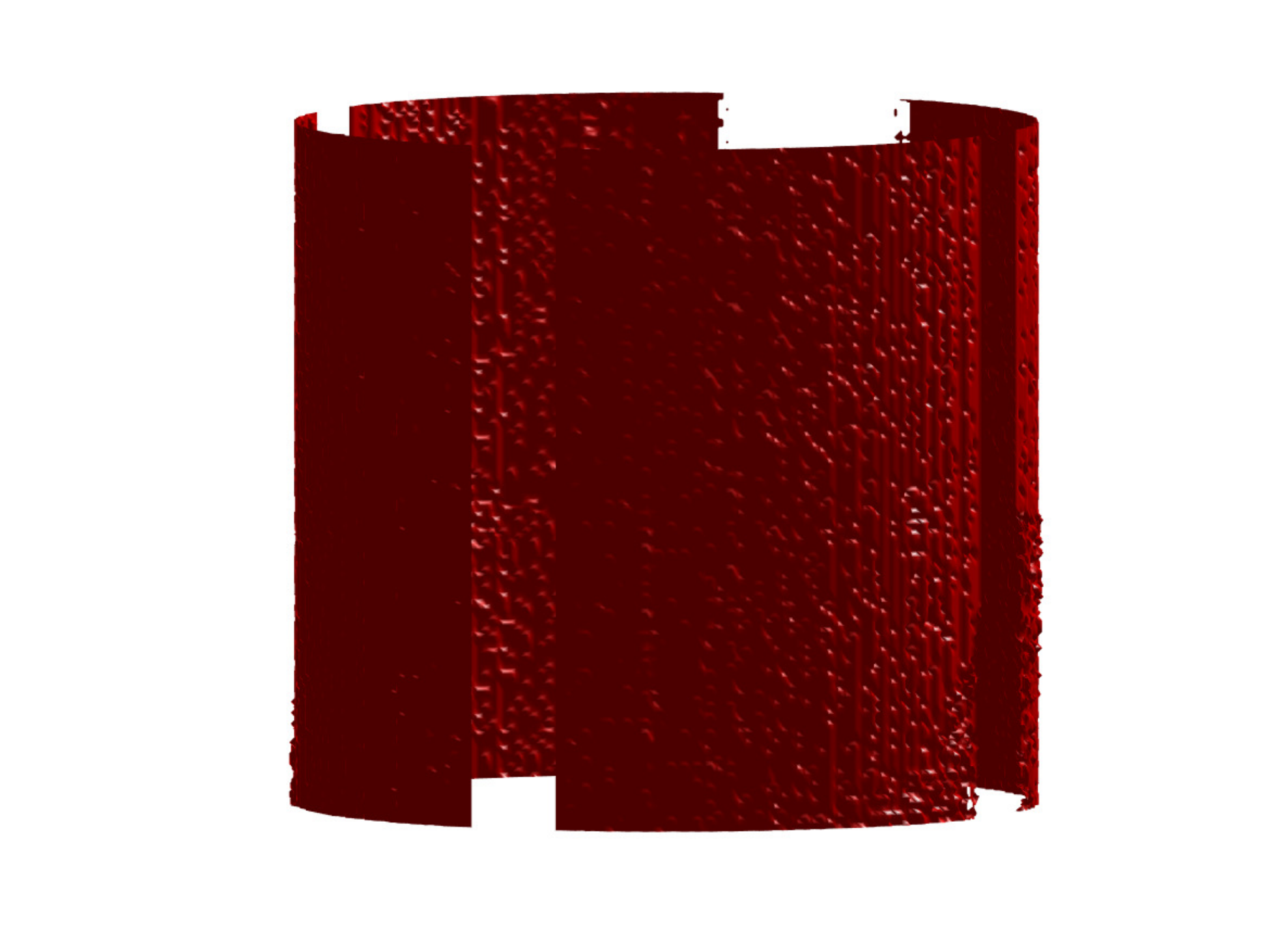}}\\
\subfigure[\cite{zhang2015fast} (1392.0 sec)]{
\includegraphics[width=1.9in,height=1.5in]{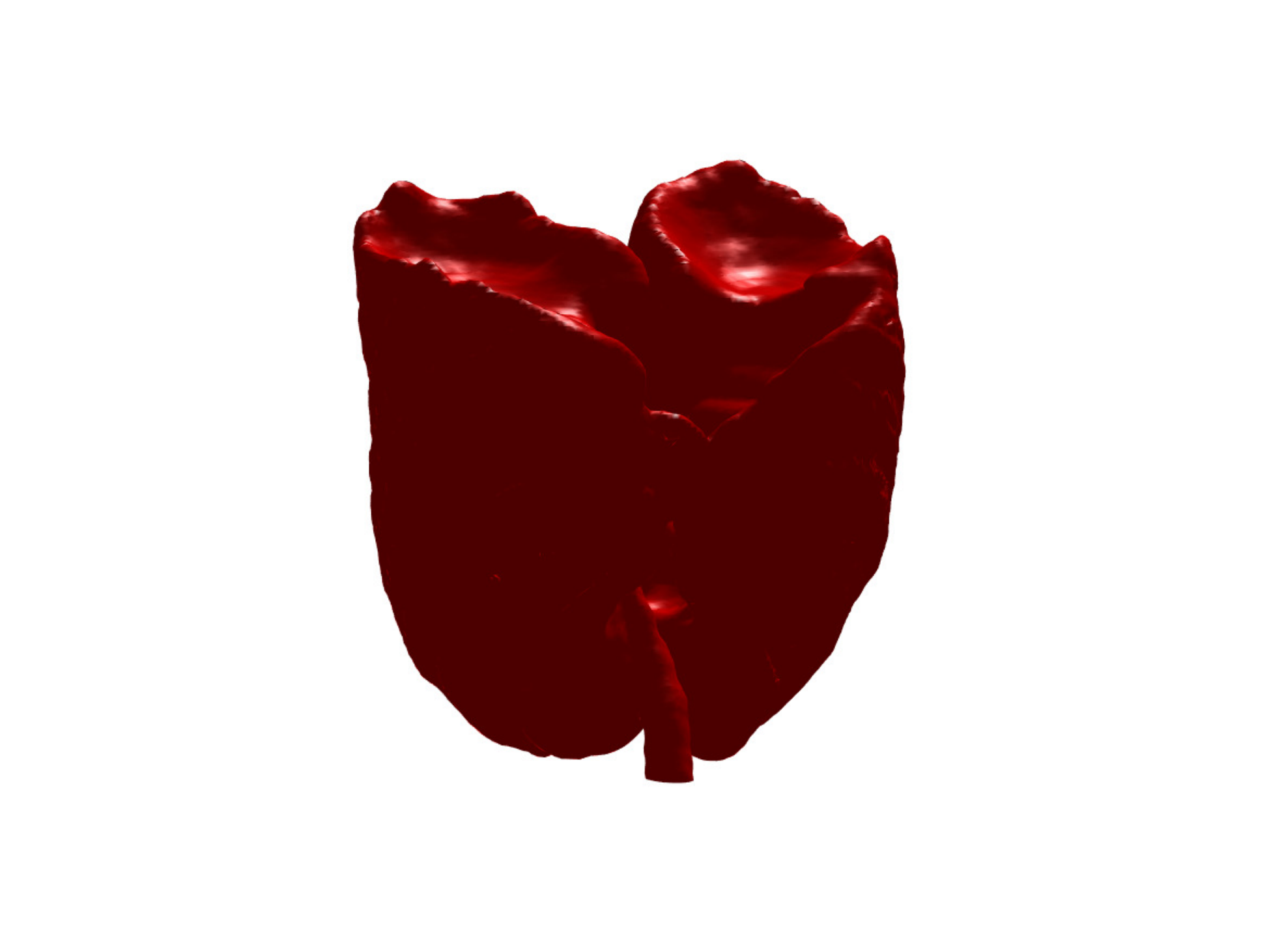}}
\subfigure[\cite{zhang2015fast}]{
\includegraphics[width=1.9in,height=1.5in]{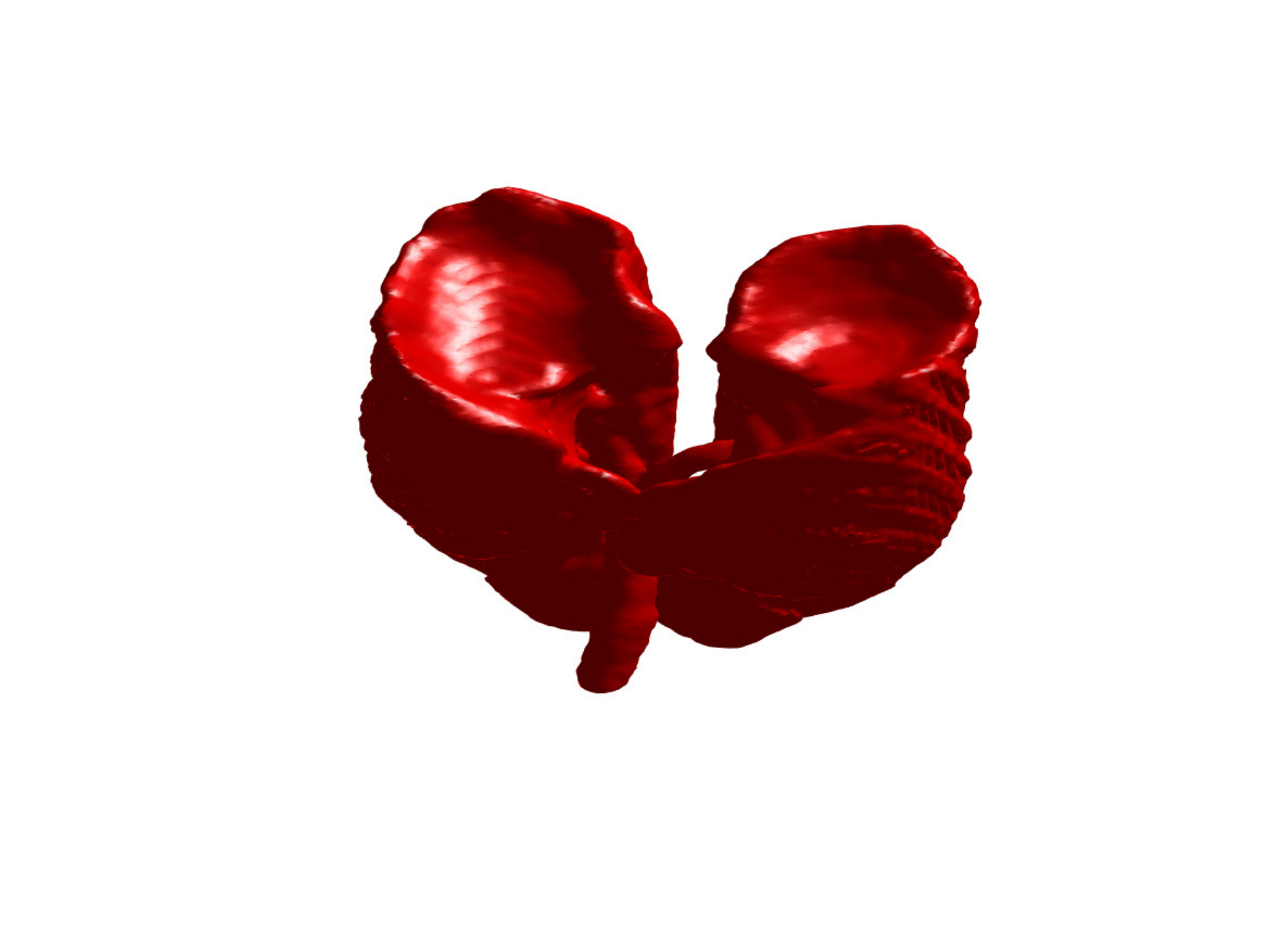}}
\subfigure[\cite{zhang2015fast}]{
\includegraphics[width=1.9in,height=1.5in]{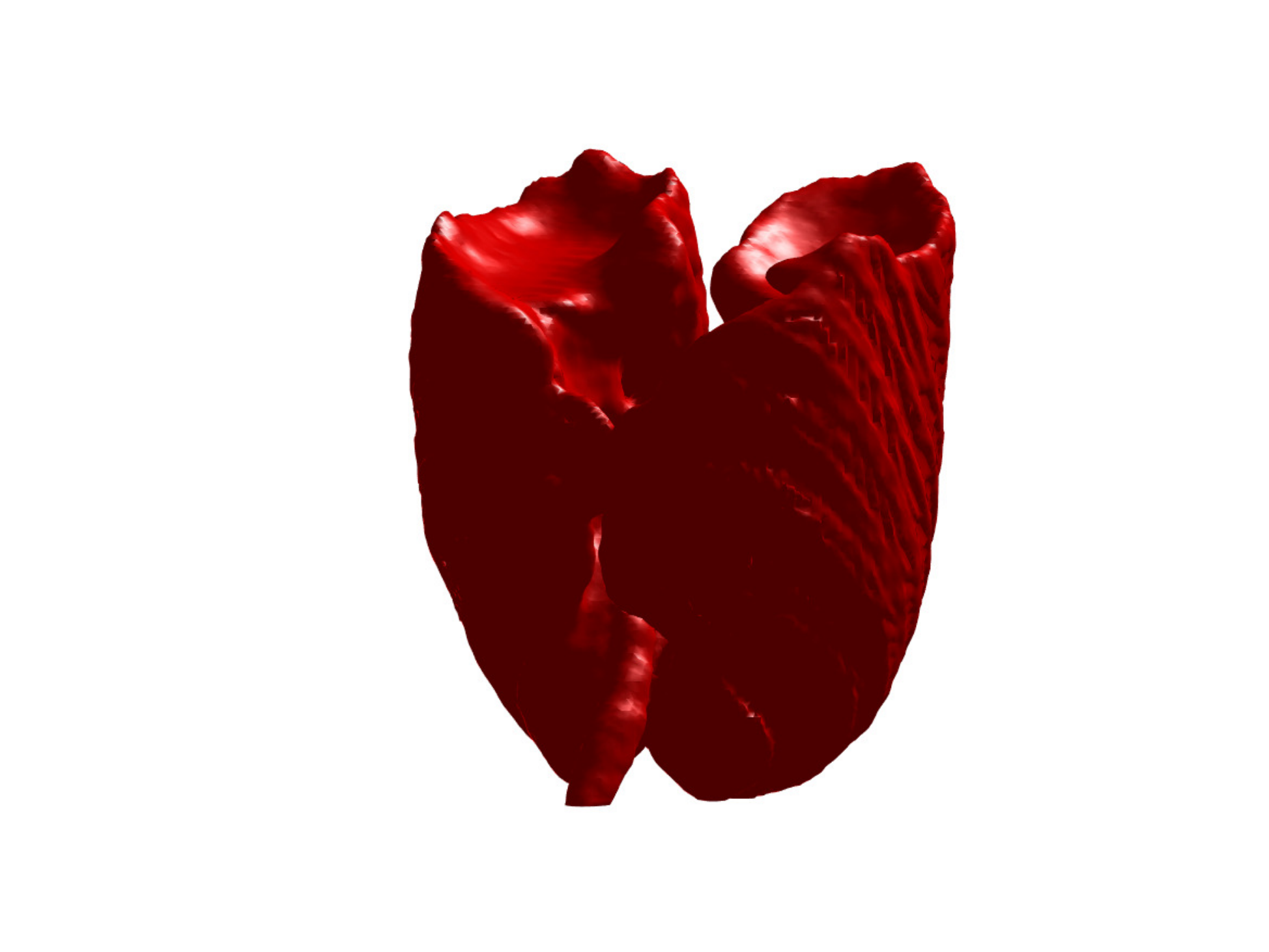}}\\
\caption{Here, we show the target image and prior image of the third 3D example in the first row. The second to fourth rows show the segmentation results by the proposed model \eqref{ProposedModel}, the Chan-Vese model and the selective model \cite{zhang2015fast} from different angles, respectively.}\label{3DResult2}
\end{figure}

\section{Conclusion}\label{SConclusion}
In this paper, we propose a topology-preserving segmentation model based on the hyperelastic registration. The proposed model \eqref{ProposedModel} is a registration-based segmentation model, which deforms a prior image to segment the target objects in a given image. The proposed model can handle both 2D and 3D images. By deforming a prior image bijectively, a 3D topology-preserving segmentation result can be guaranteed. The existence of the solution of the proposed model is theoretically established. In addition, we propose in this paper the generalized Gauss-Newton numerical scheme to solve the proposed model, whose convergence is rigorously shown. We test our proposed model on both synthetic and real images. Numerical experiments demonstrate the effectiveness of our proposed model for both 2D and 3D topology-preserving segmentation.

\section*{Acknowledgement}
We would like to thank Prof. Jan Modersitzki for his FAIR package \cite{modersitzki2009fair}
(https://github.com/C4IR/FAIR.m). We would also like to thank the anonymous reviewers for their valuable comments and suggestions to improve the quality of this manuscript. This work is partly supported by HKRGC GRF (Project ID: 2130656).

% Authors must disclose all relationships or interests that 
% could have direct or potential influence or impart bias on 
% the work: 
%
% \section*{Conflict of interest}
%
% The authors declare that they have no conflict of interest.

%%%%%%%%
%%%%%%%%
%%%%%%%%
%%%%%%%%
\appendix

\section{Computation of $A$ in \eqref{disR1}.}\label{A}
$A = I_{3}\otimes (A_{1}^{T},A_{2}^{T},A_{3}^{T})^{T}$, $A_{1} = I_{(n_{3}+1)}\otimes I_{(n_{2}+1)}\otimes \partial_{n_{1}}^{1,h_{1}}$, $A_{2}=I_{(n_{3}+1)}\otimes \partial_{n_{2}}^{1,h_{2}}\otimes I_{(n_{1}+1)}$, $A_{3}=\partial_{n_{3}}^{1,h_{3}}\otimes I_{(n_{2}+1)}\otimes I_{(n_{1}+1)}$ and
\begin{equation}
\partial_{n_{l}}^{1,h_{l}} = \frac{1}{h_{l}}\begin{pmatrix}
-1 & 1 & & \\
    & \cdot   & \cdot & \\
    &   &  -1 & 1
\end{pmatrix}\in\mathbb{R}^{n_{l},n_{l}+1},\quad 1\leq l \leq 3.
\end{equation}
Here, $\otimes$ indicates Kronecker product.

\section{Computation of $\bm{s}(Y)$ and $\bm{v}(Y)$ in \eqref{disR2}.}\label{S_V}
In each tetrahedron $\Omega^{i,j,k,l}$, set $\textbf L^{i,j,k,l}(\bm{x})= (L_{1}^{i,j,k,l}(\bm{x}),L_{2}^{i,j,k,l}(\bm{x}),L_{3}^{i,j,k,l}(\bm{x}))= (a^{i,j,k,l}_{1}x_{1}+a^{i,j,k,l}_{2}x_{2}+a^{i,j,k,l}_{3}x_{3}+b_{1}^{i,j,k,l}, a^{i,j,k,l}_{4}x_{1}+a^{i,j,k,l}_{5}x_{2}+a^{i,j,k,l}_{6}x_{3}+b_{2}^{i,j,k,l},a^{i,j,k,l}_{7}x_{1}+a^{i,j,k,l}_{8}x_{2}+a^{i,j,k,l}_{9}x_{3}+b_{3}^{i,j,k,l})$, which is the linear interpolation for $\bm{y}$ in the $\Omega^{i,j,k,l}$. Note that 
\begin{equation}
\begin{split}
\partial_{x_{1}} L^{i,j,k,l}_{1} = a^{i,j,k,l}_{1}, \partial_{x_{2}} L^{i,j,k,l}_{1} = a^{i,j,k,l}_{2},\partial_{x_{3}} L^{i,j,k,l}_{1} = a^{i,j,k,l}_{3},\\
\partial_{x_{1}} L^{i,j,k,l}_{2} = a^{i,j,k,l}_{4}, \partial_{x_{2}} L^{i,j,k,l}_{2} = a^{i,j,k,l}_{5},\partial_{x_{3}} L^{i,j,k,l}_{2} = a^{i,j,k,l}_{6},\\
\partial_{x_{1}} L^{i,j,k,l}_{3} = a^{i,j,k,l}_{7}, \partial_{x_{2}} L^{i,j,k,l}_{3} = a^{i,j,k,l}_{8},\partial_{x_{3}} L^{i,j,k,l}_{3} = a^{i,j,k,l}_{9}.
\end{split}
\end{equation}
Then the following approximation can be built:
\begin{equation}\label{disR2_1}
\int_{\Omega} \alpha_{s}\phi_{w}(\mathrm{cof}\nabla\bm{y})+\alpha_{v}\phi_{v}(\det\nabla\bm{y})\mathrm{d}\bm{x}\approx \frac{h^{3}}{6}\sum_{i=1}^{n}\sum_{j=1}^{n}\sum_{k=1}^{n}\sum_{l=1}^{6}(\alpha_{s}\phi_{w}(s^{i,j,k,l})+\alpha_{v}\phi_{v}(v^{i,j,k,l})),
\end{equation}
where 
\begin{equation}
s^{i,j,k,l} = 
\begin{pmatrix}
a_{5}^{i,j,k,l}a_{9}^{i,j,k,l}-a_{6}^{i,j,k,l}a_{8}^{i,j,k,l} & a_{6}^{i,j,k,l}a_{7}^{i,j,k,l}-a_{4}^{i,j,k,l}a_{9}^{i,j,k,l} & a_{4}^{i,j,k,l}a_{8}^{i,j,k,l}-a_{5}^{i,j,k,l}a_{7}^{i,j,k,l} \\
a_{3}^{i,j,k,l}a_{8}^{i,j,k,l}-a_{2}^{i,j,k,l}a_{9}^{i,j,k,l} & a_{1}^{i,j,k,l}a_{9}^{i,j,k,l}-a_{3}^{i,j,k,l}a_{7}^{i,j,k,l} & a_{2}^{i,j,k,l}a_{7}^{i,j,k,l}-a_{1}^{i,j,k,l}a_{8}^{i,j,k,l} \\
a_{2}^{i,j,k,l}a_{6}^{i,j,k,l}-a_{3}^{i,j,k,l}a_{5}^{i,j,k,l} & a_{3}^{i,j,k,l}a_{4}^{i,j,k,l}-a_{1}^{i,j,k,l}a_{6}^{i,j,k,l} & a_{1}^{i,j,k,l}a_{5}^{i,j,k,l}-a_{2}^{i,j,k,l}a_{4}^{i,j,k,l}
\end{pmatrix}
\end{equation}
and
\begin{equation}
\begin{split}
v^{i,j,k,l}&=a^{i,j,k,l}_{1}a^{i,j,k,l}_{5}a^{i,j,k,l}_{9}+a^{i,j,k,l}_{2}a^{i,j,k,l}_{6}a^{i,j,k,l}_{7}+a^{i,j,k,l}_{4}a^{i,j,k,l}_{8}a^{i,j,k,l}_{3}\\
&-a^{i,j,k,l}_{2}a^{i,j,k,l}_{4}a^{i,j,k,l}_{9}-a^{i,j,k,l}_{1}a^{i,j,k,l}_{6}a^{i,j,k,l}_{8}-a^{i,j,k,l}_{3}a^{i,j,k,l}_{5}a^{i,j,k,l}_{7}.
\end{split}
\end{equation}

In order to write \eqref{disR2_1} into a compact form, we construct $D_{l}, 1\leq l\leq 9$:
\begin{equation}\label{MatrixD}
\begin{matrix}
D_{1} = [M_{1},0,0], & D_{4} = [0,M_{1},0], & D_{7} = [0,0,M_{1}],\\
D_{2} = [M_{2},0,0], & D_{5} = [0,M_{2},0], & D_{8} = [0,0,M_{2}],\\
D_{3} = [M_{3},0,0], & D_{6} = [0,M_{3},0], & D_{9} = [0,0,M_{3}],
\end{matrix}
\end{equation}
where $M_{1}$, $M_{2}$ and $M_{3}$ are the discrete operators of $\partial_{x_{1}}$, $\partial_{x_{2}}$ and $\partial_{x_{3}}$ respectively and how to construct them is shown in \textbf{Appendix} \ref{M1M2M3}. Then we define $\bm{s}(Y)$ and $\bm{v}(Y)$ as follows:
\begin{equation}\label{matrixrepresentation}
\begin{split}
\bm{s}(Y) &= 
\begin{pmatrix}
D_{5}Y\odot D_{9}Y-D_{6}Y\odot D_{8}Y & D_{6}Y\odot D_{7}Y-D_{4}Y\odot D_{9}Y & D_{4}Y\odot D_{8}Y-D_{5}Y\odot D_{7}Y \\
D_{3}Y\odot D_{8}Y-D_{2}Y\odot D_{9}Y & D_{1}Y\odot D_{9}Y-D_{3}Y\odot D_{7}Y & D_{2}Y\odot D_{7}Y-D_{1}Y\odot D_{8}Y \\
D_{2}Y\odot D_{6}Y-D_{3}Y\odot D_{5}Y & D_{3}Y\odot D_{4}Y-D_{1}Y\odot D_{6}Y & D_{1}Y\odot D_{5}Y-D_{2}Y\odot D_{4}Y
\end{pmatrix},\\
\bm{v}(Y) &= D_{1}Y\odot D_{5}Y\odot D_{9}Y + D_{2}Y\odot D_{6}Y\odot D_{7}Y +D_{4}Y\odot D_{8}Y\odot D_{3}Y\\
           &-D_{2}Y\odot D_{4}Y\odot D_{9}Y-D_{1}Y\odot D_{6}Y\odot D_{8}Y-D_{3}Y\odot D_{5}Y\odot D_{7}Y,\\
\end{split}
\end{equation}
where $\odot$ denotes the Hadamard product of two matrices. Furthermore, set the $i$th component of $\bm{s}(Y)$ as 
\begin{equation}
\bm{s}(Y)_{i} = 
\begin{pmatrix}
(D_{5}Y\odot D_{9}Y-D_{6}Y\odot D_{8}Y)_{i} & (D_{6}Y\odot D_{7}Y-D_{4}Y\odot D_{9}Y)_{i} & (D_{4}Y\odot D_{8}Y-D_{5}Y\odot D_{7}Y)_{i} \\
(D_{3}Y\odot D_{8}Y-D_{2}Y\odot D_{9}Y)_{i} & (D_{1}Y\odot D_{9}Y-D_{3}Y\odot D_{7}Y)_{i} & (D_{2}Y\odot D_{7}Y-D_{1}Y\odot D_{8}Y)_{i} \\
(D_{2}Y\odot D_{6}Y-D_{3}Y\odot D_{5}Y)_{i} & (D_{3}Y\odot D_{4}Y-D_{1}Y\odot D_{6}Y)_{i} & (D_{1}Y\odot D_{5}Y-D_{2}Y\odot D_{4}Y)_{i}
\end{pmatrix}.
\end{equation}
Then we can see that $\bm{s}(Y)$ and $\bm{v}(Y)$ contain all approximated cofactors and determinants for all tetrahedrons.

\section{Computation of $M_{1}$, $M_{2}$ and $M_{3}$ in \eqref{MatrixD}}\label{M1M2M3}
We first investigate the linear approximation $L(x_{1},x_{2},x_{3}) = a_{1}x_{1}+a_{2}x_{2}+a_{3}x_{3}+b$ in the tetrahedron  $V_{3}V_{4}V_{5}V_{7}$ (Figure \ref{partition}). Denote these 4 vertices of this tetrahedron by $V_{3} = \bm{x}^{1,1,1}$, $V_{4} = \bm{x}^{2,2,2}$, $V_{5} = \bm{x}^{3,3,3}$ and $V_{7} = \bm{x}^{4,4,4}$. Set $L(\bm{x}^{1,1,1}) = y^{1,1,1}$, $L(\bm{x}^{2,2,2}) = y^{2,2,2}$, $L(\bm{x}^{3,3,3}) = y^{3,3,3}$ and $L(\bm{x}^{4,4,4}) = y^{4,4,4}$.
Substituting $V_{3},V_{4}$, $V_{5}$ and $V_{7}$ into $L$, we get
\begin{equation}
\begin{pmatrix}
x_{1}^{1} & x_{2}^{1} & x_{3}^{1} & 1\\
x_{1}^{2} & x_{2}^{2} & x_{3}^{2} & 1\\
x_{1}^{3} & x_{2}^{3} & x_{3}^{3} & 1\\
x_{1}^{4} & x_{2}^{4} & x_{3}^{4} & 1
\end{pmatrix}
\begin{pmatrix}
a_{1} \\
a_{2} \\
a_{3} \\
b
\end{pmatrix}
=
\begin{pmatrix}
y^{1,1,1}\\
y^{2,2,2}\\
y^{3,3,3}\\
y^{4,4,4}
\end{pmatrix}.
\end{equation}
Then eliminating $b$, we obtain
\begin{equation}
\begin{pmatrix}
x_{1}^{1}-x_{1}^{4} & x_{2}^{1}-x_{2}^{4} & x_{3}^{1}-x_{1}^{4} \\
x_{1}^{2}-x_{2}^{4} & x_{2}^{2}-x_{2}^{4} & x_{3}^{2}-x_{2}^{4} \\
x_{1}^{3}-x_{3}^{4} & x_{2}^{3}-x_{2}^{4} & x_{3}^{3}-x_{3}^{4}
\end{pmatrix}
\begin{pmatrix}
a_{1} \\
a_{2} \\
a_{3}
\end{pmatrix}
=
\begin{pmatrix}
y^{1,1,1}-y^{4,4,4}\\
y^{2,2,2}-y^{4,4,4}\\
y^{3,3,3}-y^{4,4,4}
\end{pmatrix}.
\end{equation}
Set
\begin{equation}
C = \begin{pmatrix}
x_{1}^{1}-x_{1}^{4} & x_{2}^{1}-x_{2}^{4} & x_{3}^{1}-x_{1}^{4} \\
x_{1}^{2}-x_{2}^{4} & x_{2}^{2}-x_{2}^{4} & x_{3}^{2}-x_{2}^{4} \\
x_{1}^{3}-x_{3}^{4} & x_{2}^{3}-x_{2}^{4} & x_{3}^{3}-x_{3}^{4}
\end{pmatrix}.
\end{equation}
Then we have
\begin{equation}
\begin{pmatrix}
a_{1} \\
a_{2} \\
a_{3}
\end{pmatrix}
= \frac{1}{\det}\begin{pmatrix}
C_{11} & C_{21} & C_{31} \\
C_{12} & C_{22} & C_{32} \\
C_{13} & C_{23} & C_{33}
\end{pmatrix}
\begin{pmatrix}
y^{1,1,1}-y^{4,4,4}\\
y^{2,2,2}-y^{4,4,4}\\
y^{3,3,3}-y^{4,4,4}
\end{pmatrix},
\end{equation}
where $\det$ is the determinant of $C$ and $C_{ij}$ is the $(i,j)$ cofactor of $C$. Since the domain $\Omega$ has been divided into $N$ voxels, in order to find all $a_{1}$ in the tetrahedron with the same position of each voxel, we can make it as follows:
\begin{equation}
\begin{pmatrix}
a_{1}^{1} \\
\vdots \\
a_{1}^{N}
\end{pmatrix}
= \frac{1}{\det}(C_{11}(E_{3}Y-E_{7}Y)+C_{21}(E_{4}Y-E_{7}Y)+C_{31}(E_{5}Y-E_{7}Y)),
\end{equation}
where $E_{l},l\in\{3,4,5,7\}$ is a matrix which extracts the corresponding positions of the vertices. Set $G_{1} = \frac{1}{\det}(C_{11}(E_{3}-E_{7})+C_{21}(E_{4}-E_{7})+C_{31}(E_{5}-E_{7}))$. For other 5 tetrahedrons, we can also build $G_{l}, l\in\{2,...,6\}$. Then we get
\begin{equation}\label{M1}
M_{1} =
\begin{pmatrix}
G_{1}\\
\vdots\\
G_{6}
\end{pmatrix}.
\end{equation}
Similarly, we can obtain $M_{2}$ and $M_{3}$.

\clearpage 

\bibliographystyle{spmpsci}     
\bibliography{reference}
\end{document}